\let\NAT@parse\undefined
\newcommand{\cmark}{\ding{51}}%
\newcommand{\xmark}{\ding{55}}%
\DeclareMathOperator*{\argmin}{arg\,min}
\DeclareMathOperator*{\argmax}{arg\,max}
\DeclareMathOperator*{\prox}{prox}
\newcommand{\red}[1]{\textcolor{black}{#1}}
\def\bx{{\mathbf{x}}}
\def\by{{\mathbf{y}}}
\def\bz{{\mathbf{z}}}
\def\bu{{\mathbf{u}}}
\def\L{{\mathcal{L}}}
\def\ssx{{\textsf{x}}}
\def\ssy{{\textsf{y}}}
\def\ssw{{\textsf{w}}}
\def\ssD{{\textsf{D}}}
\def\ssH{{\textsf{H}}}
\newtheorem{theorem}{Theorem}
\newtheorem{proposition}[theorem]{Proposition}
\newtheorem{lemma}{Lemma}
\newtheorem{corollary}{Corollary}
\theoremstyle{definition}
\theoremstyle{remark}
\newtheoremstyle{assumptionstyle}
{0em}
{0em}
{\slshape}
{}
{\bfseries}
{.}
{0.5em}
{}
\theoremstyle{assumptionstyle}
\newtheorem{myassumption}{Assumption}
\title{\LARGE \bf Switch and Conquer: Efficient Algorithms By Switching Stochastic Gradient Oracles For Decentralized Saddle Point Problems 
}
\author{Chhavi Sharma, Vishnu Narayanan and P. Balamurugan 
	\thanks{Chhavi Sharma, Vishnu Narayanan and P. Balamurugan are with  Industrial Engineering and Operations Research (IEOR), IIT Bombay, Mumbai, India-400076.
		{\tt\small Email: \{chhavisharma, vishnu, balamurugan.palaniappan\}@iitb.ac.in}}%
}
\begin{document}
	\setlength{\abovedisplayskip}{1pt}
	\setlength{\belowdisplayskip}{1pt}
	\setlength{\abovecaptionskip}{1ex}
	\setlength{\belowcaptionskip}{1ex}
	\setlength{\floatsep}{1ex}
	\setlength{\textfloatsep}{1ex}

	\maketitle
	\thispagestyle{empty}
	\pagestyle{empty}

	\begin{abstract}
We consider a class of non-smooth strongly convex-strongly concave saddle point problems in a decentralized setting without a central server. To solve a consensus formulation of problems in this class,  we develop an inexact primal dual hybrid gradient (inexact PDHG) procedure that allows generic gradient computation oracles to update the primal and dual variables. We first investigate the performance of inexact PDHG with stochastic variance reduction gradient (SVRG) oracle. Our numerical study uncovers a significant phenomenon of initial conservative progress of iterates of IPDHG with SVRG oracle. To tackle this, we develop a simple  and effective switching idea, where a generalized stochastic gradient (GSG) computation oracle is employed to hasten the iterates' progress to a saddle point solution during the initial phase of updates, followed by a switch to the SVRG oracle at an appropriate juncture. The proposed algorithm is named Decentralized Proximal Switching Stochastic Gradient method with Compression (C-DPSSG), and is proven to converge to an $\epsilon$-accurate saddle point solution with linear rate. Apart from delivering highly accurate solutions, our study reveals that utilizing the best convergence phases of GSG and SVRG oracles makes C-DPSSG well suited for obtaining solutions of low/medium accuracy faster, useful for certain applications. Numerical experiments on two benchmark machine learning applications show C-DPSSG's competitive performance which validate our theoretical findings. The codes used in the experiments can be found \href{https://github.com/chhavisharma123/C-DPSSG-CDC2023}{here}.

	\end{abstract}

\section{Introduction}
\label{sec:introduction}

We focus on solving the following saddle point (or mini-max) problem in a fully decentralized setting \textbf{without a central server}:
\begin{align}
		\min_{x \in \mathbb{R}^{d_x}} \max_{y \in \mathbb{R}^{d_y}} \frac{1}{m}\sum_{i = 1}^m (f_i(x,y) +g(x) -r(y) ) \label{eq:main_opt_problem} \tag{SPP} ,
\end{align}
 where $f_i$ $:$ $\mathbb{R}^{d_x}$$\times$$\mathbb{R}^{d_y} \rightarrow \mathbb{R}$ private to every node $i \in \{1,2,\ldots,m\}\eqqcolon [m]$ is smooth, strongly convex in primal variable $x$ and strongly concave in dual variable $y$ and  $g:\mathbb{R}^{d_x} \rightarrow \mathbb{R}$ and $r:$ $\mathbb{R}^{d_y}$$ \rightarrow$$\mathbb{R}$ are proper, convex and potentially non-smooth functions. This class of saddle point problems finds its use in  distributionally robust optimization, robust classification and regression applications, AUC maximization problems \cite{balamuruganbach2016svrgsaddle,yanetal2019stocprimaldual,zecchin2022communicationefficient} and multi-agent reinforcement learning \cite{wai2018multi}. Additionally, saddle point problems appear in the Lagrangian formulations of constrained minimization problems  \cite{zhu2011distributed,nunezcortes2017distsaddlesubgrad}. Decentralized environments are useful for large-scale systems where privacy and other constraints on data sharing (\text{e.g.} legal, geographical) prevent the availability of entire data set in a single computing machine (or node). In this work, we consider a decentralized environment where the computing nodes possess similar processing and storage capabilities.
 The (static) topology of the decentralized environment is represented using an undirected, connected, simple graph  $\mathscr{G} =(\mathcal{V},\mathcal{E})$, where $\mathcal{V}=[m]$ denotes the set of $m$ computing nodes and an edge $e_{ij} \in \mathcal{E}$ denotes the fact that nodes $i, j \in \mathcal{V}$ are connected. Also, we assume that the communication is synchronous and at every synchronization step, node $i$ communicates only with its neighbors $\mathcal{N}(i) = \{j \in \mathcal{V}: e_{ij} \in \mathcal{E}\}$. 

General stochastic gradient oracle (GSGO) \cite{sgd_leonbottou},
popularly used to solve saddle point problems \cite{liu2020decentralized,xianetal2021fasterdecentnoncvxsp,zecchin2022communicationefficient,beznosikov2020distributed}, unfortunately suffers from inherent variance developed due to stochastic gradients used for updating primal and dual variables at every epoch. 
Despite the availability of stochastic variance reduction gradient oracle (SVRGO) \cite{johnsontong2013svrg, kovalev2022optimal}, which addresses GSGO's variance issue, GSGO is adopted by practitioners due to its simplicity and fast progress in the initial stage. 
SVRGO prepares itself from the start to keep the variance under control which affects the crucial initial phase convergence.  However, the variance in SVRGO vanishes asymptotically speeding up its progress at the later stages. The fast convergence behavior of GSGO at the initial stage and SVRGO at the later stage respectively, provide inspiration for developing a novel algorithm in this work, where a switch is performed between these stochastic gradient oracles. 

Apart from gradient computations, high dimensional parameters are communicated by each node in the decentralized environment with its neighbors, which becomes expensive. 
Thus in this paper, we aim to develop a primal dual decentralized algorithm which attains efficiency in gradient computations by harnessing the best convergence phases of GSGO and SVRGO, and communication efficiency by using compressed representations \cite{lin2017deep, mishchenko2019distributed, liu2020linear, zecchin2022communicationefficient} of iterates. 
We summarize below the \textbf{contributions} of this work:

\begin{enumerate}
\item Inspired by algorithms developed for decentralized minimization problems \cite{liu2020linear, li2021decentralized}, we design a decentralized inexact primal dual hybrid gradient method with compression (IPDHG) by exploiting the consensus constrained formulation of \eqref{eq:main_opt_problem}. 
\item We numerically study the initial behavior of IPDHG with SVRGO and GSGO. To improve the observed initial conservative progress of iterates of IPDHG with SVRGO towards an $\epsilon$-accurate saddle point solution, we propose a Decentralized Proximal Switching Stochastic Gradient method with Compression (C-DPSSG), where a generalized stochastic gradient oracle guides the initial progress of iterates, which switches to  SVRGO at an appropriate point during the iterative update process. C-DPSSG is useful to obtain solutions of low/medium accuracy (where $\epsilon \approx 10^{-4}$) faster, pertinent to certain applications. Using SVRGO at the later iterations of C-DPSSG reduces the variance and hence provides highly accurate solutions in the long run. We further prove that C-DPSSG converges to an $\epsilon$-accurate saddle point solution with linear rate.
\item  We conduct experiments on robust binary classification and AUC maximization problems to demonstrate the practical performance of proposed algorithms.
\end{enumerate}
To our knowledge, this is the first work which provides a closer look at the behavior of GSGO and SVRGO in a newly designed IPDHG scheme \textit{with compression} to solve saddle point problems of the form \eqref{eq:main_opt_problem}. 
Note that a practical improvement in SVRG is studied in \cite{babanezhad2015stopwasting} using a combination of GSGO and SVRGO for solving \textit{smooth convex minimization problems in a single machine setting}; however we leverage the best performance phases of GSGO and SVRGO to solve \textit{non-smooth saddle point problems in a decentralized environment}.  
We now present notations useful for subsequent discussion.

\textbf{Notations:} 
Let $z$$=$$(x, y)$$\in$$ {\mathbb{R}}^{d_x+d_y}$ denote the pair of primal variable $x$ and dual variable $y$, $z^\star$$=$$(x^\star, y^\star)$ denote a saddle point solution of problem \eqref{eq:main_opt_problem} and $\bz^\star$ $=$ $((z^\star)^\top,\ldots,(z^\star)^\top)$. 
Weights $W_{ij}$ associated with the communication link between a pair of nodes $(i,j) \in \mathcal{V} \times \mathcal{V}$ are collected into a matrix $W$ of size $m \times m$.  $I_d$ denotes a $d \times d$ identity matrix, $\mathbf{1}$ denotes a $m \times 1$ column vector of ones and $J = \frac{1}{m}\mathbf{1}\mathbf{1}^\top$ denotes a $m \times m$ matrix of uniform weights equal to $\frac{1}{m}$.
$A\otimes B$ denotes the Kronecker product of two matrices $A$ and $B$. 
Let $f(x,y) \coloneqq \sum_{i = 1}^m f_i(x,y)$.
Condition number $\kappa_f $ of $f$ is defined as $L/\mu$, where $L$ is the smoothness parameter of $f_i(x,y)$ (see Appendix \ref{smoothness_assumptions} in \cite{cdctechnicalreport}) and $\mu = \min\{\mu_x, \mu_y\}$ (see Assumptions \ref{s_convexity_assumption}-\ref{s_concavity_assumption}).
Condition number $\kappa_g$ of communication graph $\mathscr{G}$ is defined as the ratio of largest eigenvalue and second smallest eigenvalue of $I-W$. 
For a $d \times 1$ vector $u$ and for some $d \times d$ symmetric positive semi-definite (p.s.d) matrix $A$, we define $\| u \|^2_A = u^\top A u$. 

\textbf{Paper Organization:} We develop and interpret IPDHG algorithm in Section \ref{sec:algo_design}, followed by a discussion of assumptions (Section \ref{sec:assumptions}). Early stage behavior of IPDHG with SVRGO and GSGO is explained is Section \ref{sec:finitesum_main}.
The proposed C-DPSSG algorithm is presented in Section \ref{sec:genstoch_main}.
Related work is discussed in Section \ref{sec:related_work} and experimentation details are in Section \ref{sec:experiments}. Due to space constraints, all proofs and additional experiments are deferred to our technical report \cite{cdctechnicalreport}.
\section{Algorithm Development}
\label{sec:algo_design}
Before proceeding to the algorithm development, we first present few terminologies to be used in the remaining part of the paper. Assuming the local copy of $(x,y)$ in $i$-th node as $(x^i, y^i)$, we collect local primal and dual variables into $\bx = \begin{pmatrix}x^1,  x^2, \ldots  x^m \end{pmatrix} \in {\mathbb{R}}^{md_x}$ and $\by = \begin{pmatrix}y^1, y^2, \ldots  y^m \end{pmatrix} \in {\mathbb{R}}^{md_y}$.  Using this notation and following \cite{nunezcortes2017distsaddlesubgrad}, the problem \eqref{eq:main_opt_problem} can be formulated as:
\begin{align}
\min_{\bx \in \mathbb{R}^{md_x} } \max_{\by \in \mathbb{R}^{md_y}} \ F(\bx,\by) + G(\bx) -R(\by) \nonumber \\ \text{s.t.} \  (U \otimes I_{d_x})\bx = 0, \ \ (U \otimes I_{d_y})\by = 0, \label{eq:main_opt_consenus_constraint}
\end{align}
where $F(\bx, \by) = \sum_{i = 1}^m f_i(x^i,y^i)$, $G(\bx)= \sum_{i = 1}^m g(x^i)$, $R(\by)=  \sum_{i = 1}^m r(y^i)$, $U = \sqrt{I_m-W}$ and consensus constraints are present on $\bx$ and $\by$. 
The assumptions on $W$ (to be made later) would imply $I_m-W$ to be symmetric p.s.d. and hence leads to existence of $\sqrt{I_m-W}$. We consider the following Lagrangian function of problem \eqref{eq:main_opt_consenus_constraint}: 
\begin{align}
& \mathcal{L}(\bx, \by; S^{\bx}, S^{\by}) = F(\bx,\by) + G(\bx) - R(\by) 
\nonumber \\ & \hspace*{1.8cm} +  \langle S^{\bx}, (U \otimes I_{d_x})\bx \rangle  + \langle S^{\by}, (U \otimes I_{d_y})\by \rangle, \label{eq:lagrangian_form}
\end{align}
where $S^{\bx} \in \mathbb{R}^{md_x}$ and $S^{\by} \in \mathbb{R}^{md_y}$ denote the Lagrange multipliers associated with consensus constraints on variables $\bx$ and $\by$ respectively. 
We prove that solving constrained problem \eqref{eq:main_opt_consenus_constraint} is equivalent to solving the following problem (see Theorem \ref{thm:lag_equivalence} in \cite{cdctechnicalreport}):
\begin{align}
\min_{\bx \in \mathbb{R}^{md_x}, S^{\by} \in \mathbb{R}^{md_y}} \max_{\by \in \mathbb{R}^{md_y}, S^{\bx} \in \mathbb{R}^{md_x} } \mathcal{L}(\bx, \by; S^{\bx}, S^{\by}). \label{minmax_lagrange_problem}
\end{align}
A similar equivalence is provided in \cite{rogozin2021decentralized} under the assumption of convex compact constraint sets and bounded gradients of $F(\bx,\by)$. On the contrary, we formally show the equivalence using convexity-concavity of $f_i(x,y)$ and using properties of weight matrix $W$ (to be defined in next section). Further, our proof does not require compactness and bounded gradient assumptions.

To solve problem \eqref{minmax_lagrange_problem}, we propose gradient descent ascent \red{parallel updates} for the primal-dual variable pair $\bx, S^\bx$ and dual-primal pair $\by, S^\by$, illustrated in equations \eqref{ipdhg_primal_dual_updates_1} and  \eqref{ipdhg_dual_primal_updates_1}. Note that in eq. \eqref{ipdhg_primal_dual_updates_1}, $\nu^\bx_{t+1}$ is found using a prox-linear step involving linearization of $\red{F(\bx,\by)}$ with respect to $\bx$ and a penalized cost-to-move term  $\frac{1}{2s}\|\bx-\bx_t\|^2$, followed by an ascent step to update the Lagrange dual variable $S^\bx$. 
Then ${\hat{\bx}}_{t+1}$ is found using prox-linear step similar to the first step but using the recent $S_{t+1}^{\bx}$ to further correct the direction. Finally $\bx_{t+1}$ is found by a prox step where $\prox_{sG}(\bx) = \argmin_{\bu \in {\mathbb{R}}^{md_x}} G(\bu) + \frac{1}{2s} \|\bu-\bx\|^2$. Letting $D^\bx_t = (U \otimes I_{d_x})S^\bx_t$, and pre-multiplying by $U\otimes I_{d_x}$ in the update step of $S^\bx$, $S^\bx_{t+1}$ update reduces to $D^\bx_{t+1}  = D^\bx_t + \frac{\gamma}{2s}((I_m-W) \otimes I_{d_x})\nu^\bx_{t+1}$. Now using $\nu^\bx_{t+1}$ and $D^\bx_{t+1}$ updates, we can further reduce $\hat{\bx}_{t+1}$ update to $\hat{\bx}_{t+1} = \nu^\bx_{t+1} - \frac{\gamma}{2}((I_m-W) \otimes I_{d_x})\nu^\bx_{t+1}$.  Similarly, the updates to $\by, S^{\by}$ can be done using appropriate gradient ascent-descent steps which lead to corresponding equations \eqref{ipdhg_dual_primal_updates_1}. Further the update of $D^\by_{t+1}$ analogous to $D^\bx_{t+1}$ update can be obtained by letting $D^\by_t = -(U \otimes I_{d_y})S^\by_t$. 
 
A similar update process is explored in \cite{li2021decentralized}, however for solving \textit{convex minimization problems} only. The dual variable in \cite{li2021decentralized} is simpler since it arises from the Lagrangian formulation of consensus constrained minimization problem  and appears only as linear term in the Lagrangian function.  
However in our work, the Lagrangian function in eq. \eqref{eq:lagrangian_form} is not in general linear in the dual variable $\by$ despite the linear terms associated with Lagrange multipliers $S^\bx, S^\by$. Hence updates \eqref{ipdhg_primal_dual_updates_1} and \eqref{ipdhg_dual_primal_updates_1} in our work need to tackle the original primal dual pair $\bx, \by$ along with the Lagrange multipliers $S^\bx, S^\by$ related to consensus constraints. 


	\vspace*{0.1cm}
	\begin{figure}[!t]
		\begin{minipage}{0.95\linewidth}
			\textbf{Updates to primal dual pair $\bx, S^\bx$:} 
			\hrule
			\begin{align*}
				&\
				\left.\begin{aligned}
					\nu^\bx_{t+1} &= \bx_t - s\nabla_\bx F(\bx_t,\by_t) - s(U \otimes I_{d_x})S^\bx_t \\ 
					S^\bx_{t+1} &= S^\bx_t + \frac{\gamma}{2s}(U \otimes I_{d_x})\nu^\bx_{t+1}   \\
					\hat{\bx}_{t+1} &= \bx_t - s\nabla_\bx F(\bx_t,\by_t) - s(U \otimes I_{d_x})S^\bx_{t+1} \\
					\bx_{t+1} &= \prox_{sG}(\hat{\bx}_{t+1}).   
				\end{aligned} \hspace{-.2em} \right\} \label{ipdhg_primal_dual_updates_1} \tag{P1}   
			\end{align*}
		\end{minipage}
		\hrule
	\end{figure}

	\vspace*{0.1cm}
	\begin{figure}[t]
		\begin{minipage}{0.95\linewidth}
			\textbf{ Updates to dual primal pair $\by, S^\by$:} 
			\hrule
			\begin{align*}
				&\
				\left.\begin{aligned}
					\nu^\by_{t+1} &= \by_t + s\nabla_\by F(\bx_t,\by_t) - s(U \otimes I_{d_x})S^\by_t \nonumber \\
					S^\by_{t+1} &= S^\by_t - \frac{\gamma}{2s}(U \otimes I_{d_y})\nu^\by_{t+1}  \nonumber \\
					\hat{\by}_{t+1} &= \by_t + s\nabla_\by F(\bx_t,\by_t) - s(U \otimes I_{d_x})S^\by_{t+1} \nonumber \\
					\by_{t+1} &= \prox_{sR}(\hat{\by}_{t+1}). 
				\end{aligned}\right\} \hspace{-.8em} \label{ipdhg_dual_primal_updates_1} \tag{D1}   
			\end{align*}
		\end{minipage}
		\hrule
	\end{figure}
	\vspace*{-0.15cm}
Observe that the terms $((I_m-W) \otimes I_{d_x})\nu^\bx_{t+1}$ and $((I_m-W) \otimes I_{d_y})\nu^\by_{t+1}$ respectively in $D^\bx_{t+1}$ and $D^\by_{t+1}$ updates denote the communication of $\nu^\bx_{t+1}$ and $\nu^\by_{t+1}$ across the nodes. 
Further note that $\nu^\bx_{t+1}$ and $\nu^\by_{t+1}$ need to be communicated only once for updating $D^\bx_{t+1}, \hat{\bx}_{t+1}$ and $D^\by_{t+1}, \hat{\by}_{t+1}$
To improve the communication efficiency further, we propose to compress $\nu^\bx_{t+1}$ and $\nu^\by_{t+1}$ using a compression module (COMM
procedure \cite{liu2020linear}) as illustrated
in Algorithm \ref{comm_main}.
Algorithm \ref{alg:generic_procedure_sgda} illustrates the proposed Inexact Primal Dual Hybrid Gradient (IPDHG) method with compression. 
In the next section, we state assumptions useful for further discussions.
  


\begin{algorithm}[!h]\footnotesize
	\caption{Compressed Communication Procedure (COMM) \cite{liu2020linear}} 
	\label{comm_main}
	\begin{algorithmic}[1]
		\STATE{\textbf{INPUT:}} {$\nu_{t+1}, H_t, H^w_t, \alpha$}
		\STATE $Q^i_t = Q(\nu^i_{t+1} - H^i_t)$ \ \ (compression)
		\STATE $\hat{\nu}^i_{t+1} = H^i_t + Q^i_t$ , 
		\STATE $H^i_{t+1} = (1-\alpha)H^i_t + \alpha \hat{\nu}^i_{t+1}$ , 
		\STATE $\hat{\nu}^{i,w}_{t+1} = H^{i,w}_t + \sum_{j = 1}^m W_{ij}Q^j_t$, \ \ (communicating compressed vectors)
		\STATE $H^{i,w}_{t+1} = (1-\alpha)H^{i,w}_t + \alpha \hat{\nu}^{i,w}_{t+1}$ , 
		\STATE {\textbf{RETURN:}} $\hat{\nu}^{i}_{t+1}, \hat{\nu}^{i,w}_{t+1}, H^i_{t+1}, H^{i,w}_{t+1}$ for each node $i$ .
	\end{algorithmic}
\end{algorithm}

\begin{algorithm}[!htb] \footnotesize
	\caption{Inexact Primal Dual Hybrid Gradient method with compression using stochastic gradient oracle $\mathcal{G}$ (IPDHG)}
	\label{alg:generic_procedure_sgda}
	\begin{algorithmic}[1]
		\STATE{\textbf{INPUT:}} {\footnotesize{$\ssx$,
		$\ssy$, $\ssD^\ssx$, $\ssD^\ssy$, $\ssH^\ssx$, $\ssH^\ssy$, $\ssH^{\ssw,\ssx}$, $\ssH^{\ssw,\ssy}$, $s$, $\gamma_{\ssx}$, $\gamma_{\ssy}$, $\alpha_{\ssx}$, $\alpha_{\ssy}$,  $\mathcal{G}$}}
		\STATE Compute gradients $\mathcal{G}^{\ssx} $ and $\mathcal{G}^{\ssy}$ at $(\ssx,\ssy)$ via oracle $\mathcal{G} = (\mathcal{G}^\ssx, \mathcal{G}^\ssy)$
		\STATE $\nu^{\ssx} = \ssx - s \mathcal{G}^{\ssx} - s \ssD^{\ssx}$  
		\STATE $\hat{\nu}^{\ssx}, \hat{\nu}^{\ssw,\ssx}, \ssH^{\ssx}_{new}, \ssH^{\ssw,\ssx}_{new}= \text{COMM}\left(\nu^{\ssx}, \ssH^{\ssx}, \ssH^{\ssw,\ssx}, \alpha_{\ssx} \right)$
		\STATE $\ssD^{\ssx}_{new} = \ssD^{\ssx} + \frac{\gamma_{\ssx}}{2s}(\hat{\nu}^{\ssx} - \hat{\nu}^{\ssw,\ssx})$  
		\STATE $\hat{\ssx} = \nu^{\ssx} - \frac{\gamma_{\ssx}}{2} (\hat{\nu}^{\ssx} - \hat{\nu}^{\ssw,\ssx})$ 
		\STATE $\ssx_{new} = \prox_{sG} (\hat{\ssx})$ 
		\STATE $\nu^{\ssy} = \ssy + s \mathcal{G}^{\ssy} - s \ssD^{\ssy}$
		\STATE $\hat{\nu}^{\ssy}, \hat{\nu}^{\ssw,\ssy}, \ssH^{\ssy}_{new}, \ssH^{\ssw,\ssy}_{new}=\text{COMM}\left(\nu^{\ssy}, \ssH^{\ssy}, \ssH^{\ssw,\ssy}, \alpha_{\ssy} \right)$   
		\STATE $\ssD^{\ssy}_{new} = \ssD^{y} + \frac{\gamma_{\ssy}}{2s}(\hat{\nu}^{\ssy} - \hat{\nu}^{\ssw,\ssy})$  \label{Dy_update}
		\STATE $\hat{\ssy} = \nu^{\ssy} - \frac{\gamma_{\ssy}}{2} (\hat{\nu}^{\ssy} - \hat{\nu}^{\ssw,\ssy})$ 
		\STATE $\ssy_{new} = \prox_{sR} (\hat{\ssy})$ 
		\STATE {\textbf{RETURN:}} 
		 $\ssx_{new} , \ssy_{new}, \ssD^\ssx_{new}, \ssD^\ssy_{new},\ssH^{\ssx}_{new}, \ssH^{\ssy}_{new}, \ssH^{\ssw,\ssx}_{new}, \ssH^{\ssw,\ssy}_{new}$
	\end{algorithmic}
\end{algorithm}
\section{Assumptions}\label{sec:assumptions}

We make the following assumptions, which would be useful throughout this work.
\begin{myassumption} \label{s_convexity_assumption} Each $f_i(\cdot,y)$ is $\mu_x$-strongly convex for every $y \in \mathbb{R}^{d_y}$; hence for any $x_1, x_2 \in \mathbb{R}^{d_x}$ and fixed $y \in \mathbb{R}^{d_y}$, it holds: $f_i(x_1,y) \geq f_i(x_2,y) + \left\langle \nabla_x f_i(x_2,y), x_1 - x_2 \right\rangle + \frac{\mu_x}{2} \left\Vert x_1 - x_2 \right\Vert^2$.
\end{myassumption}
\begin{myassumption} \label{s_concavity_assumption} Each $f_i(x,\cdot)$ is $\mu_y$-strongly concave for every $x \in \mathbb{R}^{d_x}$; hence for any $y_1, y_2 \in \mathbb{R}^{d_y}$ and fixed $x \in \mathbb{R}^{d_x}$, it holds:
$f_i(x,y_1) \leq f_i(x,y_2) + \left\langle \nabla_y f_i(x,y_2), y_1 - y_2 \right\rangle -  \frac{\mu_y}{2} \left\Vert y_1 - y_2 \right\Vert^2$.
%
\end{myassumption}
\begin{myassumption}\label{nonsmooth_assumption} $g(x)$ and $r(y)$ are proper, convex and possibly non-smooth functions.
\end{myassumption}

\begin{myassumption} \label{compression_operator} {\red{The compression operator $Q$ (see Algorithm \ref{comm_main}) satisfies the following for every $u \in \mathbb{R}^d$: (i) $Q(u)$ is an unbiased estimate of $u$: $E\left[ Q(u) \right]=u$ (ii) $E[ \Vert Q(u) - u \Vert^2 ] \leq \delta \Vert u \Vert^2$}}, 
where the constant {\red{$ \delta \geq 0 $}} denotes the amount of compression induced by operator $Q$ and is called a compression factor.  When $\delta = 0$, $Q$ achieves no compression. 
\end{myassumption}
\begin{myassumption} \label{weight_matrix_assumption} Weight matrix $W$ is symmetric, row stochastic and $W_{ij} >0$ if and only if $(i,j) \in \mathcal{E}$ and $W_{ii} > 0$ for all $i \in [m]$. Eigenvalues of $W$ denoted by $\lambda_1, \ldots, \lambda_m$ satisfy: $-1 < \lambda_m \leq \lambda_{m-1} \leq \ldots \leq \lambda_2 < \lambda_1 = 1$.
\end{myassumption}
\begin{myassumption} \label{smoothness_x_svrg_main} Assume that each $f_{ij}(x,y)$ is $L_{xx}$ smooth in $x$, i.e. for every fixed $y$, $\Vert \nabla_x f_{ij}(x_1,y) - \nabla_x f_{ij}(x_2,y) \Vert$$\leq$$L_{xx}\Vert x_1 - x_2 \Vert, \forall x_1,x_2 \in \mathbb{R}^{d_x}$.
\end{myassumption}
\begin{myassumption} \label{smoothness_y_svrg_main} Assume that each $f_{ij}(x,y)$ is $L_{yy}$ smooth in $y$, i.e. for every fixed $x$, $\Vert \nabla_y f_{ij}(x,y_1) - \nabla_y f_{ij}(x,y_2) \Vert$$\leq$$L_{yy}\Vert y_1 - y_2 \Vert$, $\forall  y_1, y_2$  $\in \mathbb{R}^{d_y}$.
\end{myassumption}
\begin{myassumption} \label{lipschitz_xy_svrg_main} Assume that each $\nabla_x f_{ij}(x,y)$ is $L_{xy}$ Lipschitz in $y$, i.e. for every fixed $x$, $\Vert \nabla_x f_{ij}(x,y_1) - \nabla_x f_{ij}(x,y_2) \Vert$$\leq$$L_{xy}\Vert y_1 - y_2 \Vert,$  $\forall  y_1, y_2$  $\in \mathbb{R}^{d_y}$.
\end{myassumption}
\begin{myassumption} \label{lipschitz_yx_svrg_main} Assume that each $\nabla_y f_{ij}(x,y)$ is $L_{yx}$ Lipschitz in $x$, i.e. for every fixed $y$, $\Vert \nabla_y f_{ij}(x_1,y) - \nabla_y f_{ij}(x_2,y) \Vert$$\leq$$L_{yx}\Vert x_1 - x_2 \Vert$,  $\forall  x_1, x_2$  $\in \mathbb{R}^{d_x}$.
\end{myassumption}
Note that Assumptions \ref{s_convexity_assumption}-\ref{nonsmooth_assumption} and Assumptions \ref{weight_matrix_assumption}-\ref{lipschitz_yx_svrg_main} are standard in the study of saddle point problems (e.g. \cite{beznosikov2020distributed, beznosikovetal2020distsaddle,liu2020decentralized, mukherjeecharaborty2020decentralizedsaddle}). Assumption \ref{compression_operator} is also standard in existing works (e.g. \cite{alistarh2017qsgd, li2021decentralized, liu2020linear}).


\section{Understanding Early Stage Behavior of IPDHG with SVRGO and GSGO} \label{sec:finitesum_main}
In this section, we first recap SVRG oracle and then draw key observations on the behavior of IPDHG with SVRG oracle. We refer to IPDHG with SVRGO as Decentralized Proximal Stochastic Variance Reduced Gradient algorithm with Compression (C-DPSVRG).

We assume that each local function $f_i(x,y)$ is of the form $\frac{1}{n}\sum_{j = 1}^n f_{ij}(x,y)$ where $f_{ij}(x,y)$ represents the loss function at $j$-th batch of samples at node $i$. This type of structure can be seen for instance in empirical risk minimization problems \cite{sgd_leonbottou}. For simplicity, we assume that each node $i$ has same number of batches $n$. However, our analysis easily extends to different number of batches $n_i$. 
Let $N_\ell$ denote the number of $\ell$ocal samples at each node $i$. Then number of samples in the function component $f_{ij}$ is determined by the batch size $B = N_\ell/n$.
Let $\mathcal{P}_i = \{ p_{il}: l \in \{ 1, 2, \ldots, n \} \}$ denote a probability distribution where $p_{il}$ is the probability with which batch $l$ is sampled at node $i$. Let $p_{\min}\coloneqq\min_{i,l} p_{il}$. 
Without loss of generality we assume that $p_{\min} > 0$, hence each batch is chosen with a positive probability. 
Inspired from \cite{johnsontong2013svrg, kovalev2020don}, we consider stochastic variance reduced gradient  oracle comprising the following steps to compute stochastic gradients in IPDHG:

\begin{tcolorbox}[top=0pt,bottom=0pt]
	\textbf{Stochastic Variance Reduced Gradient Oracle (SVRGO):} \\	
\textbf{(1).} \textbf{Index sampling:} Sample $l \in \{ 1, 2, \ldots, n \} \sim \mathcal{P}_i$ for every node $i$. \\
\textbf{(2).} \textbf{Stochastic gradient computation with variance reduction:} For a reference point $\tilde{z}^i=(\tilde{x}^i,\tilde{y}^i)$, compute stochastic gradients at $z^i=(x^i,y^i)$ with respect to $x$ and $y$ as follows:
\begin{align}
&	\mathcal{G}^{i,x}  = \frac{1}{np_{il}} ( \nabla_x f_{il}(z^i) - \nabla_x f_{il}(\tilde{z}^i) ) + \nabla_x f_i(\tilde{z}^i) , \nonumber \\
&	\mathcal{G}^{i,y}  = \frac{1}{np_{il}} ( \nabla_y f_{il}(z^i) - \nabla_y f_{il}(\tilde{z}^i) ) + \nabla_y f_i(\tilde{z}^i). \nonumber 
\end{align} 
\textbf{(3).} \textbf{Reference point update:} Sample $\omega \sim \text{Bernoulli}(p)$ and update the reference point $\tilde{z}_i$ as follows:
\begin{align}
	\tilde{x}^i & \longleftarrow \omega x^i + (1-\omega)\tilde{x}^i, \ \ \tilde{y}^i \longleftarrow \omega y^i + (1-\omega)\tilde{y}^i . \nonumber
\end{align}

\end{tcolorbox}

As described above, SVRGO evaluates stochastic gradients at current iterate $z^i_t$ and reference point $\tilde{z}^i_t$ as follows:
\begin{align}
	\mathcal{G}^{i,x}_t & = \underbrace{\frac{1}{np_{il}} \left( \nabla_x f_{il}(z^i_t) - \nabla_x f_{il}(\tilde{z}^i_t) \right)} + \nabla_x f_i(\tilde{z}^i_t). \label{svrg_grad_x_with_error}
\end{align}
SVRGO contains an expensive but key component $\nabla_x f_i(\tilde{z}^i_t)$ obtained using full batch gradient evaluation to reduce the variance in stochastic gradients. Until the current iterate $z^i_t$ and reference point $\tilde{z}^i_t$ start converging to saddle point, there is a gradient approximation error captured by first term in  \eqref{svrg_grad_x_with_error}. Therefore, full batch gradients evaluated at the early iterations are not effective due to large distance between early iterates and saddle point solution. In addition, SVRGO evaluates on an average $2B +pN_{\ell}$ gradients per iterate. This phenomenon leads to slow convergence of C-DPSVRG in the initial stage with high computational cost. In this work, we propose a remedy to improve the early stage slow convergence of C-DPSVRG using a general stochastic gradient oracle described below:
\begin{tcolorbox}[top=0pt,bottom=0pt]
\textbf{General Stochastic Gradient Oracle (GSGO):} \\	
\textbf{(1).} \textbf{Index sampling:} Sample $l \in \{ 1, 2, \ldots, n \} \sim \mathcal{P}_i$ for every node $i$. \\
\textbf{(2).} \textbf{Stochastic gradient computation:} Compute stochastic gradients at $z^i=(x^i,y^i)$ with respect to $x$ and $y$ as follows:
\begin{align}
	\mathcal{G}^{i,x} & = \frac{1}{np_{il}} \nabla_x f_{il}(z^i), \ \ 
	\mathcal{G}^{i,y}  = \frac{1}{np_{il}}  \nabla_y f_{il}(z^i).  \nonumber
\end{align}
\end{tcolorbox}

Stochastic gradient descent ascent (SGDA) \cite{yan2020optimal,lin2020gradient,liu2020decentralized,xianetal2021fasterdecentnoncvxsp,zecchin2022communicationefficient} which uses GSGO or its variant is the workhorse of several algorithms due to its promising fast convergence in the initial phase of iterate updates along with low computational cost. Though SGDA exhibits slow convergence or saturation behavior asymptotically due to inherent variance in the stochastic gradients, its impressive behavior in initial phase 
motivates us to exploit GSGO in Algorithm \ref{alg:generic_procedure_sgda} to obtain fast convergence along with low computation cost during the initial iterate updates.
Incorporating GSGO in IPDHG scheme pushes the iterates to a region close to the saddle point solution which can potentially make the full batch gradients in SVRGO more effective. Leveraging fast early convergence using GSGO and fast asymptotic convergence using SVRGO, we propose a switching algorithm which uses GSGO in IPDHG for a fixed number of initial iterations and then switches to SVRGO to achieve highly accurate solution. Before discussing the algorithm details, we examine this behavior empirically on robust logistic regression problem \eqref{main_paper_robust_logistic_regression}. In Figure \ref{fig:a4a_grad_com_sgd_switch_svrg_motivation}, the behavior of C-DPSVRG on \eqref{main_paper_robust_logistic_regression} at every iterate update is compared with a switching scheme where GSGO is used for first $T_i$ iterate updates followed by SVRGO. We can clearly see that iterates of C-DPSVRG (blue line) make little progress in initial stage whereas the use of GSGO leads to faster progress of iterates towards saddle point solution with much lesser gradient computations. 
\begin{figure}[!h]
	\centering
	\includegraphics[width=1\linewidth]{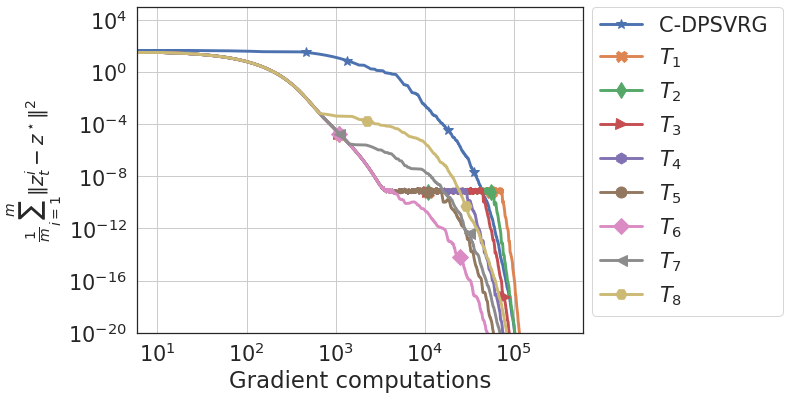} 
	\vspace{-0.8cm} 
	\caption{ Iterate convergence behavior of IPDHG using GSGO and SVRGO on robust logistic regression problem. $T_1 > T_2 >T_3 > T_4 > T_5 >T_6 > T_7 > T_8$ are the switching points from GSGO to SVRGO in IPDHG}
	\label{fig:a4a_grad_com_sgd_switch_svrg_motivation}
\end{figure}
 We further observe that larger values of $T_i$ lead to clear saturation of progress of iterates due to ineffective GSGO update steps after a while, whereas small values of $T_i$ cause early switching to SVRGO. Thus choosing a right switching point is crucial to harness the effectiveness of both GSGO and SVRGO, which we will discuss in the next section. 

\section{IPDHG with Switching between Stochastic Gradient Oracles}
\label{sec:genstoch_main}

In the previous section, we have seen the advantage of leveraging GSGO in the initial stage of IPDHG iterate updates. 
We are now ready to describe our novel switching algorithm to solve \eqref{eq:main_opt_problem}. Starting from initial points $x_0, y_0$, each node $i$ updates its primal and dual variables using IPDHG with GSGO as illustrated in Steps \ref{imp-GSGO}-\ref{IPDHG-GSGO} of Algorithm \ref{alg:IPDHG_with_sgd_svrg_oracle}. This process is repeated for the first $T_0$ iterations. After $T_0$ iterations, each node $i$ switches to SVRGO with reference point $\tilde{z}^i_{T_0}$ initialized to $(x^i_{T_0}, y^i_{T_0})$ and performs IPDHG updates using SVRGO for the remaining $T-T_0$ iterations. We see that Algorithm \ref{alg:IPDHG_with_sgd_svrg_oracle} requires the knowledge of switching point $T_0$. To address this, we first analyze the behavior of Algorithm \ref{alg:IPDHG_with_sgd_svrg_oracle} during the first $T_0$ iterations in the following lemma.
\begin{lemma} \label{lem:exp_phit+1_phi_0} Let  $\{\bx_{t} \}_t, \{\by_{t}\}_t$ be the sequences generated by Algorithm \ref{alg:IPDHG_with_sgd_svrg_oracle}. 
Suppose Assumptions \ref{s_convexity_assumption}-\ref{lipschitz_yx_svrg_main} hold.
Then for every $ 0 \leq t \leq T_0 -1 :$
	\begin{align}
		E_0[ \Phi_{t+1} ] & \leq ( \rho_0)^{t+1}  \Phi_{0}  + \frac{2s_0^2(C_x + C_y)}{(1-\rho_0)n^2p_{\min}}, \label{eq:exp_phit_phi_0_ub}
	\end{align}
where $E_0$ denotes the total expectation when $t \leq T_0-1$, $\Phi_t$ denotes the distance of the iterates $\bx_t, \by_t, D^\bx_t, D^\by_t, H^\bx_t, H^\by_t$ from their respective limit points (described in eq. \eqref{phi_kt_defn} in \cite{cdctechnicalreport}), $\rho_0 \in (0, 1) $ is a problem dependent parameter defined in eq. \eqref{rho_sgd} in \cite{cdctechnicalreport} and $C_x$ $=$ $\sum_{i = 1}^m \sum_{l = 1}^n \left\Vert \nabla_x f_{il}(z^\star) \right\Vert^2$, $C_y$ $=$ $\sum_{i = 1}^m \sum_{l = 1}^n \left\Vert \nabla_y f_{il}(z^\star) \right\Vert^2$.
\end{lemma}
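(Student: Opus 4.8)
The plan is to establish a one-step contraction inequality for the Lyapunov function $\Phi_t$ under the GSGO-driven IPDHG updates, and then unroll the recursion over $t = 0, 1, \ldots, T_0 - 1$. First I would fix a single step $t$ with $0 \le t \le T_0 - 1$ and, conditioning on all randomness up to iteration $t$, decompose each update equation (Steps \ref{imp-GSGO}–\ref{IPDHG-GSGO} of Algorithm \ref{alg:IPDHG_with_sgd_svrg_oracle}) into its ``exact'' part — the deterministic gradient $\nabla_\bx F$, $\nabla_\by F$ — plus a zero-mean stochastic error $e^\bx_t \coloneqq \mathcal{G}^\bx_t - \nabla_\bx F(\bx_t,\by_t)$ and similarly $e^\by_t$. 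Since $\Phi_t$ measures squared distance of the tuple $(\bx_t,\by_t,D^\bx_t,D^\by_t,H^\bx_t,H^\by_t)$ from its limit point in an appropriately weighted norm (the norm $\|\cdot\|_A$ introduced in the Notations), I would expand $\Phi_{t+1}$ and split it as: (i) the contribution of the exact IPDHG map applied to $(\bx_t,\by_t,\ldots)$, and (ii) cross terms and quadratic terms in the errors $e^\bx_t, e^\by_t$.

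For part (i), I would invoke the descent/contraction estimate already available for the exact (deterministic-oracle) version of IPDHG with compression — the same machinery used for C-DPSVRG and borrowed from the analyses in \cite{li2021decentralized,liu2020linear} — using strong convexity/concavity (Assumptions \ref{s_convexity_assumption}–\ref{s_concavity_assumption}), the smoothness and Lipschitz assumptions (Assumptions \ref{smoothness_x_svrg_main}–\ref{lipschitz_yx_svrg_main}), the spectral properties of $W$ (Assumption \ref{weight_matrix_assumption}), and the compression bound (Assumption \ref{compression_operator}) with suitably chosen step size $s_0$, parameters $\gamma_\ssx,\gamma_\ssy,\alpha_\ssx,\alpha_\ssy$. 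This yields a factor $\rho_0 \in (0,1)$ so that the exact part is bounded by $\rho_0 \Phi_t$. For part (ii), the cross terms vanish in conditional expectation because $E[e^\bx_t \mid \mathcal{F}_t] = 0$, $E[e^\by_t \mid \mathcal{F}_t] = 0$ (the GSGO estimator $\frac{1}{np_{il}}\nabla_x f_{il}$ is unbiased for $\nabla_x f_i = \sum_l \nabla_x f_{il}$), and the unbiasedness of the compression operator $Q$. What remains is a second-moment term: I would bound $E[\|e^\bx_t\|^2 \mid \mathcal{F}_t] \le E[\| \tfrac{1}{np_{il}}\nabla_x f_{il}(z^i_t)\|^2]$, and then — crucially — compare $\nabla_x f_{il}(z^i_t)$ to $\nabla_x f_{il}(z^\star)$: write $\nabla_x f_{il}(z^i_t) = \nabla_x f_{il}(z^\star) + (\nabla_x f_{il}(z^i_t) - \nabla_x f_{il}(z^\star))$, use the Lipschitz bounds to control the increment by a multiple of $\|z^i_t - z^\star\|^2$ (which is itself dominated by $\Phi_t$, possibly after absorbing the extra constant into $\rho_0$ by slightly shrinking the step size), and collect the irreducible part $\frac{1}{n^2 p_{\min}}\sum_{i}\sum_l \|\nabla_x f_{il}(z^\star)\|^2 = \frac{C_x}{n^2 p_{\min}}$ (similarly $C_y$ for the dual). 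This produces the one-step bound $E[\Phi_{t+1}\mid\mathcal{F}_t] \le \rho_0 \Phi_t + \frac{2 s_0^2 (C_x + C_y)}{n^2 p_{\min}}$ up to matching the constant factor $2$ with the bookkeeping in the appendix.

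Finally I would take total expectation $E_0[\cdot]$, apply the recursion $E_0[\Phi_{t+1}] \le \rho_0 E_0[\Phi_t] + b_0$ with $b_0 \coloneqq \frac{2 s_0^2 (C_x + C_y)}{n^2 p_{\min}}$, and unroll: $E_0[\Phi_{t+1}] \le \rho_0^{t+1}\Phi_0 + b_0 \sum_{k=0}^{t}\rho_0^{k} \le \rho_0^{t+1}\Phi_0 + \frac{b_0}{1-\rho_0}$, which is exactly \eqref{eq:exp_phit_phi_0_ub}. The main obstacle I anticipate is part (i): carefully re-deriving the exact-map contraction factor $\rho_0$ in the presence of both primal and dual blocks together with the compression state variables $H^\bx, H^\by$ — the coupling between the saddle structure (unlike the pure-minimization case of \cite{li2021decentralized}) and the compression error recursion requires choosing the weighted norm defining $\Phi_t$ and the parameters so that all error feedback terms are simultaneously contractive; the stochastic-error handling in part (ii) is comparatively routine bookkeeping once the right norm and the variance decomposition $z^i_t$ vs.\ $z^\star$ are in place.
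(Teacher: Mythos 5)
Your proposal is correct and follows essentially the same route as the paper: a one-step recursion $E_0[\Phi_{t+1}]\le \rho_0\,E_0[\Phi_t]+\tfrac{2s_0^2(C_x+C_y)}{n^2p_{\min}}$ — obtained from the compressed primal–dual recursion lemmas plus a GSGO second-moment bound in which unbiasedness kills the cross terms, the state-dependent part of the variance is absorbed into $\rho_0$ through the step-size choice $s_0$, and the irreducible variance at $z^\star$ yields $C_x,C_y$ — followed by unrolling the geometric series. Your framing as ``exact map contraction plus zero-mean noise'' versus the paper's direct expansion with the variance measured against $\nabla f_{il}(z^\star)$ is only an organizational difference (and note the minor slip that $\nabla_x f_i=\tfrac{1}{n}\sum_l\nabla_x f_{il}$, not $\sum_l\nabla_x f_{il}$).
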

Due to space considerations, the proof of Lemma \ref{lem:exp_phit+1_phi_0} is provided in Appendix \ref{appendix:exp_phit+1_phi_0} \cite{cdctechnicalreport}. Note that $C_x$ and $C_y$ in \eqref{eq:exp_phit_phi_0_ub} appear due to variance in the stochastic gradients. If $\mathcal{G}^i_{x,t}, \mathcal{G}^i_{y,t}$ are set to be $\nabla_x f_i(x^i_t,y^i_t)$ and  $ \nabla_y f_i(x^i_t,y^i_t)$ respectively in Algorithm \ref{alg:IPDHG_with_sgd_svrg_oracle} for $t\leq T_0 -1$, the second term in the r.h.s of \eqref{eq:exp_phit_phi_0_ub} will be absent. Consequently, using Lemma \ref{lem:exp_phit+1_phi_0}, IPDHG converges to $\epsilon$-accurate saddle point solution with linear rate. Without loss of generality, we assume that $C_x$ and $C_y$ are positive.  We can also see that $C_x$ and $C_y$ are bounded because $z^\star$ is unique since each $f_i$ is assumed to be strongly convex in $x$ and strongly concave in $y$.
 
Lemma \ref{lem:exp_phit+1_phi_0} indicates that Algorithm \ref{alg:IPDHG_with_sgd_svrg_oracle} has an error term (second term in the bound in RHS) due to the variance in stochastic gradients accumulated in the course of $t$ iterations. This shows that IPDHG with GSGO returns an approximate solution asymptotically. By choosing large $t = T_0$, the first term in the RHS of \eqref{eq:exp_phit_phi_0_ub} can be made sufficiently small. However, there might be wastage of iterations once the iterates converge in the neighborhood of saddle point solution as demonstrated in Figure \ref{fig:a4a_grad_com_sgd_switch_svrg_motivation}. Further, choosing small $T_0$ might not exploit the full potential of GSGO in the early stage. To address this situation, we propose to choose $T_0$ such that there is a sufficient drift from the initial value $\Phi_{0}$. We introduce a hyperparameter $\epsilon_0 \in (0,1)$ to achieve this and set $T_0$ such that $\rho_0^{T_0} = \epsilon_0$. This reduces the upper bound of $E_0[\Phi_{T_0}]$ to $\epsilon_0 \Phi_0 + \frac{2s_0^2(C_x + C_y)}{(1-\rho_0)n^2p_{\min}}$. We still have an unanswered question on the choice of $\epsilon_0$. We exploit the convergence behavior of Algorithm \ref{alg:IPDHG_with_sgd_svrg_oracle} for $t \geq T_0 $ to find a suitable $\epsilon_0$ and hence the switching point $T_0$.

\begin{algorithm}[ht!]\footnotesize
	\caption{\textbf{D}ecentralized \textbf{P}roximal \textbf{S}witching \textbf{S}tochastic \textbf{G}radient method with \textbf{C}ompression (C-DPSSG)}
	\label{alg:IPDHG_with_sgd_svrg_oracle}
	\begin{algorithmic}[1]
		\STATE {\textbf{INPUT:}} $\bx_{0} = (\mathbf{1} \otimes I_{d_x})x_0, \by_{0} = (\mathbf{1} \otimes I_{d_y})y_0, D^\bx_0 = D^\by_0 = \mathbf{0}, H^\bx_{0} = \bx_0$, $H^\by_{0} = \by_0$, $H^{\ssw,\bx}_{0} = (W \otimes I_{d_x})\bx_{0}$, $H^{\ssw,\by}_{0} = (W \otimes I_{d_y})\by_{0}, s_0 = \frac{np_{\min}}{4\sqrt{2}L\kappa_f}, s = \frac{\mu np_{\min}}{24L^2}$,  $\gamma_{x,0}, \gamma_x, \gamma_{y,0}, \gamma_y, \alpha_{x,0}, \alpha_x,  \alpha_{y,0}, \alpha_y$ defined in Appendices \ref{appendix_parameters_feasibility} and \ref{svrg_parameters_setup} \cite{cdctechnicalreport}, switching point $T_0 $. 
		\FOR{$t=0$ {\bfseries to} $T -1$ }
		\STATE Sample $l \in \{ 1, 2, \ldots, n \} \sim \mathcal{P}_i$ for every node $i$
		\IF{ $t \leq T_0 - 1$ }
		\STATE \label{imp-GSGO} $\mathcal{G}^{i,x}_t = \frac{1}{np_{il}}\nabla_x f_{il}(z^i_t)$ and $\mathcal{G}^{i,y}_t = \frac{1}{np_{il}}\nabla_y f_{il}(z^{i}_t)$ for every node $i$
		\STATE \label{IPDHG-GSGO}$\bx_{t+1}, \by_{t+1},D^{\bx}_{t+1},D^\by_{t+1},{{H^{\bx}_{t+1}, H^{\by}_{t+1}, H^{\ssw,\bx}_{t+1}, H^{\ssw,\by}_{t+1}}}$$ \newline 
		\hspace*{3.9em}  =$$\text{IPDHG}(\bx_{t}, \by_{t}, D^{\bx}_{t}, D^{\by}_{t}
		, H^{\bx}_{t}, H^{\by}_{t}, H^{\ssw,\bx}_{t},$ \newline 
		\hspace*{4.9em} $H^{\ssw,\by}_{t}, s_{0}, \gamma_{x,0}, \gamma_{y,0}, \alpha_{x,0}, \alpha_{y,0}, \mathcal{G}_t )$
		\ELSE
		\STATE $\tilde{\bx}_{T_0} = \bx_{T_0}$, $\tilde{\by}_{T_0} = \by_{T_0}$
		\STATE Compute stochastic gradients $\mathcal{G}^{i,x}_t$ and $\mathcal{G}^{i,y}_t$ using SVRGO for every node $i$
		\STATE $\bx_{t+1}, \by_{t+1},D^{\bx}_{t+1},D^\by_{t+1},{{H^{\bx}_{t+1}, H^{\by}_{t+1}, H^{\ssw,\bx}_{t+1}, H^{\ssw,\by}_{t+1}}}$$ \newline 
		\hspace*{3.9em}  =$$\text{IPDHG}(\bx_{t}, \by_{t}, D^{\bx}_{t}, D^{\by}_{t}
		 , H^{\bx}_{t}, H^{\by}_{t}, H^{\ssw,\bx}_{t},$ \newline 
		\hspace*{6.9em} $H^{\ssw,\by}_{t} , 
		s, \gamma_x, \gamma_y, \alpha_x, \alpha_{y}, \mathcal{G}_t )$
		\ENDIF
		\ENDFOR
		\STATE {\textbf{RETURN:}} $\bx_{T}, \by_{T}$.
	\end{algorithmic}
\end{algorithm}

\subsection{Determining Switching Point}

As discussed earlier, the switching point $T_0$ depends on a hyperparameter $\epsilon_0$. Using Lemma \ref{lem:exp_phit+1_phi_0} and the choice of $T_0$, it is clear that the upper bound on $E_0[ \Phi_{T_0} ]$ increases linearly with $\epsilon_0$. However, the effect of $\epsilon_0$ on the convergence behavior of Algorithm \ref{alg:IPDHG_with_sgd_svrg_oracle} is not clear after GSGO is switched to SVRGO (i.e. $t \geq T_0$). We investigate this effect in Lemma \ref{lem:tilde_phi_t_ub_T0_t}, which further paves the way for determining a suitable value of $\epsilon_0$.
\begin{lemma} \label{lem:tilde_phi_t_ub_T0_t} Let  $\{\bx_{t} \}_t, \{\by_{t}\}_t$ be the sequences generated by Algorithm \ref{alg:IPDHG_with_sgd_svrg_oracle}. Suppose Assumptions \ref{s_convexity_assumption}-\ref{lipschitz_yx_svrg_main} hold. Then for any $ T \geq T_0 + 1 $:
\begin{align}
E[\tilde{\Phi}_T] & \leq C_{\max} \Big( \frac{\epsilon_0\Phi_0}{\rho^{T_0}} \rho^{T}  + \frac{V_e \rho^{T}}{\epsilon_0} \Big)  + \frac{C_1\rho^{T}}{\epsilon_0} \label{eq:phi_tilde_rho_T0},
\end{align}
where $\tilde{\Phi}_T$ denotes the distance of the iterates $\bx_T, \tilde{\bx}_T, \by_T,  \tilde{\by}_T, D^\bx_T, D^\by_T, H^\bx_T, H^\by_T$ from their respective limit points (described in eq. \eqref{tilde_Phi_t} in \cite{cdctechnicalreport}), $V_e = \frac{2s_0^2(C_x + C_y)}{(1-\rho_0)n^2p_{\min}}$ and $C_{\max}$, $C_1$, $\rho \in (0,1)$ are problem dependent constant parameters defined in equations \eqref{cons_Cmax}, \eqref{cons_C1} and \eqref{rho_svrg} \cite{cdctechnicalreport}.
\end{lemma}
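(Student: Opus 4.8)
The plan is to chain together two linear-convergence estimates: the GSGO-phase bound from Lemma \ref{lem:exp_phit+1_phi_0} to control $E_0[\Phi_{T_0}]$, and a SVRGO-phase recursion for $t \geq T_0$ to propagate this bound forward to $E[\tilde\Phi_T]$. First I would establish, following the standard inexact-PDHG-with-SVRG analysis (as in the $t \geq T_0$ regime of Algorithm \ref{alg:IPDHG_with_sgd_svrg_oracle}), a one-step contraction of the form $E[\tilde\Phi_{t+1} \mid \mathcal{F}_t] \leq \rho \, \tilde\Phi_t$ for $t \geq T_0$, where $\rho \in (0,1)$ is the SVRGO-phase rate defined in eq. \eqref{rho_svrg}; here the Lyapunov function $\tilde\Phi_t$ must be chosen to absorb the variance-reduction bookkeeping (it includes both $\bx_t$ and the reference point $\tilde\bx_t$, and likewise for $\by$), so that the SVRG variance term telescopes into the Lyapunov function rather than appearing as an additive error — this is why $\tilde\Phi$ converges linearly with no residual term, unlike the GSGO bound \eqref{eq:exp_phit_phi_0_ub}. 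Iterating this contraction from $T_0$ to $T$ gives $E[\tilde\Phi_T] \leq \rho^{T - T_0} \, E_0[\tilde\Phi_{T_0}]$.

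Next I would relate $\tilde\Phi_{T_0}$ to $\Phi_{T_0}$: at the switching instant the reference point is initialized as $\tilde\bx_{T_0} = \bx_{T_0}$, $\tilde\by_{T_0} = \by_{T_0}$, so the extra reference-point terms in $\tilde\Phi_{T_0}$ either vanish or are controlled by the distance of $\bx_{T_0}, \by_{T_0}$ (and the $D$, $H$ blocks) from their limit points; this yields $E_0[\tilde\Phi_{T_0}] \leq C_{\max} \, E_0[\Phi_{T_0}] + (\text{lower-order terms absorbed into }C_1)$ for an appropriate problem-dependent constant $C_{\max}$ (eq. \eqref{cons_Cmax}). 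Substituting the Lemma \ref{lem:exp_phit+1_phi_0} bound $E_0[\Phi_{T_0}] \leq \rho_0^{T_0}\Phi_0 + V_e$ with $V_e = \frac{2s_0^2(C_x+C_y)}{(1-\rho_0)n^2 p_{\min}}$, and then using the defining relation $\rho_0^{T_0} = \epsilon_0$ to write $\rho_0^{T_0} = \epsilon_0$, gives $E_0[\tilde\Phi_{T_0}] \leq C_{\max}(\epsilon_0 \Phi_0 + V_e) + C_1'$. Finally I would express $\rho^{T-T_0} = \rho^{T}/\rho^{T_0}$ and use $\rho^{T_0} \geq (\text{something comparable to})\, \epsilon_0$ — more precisely, rewrite $1/\rho^{T_0}$ in terms of $1/\epsilon_0$ via the relation between $\rho$, $\rho_0$ and the choice $\rho_0^{T_0}=\epsilon_0$ (this is where the $\epsilon_0\Phi_0/\rho^{T_0}$ and $1/\epsilon_0$ factors in \eqref{eq:phi_tilde_rho_T0} come from: the first GSGO term keeps its explicit $\rho^{-T_0}$ dependence, while the variance term $V_e$ and the constant $C_1$ pick up $1/\epsilon_0$). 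Collecting terms produces exactly the claimed bound
\begin{align}
E[\tilde\Phi_T] \leq C_{\max}\Big( \frac{\epsilon_0 \Phi_0}{\rho^{T_0}}\rho^T + \frac{V_e \rho^T}{\epsilon_0} \Big) + \frac{C_1 \rho^T}{\epsilon_0}. \nonumber
\end{align}

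The main obstacle I anticipate is the first step: deriving the clean one-step contraction $E[\tilde\Phi_{t+1}\mid\mathcal{F}_t] \leq \rho\tilde\Phi_t$ for the SVRGO phase. This requires carefully designing the composite Lyapunov function $\tilde\Phi_t$ so that (i) the compression error from the COMM procedure (Assumption \ref{compression_operator}) is dominated using the $H$, $H^{\ssw}$ blocks, (ii) the primal-dual coupling between $\bx,\by$ and the consensus multipliers $D^\bx, D^\by$ contracts simultaneously, which forces delicate conditions on the step sizes $s$, $\gamma_x$, $\gamma_y$, $\alpha_x$, $\alpha_y$ (the feasibility conditions referenced in Appendices \ref{appendix_parameters_feasibility} and \ref{svrg_parameters_setup}), and (iii) the SVRG variance term $\frac{1}{np_{il}}(\nabla f_{il}(z_t) - \nabla f_{il}(\tilde z_t))$ is bounded, via smoothness Assumptions \ref{smoothness_x_svrg_main}--\ref{lipschitz_yx_svrg_main}, by a combination of $\|z_t - z^\star\|^2$ and $\|\tilde z_t - z^\star\|^2$ that is already present in $\tilde\Phi_t$ with the correct coefficients. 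The relationship between $\rho_0$ and $\rho$ and the bookkeeping of the constant $C_1$ at the switch point are routine once the contraction is in hand, but they need care to match the precise constants in eqs. \eqref{cons_Cmax}, \eqref{cons_C1}, \eqref{rho_svrg}.
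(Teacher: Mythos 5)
Your proposal is correct and follows essentially the same route as the paper: a one-step SVRGO contraction $E[\tilde{\Phi}_{t+1}]\leq \rho E[\tilde{\Phi}_t]$ for $t\geq T_0$ (the paper's Lemma \ref{lem:exp_phi_tilde_rec}), the switch-point comparison $\tilde{\Phi}_{T_0}\leq C_{\max}\Phi_{T_0}+C_1$, substitution of the GSGO bound of Lemma \ref{lem:exp_phit+1_phi_0} with $\rho_0^{T_0}=\epsilon_0$, and the inequality $\rho_0\leq\rho$ to replace $\rho^{-T_0}$ by $\epsilon_0^{-1}$ in the $V_e$ and $C_1$ terms. The only detail you leave implicit is the precise origin of $C_1$: it comes from the change of step size $s_0\to s$ at the switch, which shifts the limit points $H^\star_{\bx},H^\star_{\by}$ and produces the additive $2m\sqrt{\delta}(s-s_0)^2\Vert\frac{1}{m}\sum_i\nabla f_i(z^\star)\Vert^2$ terms, while the reference-point terms $\tilde{c}_x\Vert\tilde{\bx}_{T_0}-\mathbf{1}x^\star\Vert^2$ merge into the $\bx$-block of $\Phi_{T_0}$ (giving the ratio $(M_x+\tilde{c}_x)/M_{x,0}$ inside $C_{\max}$), exactly as you anticipated.
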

Lemma \ref{lem:tilde_phi_t_ub_T0_t} which is proven in Appendix \ref{switching_point_proofs} \cite{cdctechnicalreport}, leads to an upper bound on $E[\tilde{\Phi}_T]$ when GSGO switches to SVRGO at $T_0$ in Algorithm \ref{alg:IPDHG_with_sgd_svrg_oracle}. This upper bound is small for sufficiently large $T$ and hence iterates $x^i_T, y^i_T$ are also close to saddle point solution $x^\star, y^\star$ in expectation according to the definition of $\tilde{\Phi}_T$. 
We recall that $T_0$ is chosen such that a sufficient progress is obtained from initial value $\Phi_0$.  Now using the facts that $T \geq T_0 +1$, $\rho \in (0,1)$,  \eqref{eq:phi_tilde_rho_T0} reduces to 
\begin{align}
	E[\tilde{\Phi}_T] & \leq C_{\max} ( \epsilon_0 \Phi_{0}  + \frac{V_e \rho^{T}}{\epsilon_0} )  + \frac{C_1\rho^{T}}{\epsilon_0} \label{eq:phi_tilde_rho_epsilon0}.
\end{align}
 Interestingly, the expected value of $\tilde{\Phi}_T$ in \eqref{eq:phi_tilde_rho_epsilon0} is upper bounded by an $\epsilon_0$-dependent quantity which attains its minimum value at $\epsilon^\star_0 = \sqrt{\frac{(C_{\max} V_e + C_1)\rho^T}{C_{\max}\Phi_0}}$, where $T$ is the total number of iterations used in Algorithm \ref{alg:IPDHG_with_sgd_svrg_oracle}. One natural way is to set $T$ to be the total number of iterations  $T(\epsilon)$ required to achieve an $\epsilon$-accurate saddle point solution. 


\noindent \textbf{Computing} $T(\epsilon)$: By substituting $\epsilon_0 = \epsilon^\star_0$ in \eqref{eq:phi_tilde_rho_epsilon0}, we get $E[ \tilde{\Phi}_{T} ] \leq  2\sqrt{C_{\max}\Phi_0(C_{\max} V_e + C_1)\rho^T}$.
As a consequence, after $ T(\epsilon) = \frac{2}{-\log \rho}\log (\frac{2\sqrt{C_{\max}\Phi_0(C_{\max} V_e + C_1)}}{\epsilon})$ iterations,  Algorithm \ref{alg:IPDHG_with_sgd_svrg_oracle} returns an $\epsilon$-accurate saddle point solution in expectation. 
Therefore, we get $\epsilon^\star_0 = \frac{\epsilon}{2C_{\max}\Phi_0}$, and hence $T_0 = \lceil \frac{1}{\log \rho_0}\log(\frac{\epsilon}{2C_{\max}\Phi_0}) \rceil$. We see that the value of $T_0$ depends on 
\begin{align}
\Phi_{0} &= M_{x,0} \Vert \bx_{0} - (\mathbf{1} \otimes I_{d_x})x^\star  \Vert^2 + M_{y,0} \Vert \by_{0} - (\mathbf{1} \otimes I_{d_y})y^\star  \Vert^2   \nonumber \\ 
  & \ \ + \frac{2s_0^2}{\gamma_{x,0}}  \Vert ((I-J)\otimes I_{d_x})\nabla_x F(\bz^\star)  \Vert^2_{(I-W)^\dagger} \nonumber \\  
& \ \ + \frac{2s_0^2}{\gamma_{y,0}}  \Vert ((I-J)\otimes I_{d_y})\nabla_y F(\bz^\star)  \Vert^2_{(I-W)^\dagger} \nonumber \\ 
& \ \ + \sqrt{\delta}\Vert \bx_{0} - (\mathbf{1} \otimes I_{d_x})x^\star + \frac{s_0}{m}(\mathbf{1} \otimes I_{d_x} )\nabla_x f(z^\star) \Vert^2  \nonumber \\ 
 & + \sqrt{\delta}\Vert \by_{0} -  (\mathbf{1} \otimes I_{d_y})y^\star - \frac{s_0}{m}(\mathbf{1} \otimes I_{d_y})\nabla_y f(z^\star)\Vert^2.\label{phi_0}
\end{align}
It is clear that computing $\Phi_0$ requires knowledge of the saddle point solution $z^\star$ which is not available in practice. We also emphasize that $\Phi_{0}$ is a global quantity as it depends on the full gradient information of $f(x,y)$ which is inaccessible to the nodes. We address this issue by proposing a practical version of Algorithm \ref{alg:IPDHG_with_sgd_svrg_oracle} which approximates $\Phi_0$ using local information and without the knowledge of $z^\star$.

\subsection{Practical Approach for Determining Switching Point}

To determine the switching point in Algorithm \ref{alg:IPDHG_with_sgd_svrg_oracle}, we discuss a practical scheme whose broad idea is illustrated in Figure \ref{fig:heuristic_scheme_depiction}. This scheme consists of the following steps. We first allow IPDHG with GSGO to perform $T_0^{'} = \lceil \frac{\log 2}{-\log \rho_0} \rceil$ iterations to obtain primal iterate $\bx_{T^{'}_0}$ and dual iterate $\by_{T_0^{'}}$. These primal dual iterates might saturate to a point in $T_0^{'}$ iterations due to the nature of GSGO. To detect this behavior, each node $i$ computes the distance between the last two successive iterates and gets an approximation of the average quantity $m^{-1}\sum_{i = 1}^m\Vert z^i_{T_0^{'}} - z^i_{T_0^{'}-1}\Vert^2$ using the accelerated gossip scheme \cite{liu2011accelerated}. 
If the average distance is less than a suitable small threshold for atleast one node, then each node $i$ switches to SVRGO. If a fraction of nodes find the average distance to be within the threshold value, then this information can be spread to the entire network in maximum number of hops of order $\mathcal{O}(m)$. On the other hand, if the average distance is above threshold value, then we consider $\bz_{T^{'}_0}$ as a proxy of $\bz^\star$
to compute approximation of $\Phi_0$. Using this procedure, the last two terms of $\Phi_{0}$ depend on global gradients $\frac{1}{m}\sum_{i = 1}^m \nabla_x f_i(x^i_{T^{'}_0}, y^i_{T^{'}_0})$ and $\frac{1}{m}\sum_{i = 1}^m \nabla_y f_i(x^i_{T^{'}_0}, y^i_{T^{'}_0})$. Each node can now run accelerated gossip scheme on local gradients $\nabla_x f_i(x^i_{T^{'}_0}, y^i_{T^{'}_0})$ and $\nabla_y f_i(x^i_{T^{'}_0}, y^i_{T^{'}_0})$ to achieve approximations $\tilde{\mathcal{G}}^{i,x}_{T_0^{'}}$ and $\tilde{\mathcal{G}}^{i,y}_{T_0^{'}}$ of the global gradients. Finally, each node $i$ approximates $\Phi_0$ by average quantity $\bar{\Phi}^i_0(T'_0)$ where $\{\bar{\Phi}^i_0(T'_0)\}_{i=1}^m$ are obtained using accelerated gossip scheme on local scalar values $\Phi^i_0(T'_0)$ given by:
\begin{align}
 \Phi^i_0(T'_0) &= M_{x,0} \Vert x_{0} - x^i_{T_0^{'}} \Vert^2 + M_{y,0} \Vert y_{0}  - y^i_{T_0^{'}}  \Vert^2   \nonumber  \\
	& + \frac{2s_0^2}{\gamma_{x,0}}  \Vert \nabla_x f_i(z^i_{T_0^{'}}) -  \tilde{\mathcal{G}}^{i,x}_{T_0^{'}} \Vert^2 \lambda_{\max}(I-W)^\dagger \nonumber \\ & + \frac{2s_0^2}{\gamma_{y,0}}  \Vert \nabla_y f_i(z^i_{T_0^{'}}) -  \tilde{\mathcal{G}}^{i,y}_{T_0^{'}} \Vert^2 \lambda_{\max}(I-W)^\dagger  \nonumber \\
  & \hspace*{-1cm} + \sqrt{\delta}\Vert x_{0} - x^i_{T_0^{'}}  + s_0\tilde{\mathcal{G}}^{i,x}_{T_0^{'}} \Vert^2  + \sqrt{\delta}\Vert y_{0} -  y^i_{T_0^{'}} - s_0\tilde{\mathcal{G}}^{i,y}_{T_0^{'}} \Vert^2. \nonumber
\end{align}
Finally, the above process yields the approximated value of switching point as $T^i_0  = \lceil \frac{\log\bar{\epsilon}^i_0}{\log\rho_0} \rceil = \lceil \frac{1}{\log \rho_0}\log(\frac{\epsilon}{2C_{\max}\bar{\Phi}^i_0(T'_0)}) \rceil$. We note that values $\bar{\Phi}^i_0(T'_0)$ are close to each other because they are the outputs of gossip scheme and hence values $T_0^i$ are also similar for all nodes. A concise form of this entire procedure is formally demonstrated in Algorithm \ref{alg:detecting_switching_point}. It is worth noting that the above practical approach to detect switching point invokes gossip scheme thrice out of which two gossips are performed only on scalar values. 
\begin{figure}[!hbp]
	\begin{minipage}{.23\textwidth}
	\centering
	\includegraphics[width=1\linewidth]{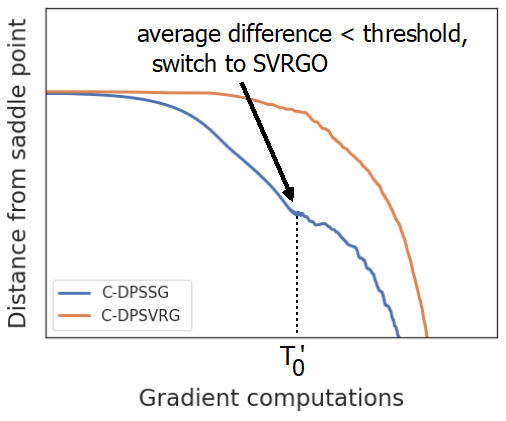}  
			\end{minipage}
		\begin{minipage}{.23\textwidth}
			\centering
			\includegraphics[width=1\linewidth]{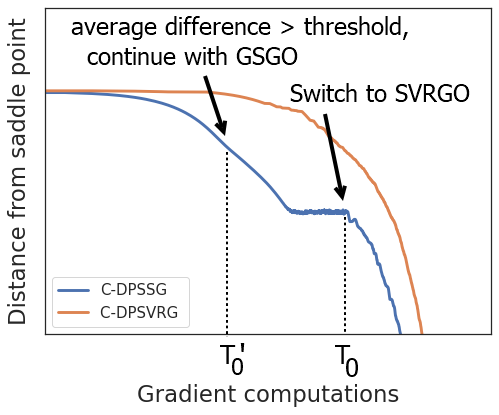}  
		\end{minipage}
\caption{ Broad illustration of proposed practical method to determine switching point. Average difference = $m^{-1}\Vert \bz_{T_0^{'}} - \bz_{T_0^{'}-1} \Vert^2$}	
\label{fig:heuristic_scheme_depiction}
\end{figure}
\begin{algorithm}[ht!]\footnotesize
	\caption{Practical way of determining switching point}
	\label{alg:detecting_switching_point}
	\begin{algorithmic}[1]
\STATE Implement $T'_0 = \lceil \frac{\log2}{-\log\rho_0} \rceil$ iterations of IPDHG with GSGO and obtain $\bz_{T_0^{'}}$, $\bz_{T_0^{'}-1} $
\STATE  Each node $i$ gets approximated value $\hat{z}^i_{T'_0}$ of $m^{-1}\Vert \bz_{T_0^{'}} - \bz_{T_0^{'}-1} \Vert^2$ by invoking accelerated gossip \cite{liu2011accelerated} on $\{ \Vert z^i_{T'_0} - z^i_{T'_0-1} \Vert^2 \}_{i=1}^m$
\IF { $\hat{z}^i_{T'_0} >$ threshold for every node $i$  }
\STATE  Each node $i$ obtains approximation $\bar{\Phi}^i_0(T'_0)$ of global quantity $\Phi_{0}$ by invoking accelerated gossip on $\{ \Phi^i_0(T'_0) \}_{i=1}^m$ and computes $T^i_0$
\STATE Each node $i$ continues with GSGO for remaining $T^i_0 - T'_0$ iterations
\ELSE
\STATE Switch to SVRGO and continue using SVRGO
\ENDIF
\end{algorithmic}
\end{algorithm}		

\textbf{Approximation quality of } $T_0$: From previous discussion, we have the approximated value $T^i_0 = \lceil \frac{\log\bar{\epsilon}^i_0}{\log \rho_0} \rceil$. 
Whenever $\bar{\epsilon}^i_0 \approx \epsilon^\star_0$, $T^i_0$ is a good approximation of $T_0$. 
We observe in our empirical study that the values of $\bar{\epsilon}^i_0$ and $\epsilon^\star_0$ are close to each other which in turn implies that $T^i_0$ and $T_0$ are also close (see Table \ref{empirical_epsilon0_mainpaper} in Section \ref{sec:experiments}).

\subsection{Complexity of Algorithm \ref{alg:IPDHG_with_sgd_svrg_oracle}}

We present the iteration complexity of Algorithm \ref{alg:IPDHG_with_sgd_svrg_oracle} in Theorem \ref{thm:switching_scheme_complexity} below.
\begin{theorem} \label{thm:switching_scheme_complexity}
	Let $\{\bx_{t} \}_t, \{\by_{t}\}_t$  be the sequences generated by Algorithm \ref{alg:IPDHG_with_sgd_svrg_oracle}. Suppose Assumptions \ref{s_convexity_assumption}-\ref{lipschitz_yx_svrg_main} hold. Then  iteration complexity of Algorithm \ref{alg:IPDHG_with_sgd_svrg_oracle} for achieving $\epsilon$-accurate saddle point solution in expectation is
	\begin{align}
		& \mathcal{O}  ( \max \{  \frac{\sqrt{\delta}(1+\delta)\kappa_g \kappa_f^2}{np_{\min}} ,  (1+\delta)\kappa_g, \frac{(1+\delta)\kappa_f^2}{np_{\min}} , \frac{2}{p}   \} \nonumber \\ & \hspace*{2em} \times \log (\frac{2\sqrt{C_{\max}\Phi_0(C_{\max} V_e + C_1)}}{\epsilon} ) ).
	\end{align}
\end{theorem}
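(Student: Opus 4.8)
The plan is to combine the two convergence estimates just established --- Lemma~\ref{lem:exp_phit+1_phi_0} for the GSGO phase ($0\le t\le T_0-1$) and Lemma~\ref{lem:tilde_phi_t_ub_T0_t} for the SVRGO phase ($t\ge T_0$) --- with the optimal choice of the switching hyperparameter $\epsilon_0$, and then to make the resulting iteration count explicit in the problem constants. Concretely, I would invoke Lemma~\ref{lem:tilde_phi_t_ub_T0_t}, whose bound \eqref{eq:phi_tilde_rho_T0} already folds in the GSGO-phase estimate of Lemma~\ref{lem:exp_phit+1_phi_0} through the variance term $V_e$ and the relation $\rho_0^{T_0}=\epsilon_0$. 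Using $T\ge T_0+1$ and $\rho\in(0,1)$ to drop the factor $\rho^{T-T_0}\le 1$, the bound simplifies to \eqref{eq:phi_tilde_rho_epsilon0}, i.e.\ $E[\tilde{\Phi}_T]\le C_{\max}\Phi_0\epsilon_0+(C_{\max}V_e+C_1)\rho^{T}/\epsilon_0$. Minimising this over $\epsilon_0\in(0,1)$ gives $\epsilon_0^\star=\sqrt{(C_{\max}V_e+C_1)\rho^{T}/(C_{\max}\Phi_0)}$ and the clean estimate $E[\tilde{\Phi}_{T}]\le 2\sqrt{C_{\max}\Phi_0(C_{\max}V_e+C_1)\rho^{T}}$.

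Next I would impose $2\sqrt{C_{\max}\Phi_0(C_{\max}V_e+C_1)\rho^{T}}\le\epsilon$; since $\tilde{\Phi}_{T}$ dominates, up to positive constants, $\Vert\bx_T-(\mathbf{1}\otimes I_{d_x})x^\star\Vert^2+\Vert\by_T-(\mathbf{1}\otimes I_{d_y})y^\star\Vert^2$, this certifies an $\epsilon$-accurate saddle point solution in expectation. Solving the inequality for $T$ gives the count $T(\epsilon)$ already recorded in the discussion above, which supplies the logarithmic factor in the statement; substituting $T=T(\epsilon)$ back yields $\epsilon_0^\star=\epsilon/(2C_{\max}\Phi_0)$ and hence $T_0=\lceil\frac{1}{\log\rho_0}\log\epsilon_0^\star\rceil$, the switching point recorded just before \eqref{phi_0}. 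For consistency I would also note that $\epsilon_0^\star\in(0,1)$ for $\epsilon$ small enough and $\rho_0\in(0,1)$, so $0\le T_0\le T(\epsilon)$ and the switch genuinely occurs before termination.

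The remaining --- and main --- task is to turn $1/(-\log\rho)$ into the advertised $\max\{\cdots\}$ prefactor. Using $-\log\rho\ge 1-\rho$ for $\rho\in(0,1)$ it suffices to upper bound $1/(1-\rho)$, where $\rho$ is the SVRGO-phase contraction factor produced by the Lyapunov recursion for $\tilde{\Phi}_t$. The quantity $1-\rho$ is the minimum of several competing per-iteration decrease rates, and with the prescribed step size $s=\frac{\mu np_{\min}}{24L^2}$ and the parameters $\gamma_x,\gamma_y,\alpha_x,\alpha_y$ from Appendices~\ref{appendix_parameters_feasibility} and~\ref{svrg_parameters_setup} one shows that, up to universal constants, $1-\rho$ is bounded below by the reciprocal of each of the four candidates: a graph-mixing term of order $1/((1+\delta)\kappa_g)$; a strong-monotonicity-versus-smoothness term of order $np_{\min}/((1+\delta)\kappa_f^2)$; a compression cross term of order $np_{\min}/(\sqrt{\delta}(1+\delta)\kappa_g\kappa_f^2)$ coming from the $H$-blocks of the Lyapunov function; and a reference-point-refresh term of order $p$ from the Bernoulli update in SVRGO. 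Taking the minimum and inverting then gives $1/(1-\rho)$ of the order of the $\max\{\cdots\}$ in the statement, and multiplying by the logarithmic factor from $T(\epsilon)$ produces exactly the claimed complexity.

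I expect the hard part to be this last step: verifying that the parameter choices in the appendices are feasible, so that the Lyapunov recursion for $\tilde{\Phi}_t$ genuinely contracts with some $\rho\in(0,1)$, and that under those choices every one of the four quantities is simultaneously a valid lower bound for $1-\rho$ up to constants, so that their minimum --- equivalently the maximum of their reciprocals --- controls the rate. This is a careful but essentially mechanical bookkeeping of the coefficients entering the definition of $\rho$. By contrast the GSGO phase enters only through the logarithmic factor of $T(\epsilon)$, since $\rho_0$ has the same structure as $\rho$ (with $s_0$ in place of $s$) and the count $T_0$ is dominated by the SVRGO-phase iterations, so no additional condition-number factor is incurred there.
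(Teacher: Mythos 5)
Your proposal is correct and follows essentially the same route as the paper: it reuses the chain Lemma~\ref{lem:exp_phit+1_phi_0} $\to$ Lemma~\ref{lem:tilde_phi_t_ub_T0_t} $\to$ optimal $\epsilon_0^\star$ giving $E[\tilde{\Phi}_T]\le 2\sqrt{C_{\max}\Phi_0(C_{\max}V_e+C_1)\rho^{T}}$, solves for $T(\epsilon)$, and then converts $1/(-\log\rho)$ into the $\max\{\cdots\}$ prefactor by lower-bounding $1-\rho$ through the explicit bounds on $b_x,b_y,\gamma_x,\gamma_y,\alpha_x,\alpha_y$ and $p$ from the parameter-feasibility lemma, exactly as the paper does (the paper uses $1/(-\log(1-x))\le 5/x$ where you use $-\log\rho\ge 1-\rho$, which is the same up to constants; your attribution of the $\sqrt{\delta}\kappa_g\kappa_f^2$ term to the $H$-blocks rather than the $D$-block coefficient $1-\frac{\gamma}{2}\lambda_{m-1}(I-W)$ is an immaterial mislabeling).
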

The proof of Theorem \ref{thm:switching_scheme_complexity} requires significant technical background to be developed. Unfortunately, due to space constraints, we are unable to discuss relevant background details here and hence provide the proof in Appendix \ref{switching_point_proofs} of technical report \cite{cdctechnicalreport}. 
Theorem \ref{thm:switching_scheme_complexity} indicates that Algorithm \ref{alg:IPDHG_with_sgd_svrg_oracle} converges to $\epsilon$-accurate saddle point solution with linear rate. Further, the complexity in Theorem \ref{thm:switching_scheme_complexity} depends on compression factor as $\mathcal{O}(\max\{\sqrt{\delta}(1+\delta),1+\delta \})$. Without compression ($\delta = 0$), the iteration complexity reduces to $\mathcal{O}(\max \{\kappa_g, \frac{\kappa_f^2}{np_{\min}}, \frac{2}{p} \} \log (\frac{2\sqrt{C_{\max}\Phi_0(C_{\max} V_e + C_1)}}{\epsilon} ) )$.
The communication complexity of Algorithm \ref{alg:IPDHG_with_sgd_svrg_oracle} has a term similar to iteration complexity along with an additional number of communications required in gossip scheme at $T_0^{'}$-th iteration. In terms of gradient computations, Algorithm \ref{alg:IPDHG_with_sgd_svrg_oracle} requires $(2B+pN_\ell)T(\epsilon) - (B+pN_\ell)T_0$ gradient computations.

\section{Related Work}
\label{sec:related_work}

 A distributed saddle point algorithm with Laplacian averaging (DSPAwLA) is proposed in \cite{nunezcortes2017distsaddlesubgrad} to solve non-smooth convex-concave saddle point problems. 
 An extragradient method with gradient tracking (GT-EG) \cite{mukherjeecharaborty2020decentralizedsaddle} is shown to converge with linear rates for strongly convex-strongly concave problems, under a positive lower bound assumption on the gradient difference norm.
  A distributed Min-Max data similarity (MMDS) algorithm under a suitable data similarity assumption is proposed in \cite{beznosikovetal2020distsaddle} which requires solving an inner saddle point problem at every iteration. Another work \cite{qureshi2022distributed} has designed algorithms for smooth saddle point problems with bilinear structure. We emphasize that \cite{nunezcortes2017distsaddlesubgrad,mukherjeecharaborty2020decentralizedsaddle,beznosikovetal2020distsaddle,qureshi2022distributed} are based on non-compressed communications and full batch gradient computations which limit their applicability to large scale problems.
 
 Multiple works \cite{liu2020decentralized,beznosikov2020distributed,xianetal2021fasterdecentnoncvxsp,kovalev2022optimal,chen2022simple,gao2022decentralized} have developed algorithms using stochastic gradients, albeit \textit{without compression} for solving decentralized saddle point problems. Note that \cite{sharma2022stochastic} has designed two different compression based algorithms using GSGO's variant and SVRGO respectively for general stochastic setting and finite sum setting. 
  Figure \ref{fig:contribution_summary} below helps positioning our work in context of existing methods. 
 \begin{figure}[!htbp]
 	\centering
 	\includegraphics[width=1\linewidth]{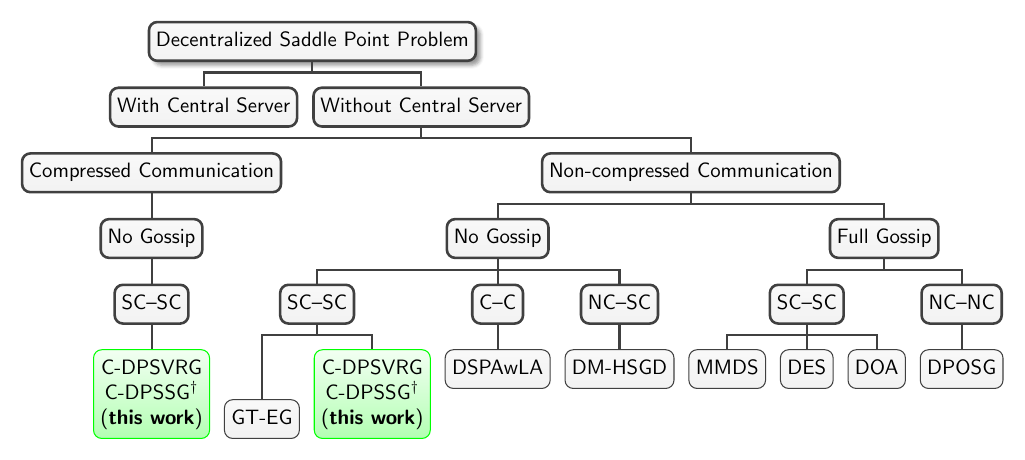}  
 	\vspace{-0.3in}
 	\caption{Our work in comparison to existing art. GT-EG: \cite{mukherjeecharaborty2020decentralizedsaddle}, DSPAwLA: \cite{nunezcortes2017distsaddlesubgrad}, DMHSGD: \cite{xianetal2021fasterdecentnoncvxsp}, MMDS: \cite{beznosikovetal2020distsaddle}, DES: \cite{beznosikov2020distributed}, DOA: \cite{kovalev2022optimal}, DPOSG: \cite{liu2020decentralized}. C-C, SC-SC, NC-NC, NC-SC denote respectively convex-concave, strongly convex-strongly concave, nonconvex-nonconcave, nonconvex-strongly concave. $^\dagger$: C-DPSSG uses gossip only for deciding switching point, not for iterate updates.}	
 	\label{fig:contribution_summary}
 \end{figure}
 \vspace*{-0.5cm}

\section{Numerical Experiments}
\label{sec:experiments}
We investigate\footnote{All codes are available at \url{https://github.com/chhavisharma123/C-DPSSG-CDC2023}} the performance of proposed algorithms on robust logistic regression and AUC maximization. We rely on binary classification datasets a4a and ijcnn1 from \url{https://www.csie.ntu.edu.tw/~cjlin/libsvmtools/datasets/}. 
We consider a 2d torus topology of $20$ nodes in all our experiments. Additional experiments for ijcnn1, phishing and sido data are presented in our technical report \cite{cdctechnicalreport} (see Appendices \ref{appendix_experiments}- \ref{appendix_auc_max}). The performance of proposed algorithms on ring topology, convergence behavior with number of nodes and bits used for compression are also presented in \cite{cdctechnicalreport}. We use an unbiased $b$-bits quantization operator $Q_{\infty}(\cdot)$ \cite{liu2020linear} in all the experiments. 
\subsection{Robust Logistic Regression }
We consider robust logistic regression problem
{\footnotesize
\begin{align}
&\min_{ x \in \mathcal{X}} \max_{y \in \mathcal{Y}} \frac{1}{N} \sum_{i = 1}^N \log( 1+ \exp(-b_ix^\top(a_i + y)) ) + \frac{\lambda}{2}\Vert x \Vert^2_2 -\frac{\beta}{2} \Vert y \Vert^2_2, \label{main_paper_robust_logistic_regression}
\end{align}
}
over a binary classification data set $\mathcal{D} = \{(a_i, b_i) \}_{i = 1}^N$.  The constraint sets $\mathcal{X}$ and $\mathcal{Y}$ are $\ell_2$ balls of radius $100$ and $1$ respectively.  We set number of bits $b = 4$ in quantization operator $Q_\infty(x)$ and $\lambda = \beta = 10$.

\textbf{Switching Point:} For C-DPSSG, we take threshold value to be $10^{-8}$ and implement 20 iterations of accelerated gossip to decide the switch to SVRGO. It turns out that C-DPSSG switches to SVRGO after performing $T_0^{'}$ iterations with GSGO because the gap between two consecutive iterates gets saturated in these many iterations. 

\textbf{Observations:} Switching method C-DPSSG converges faster than C-DPSVRG and other baseline methods as demonstrated in Figure \ref{fig:logistic_regression_AUC_val}. DPOSG and DM-HSGD converge only to a neighborhood of the saddle point solution and start oscillating after a number of iterations. Note that MMDS has poor performance because it is based on full batch gradient computations and multiple calls of gossip scheme at every iterate. 

\subsection{AUC maximization}

We evaluate the effectiveness of proposed algorithms on area under receiver operating characteristic curve (AUC) maximization \cite{ying2016stochastic} formulated as:
\begin{align}
\min_{x,u,v} \max_y \frac{1}{N}\sum_{i = 1}^N F(x,u,v,y;a_i,b_i) + \frac{\lambda}{2} \left\Vert x \right\Vert^2_2 , \label{auc_max}
\end{align}
where $F(x,u,v,y;a_i,b_i)$$=$$(1-q)$$(a_i^\top x - u)^2 \delta_{\left[b_i = 1 \right] }+ q(a_i^\top x - v)^2 \delta_{\left[b_i = -1 \right] }-q(1-q)y^2$$ + 2(1+y)\left( qa_i^\top x \delta_{\left[b_i = -1 \right] }- (1-q)a_i^\top x \delta_{\left[b_i = 1 \right] } \right)$, the fraction of positive samples is given by $q$.
We set $\lambda = 10^{-5}$ in \eqref{auc_max} and consider constraint sets as $\ell_2$ ball of radius 100 and 200 respectively on primal and dual variables.

\textbf{Observations:} We observe that C-DPSSG switches to SVRGO after $T^i_0$ iterations. The AUC plots on training set in Figure \ref{fig:logistic_regression_AUC_val} show that   
C-DPSSG achieves higher AUC value faster in terms of gradient computations, communications and bits transmitted. These observations suggest that switching scheme is beneficial over purely SVRGO based scheme for obtaining high AUC value as it saves time and gradient computations in the crucial early stage. 

\vspace*{-0.25cm}

{\scriptsize
\begin{table}[h]
	\begin{center}
		\setlength\extrarowheight{-1pt}
		\setlength{\tabcolsep}{0.4mm}
		\begin{tabular}{|c|c|c|c|c|c|c|c|}
			\hline
			Data & $\Phi_{0}$ & $ \bar{\Phi}^i_0(T'_0)$ & $\epsilon^\star_0 $ & $\bar{\epsilon}^i_0$ & $T_0$ & $T^i_0$ & $T_0^{'}$   \\ \hline
			a4a & $47.4$ & $8.6$ & $5.3 \times 10^{-11}$ & $2.9 \times 10^{-10}$ & $195315$ & $181248$ & $5720$  \\ \hline
			ijcnn1 & $23.8$ & $3.9$ & $ 10^{-10}$ & $6.3 \times 10^{-10}$ & $50947$ & $46960$ & $1536$ \\ \hline
		\end{tabular}
		\caption{Values of $\bar{\Phi}^i_0(T'_0)$, $\bar{\epsilon}^i_0$ $T^i_0$ (observed same for all nodes) obtained from Algorithm \eqref{alg:detecting_switching_point} and $ \epsilon^\star_0, T_0, T'_0$  on AUC maximization.} \label{empirical_epsilon0_mainpaper}
	\end{center}
\end{table}
}
\vspace*{-0.85cm}

Table \ref{empirical_epsilon0_mainpaper} reports the true values $\Phi_{0}, \epsilon_0^\star, T_0$ and the approximate values $ \bar{\Phi}^i_0(T'_0), \bar{\epsilon}^i_0, T_0^i$ in AUC maximization. We notice that $\epsilon_0^\star$ and $\bar{\epsilon}^i_0$ are close to each other and hence the difference between $T_0$ and $T^i_0$ is also small. 

{\footnotesize
\begin{figure}[!htbp]
	\begin{minipage}{0.25\textwidth}
		\centering
		\includegraphics[scale=0.25]{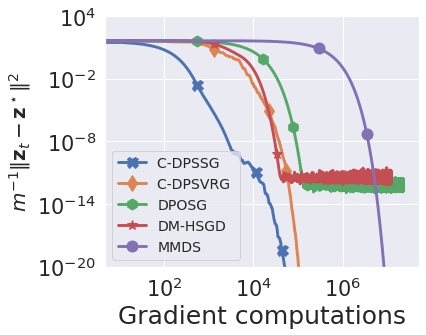}  
	\end{minipage}%
\begin{minipage}{0.25\textwidth}
\centering
\includegraphics[scale=0.2]{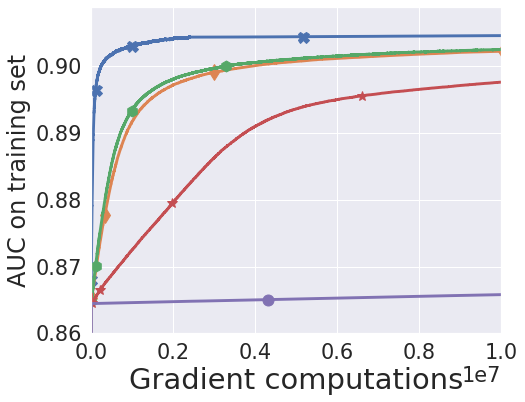}  
\end{minipage}
	\begin{minipage}{0.25\textwidth}
		\centering
		\includegraphics[scale=0.25]{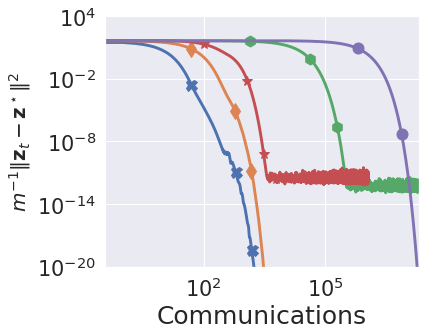}  
	\end{minipage}%
\begin{minipage}{0.25\textwidth}
	\centering
	\includegraphics[scale=0.2]{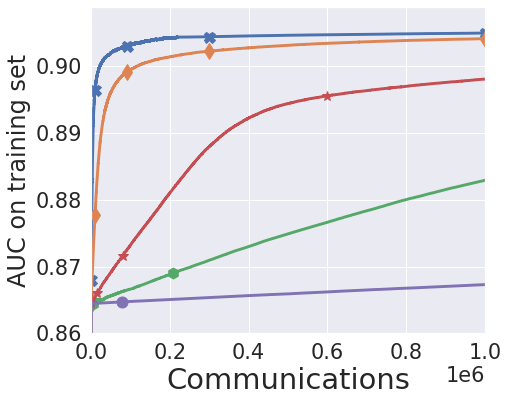}  
\end{minipage}
	\begin{minipage}{0.25\textwidth}
		\centering
		\includegraphics[scale=0.25]{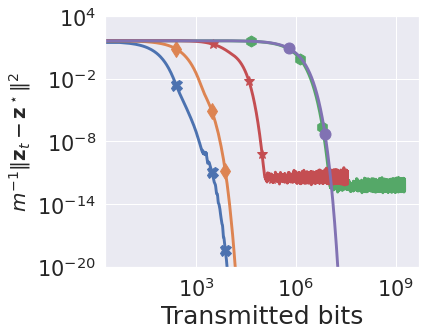}  
	\end{minipage}%
	\begin{minipage}{0.25\textwidth}
	\centering
	\includegraphics[scale=0.2]{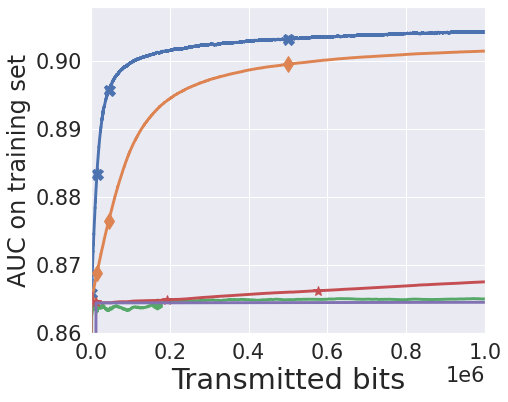}  
\end{minipage}
\vspace{-0.2cm}
	\caption{Left panel: Convergence behavior of iterates to saddle point for robust logistic regression. Right panel: AUC value on training set for AUC maximization problem.} \label{fig:logistic_regression_AUC_val}
\end{figure}
}

\vspace*{-0.25cm}
\section{Conclusion}
\label{sec:conclusion}
This work presents C-DPSSG, a technique that leverages the best phases of GSGO and SVRGO in a decentralized setting with compression, by performing a switch between them. The proposed algorithm offers practical advantages for efficiently obtaining low, medium and highly accurate solutions. 
%
Adapting the algorithm to cases where some constants are unknown in the problem setup would be an interesting direction to explore in future. 

\vspace*{-0.2cm}

%
%
	
	
%
%
%
%
	

\bibliographystyle{plain} 
\bibliography{references}
\onecolumn
\begin{center} 
	\huge{\textbf{Appendix}}
\end{center}
\tableofcontents
\newpage

\section{Related Work}
\label{app:related_work}

 A distributed saddle point algorithm with Laplacian averaging (DSPAwLA) in \cite{nunezcortes2017distsaddlesubgrad}, based on gradient descent ascent updates to solve non-smooth convex-concave saddle point problems converges with sublinear rate. DSPAwLA is designed by employing deterministic gradient descent ascent to a penalized form ($\ell_2$ norm of consensus constraints gets added to objective function) of \eqref{eq:main_opt_consenus_constraint} with a possibly different weight matrix. These types of penalties require diminishing step size to achieve a consensus point which slows down the speed of the algorithm. However, our work derives an equivalent Lagrangian formulation of consensus constrained saddle point problem \eqref{eq:main_opt_consenus_constraint} and updates primal-dual variables using IPDHG updates. An extragradient method with gradient tracking (GT-EG) proposed in \cite{mukherjeecharaborty2020decentralizedsaddle} is shown to converge with linear rates for strongly convex-strongly concave problems, under a positive lower bound assumption on the gradient difference norm. However, such assumptions might not hold for problems without bilinear structure. \red{Both \cite{mukherjeecharaborty2020decentralizedsaddle} and \cite{nunezcortes2017distsaddlesubgrad} are based on non-compressed communications and full batch gradient computations which limit their applicability to large scale problems.}

Recently, multiple works \cite{liu2020decentralized,xianetal2021fasterdecentnoncvxsp, beznosikov2020distributed} have proposed algorithms using minibatch gradients for solving decentralized saddle point problems. 
Decentralized extra step (DES) \cite{beznosikov2020distributed} shows linear  communication complexity with dependence on the graph condition number as $\sqrt{\kappa_g}$,   obtained at the cost of incorporating multiple rounds of communication of primal and dual updates. A near optimal distributed Min-Max data similarity (MMDS) algorithm is proposed in \cite{beznosikovetal2020distsaddle} under a suitable data similarity assumption. MMDS is based on full batch gradient computations and requires solving an inner saddle point problem at every iteration. MMDS allows communication efficiency by choosing only one node uniformly at random to update the iterates. However, every node computes the full batch gradient before heading to the next gradient based updates. Moreover, this scheme employs accelerated gossip \cite{liu2011accelerated} multiple times to propagate the gradients and model updates to the entire network. 
 
Decentralized parallel optimistic stochastic gradient method (DPOSG) was proposed in \cite{liu2020decentralized} for nonconvex-nonconcave saddle point problems. This method involves local model averaging step (multiple communication rounds) to reduce the effect of consensus error. A gradient tracking based algorithm called DM-HSGD for solving nonconvex-strongly concave saddle point problems proposed in \cite{xianetal2021fasterdecentnoncvxsp}, uses a large mini-batch at the first iteration and requires the nodes to communicate both model and gradient updates, to achieve better aggregates of quantities. Variance reduction based optimal methods without compression to solve strongly convex-strongly concave non-smooth finite sum variational inequalities are developed in \cite{kovalev2022optimal}. The improvement of complexity on graph condition number $\kappa_g$ in \cite{kovalev2022optimal} is achieved using an accelerated gossip scheme. However, C-DPSVRG does not involve any gossip scheme, and C-DPSSG involves gossip only to compute switching point; hence, both methods yield low communication cost per iterate. Figure \ref{fig:contribution_summary} and Table \ref{ratetable} help positioning our work in the context of existing methods. 


\setlength\arrayrulewidth{1pt}
\begin{table}[ht!]
	\setlength{\tabcolsep}{1pt}
		\centering
		\begin{tabular}{|cc|c|c|c|c|c|c|}
			\hline
			\multicolumn{2}{|l|}{}  & \textbf{Algorithm} &  \textbf{SG} &  \textbf{Non-} &  \textbf{Type of } & \textbf{Computation} & \textbf{Communication}  \\
			\multicolumn{2}{|l|}{} &  &  &   \textbf{smooth} &  \textbf{functions} & \textbf{Complexity} & \textbf{Complexity}  \\ \hhline{|-|-|-|-|-|-|-|-|}
			\multicolumn{1}{|c|}{\multirow{12}{*}{ \rotatebox{90}{\textbf{No compression}} }} & \multicolumn{1}{c|}{\multirow{4}{*}{\rotatebox{90}{\textbf{Gossip}}}} &  MMDS \cite{beznosikovetal2020distsaddle}  &  \xmark & \xmark & \cellcolor[HTML]{FFCB2F} SC-SC &  -- &  $\tilde{\mathcal{O}}( \log^2(\frac{1}{\epsilon}) )$  \\ \hhline{|~|~|-|-|-|-|-|-|}
			\multicolumn{1}{|l|}{} &  &  DES \cite{beznosikov2020distributed} & \cmark  &  \xmark & \cellcolor[HTML]{FFCB2F} SC-SC & $\tilde{\mathcal{O}}( \frac{\kappa_f^2}{L^2\epsilon} )$ & $\tilde{\mathcal{O}}( \kappa_f \sqrt{\kappa}_g \log(1/\epsilon ) )$  \\ \hhline{|~|~|-|-|-|-|-|-|}
			\multicolumn{1}{|l|}{} &  & DOA \cite{kovalev2022optimal}  & \cmark & \cmark & \cellcolor[HTML]{FFCB2F} SC-SC  & $\mathcal{O}\left(\kappa_f \log(1/\epsilon ) \right)$  & $\mathcal{O}\left(\kappa_f \sqrt{\kappa_g} \log(1/\epsilon ) \right)$ \\ \cline{3-8}
			\multicolumn{1}{|l|}{} &  &  DPOSG \cite{liu2020decentralized} & \cmark & \xmark & NC-NC & $\mathcal{O}(\frac{1}{\epsilon^{12}})$ & $\tilde{\mathcal{O}}( \frac{1}{\epsilon^{12}} )$ \\ \hhline{|~|-|-|-|-|-|-|-|} 
			\multicolumn{1}{|l|}{} & \multirow{8}{*}{ \rotatebox{90}{\textbf{No Gossip} } } &  GT-EG \cite{mukherjeecharaborty2020decentralizedsaddle} & \xmark & \xmark & \cellcolor[HTML]{FFCB2F} SC-SC & $\mathcal{O}\left(\kappa_f^{4/3} \kappa_g^{4/3} \log\left(\frac{1}{\epsilon} \right)\right)$ & $\mathcal{O}\left(\kappa_f^{4/3} \kappa_g^{4/3} \log\left(\frac{1}{\epsilon} \right)\right)$  \\ \hhline{|~|~|-|-|-|-|-|-|} 
			\multicolumn{1}{|l|}{} &  & DSPAwLA\cite{nunezcortes2017distsaddlesubgrad}  & \xmark & \cmark & C-C & $\mathcal{O}(1/\epsilon^2)$ & $\mathcal{O}(1/\epsilon^2)$  \\ \cline{3-8} 
			\multicolumn{1}{|l|}{} &  &   DMHSGD \cite{xianetal2021fasterdecentnoncvxsp} & \cmark & \xmark & NC-SC & $\mathcal{O}(\frac{\kappa^3}{(1-\lambda_2(W))^2\epsilon^{3}} )$ & $\mathcal{O}(\frac{\kappa^3}{(1-\lambda_2(W))^2\epsilon^{3}} )$  \\ \hhline{|~|~|-|-|-|-|-|-|}
			\multicolumn{1}{|l|}{} &  & $\textcolor{blue}{\text{C-DPSSG}}^\dagger$ \textbf{(Theorem \ref{thm:switching_scheme_complexity})} &  &  & \cellcolor[HTML]{FFCB2F} &  &   \\ 
			\multicolumn{1}{|l|}{} &  & {\color{blue}{C-DPSVRG}} \textbf{(Theorem \ref{thm:svrg_complexity})} & \cmark & \cmark & \cellcolor[HTML]{FFCB2F} SC-SC & $ \mathcal{O}(\max \{\kappa_f^2,\kappa_g \} \log(1/\epsilon)) $ & $ \mathcal{O}(\max \{\kappa_f^2,\kappa_g \} \log(1/\epsilon)) $  \\ 
			\multicolumn{1}{|l|}{} &  &   & & & \cellcolor[HTML]{FFCB2F}   &  &  \\ \hhline{|~|-|-|-|-|-|-|-|} 
			\multicolumn{1}{|c|}{\multirow{5}{*}{ \rotatebox{90}{\textbf{Compression}}}} & \multirow{5}{*}{\rotatebox{90}{\textbf{No Gossip}}} &  $\textcolor{blue}{\text{C-DPSSG}}^\dagger$ \textbf{(Theorem \ref{thm:switching_scheme_complexity})} &  &  & \cellcolor[HTML]{FFCB2F} & &  \\ 
			\multicolumn{1}{|l|}{} &  &   & \cmark & \cmark & \cellcolor[HTML]{FFCB2F} SC-SC & $ \mathcal{O}(\max \{\kappa_f^2, \sqrt{\delta}\kappa^2_f\kappa_g,\kappa_g \} $ &  $ \mathcal{O}(\max \{\kappa_f^2, \sqrt{\delta}\kappa^2_f\kappa_g,\kappa_g \}  $ \\
			\multicolumn{1}{|l|}{} &  &  {\color{blue}{C-DPSVRG}} \textbf{(Theorem \ref{thm:svrg_complexity})}  &  &   & \cellcolor[HTML]{FFCB2F} &  $\times (1+\delta)  \log(1/\epsilon))$  & $\times (1+\delta) \log(1/\epsilon))$  \\ 
			\multicolumn{1}{|l|}{} &  &   & & & \cellcolor[HTML]{FFCB2F} &  &  \\ 
			\multicolumn{1}{|l|}{} &  &   & & & \cellcolor[HTML]{FFCB2F} &  &  \\  \hline
		\end{tabular}
	\caption{Comparison of proposed optimization algorithms for decentralized saddle-point problems with state-of-the-art algorithms. SG denotes Stochastic Gradient. Abbreviations SC-SC, C-C, NC-NC, NC-SC respectively denote Strongly convex-Strongly concave, Convex-Concave, Nonconvex-Nonconcave, Nonconvex-Strongly Concave. $^\dagger$: C-DPSSG uses gossip only for deciding switching point, not for iterate updates. } 	\label{ratetable}
\end{table}
\setlength\arrayrulewidth{0.4pt}

\section{Compression Algorithm of \cite{liu2020linear} }
\label{appendix_compression}

We follow \cite{liu2020linear, mishchenko2019distributed} to compress a related difference vector instead of directly compressing $\nu^\bx_{t+1}$ and $\nu^\by_{t+1}$. We now describe the compression related updates for $\nu^\bx_{t+1}$. Each node $i$ is assumed to maintain a local vector $H^{i,\bx}$ and a stochastic compression operator $Q$ is applied on the difference vector $\nu^{i,\bx}_{t+1}-H^{i,\bx}_t$. Hence the compressed estimate $\hat{\nu}^{i,\bx}_{t+1}$ of $\nu^{i,\bx}_{t+1}$ is obtained by adding the local vector and the compressed difference vector using $\hat{\nu}^{i,\bx}_{t+1} = H^{i,\bx}_t + Q(\nu^{i,\bx}_{t+1}-H^{i,\bx}_t)$. The local vector $H^{i,\bx}_t$ is then updated using a convex combination of the previous local vector information and the new estimate $\hat{\nu}^{i,\bx}_{t+1}$ using $H^{i,\bx}_{t+1} = (1-\alpha) H^{i,\bx}_t + \alpha \hat{\nu}^{i,\bx}_{t+1}$ for a suitable $\alpha \in [0,1]$. Collecting the quantities in individual nodes into $H^\bx_{t+1} = (H^{1,\bx}_{t+1}, \ldots, H^{m,\bx}_{t+1})$ and $\nu^\bx_{t+1} = (\nu^{1,\bx}_{t+1}, \ldots, \nu^{m,\bx}_{t+1})$, the update step can be written as $H^{\bx}_{t+1} = (1-\alpha) H^{\bx}_t + \alpha \hat{\nu}^{\bx}_{t+1}$. Pre-multiplying both sides of $H^{\bx}_{t+1}$ update by $W \otimes I$, and denoting $(W \otimes I)H^{\bx}_{t}$ by $H^{w,\bx}_{t}$, and $(W \otimes I)\hat{\nu}^{\bx}_{t}$ by $\hat{\nu}^{w,\bx}_{t}$ we have: $H^{w,\bx}_{t+1} = (1-\alpha)H^{w,\bx}_{t} + \alpha \hat{\nu}^{w,\bx}_{t+1}$.  $\hat{\nu}^{w,\bx}_{t+1}$ can be further simplified as $\hat{\nu}^{w,\bx}_{t+1} = (W \otimes I)\hat{\nu}^{\bx}_{t+1} = (W \otimes I)(H^\bx_t + Q(\nu^{\bx}_{t+1} - H^{\bx}_t)) = H^{w,\bx}_t + (W \otimes I)Q(\nu^{\bx}_{t+1} - H^{\bx}_t)$. A similar update scheme is used for compressing $\nu^\by_{t+1}$. The entire  procedure is illustrated in Algorithm \ref{comm}, where we have used $\nu_{t+1}= (\nu^\bx_{t+1}, \nu^\by_{t+1})$, $H_{t}= (H^\bx_{t}, H^\by_{t})$, $H^w_{t}= (H^{w,\bx_{t}}, H^{w,\by_{t}})$. Further recall that $\nu^\bx_{t+1} = (\nu^{1,\bx}_{t+1}, \ldots, \nu^{m,\bx}_{t+1})$ denotes the collection of the local variables at $m$ nodes. Similar is the case for the other variables $\nu^\by_{t+1}$, $H^\bx_t$, $H^\by_t$, $H^{w,\bx}_t$, $H^{w,\by}_t$.   

\begin{algorithm}[!h]
	\caption{Compressed Communication Procedure (COMM) \cite{liu2020linear}} 
	\label{comm}
	\begin{algorithmic}[1]
		\STATE{\textbf{INPUT:}} {$\nu_{t+1}, H_t, H^w_t, \alpha$}
		\STATE $Q^i_t = Q(\nu^i_{t+1} - H^i_t)$ \ \ \COMMENT{(compression)}
		\STATE $\hat{\nu}^i_{t+1} = H^i_t + Q^i_t$ , 
		\STATE $H^i_{t+1} = (1-\alpha)H^i_t + \alpha \hat{\nu}^i_{t+1}$ , 
		\STATE $\hat{\nu}^{i,w}_{t+1} = H^{i,w}_t + \sum_{j = 1}^m W_{ij}Q^j_t$, \ \ \COMMENT{(communicating compressed vectors)}
		\STATE $H^{i,w}_{t+1} = (1-\alpha)H^{i,w}_t + \alpha \hat{\nu}^{i,w}_{t+1}$ , 
		\STATE {\textbf{RETURN:}} $\hat{\nu}^{i}_{t+1}, \hat{\nu}^{i,w}_{t+1}, H^i_{t+1}, H^{i,w}_{t+1}$ for each node $i$ .
	\end{algorithmic}
\end{algorithm}

\section{Smoothness Assumptions} \label{smoothness_assumptions}
We make the following smoothness assumptions on $f_{ij}$ to derive theoretical guarantees of proposed algorithms.
\begin{myassumption} \label{smoothness_x_svrg} Assume that each $f_{ij}(x,y)$ is $L_{xx}$ smooth in $x$; for every fixed $y$, \\ $\left\Vert \nabla_x f_{ij}(x_1,y) - \nabla_x f_{ij}(x_2,y) \right\Vert$$\leq$$L_{xx}\left\Vert x_1 - x_2 \right\Vert$, $\forall x_1,x_2$  $\in \mathbb{R}^{d_x}$
\end{myassumption}

\begin{myassumption} \label{smoothness_y_svrg} Assume that each $f_{ij}(x,y)$ is $L_{yy}$ smooth in $y$, i.e. for every fixed $x$, $\left\Vert \nabla_y f_{ij}(x,y_1) - \nabla_y f_{ij}(x,y_2) \right\Vert$$\leq$$L_{yy}\left\Vert y_1 - y_2 \right\Vert$, $\forall  y_1, y_2$  $\in \mathbb{R}^{d_y}$.
\end{myassumption}

\begin{myassumption} \label{lipschitz_xy_svrg} Assume that each $\nabla_x f_{ij}(x,y)$ is $L_{xy}$ Lipschitz in $y$, i.e. for every fixed $x$, $\left\Vert \nabla_x f_{ij}(x,y_1) - \nabla_x f_{ij}(x,y_2) \right\Vert$$\leq$$L_{xy}\left\Vert y_1 - y_2 \right\Vert,$  $\forall  y_1, y_2$  $\in \mathbb{R}^{d_y}$.
	
\end{myassumption}
\begin{myassumption} \label{lipschitz_yx_svrg} Assume that each $\nabla_y f_{ij}(x,y)$ is $L_{yx}$ Lipschitz in $x$, i.e. for every fixed $y$, $\left\Vert \nabla_y f_{ij}(x_1,y) - \nabla_y f_{ij}(x_2,y) \right\Vert$$\leq$$L_{yx}\left\Vert x_1 - x_2 \right\Vert$,  $\forall  x_1, x_2$  $\in \mathbb{R}^{d_x}$.
\end{myassumption}
For future discussion, we denote $L = \max\{L_{xx}, L_{yy}, L_{xy}, L_{yx}\}$.
\section{Basic Results and Inequalities}
\label{appendix_A}

\subsection{Equivalence between problems \eqref{eq:main_opt_consenus_constraint} and \eqref{minmax_lagrange_problem}.}

We first define few notations and prove a supporting result in Proposition \ref{null_sqrt_I_W_range_1}  to show the equivalence between problems \eqref{eq:main_opt_consenus_constraint} and \eqref{minmax_lagrange_problem}. Let $\text{Null}(\sqrt{I-W} \otimes I_{d}) := \{ \bz \in \mathbb{R}^{md} | (\sqrt{I-W} \otimes I_{d})\bz = 0 \}$, $\text{Range}(\mathbf{1} \otimes I_{d} ) := \{ (\mathbf{1} \otimes I_{d} )z | z \in \mathbb{R}^d  \}$, and analogously $\text{Range}(\sqrt{I-W} \otimes I_{d} )$. Further, $U_x := (\sqrt{I-W} \otimes I_{d_x})$ and $U_y := (\sqrt{I-W} \otimes I_{d_y})$, $(\text{Range}(\mathbf{1} \otimes I_{d}))^\bot := \{\bz \in \mathbb{R}^{md} | \langle \bz, \bz_0 \rangle = 0 \ \forall \ \bz_0 \in \text{Range}(\mathbf{1} \otimes I_{d} )  \}$.
\begin{proposition} \label{null_sqrt_I_W_range_1} Let $W$ be a weight matrix satisfying assumption \ref{weight_matrix_assumption}. Then $\text{Null}(\sqrt{I-W} \otimes I_{d}) = \text{Range}(\mathbf{1} \otimes I_{d} )$ and $\text{Range}(\sqrt{I-W} \otimes I_{d}) = (\text{Range}(\mathbf{1} \otimes I_{d}))^\bot$.
\end{proposition}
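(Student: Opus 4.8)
The plan is to reduce the statement to the spectral properties of the symmetric matrix $I-W$ and then lift everything to the Kronecker product by block-vector bookkeeping.

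First I would record the relevant facts about $I-W$. By Assumption \ref{weight_matrix_assumption}, $W$ is symmetric, so $I-W$ is symmetric, and its eigenvalues $1-\lambda_i$ satisfy $0 = 1-\lambda_1 < 1-\lambda_2 \le \cdots \le 1-\lambda_m < 2$; in particular $I-W$ is positive semidefinite and $0$ is a \emph{simple} eigenvalue. Row stochasticity gives $W\mathbf{1}=\mathbf{1}$, hence $\mathbf{1}\in\mathrm{Null}(I-W)$, and simplicity of the zero eigenvalue forces $\mathrm{Null}(I-W) = \mathrm{span}(\mathbf{1})$ in $\mathbb{R}^m$. Since $I-W$ is symmetric p.s.d., the symmetric p.s.d.\ square root $\sqrt{I-W}$ exists, is symmetric, and satisfies $\Vert \sqrt{I-W}\,v\Vert^2 = v^\top(I-W)v$ for every $v\in\mathbb{R}^m$; this identity shows $\mathrm{Null}(\sqrt{I-W}) = \mathrm{Null}(I-W) = \mathrm{span}(\mathbf{1})$.

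Next I would compute $\mathrm{Null}(\sqrt{I-W}\otimes I_d)$. Writing $\bz\in\mathbb{R}^{md}$ in block form $\bz = (z^1,\dots,z^m)$ with $z^i\in\mathbb{R}^d$, the $i$-th block of $(\sqrt{I-W}\otimes I_d)\bz$ equals $\sum_{j=1}^m (\sqrt{I-W})_{ij}\,z^j$. Collecting, for each coordinate $k\in[d]$, the $k$-th entries of $z^1,\dots,z^m$ into a vector $\zeta_k\in\mathbb{R}^m$, one sees that $(\sqrt{I-W}\otimes I_d)\bz = 0$ if and only if $\sqrt{I-W}\,\zeta_k = 0$ for all $k$, i.e.\ $\zeta_k\in\mathrm{span}(\mathbf{1})$ for all $k$, i.e.\ $z^1 = z^2 = \cdots = z^m$; this last condition is precisely $\bz\in\mathrm{Range}(\mathbf{1}\otimes I_d)$, which gives the first claimed equality. (Alternatively one may diagonalize $W = \sum_i \lambda_i u_i u_i^\top$ with $u_1 = \mathbf{1}/\sqrt{m}$, so $\sqrt{I-W}\otimes I_d = \sum_i \sqrt{1-\lambda_i}\,(u_iu_i^\top)\otimes I_d$ has kernel $\mathrm{span}\{u_1\otimes e_k : k\in[d]\} = \mathrm{Range}(\mathbf{1}\otimes I_d)$.)

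Finally, the range statement follows immediately from symmetry: $\sqrt{I-W}\otimes I_d$ is the Kronecker product of two symmetric matrices, hence symmetric, so $\mathrm{Range}(\sqrt{I-W}\otimes I_d) = \mathrm{Null}(\sqrt{I-W}\otimes I_d)^\perp = \big(\mathrm{Range}(\mathbf{1}\otimes I_d)\big)^\perp$. The only mildly delicate points are justifying $\mathrm{Null}(\sqrt{I-W}) = \mathrm{Null}(I-W)$ (which needs positive semidefiniteness, handled above) and the block-vector bookkeeping for the Kronecker kernel; both are routine linear algebra, so I do not anticipate a genuine obstacle here.
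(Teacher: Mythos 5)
Your proof is correct, and it differs from the paper's in two places worth noting. For the kernel, the paper proves the nontrivial inclusion by squaring up to $((I-W)\otimes I_d)\bz=0$, rewriting this as $(W\otimes I_d)\bz=\bz$, and then arguing that the eigenvalue-$1$ eigenspace of $W\otimes I_d$ has dimension exactly $d$ and hence equals $\text{Range}(\mathbf{1}\otimes I_d)$; you instead establish $\text{Null}(\sqrt{I-W})=\text{Null}(I-W)=\mathrm{span}(\mathbf{1})$ once via the p.s.d.\ identity $\Vert\sqrt{I-W}\,v\Vert^2=v^\top(I-W)v$ and then reduce the Kronecker kernel coordinatewise (or by spectral decomposition), which amounts to the same conclusion but avoids the multiplicity bookkeeping for $W\otimes I_d$. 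For the range, the difference is more substantive: the paper shows the inclusion $\text{Range}(\sqrt{I-W}\otimes I_d)\subseteq(\text{Range}(\mathbf{1}\otimes I_d))^\bot$ directly and then closes the gap by dimension counting (Rank--Nullity plus the equal-dimension subspace criterion from Friedberg), whereas you invoke symmetry of $\sqrt{I-W}\otimes I_d$ and the standard fact $\text{Range}(M)=\text{Null}(M)^\bot$ for symmetric $M$, which yields the identity in one line given the kernel computation. Both arguments are sound; yours is shorter and makes the role of symmetry explicit, while the paper's is more elementary in that it never appeals to the fundamental-subspace relation for symmetric matrices, only to dimension counts. The only points you should make sure are spelled out if this were written in full are the implication $v^\top(I-W)v=0\Rightarrow(I-W)v=0$ for p.s.d.\ matrices (or, equivalently, argue both directions directly from $\sqrt{I-W}\sqrt{I-W}=I-W$), and the simplicity of the eigenvalue $1$ of $W$, which follows from the strict inequality $\lambda_2<\lambda_1=1$ in Assumption 5 — both routine, as you say.
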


\begin{proof} Let $\bz \in \text{Range}(\mathbf{1} \otimes I_{d} )$, which implies that there exists a $z \in \mathbb{R}^d$ such that $\bz = (\mathbf{1} \otimes I_{d} )z$. Therefore, 
\begin{align}
& (\sqrt{I-W} \otimes I_{d})\bz = (\sqrt{I-W} \otimes I_{d})(\mathbf{1} \otimes I_{d} )z = (\sqrt{I-W}\mathbf{1} \otimes I_{d})z \overset{(i)}{=} 0,
\end{align}
where $\overset{(i)}{=}$ holds true because $\mathbf{1}$ is an eigenvector of $\sqrt{I-W}$ associated with eigenvalue $0$. It shows that $\bz \in \text{Null}(\sqrt{I-W} \otimes I_{d})$ and hence $\text{Range}(\mathbf{1} \otimes I_{d} ) \subseteq \text{Null}(\sqrt{I-W} \otimes I_{d})$. Next, we show that $ \text{Null}(\sqrt{I-W} \otimes I_{d}) \subseteq \text{Range}(\mathbf{1} \otimes I_{d} )$. To prove this inclusion, let $\bz \in \text{Null}(\sqrt{I-W} \otimes I_{d})$. Then 
\begin{align}
& (\sqrt{I-W} \otimes I_{d})\bz = 0 \Rightarrow ((I-W) \otimes I_{d})\bz = 0 \Rightarrow (W \otimes I_d)\bz = \bz. \label{eigenspace_W_Id}
\end{align}
\eqref{eigenspace_W_Id} shows that $\bz$ belongs to the eigenspace of $(W \otimes I_d)$ corresponding to eigenvalue $1$. Note that eigenvalue 1 of $(W \otimes I_d)$ has algebraic multiplicity $d$. We know that for every $z \in \mathbb{R}^d$, $(\mathbf{1} \otimes I_d)z$ is an eigenvector of $(W \otimes I_d)$ corresponding to eigenvalue $1$. Therefore, we can construct $d$ linearly independent eigenvectors associated with eigenvalue $1$ using basis vectors of $\mathbb{R}^d$. The number of linearly independent eigenvectors can not be greater than $d$ because the algebraic multiplicity of eigenvalue $1$ is $d$. So the eigenspace associated with eigenvalue $1$ is $\text{Range}(\mathbf{1} \otimes I_d)$ and hence $z$ must lies in $\text{Range}(\mathbf{1} \otimes I_d)$. Therefore, $ \text{Null}(\sqrt{I-W} \otimes I_{d}) \subseteq \text{Range}(\mathbf{1} \otimes I_{d} )$, which proves the first part.  

Now we show the second part of Proposition \ref{null_sqrt_I_W_range_1}. Let $\bz \in \text{Range}(\sqrt{I-W} \otimes I_{d})$, which means there exists $z \in \mathbb{R}^d$ such that $\bz = (\sqrt{I-W} \otimes I_{d})z$. Consider
\begin{align}
 \left\langle \bz, (\mathbf{1} \otimes I_{d})z_0 \right\rangle &= \left\langle (\sqrt{I-W} \otimes I_{d})z, (\mathbf{1} \otimes I_{d})z_0 \right\rangle \nonumber \\
& = z^\top (\sqrt{I-W} \otimes I_{d})^\top (\mathbf{1} \otimes I_{d})z_0 \nonumber \\
& = z^\top (\sqrt{I-W} \otimes I_{d}) (\mathbf{1} \otimes I_{d})z_0 \\
& = 0, \nonumber 
\end{align}
where the last step follows from first part of Proposition \ref{null_sqrt_I_W_range_1}. Therefore, $\bz \in (\text{Range}(\mathbf{1} \otimes I_{d}))^\bot$ and hence $\text{Range}(\sqrt{I-W} \otimes I_{d}) \subseteq (\text{Range}(\mathbf{1} \otimes I_{d}))^\bot$. We now show that $\text{dim}(\text{Range}(\sqrt{I-W} \otimes I_{d})) = \text{dim}((\text{Range}(\mathbf{1} \otimes I_{d}))^\bot)$. Since $\text{Null}(\sqrt{I-W} \otimes I_{d}) = \text{Range}(\mathbf{1} \otimes I_{d} )$ and $\text{dim}(\text{Range}(\mathbf{1} \otimes I_{d} )) = d$, therefore $\text{dim}(\text{Null}(\sqrt{I-W} \otimes I_{d})) = d$. Using Rank-Nullity Theorem \cite{friedberg2003linear}, $\text{dim}(\text{Range}(\sqrt{I-W} \otimes I_{d})) = md - d$. Furthermore, $\text{Range}(\mathbf{1} \otimes I_{d})$ is a $d$-dimensional subspace of $\mathbb{R}^{md}$, thus $\text{dim}(\text{Range}(\mathbf{1} \otimes I_{d}))^\bot = md - d$. In conclusion, we have $\text{Range}(\sqrt{I-W} \otimes I_{d}) \subseteq (\text{Range}(\mathbf{1} \otimes I_{d}))^\bot$ and $\text{dim}(\text{Range}(\sqrt{I-W} \otimes I_{d})) = \text{dim}(\text{Range}(\mathbf{1} \otimes I_{d}))^\bot$. Hence, using Theorem $1.11$ in \cite{friedberg2003linear}, we get $\text{Range}(\sqrt{I-W} \otimes I_{d}) = (\text{Range}(\mathbf{1} \otimes I_{d}))^\bot$.
\end{proof}

\begin{theorem} \label{thm:lag_equivalence} Let Assumptions \ref{s_convexity_assumption}-\ref{nonsmooth_assumption} and Assumption  \ref{weight_matrix_assumption} hold. Then $(\bx^\star,\by^\star)$ is saddle point of problem \eqref{eq:main_opt_consenus_constraint} if and only if there exists  $\tilde{S}^\bx,\tilde{S}^\by$ such that $(\bx^\star,\by^\star, \tilde{S}^\bx,\tilde{S}^\by)$ is a saddle point of \eqref{minmax_lagrange_problem}.
	
\end{theorem}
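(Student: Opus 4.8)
The plan is to establish the equivalence in two directions, relying on the first-order optimality (saddle point) conditions for both problems and on Proposition \ref{null_sqrt_I_W_range_1}, which identifies $\mathrm{Null}(U\otimes I_d)$ with $\mathrm{Range}(\mathbf{1}\otimes I_d)$ and $\mathrm{Range}(U\otimes I_d)$ with its orthogonal complement. First I would write down what it means for $(\bx^\star,\by^\star)$ to be a saddle point of the constrained problem \eqref{eq:main_opt_consenus_constraint}: feasibility, i.e. $(U\otimes I_{d_x})\bx^\star=0$ and $(U\otimes I_{d_y})\by^\star=0$, which by Proposition \ref{null_sqrt_I_W_range_1} is equivalent to $\bx^\star\in\mathrm{Range}(\mathbf{1}\otimes I_{d_x})$ and $\by^\star\in\mathrm{Range}(\mathbf{1}\otimes I_{d_y})$, i.e. all local copies agree; plus the variational inequality that $\bx^\star$ minimizes $F(\cdot,\by^\star)+G(\cdot)$ and $\by^\star$ maximizes $F(\bx^\star,\cdot)-R(\cdot)$ over the feasible (consensus) subspace. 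Then I would write the saddle point (KKT-type) conditions for the Lagrangian problem \eqref{minmax_lagrange_problem}: stationarity in $\bx$ and $\by$ (involving subgradients of $G$, $R$ and the terms $(U\otimes I)^\top S^\bx$, $(U\otimes I)^\top S^\by$), together with stationarity in the multipliers $S^\bx,S^\by$, which forces $(U\otimes I_{d_x})\bx^\star=0$ and $(U\otimes I_{d_y})\by^\star=0$ — recovering feasibility.

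For the forward direction, given a saddle point $(\bx^\star,\by^\star)$ of \eqref{eq:main_opt_consenus_constraint}, I would use the variational inequality over the consensus subspace together with the fact that the feasible directions are exactly $\mathrm{Range}(\mathbf{1}\otimes I)$ to conclude that the gradient/subgradient residual $\nabla_\bx F(\bx^\star,\by^\star)+\partial G(\bx^\star)$ is orthogonal to the consensus subspace, hence (again by Proposition \ref{null_sqrt_I_W_range_1}) lies in $\mathrm{Range}(U\otimes I_{d_x})$; this yields the existence of a multiplier $\tilde S^\bx$ with $-(U\otimes I_{d_x})\tilde S^\bx\in\nabla_\bx F(\bx^\star,\by^\star)+\partial G(\bx^\star)$, and symmetrically a $\tilde S^\by$ for the dual block. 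One then checks that $(\bx^\star,\by^\star,\tilde S^\bx,\tilde S^\by)$ satisfies all the saddle point inequalities of \eqref{minmax_lagrange_problem}: the $\bx,\by$-inequalities follow from convexity-concavity of $F$ plus the subgradient inclusions just derived, and the $S^\bx,S^\by$-inequalities are immediate because $(U\otimes I)\bx^\star=0$ and $(U\otimes I)\by^\star=0$ make the relevant inner products vanish identically, so they hold with equality.

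For the converse, given a saddle point $(\bx^\star,\by^\star,\tilde S^\bx,\tilde S^\by)$ of \eqref{minmax_lagrange_problem}, the saddle inequalities with respect to $S^\bx$ and $S^\by$ force $\langle S^\bx-\tilde S^\bx,(U\otimes I_{d_x})\bx^\star\rangle\le 0$ for all $S^\bx$, hence $(U\otimes I_{d_x})\bx^\star=0$ and similarly $(U\otimes I_{d_y})\by^\star=0$, so $(\bx^\star,\by^\star)$ is feasible for \eqref{eq:main_opt_consenus_constraint}. Then for any feasible $\bx$ (i.e. $(U\otimes I_{d_x})\bx=0$) the Lagrangian inequality in $\bx$ reduces, since the $\langle S^\bx,(U\otimes I)\bx\rangle$ terms vanish on feasible points, to $F(\bx^\star,\by^\star)+G(\bx^\star)\le F(\bx,\by^\star)+G(\bx)$, and symmetrically for $\by$; this is exactly the saddle point property for \eqref{eq:main_opt_consenus_constraint}. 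The main obstacle I anticipate is the forward direction's use of a constraint qualification / separation argument to guarantee the multipliers $\tilde S^\bx,\tilde S^\by$ exist without compactness or bounded-gradient assumptions — this is precisely where the structure of the consensus constraint (a linear subspace, with $\mathrm{Range}(U\otimes I)=(\mathrm{Range}(\mathbf{1}\otimes I))^\perp$ from Proposition \ref{null_sqrt_I_W_range_1}) does the work: the feasible set being an affine subspace makes the normal cone equal to that subspace's orthogonal complement, so no Slater-type condition is needed, but the argument must be made carefully for the non-smooth $G,R$ via subdifferential calculus on the restricted problem.
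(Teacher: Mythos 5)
Your proposal is correct, and while it rests on the same backbone as the paper's proof (Proposition \ref{null_sqrt_I_W_range_1} identifying $\text{Range}(\sqrt{I-W}\otimes I_d)$ with $(\text{Range}(\mathbf{1}\otimes I_d))^\bot$, multipliers obtained from orthogonality of the first-order residual, feasibility recovered from the multiplier block), it takes a tactically different route in both directions. In the forward direction the paper first collapses the feasible point to the averaged problem $\min_x\max_y \sum_i(f_i(x,y)+g(x)-r(y))$ at the common value $(x^\star_0,y^\star_0)$, uses its unconstrained optimality conditions (where smooth-plus-convex needs no qualification), and then lifts the resulting subgradient as the copied vector $\xi^\star_x=[\xi^\star_g;\ldots;\xi^\star_g]$ before invoking Proposition \ref{null_sqrt_I_W_range_1}; you instead stay in the lifted space and argue via the normal cone to the consensus subspace. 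Your route is fine but, as you anticipate, it needs the sum rule $\partial\bigl(G+\iota_C\bigr)(\bx^\star)=\partial G(\bx^\star)+C^\bot$, which requires a (mild) qualification such as $\mathrm{ri}(\mathrm{dom}\,G)\cap C\neq\emptyset$; this does hold here since any consensus point $(\mathbf{1}\otimes I_{d_x})x$ with $x\in\mathrm{ri}(\mathrm{dom}\,g)$ lies in both sets, but it is exactly the step the paper's averaged-problem detour sidesteps, so it deserves an explicit line. In the reverse direction your argument is arguably cleaner than the paper's: you extract feasibility from linearity of $\mathcal{L}$ in $S^\bx,S^\by$ and then restrict the saddle inequalities to feasible $\bx,\by$, where the multiplier terms vanish, giving the constrained saddle property directly without any subdifferential calculus; the paper instead writes the stationarity inclusions, pre-multiplies by $\mathbf{1}^\top\otimes I$ to annihilate the $U$-terms, and reassembles the averaged problem's optimality conditions. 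Both are valid; yours trades the paper's explicit consensus-averaging computations for a shorter inequality argument, at the cost of the one qualification check noted above.
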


\begin{proof} 
	We first prove the forward direction. Let $(\bx^\star, \by^\star) = ((\mathbf{1} \otimes I_{d_x})x^\star_0,  (\mathbf{1} \otimes I_{d_y})y^\star_0)$ be a saddle point of \eqref{eq:main_opt_consenus_constraint}. Therefore, $ (x^\star_0, y^\star_0)$ is a saddle point of
\begin{align}
\min_{x \in \mathbb{R}^{dx}} \max_{y \in \mathbb{R}^{dy}} \sum_{i = 1}^m (f_i(x,y) +g(x) -r(y) ) .\label{eq:main_opt_consenus_constraint_un_avg}
\end{align}	
Then using the definition of saddle point, we get
\begin{align}
	0 & \in \nabla_x f(x^\star_0, y^\star_0) + m\partial g(x^\star_0)  \\
	0 & \in \nabla_y f(x^\star_0, y^\star_0) - m\partial r(y^\star_0). 
\end{align}
So there exists $\xi^\star_g \in \partial g(x^\star_0)$ and $\xi^\star_r \in \partial r(y^\star_0)$ such that
\begin{align}
& \nabla_x f(x^\star_0, y^\star_0) + m\xi^\star_g = 0 \Rightarrow ((\mathbf{1} \otimes I_{d_x}) x_0)^\top\left( \nabla_x F(\bx^\star,\by^\star) + \xi^\star_x \right) = 0 \label{eq:reverse_dir_grad_x} \\
& \nabla_y f(x^\star_0, y^\star_0) - m\xi^\star_r = 0 \Rightarrow ((\mathbf{1} \otimes I_{d_y}) y_0)^\top\left( \nabla_y F(\bx^\star,\by^\star) - \xi^\star_y \right) = 0, \label{eq:reverse_dir_grad_y}
\end{align}
where $\xi^\star_x = \left[ \xi^\star_g;\ldots;\xi^\star_g \right] \in \mathbb{R}^{md_x}$, $\xi^\star_y = \left[ \xi^\star_r;\ldots;\xi^\star_r \right] \in \mathbb{R}^{md_y}$ and  $x_0 \in \mathbb{R}^{d_x}, y_0 \in \mathbb{R}^{d_y}$ are arbitrary vectors. Using \eqref{eq:reverse_dir_grad_x} and \eqref{eq:reverse_dir_grad_y}, we notice that $\nabla_x F(\bx^\star,\by^\star) + \xi^\star_x \in \left( \text{Range}( \mathbf{1} \otimes I_{d_x}) \right)^\bot$ and $\nabla_y F(\bx^\star,\by^\star) - \xi^\star_y \in \left(\text{Range}( \mathbf{1}^\top \otimes I_{d_y}) \right)^\bot$. From Proposition \ref{null_sqrt_I_W_range_1}, we have $\left( \text{Range}( \mathbf{1} \otimes I_{d_x} ) \right)^\bot = \text{Range}(U_x)$. It implies that there exists $-S^{\star,x} \in \mathbb{R}^{md_x}, -S^{\star,y} \in \mathbb{R}^{md_y}$ such that $ \nabla_x F(\bx^\star,\by^\star) + \xi^\star_x = -U_xS^{\star,x}$ and $ \nabla_y F(\bx^\star,\by^\star) - \xi^\star_y = -U_yS^{\star,y} $. It can be observed that $\xi^\star_x \in \partial G(\bx^\star)$ and $\xi^\star_y \in \partial R(\by^\star)$. By combining the last two arguments, we get 
\begin{align}
0 & \in \nabla_x F(\bx^\star,\by^\star) + \partial G(\bx^\star) + U_x S^{\star,\bx} \nonumber \\
0 & \in \nabla_y F(\bx^\star,\by^\star) - \partial R(\by^\star) + U_y S^{\star,\by} \nonumber .
\end{align}
Since $\bx^\star = (\mathbf{1} \otimes I_{d_x})x^\star_0 \in \text{Range}( \mathbf{1} \otimes I_{d_x} )  $ and $\by^\star = (\mathbf{1} \otimes I_{d_y})y^\star_0 \in \text{Range}( \mathbf{1} \otimes I_{d_y} ) $. Therefore, $U_x \bx^\star = 0$ and $U_y \by^\star = 0$ using Proposition \ref{null_sqrt_I_W_range_1}. In short, we have
\begin{align}
0 & \in \nabla_x F(\bx^\star,\by^\star) + \partial G(\bx^\star) + U_x S^{\star,\bx}, \ \ U_y \by^\star = 0  \ \ (\Leftrightarrow  0 \in \partial_\bx \L(\bx^\star,\by^\star; S^{\star,\bx},S^{\star,\by}), \ \ \nabla_{S^\by} \L(\bx^\star,\by^\star; S^{\star,\bx},S^{\star,\by}) = 0 ),  \label{eq:primal_var_gradient} \\
0 & \in \nabla_y F(\bx^\star,\by^\star) - \partial R(\by^\star) + U_y S^{\star,\by}, \ \ U_x \bx^\star = 0 \ \ (\Leftrightarrow  0 \in \partial_\by \L(\bx^\star,\by^\star; S^{\star,\bx},S^{\star,\by}), \ \ \nabla_{S^\bx} \L(\bx^\star,\by^\star; S^{\star,\bx},S^{\star,\by}) = 0 ). \label{eq:dual_var_gradient} 
\end{align}
Therefore,
\begin{align}
(\bx^\star, S^{\star,\by}) & = \argmin_{\bx,S^{\by}} \L(\bx,\by^\star; S^{\star,\bx},S^{\by}), \ \ (\by^\star, S^{\star,\bx}) = \argmax_{\by,S^{\bx}} \L(\bx^\star,\by; S^{\bx},S^{\star,\by}) .
\end{align}
Next, we prove the reverse direction. Let $(\bx^\star, \by^\star, S^{\star,\bx}, S^{\star,\by})$ be a saddle point of \eqref{minmax_lagrange_problem}. This implies that
\begin{align}
0 & \in \nabla_x F(\bx^\star,\by^\star) + \partial G(\bx^\star) + U_x S^{\star,\bx} \label{eq:primal_var_gradient_rev} \\
0 & \in \nabla_y F(\bx^\star,\by^\star) - \partial R(\by^\star) + U_y S^{\star,\by} \label{eq:dual_var_gradient_rev} \\
& U_x \bx^\star = 0, \ \ U_y \by^\star = 0  \label{eq:consensus_cond}
\end{align}
	Using \eqref{eq:consensus_cond}, we have $U_x \bx^\star = 0$ which shows that $x^\star \in \text{Null}(U_x) = \text{Null}\left( \sqrt{I-W} \otimes I_{d_x} \right) = \text{Range}(\mathbf{1} \otimes I_{d_x})$. Therefore, $\bx^\star_i = \bx^\star_j =: x^\star_0 \in \mathbb{R}^{d_x}$. Similarly, we get $\by^\star_i = \by^\star_j =: y^\star_0 \in \mathbb{R}^{d_y}$. We also know that 
	\begin{align}
		(\mathbf{1}^\top \otimes I_{d_x})\nabla_x F(\bx^\star, \by^\star) & = \sum_{i = 1}^m \nabla_x f_i(x^\star_0, y^\star_0) = \nabla_x f(x^\star_0, y^\star_0) \\
		(\mathbf{1}^\top \otimes I_{d_x}) \partial G(\bx^\star) & = \sum_{i = 1}^m \partial g(x^\star_0) = m\partial g(x^\star_0) \\
		(\mathbf{1}^\top \otimes I_{d_x}) U_x & = (\mathbf{1} \otimes I_{d_x})^\top \left( \sqrt{I-W} \otimes I_{d_x} \right) =  0 ,
	\end{align} 
	where the last equality follows from Proposition \ref{null_sqrt_I_W_range_1}. By combining last three equalities, we obtain
	\begin{align}
		(\mathbf{1}^\top \otimes I_{d_x})\left( \nabla_x F(\bx^\star,\by^\star) + \partial G(\bx^\star) + U_x S^{\star,\bx} \right) & = \nabla_x f(x^\star_0, y^\star_0) + m\partial g(x^\star_0) . \label{eq:gradF_term_grad_f}
	\end{align}
	By performing above steps with $\nabla_y F(\bx^\star,\by^\star)$ and  $ \partial R(\by^\star)$, we obtain
	\begin{align}
		(\mathbf{1}^\top \otimes I_{d_y})\left( \nabla_y F(\bx^\star,\by^\star) - \partial R(\by^\star) + U_y S^{\star,\by} \right) & = \nabla_y f(x^\star_0, y^\star_0) - m\partial r(y^\star_0).
	\end{align}
	Using \eqref{eq:primal_var_gradient_rev} and \eqref{eq:dual_var_gradient_rev}, we know that there exists $\xi^\star_x \in \partial{G}(\bx^\star)$ and $\xi^\star_y \in \partial{G}(\by^\star)$  such that
	\begin{align} 
	&	\nabla_x F(\bx^\star,\by^\star) + \xi^\star_x + U_x S^{\star,\bx}  = 0 \Rightarrow (\mathbf{1}^\top \otimes I_{d_x}) \left( \nabla_x F(\bx^\star,\by^\star) + \xi^\star_x + U_x S^{\star,\bx} \right) = 0 \\
&	 \nabla_y F(\bx^\star,\by^\star) - \xi^\star_y + U_y S^{\star,\by} = 0 \Rightarrow	(\mathbf{1}^\top \otimes I_{d_y}) \left( \nabla_y F(\bx^\star,\by^\star) - \xi^\star_y + U_y S^{\star,\by} \right) = 0 .
	\end{align}
	Therefore, $0 \in (\mathbf{1}^\top \otimes I_{d_y}) \left( \nabla_x F(\bx^\star,\by^\star) +  \partial{G}(\bx^\star) + U_x S^{\star,\bx} \right) = \nabla_x f(x^\star_0, y^\star_0) + m\partial g(x^\star_0)$ where the equality follows from \eqref{eq:gradF_term_grad_f}. This shows that
	\begin{align}
		0 & \in \nabla_x f(x^\star_0, y^\star_0) + m\partial g(x^\star_0) \label{eq:spp_firstorder_stat_x} \\
		0 & \in \nabla_y f(x^\star_0, y^\star_0) - m\partial r(y^\star_0) . \label{eq:spp_firstorder_stat_y}
	\end{align}
	Using \eqref{eq:spp_firstorder_stat_x} and \eqref{eq:spp_firstorder_stat_y}, we see that
	\begin{align}
		x^\star_0 & = \argmin_{x \in \mathbb{R}^{d_x}} f(x,y^\star_0) + mg(x) - mr(y^\star_0)  \\
		y^\star_0 & = \argmin_{y \in \mathbb{R}^{d_y}} f(x^\star_0,y) + mg(x^\star_0) - mr(y) .
	\end{align}
	Thus, using the definition of saddle point,  $(x^\star_0, y^\star_0)$ is a saddle point of $\min_{x \in \mathbb{R}^{dx}} \max_{y \in \mathbb{R}^{dy}} \sum_{i = 1}^m (f_i(x,y) +g(x) -r(y) )$. Hence $((\mathbf{1} \otimes I_{d_x}) x^\star_0, ((\mathbf{1} \otimes I_{d_y})y^\star_0)$ is a saddle point of \eqref{eq:main_opt_consenus_constraint}.

\end{proof}

We now provide the optimality conditions for the optimization problem \eqref{eq:main_opt_problem}. 
 
\subsection{Optimality Conditions of problem \eqref{eq:main_opt_problem}.}

Let $f(x,y) := \sum_{i = 1}^m f_i(x,y)$. Since $(x^\star, y^\star)$ is the saddle point solution to \eqref{eq:main_opt_problem}, we have $x^\star = \arg \min_{x \in \mathbb{R}^{d_x}} \Psi(x,y^\star)$ and $y^\star = \arg \max_{y \in \mathbb{R}^{d_y}} \Psi(x^\star,y)$.
\begin{align}
0 & \in \frac{1}{m} \nabla_x f(x^\star, y^\star) + \partial g(x^\star)  \\
& =  \partial g(x^\star)  + \frac{1}{s}\left(x^\star - \left( x^\star - \frac{s}{m} \nabla_x f(x^\star,y^\star) \right) \right) \\
& = \partial \left( g(x)  + \frac{1}{2s}\left\Vert x - \left( x^\star - \frac{s}{m} \nabla_x f(x^\star,y^\star) \right) \right\Vert^2 \right)_{x = x^\star} .
\end{align}
This implies that 
\begin{align}
x^\star & = \arg \min_{x \in \mathbb{R}^{d_x}} g(x) + \frac{1}{2s}\left\Vert x - \left( x^\star - \frac{s}{m} \nabla_x f(x^\star,y^\star) \right) \right\Vert^2 \\
& = \prox_{sg}\left( x^\star - \frac{s}{m} \nabla_x f(x^\star,y^\star) \right) .
\end{align}

We also have $y^\star = \arg \min_{y \in \mathbb{R}^{d_y}} -\Psi(x^\star,y) = \arg \min_{y \in \mathbb{R}^{d_y}} (-\Psi(x^\star,y) )$. Therefore,
\begin{align}
0 & \in \partial_y (-\Psi(x^\star,y^\star))  \\
& = -\frac{1}{m} \nabla_y f(x^\star, y^\star) + \partial r(y^\star)  \\
& = \partial r(y^\star) + \frac{1}{s}\left(y^\star - \left( y^\star + \frac{s}{m} \nabla_y f(x^\star,y^\star) \right) \right) \\
& = \partial \left( r(y) + \frac{1}{2s}\left\Vert y - \left( y^\star + \frac{s}{m} \nabla_y f(x^\star,y^\star) \right) \right\Vert^2 \right)_{y = y^\star}.
\end{align}
Therefore, $y^\star = \prox_{sr}\left( y^\star + \frac{s}{m} \nabla_y f(x^\star,y^\star) \right) $ .\\[1mm]

\subsection{Notations useful for further analysis:}
 
In the subsequent analysis, we assume $d_x = d_y = 1$ for simplicity of representation. Our analysis still holds for $d_x > 1$ and $d_y > 1$ by incorporating Kronecker product. We define Bregman distance with respect to each function $f_i(\cdot,y)$ and $-f_i(x,\cdot)$ as
\begin{align}
	V_{f_i,y}(x_1,x_2) & = f_i(x_1,y) - f_i(x_2,y) - \left\langle \nabla_x f_i(x_2,y), x_1 - x_2 \right\rangle \label{bregman_dist_Vy} \\
	V_{-f_i,x}(y_1,y_2) & = -f_i(x,y_1) + f_i(x,y_2) - \left\langle -\nabla_y f_i(x,y_2), y_1 - y_2 \right\rangle , \label{bregman_dist_Vx}
\end{align}
respectively. Let $L = \max \{L_{xx}, L_{yy}, L_{xy}, L_{yx} \}$ and $\mu = \min \{\mu_x, \mu_y \}$. Suppose $\lambda_{\max}(I-W)$, $\lambda_{m-1}(I-W)$ and $(I-W)^\dagger$ denote the largest eigenvalue, second smallest eigenvalue and pseudo inverse of $I-W$ respectively. Let $\kappa_f = L/\mu$ and $\kappa_g = \lambda_{\max}(I-W)/\lambda_{m-1}(I-W)$ denote the condition number of function $f$ and graph $G$ respectively. We further define $D^\star_\bx  := -(I-J) \nabla_x F(\mathbf{1} z^\star), 
D^\star_\by  := (I-J)\nabla_y F(\mathbf{1}z^\star)$, $H^\star_\bx := \mathbf{1} (x^\star - \frac{s}{m}\nabla_x f(z^\star))$, $H^\star_\by := \mathbf{1} (y^\star + \frac{s}{m}\nabla_y f(z^\star))$, \red{$\text{Range}(I-W)  := \{(I-W)z: z \in \mathbb{R}^m \} $, $ \text{Range}(\mathbf{1}) := \{\eta \mathbf{1} : \eta \in \mathbb{R} \}$ and 
$\text{Null}(I-W) := \{z : (I-W)z = 0 \} .$} Notation $A^\dagger$ denotes the pseudo-inverse of a square matrix $A$. 
We now state a few preliminary results that will be used in later sections. These results may be of independent interest as well.

\begin{proposition} \label{null_I_W_range_1} Let $W$ be a weight matrix satisfying assumption \ref{weight_matrix_assumption}. Then $\text{Null}(I-W) = \text{Range}(\mathbf{1})$.
\end{proposition}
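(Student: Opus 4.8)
The plan is to establish the two inclusions separately, using Assumption \ref{weight_matrix_assumption} crucially for the nontrivial direction.

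First I would show $\text{Range}(\mathbf{1}) \subseteq \text{Null}(I-W)$. Since $W$ is row stochastic by Assumption \ref{weight_matrix_assumption}, each row sums to $1$, so $W\mathbf{1} = \mathbf{1}$, i.e. $(I-W)\mathbf{1} = 0$. Hence for any $\eta \in \mathbb{R}$ we have $(I-W)(\eta \mathbf{1}) = \eta (I-W)\mathbf{1} = 0$, so $\eta \mathbf{1} \in \text{Null}(I-W)$. This gives the first inclusion and also shows $\mathbf{1}$ is an eigenvector of $W$ associated with eigenvalue $1$, which lines up with $\lambda_1 = 1$ in Assumption \ref{weight_matrix_assumption}.

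For the reverse inclusion $\text{Null}(I-W) \subseteq \text{Range}(\mathbf{1})$, take $z$ with $(I-W)z = 0$, equivalently $Wz = z$, so $z$ lies in the eigenspace of $W$ associated with eigenvalue $1$. By Assumption \ref{weight_matrix_assumption}, $W$ is symmetric and its eigenvalues satisfy $\lambda_m \le \cdots \le \lambda_2 < \lambda_1 = 1$; in particular eigenvalue $1$ has algebraic multiplicity exactly one, and since symmetric matrices are diagonalizable, its geometric multiplicity is also one. Thus the eigenspace for eigenvalue $1$ is one-dimensional. As $\mathbf{1}$ is a nonzero vector in this eigenspace, the eigenspace equals $\text{span}\{\mathbf{1}\} = \text{Range}(\mathbf{1})$, so $z \in \text{Range}(\mathbf{1})$. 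Combining the two inclusions yields $\text{Null}(I-W) = \text{Range}(\mathbf{1})$.

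I do not anticipate a genuine obstacle here; the only point that must be handled carefully is invoking Assumption \ref{weight_matrix_assumption} to guarantee that the eigenvalue $1$ is simple (so that the eigenspace is precisely one-dimensional), together with the symmetry of $W$ so that algebraic and geometric multiplicities coincide — these are exactly what force the null space to be no larger than $\text{Range}(\mathbf{1})$.
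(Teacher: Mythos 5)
Your proof is correct and follows essentially the same route as the paper: show $W\mathbf{1}=\mathbf{1}$ gives $\text{Range}(\mathbf{1})\subseteq\text{Null}(I-W)$, and use the simplicity of the eigenvalue $1$ from Assumption \ref{weight_matrix_assumption} to conclude the eigenspace is $\text{span}\{\mathbf{1}\}$ for the reverse inclusion. The only cosmetic difference is that you invoke symmetry/diagonalizability to equate geometric and algebraic multiplicity, whereas the paper (implicitly) only needs that the geometric multiplicity cannot exceed the algebraic multiplicity.
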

\begin{proof} We prove this result in two parts. We first show that $\text{Null}(I-W) \subseteq \text{Range}(\mathbf{1})$ and then show that $\text{Range}(\mathbf{1}) \subseteq \text{Null}(I-W)$ . In this regard, let $y \in \text{Null}(I-W)$. Then we have $(I-W)y = 0$ which implies that $Wy = y$. Hence $y$ is an eigen vector of $W$ with eigen value $1$. We know that algebraic multiplicity of eigenvalue $1$ is one using assumption \ref{weight_matrix_assumption}. Therefore, there is only one linearly independent eigenvector associated with eigenvalue $1$. We also know that $\mathbf{1}$ is an eigenvector associated with eigenvalue $1$ because $W\mathbf{1} = \mathbf{1}$. Therefore, $y$ must belong to $\text{Range}(\mathbf{1})$. This completes the first part of the proof. To prove the other part, let $y \in \text{Range}(\mathbf{1})$. Then $(I-W)y = (I-W)\eta \mathbf{1} = 0$. This shows that $y \in \text{Null}(I-W)$. By combining both the parts, we get the desired result.
\end{proof}









\begin{proposition}\label{rem:rangeI-W} Let $W$ satisfy Assumption~\ref{weight_matrix_assumption} and let $D^\star_\bx$ and $D^\star_\by$ be as defined in above paragraph. Then, 
$D^\star_\bx \in \text{Range}(I-W)$ and $D^\star_\by \in \text{Range}(I-W)$.
\end{proposition}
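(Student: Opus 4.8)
The plan is to reduce the claim to the elementary observation that $D^\star_\bx$ and $D^\star_\by$ are orthogonal to the all-ones vector $\mathbf{1}$, combined with a description of $\text{Range}(I-W)$. First I would note that Assumption~\ref{weight_matrix_assumption} makes $W$ symmetric, so $I-W$ is symmetric (and, since $\lambda_m(W) > -1$, positive semidefinite); hence by the fundamental theorem of linear algebra for symmetric matrices, $\text{Range}(I-W) = \big(\text{Null}(I-W)\big)^\perp$. Combining this with Proposition~\ref{null_I_W_range_1}, which gives $\text{Null}(I-W) = \text{Range}(\mathbf{1})$, I obtain
\[
\text{Range}(I-W) = \big(\text{Range}(\mathbf{1})\big)^\perp = \{\, z \in \mathbb{R}^m : \mathbf{1}^\top z = 0 \,\}.
\]

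It then suffices to verify that $\mathbf{1}^\top D^\star_\bx = 0$ and $\mathbf{1}^\top D^\star_\by = 0$. Since $J = \tfrac{1}{m}\mathbf{1}\mathbf{1}^\top$, one has $\mathbf{1}^\top J = \tfrac{1}{m}(\mathbf{1}^\top\mathbf{1})\mathbf{1}^\top = \mathbf{1}^\top$, so $\mathbf{1}^\top(I-J) = 0$. Therefore $\mathbf{1}^\top D^\star_\bx = -\mathbf{1}^\top(I-J)\nabla_x F(\mathbf{1}z^\star) = 0$, and likewise $\mathbf{1}^\top D^\star_\by = \mathbf{1}^\top(I-J)\nabla_y F(\mathbf{1}z^\star) = 0$. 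By the displayed characterization of $\text{Range}(I-W)$, both vectors lie in $\text{Range}(I-W)$, which is the assertion.

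There is essentially no obstacle; the only ingredient beyond one-line computations is the identity $\text{Range}(A) = \text{Null}(A)^\perp$ for symmetric $A$, which can be invoked from the spectral theorem, or proved in passing via $\text{Null}(A) = \text{Null}(A^\top)$ together with the orthogonal decomposition $\mathbb{R}^m = \text{Range}(A) \oplus \text{Null}(A^\top)$. If one wishes to avoid even this, an alternative is to exhibit an explicit preimage: since $D^\star_\bx \perp \mathbf{1}$ and $I-W$ acts invertibly on $\text{Range}(\mathbf{1})^\perp$ (its eigenvalues there are the nonzero eigenvalues of $I-W$), the vector $u = (I-W)^\dagger D^\star_\bx$ satisfies $(I-W)u = D^\star_\bx$, and similarly for $D^\star_\by$; this formulation also meshes with the later use of $(I-W)^\dagger$ in the analysis.
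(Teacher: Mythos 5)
Your proof is correct and follows essentially the same route as the paper: both arguments reduce the claim to the identity $\text{Range}(I-W) = (\text{Range}(\mathbf{1}))^\bot$ and then verify $\mathbf{1}^\top D^\star_\bx = \mathbf{1}^\top D^\star_\by = 0$ via $\mathbf{1}^\top J = \mathbf{1}^\top$. The only difference is cosmetic: you justify the set equality by invoking $\text{Range}(A) = \text{Null}(A)^\bot$ for symmetric $A$ together with Proposition~\ref{null_I_W_range_1}, whereas the paper proves the inclusion $\text{Range}(I-W) \subseteq (\text{Range}(\mathbf{1}))^\bot$ directly and closes the gap with a rank--nullity dimension count.
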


\begin{proof} To prove this result, we first show that $\text{Range}(I-W) = \left( \text{Range}(\mathbf{1}) \right)^\bot$ using Assumption \ref{weight_matrix_assumption}. Then we prove that both $D^\star_\bx $ and $D^\star_\by$ lie in $ \left( \text{Range}(\mathbf{1}) \right)^\bot$.
\red{
\begin{align}
\text{Range}(I-W) & = \{(I-W)z: z \in \mathbb{R}^m \} , \ \text{Range}(\mathbf{1}) = \{\eta \mathbf{1} : \eta \in \mathbb{R} \} , \\
\text{Null}(I-W) & = \{z : (I-W)z = 0 \}  , \\
\left( \text{Range}(\mathbf{1}) \right)^\bot & = \{ x : \left\langle x,y \right\rangle = 0 \ \text{for all} \ y \in \text{Range}(\mathbf{1}) \} .
\end{align}
We first show that $\text{Range}(I-W) \subseteq \left( \text{Range}(\mathbf{1}^\top) \right)^\bot$. Towards that end, let $y \in \text{Range}(I-W)$. This implies that there exists a $z \in \mathbb{R}^m$ such that $(I-W)z = y$. Therefore,
\begin{align}
\left\langle y, \eta \mathbf{1}\right\rangle = \eta\mathbf{1}^\top y = \eta (\mathbf{1}^\top (I-W)z) = 0 \ \text{for all} \ \eta \in \mathbb{R}. 
\end{align}
The last step follows from $W\mathbf{1} = \mathbf{1}$ . This implies that $y \in \left( \text{Range}(\mathbf{1}) \right)^\bot$.  Therefore, $\text{Range}(I-W) \subseteq \left( \text{Range}(\mathbf{1}) \right)^\bot$. Next we show that dim(Range$(I-W)$) = dim($\left( \text{Range}(\mathbf{1}) \right)^\bot$). Using Proposition \ref{null_I_W_range_1}, we have $\text{Null}(I-W) = \text{Range}(\mathbf{1})$. This implies that dim$(\text{Null}(I-W)) = 1 $. Using Rank-Nullity Theorem \cite{friedberg2003linear}, we get dim$(\text{Range}(I-W)) = m-1$. Further $\text{Range}(\mathbf{1})$ is a one-dimensional subspace of $\mathbb{R}^m$ and hence dim$(( \text{Range}(\mathbf{1}))^\bot ) = m-1$. Therefore, dim(Range$(I-W)$) = dim($\left( \text{Range}(\mathbf{1}) \right)^\bot$). Using Theorem 1.11 in \cite{friedberg2003linear}, we get $\text{Range}(I-W) = \left( \text{Range}(\mathbf{1}) \right)^\bot$. This completes the first part of the proof. Recall
\begin{align}
D^\star_\bx & = -(I-J) \nabla_x F(\mathbf{1} z^\star)\\
D^\star_\by & = (I-J)\nabla_y F(\mathbf{1}z^\star) .
\end{align}
Therefore, $\eta \mathbf{1}^\top D^\star_\bx = -\eta\mathbf{1}^\top (I-J) \nabla_x F(\mathbf{1} z^\star) = 0$ because $\mathbf{1}^\top J = \mathbf{1}^\top$. Similarly, $\eta\mathbf{1}^\top D^\star_\by = \mathbf{1}^\top (I-J) \nabla_y F(\mathbf{1} z^\star) = 0$ . Hence, $D^\star_\bx \in \left( \text{Range}(\mathbf{1}) \right)^\bot = \text{Range}(I-W) $ and $D^\star_\by \in \left( \text{Range}(\mathbf{1}) \right)^\bot = \text{Range}(I-W) $. }
\end{proof}

\begin{proposition}\label{prop:smoothness}
 (\textbf{Smoothness in} $x$) Assume that $f(x,y)$ is convex and $L_{xx}$-smooth in $x$ for any fixed $y$. Then
\begin{align}
\frac{1}{2L_{xx}} \left\Vert \nabla_x f(x_1,y) - \nabla_x f(x_2,y) \right\Vert^2 & \leq V_{f,y}(x_1,x_2) \leq \frac{L_{xx}}{2} \left\Vert x_1 - x_2 \right\Vert^2 \ \text{for all} \ x_1, x_2.
\end{align}
\end{proposition}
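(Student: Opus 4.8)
The plan is to establish the two inequalities separately, deriving the right-hand (descent) bound first and then bootstrapping it to obtain the left-hand (co-coercivity) bound; an analogous statement for $-f(x,\cdot)$ in $y$ follows by symmetry.

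\textbf{Upper bound.} First I would apply the fundamental theorem of calculus along the segment $[x_2,x_1]$ to write
\[
V_{f,y}(x_1,x_2) = \int_0^1 \big\langle \nabla_x f\big(x_2 + t(x_1-x_2),y\big) - \nabla_x f(x_2,y),\ x_1 - x_2 \big\rangle\, dt .
\]
Using Cauchy--Schwarz and the $L_{xx}$-smoothness hypothesis, the integrand is bounded by $t\,L_{xx}\|x_1-x_2\|^2$, and integrating over $t\in[0,1]$ yields $V_{f,y}(x_1,x_2)\le \tfrac{L_{xx}}{2}\|x_1-x_2\|^2$, which is the classical descent lemma and the desired upper bound.

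\textbf{Lower bound.} For the co-coercivity bound I would fix $x_2$ and $y$ and introduce the auxiliary function $h(x) := f(x,y) - \langle \nabla_x f(x_2,y),\, x\rangle$. Since the subtracted term is affine, $h$ is convex and $L_{xx}$-smooth in $x$ with $\nabla h(x) = \nabla_x f(x,y) - \nabla_x f(x_2,y)$; in particular $\nabla h(x_2)=0$, so by convexity $x_2\in\argmin_x h(x)$. Applying the upper bound just proved (to $h$, at the point $x_1 - \tfrac{1}{L_{xx}}\nabla h(x_1)$ relative to $x_1$) gives
\[
h\!\left(x_1 - \tfrac{1}{L_{xx}}\nabla h(x_1)\right) \le h(x_1) - \tfrac{1}{L_{xx}}\|\nabla h(x_1)\|^2 + \tfrac{L_{xx}}{2}\cdot\tfrac{1}{L_{xx}^2}\|\nabla h(x_1)\|^2 = h(x_1) - \tfrac{1}{2L_{xx}}\|\nabla h(x_1)\|^2 .
\]
Combining with $h(x_2)\le h\!\left(x_1 - \tfrac{1}{L_{xx}}\nabla h(x_1)\right)$ yields $\tfrac{1}{2L_{xx}}\|\nabla h(x_1)\|^2 \le h(x_1) - h(x_2)$. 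Unpacking the definitions, $h(x_1)-h(x_2) = f(x_1,y)-f(x_2,y) - \langle \nabla_x f(x_2,y), x_1-x_2\rangle = V_{f,y}(x_1,x_2)$ and $\nabla h(x_1) = \nabla_x f(x_1,y) - \nabla_x f(x_2,y)$, which is exactly the left-hand inequality.

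\textbf{Main obstacle.} There is no genuine obstacle: this is the standard ``convex $+$ $L$-smooth $\Rightarrow$ co-coercive gradient'' lemma. The only points requiring care are (i) proving the descent-lemma upper bound first so that it can be reused in the lower-bound argument, and (ii) checking that the affine shift defining $h$ preserves both convexity and the smoothness constant $L_{xx}$ while turning $x_2$ into a global minimizer, so that $h(x_2)$ can be controlled by the value of $h$ at a single gradient step from $x_1$.
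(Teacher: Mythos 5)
Your proof is correct and follows essentially the same route as the paper: establish the descent-lemma upper bound, then apply it to an auxiliary convex, $L_{xx}$-smooth function minimized at $x_2$ (yours differs from the paper's choice $h(x_1)=V_{f,y}(x_1,x_2)$ only by an additive constant) and take a gradient step of size $1/L_{xx}$ from $x_1$ to obtain the co-coercivity bound. The only cosmetic differences are that you derive the quadratic upper bound from the Lipschitz-gradient hypothesis via the integral representation, and you plug in the explicit minimizing step rather than minimizing the quadratic over $\bar{x}$ as the paper does.
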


\begin{proof} Using the smoothness of $f(\cdot, y)$, we have
\begin{align}
& f(x_1,y) \leq f(x_2,y) + \left\langle \nabla_x f(x_2,y), x_1 - x_2 \right\rangle + \frac{L_{xx}}{2} \left\Vert x_1 - x_2 \right\Vert^2 \\
& f(x_1,y) -  f(x_2,y) - \left\langle \nabla_x f(x_2,y), x_1 - x_2 \right\rangle \leq \frac{L_{xx}}{2} \left\Vert x_1 - x_2 \right\Vert^2 \\
& V_{f,y}(x_1,x_2) \leq \frac{L_{xx}}{2} \left\Vert x_1 - x_2 \right\Vert^2 .
\end{align}
This completes the proof of second inequality. Let $h(x_1) := V_{f,y}(x_1,x_2)$ for a given $y$ and $x_2$. Notice that $h(x_1) = 0$ at $x_1 = x_2$. Using convexity of $f(x,y)$ in $x$, $h(x_1) \geq 0$. Therefore, $h(x_1)$ achieves its minimum value at $x_2$ and the minimum value is $0$.
\begin{align}
\left\Vert \nabla_x h(x_1) - \nabla_x h(x_1^{'}) \right\Vert & = \left\Vert \nabla_x f(x_1,y) - \nabla_x f(x_1^{'},y) \right\Vert \\
& \leq L_{xx} \left\Vert x_1 - x_1^{'} \right\Vert .
\end{align}
This implies that $h(x_1)$ is also $L_{xx}$-smooth.
\begin{align}
h(\bar{x}) & \leq h(x_1) + \left\langle \nabla_x h(x_1), \bar{x} - x_1 \right\rangle + \frac{L_{xx}}{2} \left\Vert \bar{x} - x_1 \right\Vert^2 
\end{align}
Take minimization over $\bar{x}$ on both sides.
\begin{align}
	\min h(\bar{x}) & \leq h(x_1) + \min_{\bar{x}} \left( \left\langle \nabla_x h(x_1), \bar{x} - x_1 \right\rangle + \frac{L_{xx}}{2} \left\Vert \bar{x} - x_1 \right\Vert^2 \right) . \label{eq:min_h}
\end{align}
Let $\delta(\bar{x}) = \left\langle \nabla_x h(x_1), \bar{x} - x_1 \right\rangle + \frac{L_{xx}}{2} \left\Vert \bar{x} - x_1 \right\Vert^2.$
\begin{align}
	\nabla \delta(\bar{x}) =  \nabla_x h(x_1) + L_{xx}(\bar{x} - x_1),\ \ \nabla^2 \delta(\bar{x}) = L_{xx}I \succ 0 .
\end{align}
Therefore,
\begin{align}
	\min_{\bar{x}} \delta(\bar{x}) & = \left\langle \nabla_x h(x_1), \frac{-\nabla_x h(x_1)}{L_{xx}} + x_1 - x_1 \right\rangle + \frac{L_{xx}}{2} \left\Vert \frac{-\nabla_x h(x_1)}{L_{xx}} + x_1 - x_1 \right\Vert^2 \\
	& = \frac{-1}{L_{xx}} \left\Vert \nabla_x h(x_1) \right\Vert^2 + \frac{\left\Vert \nabla_x h(x_1) \right\Vert^2}{2L_{xx}} \\
	& = - \frac{\left\Vert \nabla_x h(x_1) \right\Vert^2}{2L_{xx}} .
\end{align}
%
Plug in above minimum value into \eqref{eq:min_h}.
\begin{align}
\min h(\bar{x}) & \leq h(x_1)  - \frac{\left\Vert \nabla_x h(x_1) \right\Vert^2}{2L_{xx}} \\
0 & \leq h(x_1)  - \frac{\left\Vert \nabla_x h(x_1) \right\Vert^2}{2L_{xx}} \\
& = V_{f,y}(x_1,x_2) - \frac{\left\Vert \nabla_x f(x_1,y) - \nabla_x f(x_2,y) \right\Vert^2}{2L_{xx}} .
\end{align}
This gives
\begin{align}
\frac{\left\Vert \nabla_x f(x_1,y) - \nabla_x f(x_2,y) \right\Vert^2}{2L_{xx}} & \leq V_{f,y}(x_1,x_2) .
\end{align}

\end{proof}

\begin{proposition} \label{prop:smoothnessy}
(\textbf{Smoothness in} $y$) Assume that $-f(x,y)$ is convex and $L_{yy}$-smooth in $y$ for any fixed $x$. Then
\begin{align}
\frac{1}{2L_{yy}} \left\Vert -\nabla_y f(x,y_1) + \nabla_y f(x,y_2) \right\Vert^2 & \leq V_{-f,x}(y_1,y_2) \leq \frac{L_{yy}}{2} \left\Vert y_1 - y_2 \right\Vert^2 .
\end{align}
\end{proposition}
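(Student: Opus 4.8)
The plan is to obtain this statement as an immediate consequence of Proposition \ref{prop:smoothness} applied to the function $\phi(\cdot) := -f(x,\cdot)$ for a fixed $x$. By hypothesis $\phi$ is convex and $L_{yy}$-smooth on $\mathbb{R}^{d_y}$, and its gradient is $\nabla \phi(y) = -\nabla_y f(x,y)$. The key observation is that the Bregman distance of $\phi$ coincides with $V_{-f,x}$ in \eqref{bregman_dist_Vx}: indeed,
\begin{align}
V_{\phi}(y_1,y_2) &= \phi(y_1) - \phi(y_2) - \left\langle \nabla \phi(y_2), y_1 - y_2 \right\rangle \nonumber \\
&= -f(x,y_1) + f(x,y_2) - \left\langle -\nabla_y f(x,y_2), y_1 - y_2 \right\rangle = V_{-f,x}(y_1,y_2).
\end{align}

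First I would invoke Proposition \ref{prop:smoothness} with $\phi$ playing the role of the convex, smooth function there (the smoothness constant being $L_{yy}$ and the underlying variable being $y$), which gives
\begin{align}
\frac{1}{2L_{yy}} \left\Vert \nabla \phi(y_1) - \nabla \phi(y_2) \right\Vert^2 \leq V_{\phi}(y_1,y_2) \leq \frac{L_{yy}}{2} \left\Vert y_1 - y_2 \right\Vert^2 .
\end{align}
Then I would substitute $\nabla \phi(y) = -\nabla_y f(x,y)$ together with the identity $V_\phi = V_{-f,x}$ established above to arrive exactly at
\begin{align}
\frac{1}{2L_{yy}} \left\Vert -\nabla_y f(x,y_1) + \nabla_y f(x,y_2) \right\Vert^2 \leq V_{-f,x}(y_1,y_2) \leq \frac{L_{yy}}{2} \left\Vert y_1 - y_2 \right\Vert^2 .
\end{align}

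There is no genuine obstacle here: the statement is the ``dual'' counterpart of Proposition \ref{prop:smoothness}, and the proof is a matter of sign/variable bookkeeping. The only point requiring mild care is to check that the hypotheses of Proposition \ref{prop:smoothness} are satisfied by $\phi$, namely that $-f(x,\cdot)$ is convex (equivalently, $f(x,\cdot)$ is concave) and $L_{yy}$-smooth for the fixed $x$, which is precisely what is assumed in the statement. For completeness one could instead reproduce the short self-contained argument line by line: the right-hand inequality from the descent-lemma expansion of $-f(x,\cdot)$, and the left-hand inequality by minimizing the quadratic upper bound of the nonnegative function $h(y_1) := V_{-f,x}(y_1,y_2)$, which attains its minimum value $0$ at $y_1 = y_2$ by convexity of $-f(x,\cdot)$, mirroring the proof of Proposition \ref{prop:smoothness}.
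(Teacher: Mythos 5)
Your proposal is correct: the paper itself omits this proof, stating only that it is analogous to Proposition \ref{prop:smoothness}, and your reduction — applying that proposition to $\phi(\cdot)=-f(x,\cdot)$ and identifying $V_\phi$ with $V_{-f,x}$ — is exactly the intended argument, packaged as a direct substitution rather than a line-by-line repetition.
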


The proof of Proposition~\ref{prop:smoothnessy} is similar to that of Proposition~\ref{prop:smoothness}, and is omitted.

\section{A Recursion Relationship Useful for Further Analysis}
\label{appendix_B}

In this section, we derive two recursive relations between the iterate updates at $(t+1)$-th iterate and $t$-th iterate of Algorithm \ref{alg:generic_procedure_sgda} in Lemma \ref{lem:recursion_y} and Lemma \ref{lem:recursion_x}.

\begin{lemma} \label{lem:recursion_y} 
Let $\bx_{t+1}$, $\by_{t+1}$, $D^\bx_{t+1}$, $D^\by_{t+1}$, $H^\bx_{t+1}$, $H^\by_{t+1}$, $H^{\ssw,\bx}_{t+1}$, $H^{\ssw,\by}_{t+1}$ be obtained from Algorithm \ref{alg:generic_procedure_sgda} using \text{IPDHG}($\bx_{t}$, $\by_{t}$, $D^{\bx}_{t}$, $D^{\by}_{t}$, $H^{\bx}_{t}$, $H^{\by}_{t}$, $H^{\ssw,\bx}_{t}$, $H^{\ssw,\by}_{t}$, $s$, $\gamma_{x}$, $\gamma_{y}$, $\alpha_{x}$, $\alpha_{y}$, $\mathcal{G}$). Let Assumptions \ref{compression_operator}-\ref{weight_matrix_assumption} hold. Suppose $\alpha_y \in \left(0,(1+\delta)^{-1} \right)$ and   
	\begin{align}
		\gamma_y & \in \left( 0, \min \left\lbrace \frac{2-2\sqrt{\delta}\alpha_y}{\lambda_{\max}(I-W)}, \frac{\alpha_y - (1+\delta)\alpha_y^2}{\sqrt{\delta}\lambda_{\max}(I-W)} \right\rbrace  \right) \label{gamma_y_assumption}
	\end{align}
	Then the following holds for all $t \geq 0$:
	\begin{align}
		& M_y E\left\Vert \by_{t+1} - \mathbf{1}y^\star  \right\Vert^2 + \frac{2s^2}{\gamma_y} E \left\Vert  D^\by_{t+1} - D^\star_\by  \right\Vert^2_{(I-W)^\dagger} + \sqrt{\delta}E\left\Vert H^\by_{t+1} - H^\star_\by \right\Vert^2 \nonumber \\
		& \leq \left\Vert \by_{t} - \mathbf{1}y^\star + s\mathcal{G}^\by_{t} - s\nabla_y F(\mathbf{1}x^\star,\mathbf{1}y^\star) \right\Vert^2 + \frac{2s^2}{\gamma_y}\left( 1- \frac{\gamma_y}{2}\lambda_{m-1}(I-W) \right)\left\Vert D^\by_{t} - D^\star_\by \right\Vert^2_{(I-W)^\dagger} \nonumber \\ & \ \ + \sqrt{\delta}(1-\alpha_{y})\left\Vert H^\by_{t} - H^\star_\by  \right\Vert^2  , \label{eq:one_step_progress_y}
	\end{align}
	where $E$ denotes the conditional expectation on stochastic compression at $t$-th update step and $M_{y} = 1-\frac{\sqrt{\delta }\alpha_{y}}{1-\frac{\gamma_{y}}{2}\lambda_{\max}(I-W)}$ and  $H^\star_\bx = \mathbf{1} (x^\star - \frac{s}{m}\nabla_x f(z^\star))$ and $H^\star_\by = \mathbf{1} (y^\star + \frac{s}{m}\nabla_y f(z^\star))$.
	.
\end{lemma}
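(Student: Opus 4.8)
The plan is a one–step Lyapunov progress argument for the dual–primal block \eqref{ipdhg_dual_primal_updates_1}, carried out through the compressed message $\hat\nu^\by_{t+1}$ produced by the COMM procedure.

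First I would pin down the equilibrium. Put $\nu^\star:=\mathbf{1}y^\star+s\nabla_y F(\mathbf{1}x^\star,\mathbf{1}y^\star)-sD^\star_\by$. Using $D^\star_\by=(I-J)\nabla_y F(\mathbf{1}z^\star)$ and $\mathbf{1}^\top J=\mathbf{1}^\top$ one gets $\nu^\star=\mathbf{1}\big(y^\star+\tfrac{s}{m}\nabla_y f(z^\star)\big)=H^\star_\by$; in particular $\nu^\star\in\mathrm{Range}(\mathbf{1})=\mathrm{Null}(I-W)$ by Proposition~\ref{null_I_W_range_1}, $D^\star_\by\in\mathrm{Range}(I-W)$ by Proposition~\ref{rem:rangeI-W}, and $\mathbf{1}y^\star=\prox_{sR}(\nu^\star)$ by the first–order optimality conditions of the saddle point derived earlier. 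Hence $(\mathbf{1}y^\star,D^\star_\by,H^\star_\by,\nu^\star)$ is a fixed point of the uncompressed recursion: $(I-W)\nu^\star=0$, so $D^\star_\by=D^\star_\by+\tfrac{\gamma_y}{2s}(I-W)\nu^\star$, $\hat\by^\star:=\nu^\star-\tfrac{\gamma_y}{2}(I-W)\nu^\star=\nu^\star$, and $\mathbf{1}y^\star=\prox_{sR}(\hat\by^\star)$. From COMM, $\hat\nu^\by_{t+1}=H^\by_t+Q(\nu^\by_{t+1}-H^\by_t)$, so conditioning on the past iterates, $E[\hat\nu^\by_{t+1}]=\nu^\by_{t+1}$ and $E\|\hat\nu^\by_{t+1}-\nu^\by_{t+1}\|^2\le\delta\|\nu^\by_{t+1}-H^\by_t\|^2$ by Assumption~\ref{compression_operator}, and $\hat\nu^\by_{t+1}-\hat\nu^{w,\by}_{t+1}=(I-W)\hat\nu^\by_{t+1}$.

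Writing $a_t:=\by_t-\mathbf{1}y^\star+s\mathcal{G}^\by_t-s\nabla_y F(\mathbf{1}x^\star,\mathbf{1}y^\star)$, $b_t:=D^\by_t-D^\star_\by$, $e_t:=\hat\nu^\by_{t+1}-\nu^\star$, and letting $\hat\by$ denote the intermediate vector so that $\by_{t+1}=\prox_{sR}(\hat\by)$, we have $\nu^\by_{t+1}-\nu^\star=a_t-sb_t$ and
\begin{align*}
D^\by_{t+1}-D^\star_\by &=b_t+\tfrac{\gamma_y}{2s}(I-W)e_t, \qquad \hat\by-\hat\by^\star=(a_t-sb_t)-\tfrac{\gamma_y}{2}(I-W)e_t, \\
H^\by_{t+1}-H^\star_\by &=(1-\alpha_y)(H^\by_t-H^\star_\by)+\alpha_y e_t .
\end{align*}
By firm nonexpansiveness of $\prox_{sR}$ (equivalently, monotonicity of $\partial R$), $\|\by_{t+1}-\mathbf{1}y^\star\|^2\le\|\hat\by-\hat\by^\star\|^2$. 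I would then expand $\|\hat\by-\hat\by^\star\|^2$, $\tfrac{2s^2}{\gamma_y}\|D^\by_{t+1}-D^\star_\by\|^2_{(I-W)^\dagger}$ and $\sqrt{\delta}\|H^\by_{t+1}-H^\star_\by\|^2$, using the Moore–Penrose identities $(I-W)(I-W)^\dagger(I-W)=I-W$ and $(I-W)^\dagger(I-W)b_t=b_t$ (valid since $b_t\in\mathrm{Range}(I-W)$) and $\langle b_t,\nu^\star\rangle=0$ (as $b_t\perp\mathrm{Range}(\mathbf{1})$); the $D$–term then contributes the cross term $2s\langle b_t,e_t\rangle$. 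Taking the conditional expectation over the compression turns every $e_t$–linear term into an $(a_t-sb_t)$–linear term. The decisive cancellation is that the $\langle a_t,b_t\rangle$ contribution of $\|\nu^\by_{t+1}-\nu^\star\|^2$ and that of $2sE\langle b_t,e_t\rangle$ cancel, while the $\|b_t\|^2$ contributions combine to $-s^2\|b_t\|^2$; bounding $-s^2\|b_t\|^2\le-s^2\lambda_{m-1}(I-W)\|b_t\|^2_{(I-W)^\dagger}$ produces exactly the factor $\tfrac{2s^2}{\gamma_y}\big(1-\tfrac{\gamma_y}{2}\lambda_{m-1}(I-W)\big)$ on the $D$–term and the $\|a_t\|^2$ term on the right.

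What remains are the nonnegative residuals: terms $\|\cdot\|^2_{I-W}$ from expanding $\|\hat\by-\hat\by^\star\|^2$ and $\|D^\by_{t+1}-D^\star_\by\|^2_{(I-W)^\dagger}$, the compression–variance terms $\propto\delta\|\nu^\by_{t+1}-H^\by_t\|^2$ that enter all three blocks, and the cross term $\langle H^\by_t-H^\star_\by,\nu^\by_{t+1}-\nu^\star\rangle$ from the $H$–recursion. Using $0\preceq I-W$ and $\lambda_{\max}(I-W)<2$, the aggregate coefficient of the $\|\cdot\|^2_{I-W}$ terms is $-\tfrac{\gamma_y}{2}\big(1-\tfrac{\gamma_y}{2}\lambda_{\max}(I-W)\big)\le0$, so they may be dropped; then I would split the $H$–cross term with Young's inequality and write $\nu^\by_{t+1}-H^\by_t=(\nu^\by_{t+1}-\nu^\star)-(H^\by_t-H^\star_\by)$, so the survivors regroup into a multiple of $\|H^\by_t-H^\star_\by\|^2$, a multiple of $\|\nu^\by_{t+1}-\nu^\star\|^2$, and a deficit in $\|\by_{t+1}-\mathbf{1}y^\star\|^2$. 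The conditions $\alpha_y\in(0,(1+\delta)^{-1})$ and \eqref{gamma_y_assumption} are exactly what make $(1-\alpha_y)^2+\delta\alpha_y^2+(\text{error})\le1-\alpha_y$ and keep $1-\tfrac{\gamma_y}{2}\lambda_{\max}(I-W)>\sqrt{\delta}\,\alpha_y>0$, so the $\|H^\by_t-H^\star_\by\|^2$ coefficient collapses to $\sqrt{\delta}(1-\alpha_y)$ and the coefficient of $\|\by_{t+1}-\mathbf{1}y^\star\|^2$ is driven down to exactly $M_y=1-\frac{\sqrt{\delta}\,\alpha_y}{1-\frac{\gamma_y}{2}\lambda_{\max}(I-W)}$. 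I expect this last step — the simultaneous matching of all three coefficients to the claimed values, i.e. the precise Young weights and the verification that \eqref{gamma_y_assumption} is exactly strong enough — to be the main obstacle; the rest is routine expansion. The companion recursion for the $\bx$–block (Lemma~\ref{lem:recursion_x}) follows from the same argument with ascent/descent roles interchanged.
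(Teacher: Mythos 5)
Your setup matches the paper's proof almost step for step: identifying $H^\star_\by$ as the fixed point with $\mathbf{1}y^\star=\prox_{sR}(H^\star_\by)$, using non-expansivity of the prox, the unbiasedness/variance bound of Assumption \ref{compression_operator}, the cancellation of the $\langle a_t,b_t\rangle$ cross terms when the $D$-recursion (scaled by $2s^2/\gamma_y$) is added to $\Vert \nu^\by_{t+1}-H^\star_\by\Vert^2$, and the bound $-s^2\Vert b_t\Vert^2\le -s^2\lambda_{m-1}(I-W)\Vert b_t\Vert^2_{(I-W)^\dagger}$, which is valid because $D^\by_t-D^\star_\by\in\mathrm{Range}(I-W)$. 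Up to that point the proposal is correct.

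The gap is exactly at the step you flag as "the main obstacle," and the route you sketch for it is not the one that closes the argument. Two mechanisms are missing. First, the factor $M_y$ does not arise from "driving down" a coefficient by Young's inequality: the paper cancels the term $+\sqrt{\delta}\alpha_y\Vert\nu^\by_{t+1}-H^\star_\by\Vert^2$ (produced by the $H$-recursion) against the $\by$-block by the positive-semidefiniteness comparison $\Vert\nu^\by_{t+1}-H^\star_\by\Vert^2_{I-\frac{\gamma_y}{2}(I-W)}\le M_y^{-1}\Vert\nu^\by_{t+1}-H^\star_\by\Vert^2_{I-\frac{\gamma_y}{2}(I-W)-\alpha_y\sqrt{\delta}I}$, which holds because $\sqrt{\delta}\alpha_y I-\frac{\sqrt{\delta}\alpha_y}{1-\frac{\gamma_y}{2}\lambda_{\max}(I-W)}\bigl(I-\frac{\gamma_y}{2}(I-W)\bigr)\preceq 0$; this is what makes the coefficient of $\Vert\by_{t+1}-\mathbf{1}y^\star\Vert^2$ exactly $M_y$ while leaving the coefficient of $\Vert a_t\Vert^2$ exactly $1$. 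Second, for the $H$-block one must use the exact identity $\Vert(1-\alpha)x+\alpha y\Vert^2=(1-\alpha)\Vert x\Vert^2+\alpha\Vert y\Vert^2-\alpha(1-\alpha)\Vert x-y\Vert^2$, which yields the coefficient $(1-\alpha_y)$ (not $(1-\alpha_y)^2$, as your "$(1-\alpha_y)^2+\delta\alpha_y^2+(\text{error})\le 1-\alpha_y$" suggests) and, crucially, retains the negative term $-\sqrt{\delta}\alpha_y(1-\alpha_y)\Vert\nu^\by_{t+1}-H^\by_t\Vert^2$. All compression-variance contributions are of the form $c\,\delta\Vert\nu^\by_{t+1}-H^\by_t\Vert^2$, so they are absorbed by this single negative term; the condition \eqref{gamma_y_assumption} enters only through the scalar check $\delta\gamma_y\lambda_{\max}(I-W)<\sqrt{\delta}\bigl(\alpha_y-(1+\delta)\alpha_y^2\bigr)$ (after using $M_y<1$ and $\frac{\gamma_y}{2}\lambda_{\max}(I-W)<1$). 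Your plan to Young-split the $H$ cross term and rewrite $\nu^\by_{t+1}-H^\by_t=(\nu^\by_{t+1}-\nu^\star)-(H^\by_t-H^\star_\by)$ discards that negative term and instead inflates the coefficients of $\Vert H^\by_t-H^\star_\by\Vert^2$ and $\Vert\nu^\by_{t+1}-H^\star_\by\Vert^2$ beyond the exact values $\sqrt{\delta}(1-\alpha_y)$ and $0$ demanded by \eqref{eq:one_step_progress_y}, and the stated parameter ranges leave no slack to pay for that; so as sketched the final inequality would not be recovered.
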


\textbf{Proof of Lemma \ref{lem:recursion_y}}:
We follow \cite{li2021decentralized} to prove Lemma~\ref{lem:recursion_y}. We have $H^\star_\bx = \mathbf{1} (x^\star - \frac{s}{m}\nabla_x f(x^\star,y^\star))$ and $H^\star_\by = \mathbf{1} (y^\star + \frac{s}{m}\nabla_y f(x^\star,y^\star))$. First, we bound the terms appearing on the l.h.s. of~\eqref{eq:one_step_progress_y} as
\begin{equation}
    M_y E\left\Vert \by_{t+1} - \mathbf{1}y^\star  \right\Vert^2   \leq \left\Vert \nu^\by_{t+1} - H^\star_\by  \right\Vert^2_{I - \frac{\gamma_y}{2}(I-W) - \alpha_{y}\sqrt{\delta}I} + \frac{\gamma^2_yM_y}{4} E \left\Vert \hat{\nu}^\by_{t+1} - \nu^\by_{t+1} \right\Vert^2_{(I-W)^2} ,  \label{eq:y_t+1_My}
\end{equation}
and
\begin{align}
& \left(\frac{2s^2}{\gamma_y} E \left\Vert  D^\by_{t+1} - D^\star_\by  \right\Vert^2_{(I-W)^\dagger} +  \sqrt{\delta}E\left\Vert H^\by_{t+1} - H^\star_\by \right\Vert^2\right) + \left\Vert \nu^\by_{t+1} - H^\star_\by  \right\Vert^2_{I - \frac{\gamma_y}{2}(I-W) - \alpha_{y}\sqrt{\delta}I} \nonumber \\
 &= \frac{s^2}{\gamma_y} \left\Vert D^\by_{t} - D^\star_\by \right\Vert^2_{{2(I-W)^\dagger -\gamma_y I}} + \frac{1}{2}  E \left\Vert \hat{\nu}^\by_{t+1}  - \nu^\by_{t+1} \right\Vert^2_{\gamma_y(I-W) + 2\sqrt{\delta}\alpha_{y}^2}  + \left\Vert \by_{t} - \mathbf{1}y^\star + s\mathcal{G}^\by_{t} - s\nabla_y F(\mathbf{1}z^\star)  \right\Vert^2 \nonumber \\ & \ + \sqrt{\delta}(1-\alpha_{y})\left\Vert H^\by_{t} - H^\star_\by  \right\Vert^2 - \sqrt{\delta}\alpha_{y}(1-\alpha_{y}) \left\Vert \nu^\by_{t+1} - H^\by_{t}  \right\Vert^2 . \label{eq:Dy_Hy_sum}
\end{align}
Proofs of~\eqref{eq:y_t+1_My} and~\eqref{eq:Dy_Hy_sum} are provided in Sections~\ref{sec:y_t+1_My} and~\ref{sec:Dy_Hy_sum}, respectively.

On adding \eqref{eq:y_t+1_My} and \eqref{eq:Dy_Hy_sum}, we obtain
\begin{align}
& M_y E\left\Vert \by_{t+1} - \mathbf{1}y^\star  \right\Vert^2 + \frac{2s^2}{\gamma_y} E \left\Vert  D^\by_{t+1} - D^\star_\by  \right\Vert^2_{(I-W)^\dagger} + \sqrt{\delta}E\left\Vert H^\by_{t+1} - H^\star_\by \right\Vert^2 \nonumber\\
& \leq \left\Vert \by_{t} - \mathbf{1}y^\star + s\mathcal{G}^\by_{t} - s\nabla_y F(\mathbf{1}z^\star)  \right\Vert^2 + \frac{s^2}{\gamma_y} \left\Vert D^\by_{t} - D^\star_\by \right\Vert^2_{{2(I-W)^\dagger -\gamma_y I}} \nonumber\\ & \ \ + \sqrt{\delta}(1-\alpha_{y})\left\Vert H^\by_{t} - H^\star_\by  \right\Vert^2 - \sqrt{\delta}\alpha_{y}(1-\alpha_{y}) \left\Vert \nu^\by_{t+1} - H^\by_{t}  \right\Vert^2 \nonumber\\ & \ \ + \frac{1}{2}  E \left\Vert \hat{\nu}^\by_{t+1}  - \nu^\by_{t+1} \right\Vert^2_{\gamma_y(I-W) + 2\sqrt{\delta}\alpha_{y}^2} + \frac{\gamma^2_yM_y}{4} E \left\Vert \hat{\nu}^\by_{t+1} - \nu^\by_{t+1} \right\Vert^2_{(I-W)^2} . \label{eq:yt+1_My_Dy_Hy}
\end{align}

We now bound the terms on the r.h.s. of~\eqref{eq:yt+1_My_Dy_Hy}. First, observe that  
\begin{align}
& \frac{1}{2}  E \left\Vert \hat{\nu}^\by_{t+1}  - \nu^\by_{t+1} \right\Vert^2_{\gamma_y(I-W) + 2\sqrt{\delta}\alpha_{y}^2} + \frac{\gamma^2_yM_y}{4} E \left\Vert \hat{\nu}^\by_{t+1} - \nu^\by_{t+1} \right\Vert^2_{(I-W)^2} \nonumber \\
& = \frac{1}{2}  E \left\Vert \sqrt{\gamma_y(I-W) + 2\sqrt{\delta}\alpha_{y}^2} (\hat{\nu}^\by_{t+1}  - \nu^\by_{t+1}) \right\Vert^2 + \frac{\gamma^2_yM_y}{4} E \left\Vert (I-W)(\hat{\nu}^\by_{t+1} - \nu^\by_{t+1}) \right\Vert^2 \nonumber \\
& \leq \frac{1}{2} \left\Vert \sqrt{\gamma_y(I-W) + 2\sqrt{\delta}\alpha_{y}^2}  \right\Vert^2 E \left\Vert \hat{\nu}^\by_{t+1}  - \nu^\by_{t+1} \right\Vert^2 + \frac{\gamma^2_yM_y}{4} \left\Vert I-W \right\Vert^2 E \left\Vert \hat{\nu}^\by_{t+1} - \nu^\by_{t+1} \right\Vert^2 \nonumber \\
& \leq \left( \frac{1}{2}\left( \gamma_y \lambda_{\max}(I-W) + 2\sqrt{\delta}\alpha_{y}^2 \right) + \frac{\gamma^2_yM_y \lambda^2_{\max}(I-W)}{4} \right) E \left\Vert \hat{\nu}^\by_{t+1} - \nu^\by_{t+1} \right\Vert^2 . \label{eq:exp_nu_y_t+1}
\end{align}
We also have
\begin{align}
\hat{\nu}^\by_{t+1} - \nu^\by_{t+1} & = Q(\nu^\by_{t+1} - H^\by_{t} ) - \left(\nu^\by_{t+1} - H^\by_{t} \right)  \\
E \left\Vert \hat{\nu}^\by_{t+1} - \nu^\by_{t+1} \right\Vert^2 & = E \left\Vert Q(\nu^\by_{t+1} - H^\by_{t} ) - \left(\nu^\by_{t+1} - H^\by_{t} \right) \right\Vert^2 \nonumber \\
& \leq \delta \left\Vert \nu^\by_{t+1} - H^\by_{t} \right\Vert^2 .
\end{align}
Substituting this inequality in \eqref{eq:exp_nu_y_t+1}, we bound the last two terms in the r.h.s of~\eqref{eq:yt+1_My_Dy_Hy} as
\begin{align}
& \frac{1}{2}  E \left\Vert \hat{\nu}^\by_{t+1}  - \nu^\by_{t+1} \right\Vert^2_{\gamma_y(I-W) + 2\sqrt{\delta}\alpha_{y}^2} + \frac{\gamma^2_yM_y}{4} E \left\Vert \hat{\nu}^\by_{t+1} - \nu^\by_{t+1} \right\Vert^2_{(I-W)^2} \nonumber \\
& \leq \left( \frac{1}{2}\left( \gamma_y \lambda_{\max}(I-W) + 2\sqrt{\delta}\alpha_{y}^2 \right) + \frac{\gamma^2_yM_y \lambda^2_{\max}(I-W)}{4} \right) \delta \left\Vert \nu^\by_{t+1} - H^\by_{t} \right\Vert^2 . \label{eq:nuhaty_t+1}
\end{align}
We will now bound the term $\frac{s^2}{\gamma_y} \left\Vert D^\by_{t} - D^\star_\by \right\Vert^2_{{2(I-W)^\dagger -\gamma_y I}}$ .
\begin{align}
\frac{s^2}{\gamma_y} \left\Vert D^\by_{t} - D^\star_\by \right\Vert^2_{{2(I-W)^\dagger -\gamma_y I}} & = \frac{s^2}{\gamma_y} \left\Vert D^\by_{t} - D^\star_\by \right\Vert^2_{2(I-W)^\dagger} - \frac{s^2}{\gamma_y} \left\Vert D^\by_{t} - D^\star_\by \right\Vert^2_{\gamma_y I} \nonumber \\
& = \frac{s^2}{\gamma_y} \left\Vert D^\by_{t} - D^\star_\by \right\Vert^2_{2(I-W)^\dagger} - \frac{s^2 \gamma_y}{\gamma_y} \left\Vert D^\by_{t} - D^\star_\by \right\Vert^2 \nonumber\\
& = \frac{2s^2}{\gamma_y} \left\Vert D^\by_{t} - D^\star_\by \right\Vert^2_{(I-W)^\dagger} - s^2 \left\Vert D^\by_{t} - D^\star_\by \right\Vert^2 \nonumber\\
& = \frac{2s^2}{\gamma_y} \left\Vert D^\by_{t} - D^\star_\by \right\Vert^2_{(I-W)^\dagger} - s^2 \left\Vert D^\by_{t} - D^\star_\by \right\Vert^2 + s^2 \lambda_{m-1}(I-W) \left\Vert D^\by_{t} - D^\star_\by \right\Vert^2_{(I-W)^\dagger} \nonumber\\ & \ \ - s^2 \lambda_{m-1}(I-W) \left\Vert D^\by_{t} - D^\star_\by \right\Vert^2_{(I-W)^\dagger} \nonumber\\
& = s^2(D^\by_{t} - D^\star_\by)^\top \left( - I +  \lambda_{m-1}(I-W)(I-W)^\dagger \right)(D^\by_{t} - D^\star_\by)\nonumber \\ & \ \ + \frac{2s^2}{\gamma_y} \left\Vert D^\by_{t} - D^\star_\by \right\Vert^2_{(I-W)^\dagger}  - s^2 \lambda_{m-1}(I-W) \left\Vert D^\by_{t} - D^\star_\by \right\Vert^2_{(I-W)^\dagger} \nonumber\\
& \leq \frac{2s^2}{\gamma_y} \left\Vert D^\by_{t} - D^\star_\by \right\Vert^2_{(I-W)^\dagger} - s^2 \lambda_{m-1}(I-W) \left\Vert D^\by_{t} - D^\star_\by \right\Vert^2_{(I-W)^\dagger} \nonumber\\
& = \frac{2s^2}{\gamma_y}\left( 1- \frac{\gamma_y}{2}\lambda_{m-1}(I-W) \right)\left\Vert D^\by_{t} - D^\star_\by \right\Vert^2_{(I-W)^\dagger} . \label{eq:Dyt+1_simplify}
\end{align}

By substituting \eqref{eq:nuhaty_t+1} and \eqref{eq:Dyt+1_simplify} in \eqref{eq:yt+1_My_Dy_Hy}, we get
\begin{align}
& M_y E\left\Vert \by_{t+1} - \mathbf{1}y^\star  \right\Vert^2 + \frac{2s^2}{\gamma_y} E \left\Vert  D^\by_{t+1} - D^\star_\by  \right\Vert^2_{(I-W)^\dagger} + \sqrt{\delta}E\left\Vert H^\by_{t+1} - H^\star_\by \right\Vert^2 \nonumber\\
& \leq \left\Vert \by_{t} - \mathbf{1}y^\star + s\mathcal{G}^\by_{t} - s\nabla_y F(\mathbf{1}z^\star)  \right\Vert^2 + \frac{2s^2}{\gamma_y}\left( 1- \frac{\gamma_y}{2}\lambda_{m-1}(I-W) \right)\left\Vert D^\by_{t} - D^\star_\by \right\Vert^2_{(I-W)^\dagger} \nonumber\\ & \ \ + \left(\frac{\delta\gamma^2_yM_y \lambda^2_{\max}(I-W)}{4} + \frac{\gamma_y\delta}{2}\lambda_{\max}(I-W) + \sqrt{\delta} \delta \alpha_{y}^2 - \sqrt{\delta}\alpha_{y}(1-\alpha_{y}) \right) \left\Vert \nu^\by_{t+1} - H^\by_{t} \right\Vert^2 \nonumber\\ & \ \ \ + \sqrt{\delta}(1-\alpha_{y})\left\Vert H^\by_{t} - H^\star_\by  \right\Vert^2 . \label{eq:simplifiedrhs}
\end{align}

The coefficient of $\left\Vert \nu^\by_{t+1} - H^\by_{t} \right\Vert^2$ in~\eqref{eq:simplifiedrhs} is
\begin{align}
& \frac{\delta\gamma^2_yM_y \lambda^2_{\max}(I-W)}{4} + \frac{\gamma_y\delta}{2}\lambda_{\max}(I-W) + \sqrt{\delta} \delta \alpha_{y}^2 - \sqrt{\delta}\alpha_{y}(1-\alpha_{y}) \nonumber\\
& < \frac{\delta\gamma^2_y\lambda^2_{\max}(I-W)}{4} + \frac{\gamma_y\delta}{2}\lambda_{\max}(I-W) + \sqrt{\delta} (1+\delta) \alpha_{y}^2 - \sqrt{\delta}\alpha_{y}\nonumber \\
& = \frac{\delta\gamma_y}{2} \frac{\gamma_y \lambda_{\max}(I-W)}{2} \lambda_{\max}(I-W) + \frac{\gamma_y\delta}{2}\lambda_{\max}(I-W) - \sqrt{\delta}(\alpha_{y} - (1+\delta) \alpha_{y}^2 )\nonumber \\
& < \frac{\delta\gamma_y}{2} \lambda_{\max}(I-W) + \frac{\gamma_y\delta}{2}\lambda_{\max}(I-W) - \sqrt{\delta}(\alpha_{y} - (1+\delta) \alpha_{y}^2 ) \nonumber\\
& = \delta\gamma_y \lambda_{\max}(I-W) - \sqrt{\delta}(\alpha_{y} - (1+\delta) \alpha_{y}^2 ) \nonumber\\
& < 0 , \text{ as } \gamma_y < \frac{\alpha_y - (1+\delta)\alpha_y^2}{\sqrt{\delta}\lambda_{\max}(I-W)}. \label{coeff_bound}
\end{align}
In deriving \eqref{coeff_bound}, the first inequality follows from $M_y < 1$ (to be proved later in Section \ref{appendix_parameters_feasibility}) and the second inequality follows from $\frac{\gamma_y\lambda_{\max}(I-W)}{2}<1$, since from assumption \eqref{gamma_y_assumption}, it holds that $0<\gamma_y<\frac{2-2\sqrt{\delta}\alpha_y}{\lambda_{\max}(I-W)}\leq \frac{2}{\lambda_{\max}(I-W)}$. 
As a result,~\eqref{eq:simplifiedrhs} can be simplified as 
\begin{align}
& M_y E\left\Vert \by_{t+1} - \mathbf{1}y^\star  \right\Vert^2 + \frac{2s^2}{\gamma_y} E \left\Vert  D^\by_{t+1} - D^\star_\by  \right\Vert^2_{(I-W)^\dagger} + \sqrt{\delta}E\left\Vert H^\by_{t+1} - H^\star_\by \right\Vert^2 \nonumber\\
& \leq \left\Vert \by_{t} - \mathbf{1}y^\star + s\mathcal{G}^\by_{t} - s\nabla_y F(\mathbf{1}z^\star)  \right\Vert^2 + \frac{2s^2}{\gamma_y}\left( 1- \frac{\gamma_y}{2}\lambda_{m-1}(I-W) \right)\left\Vert D^\by_{t} - D^\star_\by \right\Vert^2_{(I-W)^\dagger} \nonumber\\ & \ \ + \sqrt{\delta}(1-\alpha_{y})\left\Vert H^\by_{t} - H^\star_\by  \right\Vert^2 ,
\end{align}

proving Lemma~\ref{lem:recursion_y}. The remainder of this section is devoted to prove~\eqref{eq:y_t+1_My} and \eqref{eq:Dy_Hy_sum}.

\subsection{Proof of (\ref{eq:y_t+1_My}) }\label{sec:y_t+1_My}
First, observe that
\begin{align}
\left\Vert \by_{t+1} - \mathbf{1}y^\star  \right\Vert^2  
& = \sum_{i = 1}^m \left\Vert y^i_{t+1} - y^\star  \right\Vert^2\nonumber \\
& = \sum_{i = 1}^m \left\Vert \prox_{sr}(\hat{y}^i_{t+1}) - \prox_{sr}\left( y^\star + \frac{s}{m} \nabla_y f(x^\star,y^\star) \right)  \right\Vert^2 \nonumber\\
& = \sum_{i = 1}^m \left\Vert \prox_{sr}(\hat{y}^i_{t+1}) - \prox_{sr}\left(  H^\star_{i,y} \right)  \right\Vert^2 \nonumber\\
& \leq \sum_{i = 1}^m \left\Vert \hat{y}^i_{t+1} -H^\star_{i,y}  \right\Vert^2 \text{(from non-expansivity of prox)} \nonumber \\
& = \left\Vert \hat{y}_{t+1} -H^\star_{\by} \right\Vert^2 \nonumber\\
& = \left\Vert \nu^\by_{t+1} - \frac{\gamma_{y}}{2}(I-W)\hat{\nu}^\by_{t+1} -H^\star_{\by} \right\Vert^2\nonumber \\
& = \left\Vert \nu^\by_{t+1} - \frac{\gamma_{y}}{2}(I-W)(\hat{\nu}^\by_{t+1} - \nu^\by_{t+1} + \nu^\by_{t+1} ) -H^\star_{\by} \right\Vert^2 \nonumber\\
& = \left\Vert \left(I - \frac{\gamma_y}{2}(I-W) \right) (\nu^\by_{t+1} - H^\star_\by) - \frac{\gamma_{y}}{2}(I-W)(\hat{\nu}^\by_{t+1} - \nu^\by_{t+1} \right\Vert^2\nonumber \\
& = \left\Vert \left(I - \frac{\gamma_y}{2}(I-W) \right) (\nu^\by_{t+1} - H^\star_\by) \right\Vert^2 + \frac{\gamma^2_y}{4} \left\Vert (I-W)(\hat{\nu}^\by_{t+1} - \nu^\by_{t+1}) \right\Vert^2 \nonumber\\ & \ \ \ + 2 \left\langle \left(I - \frac{\gamma_y}{2}(I-W) \right)(\nu^\by_{t+1} - H^\star_\by), -\frac{\gamma_y}{2}(I-W)(\hat{\nu}^\by_{t+1} - \nu^\by_{t+1}) \right\rangle .
\end{align}
By taking conditional expectation over stochastic compression at $t$-th iterate, we obtain
\begin{align}
E\left\Vert \by_{t+1} - \mathbf{1}y^\star  \right\Vert^2  & \leq \left\Vert \left(I - \frac{\gamma_y}{2}(I-W) \right) (\nu^\by_{t+1} - H^\star_\by) \right\Vert^2 + \frac{\gamma^2_y}{4} E \left\Vert (I-W)(\hat{\nu}^\by_{t+1} - \nu^\by_{t+1}) \right\Vert^2 \nonumber\\ & \ \ \ - \gamma_y \left\langle \left(I - \frac{\gamma_y}{2}(I-W) \right)(\nu^\by_{t+1} - H^\star_\by), (I-W)E(\hat{\nu}^\by_{t+1} - \nu^\by_{t+1}) \right\rangle . \label{eq:ykt+1_ystar}
\end{align}
We have
\begin{align}
\hat{\nu}^\by_{t+1} - \nu^\by_{t+1} & = H^\by_{t} + Q(\nu^\by_{t+1} - H^\by_{t} ) - \nu^\by_{t+1} \nonumber\\
& = Q(\nu^\by_{t+1} - H^\by_{t} ) - \left(\nu^\by_{t+1} - H^\by_{t} \right) \nonumber \\
E\left( \hat{\nu}^\by_{t+1} - \nu^\by_{t+1} \right) & = E\left( Q(\nu^\by_{t+1} - H^\by_{t} ) \right) - \left(\nu^\by_{t+1} - H^\by_{t} \right) \nonumber\\
& = 0 .
\end{align}
The last equality follows from Assumption \ref{compression_operator}. By substituting the above  equation in \eqref{eq:ykt+1_ystar}, we obtain
\begin{align}
E\left\Vert \by_{t+1} - \mathbf{1}y^\star  \right\Vert^2  & \leq \left\Vert \left(I - \frac{\gamma_y}{2}(I-W) \right) (\nu^\by_{t+1} - H^\star_\by) \right\Vert^2 + \frac{\gamma^2_y}{4} E \left\Vert (I-W)(\hat{\nu}^\by_{t+1} - \nu^\by_{t+1}) \right\Vert^2 . \label{eq:yk_t+1_ystar_vec_norm}
\end{align}
We will now convert the square norm terms of~\eqref{eq:yk_t+1_ystar_vec_norm} into matrix-norm based terms. Towards that end, observe that
\begin{align}
\left( (I - \frac{\gamma_y}{2}(I-W) \right)^2 & = I + \frac{\gamma^2_y}{4}(I-W)^2 -\gamma_y(I-W) \nonumber\\
& = I - \frac{\gamma_y}{2}(I-W) + \frac{\gamma_y}{2}(I-W) + \frac{\gamma^2_y}{4}(I-W)^2 - \gamma_y(I-W)\nonumber \\
& = I - \frac{\gamma_y}{2}(I-W) - \frac{\gamma_y}{2}(I-W) + \frac{\gamma^2_y}{4}(I-W)^2 \nonumber \\
& =  I - \frac{\gamma_y}{2}(I-W) + \frac{\gamma_y}{2}(I-W) \left( -I + \frac{\gamma_y}{2}(I-W) \right)\nonumber \\
& = I - \frac{\gamma_y}{2}(I-W) + \frac{\gamma_y}{2}(I-W)^{1/2} \left( -I + \frac{\gamma_y}{2}(I-W) \right)(I-W)^{1/2} . \label{square_norm_matrix_terms}
\end{align}
Note that $\left( -I + \frac{\gamma_y}{2}(I-W) \right)$ is a negative semidefinite matrix because $0 < \frac{\gamma_y}{2} \lambda_{\max}(I-W)  < 1$ from the choice of $\gamma_y$. Using this fact in equation \eqref{square_norm_matrix_terms}, we get $\forall x \in {\mathbb{R}}^m$, 
\begin{align}
x^\top\left( I - \frac{\gamma_y}{2}(I-W) \right)^2 x & \leq x^\top\left( I - \frac{\gamma_y}{2}(I-W) \right) x . 
\end{align} 
Consider 
\begin{align}
\left\Vert \left(I - \frac{\gamma_y}{2}(I-W) \right) (\nu^\by_{t+1} - H^\star_\by) \right\Vert^2  & = (\nu^\by_{t+1} - H^\star_\by)^\top \left(I - \frac{\gamma_y}{2}(I-W) \right)^2 (\nu^\by_{t+1} - H^\star_\by) \nonumber\\
& \leq (\nu^\by_{t+1} - H^\star_\by)^\top \left(I - \frac{\gamma_y}{2}(I-W) \right) (\nu^\by_{t+1} - H^\star_\by) \nonumber\\
& = \left\Vert \nu^\by_{t+1} - H^\star_\by \right\Vert^2_{I - \frac{\gamma_y}{2}(I-W)} . \label{matrix_sq_norm_term1}
\end{align}
Moreover,
\begin{align}
\left\Vert (I-W)(\hat{\nu}^\by_{t+1} - \nu^\by_{t+1}) \right\Vert^2 & = (\hat{\nu}^\by_{t+1} - \nu^\by_{t+1})^\top(I-W)^2(\hat{\nu}^\by_{t+1} - \nu^\by_{t+1}) \nonumber \\
& = \left\Vert \hat{\nu}^\by_{t+1} - \nu^\by_{t+1} \right\Vert^2_{(I-W)^2} . \label{matrix_sq_norm_term2}
\end{align}
On substituting above two equalities \eqref{matrix_sq_norm_term1} and \eqref{matrix_sq_norm_term2} in \eqref{eq:yk_t+1_ystar_vec_norm}, we obtain
\begin{align}
E\left\Vert \by_{t+1} - \mathbf{1}y^\star  \right\Vert^2  & \leq \left\Vert \nu^\by_{t+1} - H^\star_\by \right\Vert^2_{I - \frac{\gamma_y}{2}(I-W)} + \frac{\gamma^2_y}{4} E \left\Vert \hat{\nu}^\by_{t+1} - \nu^\by_{t+1} \right\Vert^2_{(I-W)^2}  . \label{eq:ykt+1_ystar_compact}
\end{align} 

To complete the proof of~\eqref{eq:y_t+1_My}, we first show that the first term on the r.h.s. of~\eqref{eq:ykt+1_ystar_compact} is at most 
$M^{-1}_y \left\Vert \nu^\by_{t+1} - H^\star_\by  \right\Vert^2_{I - \frac{\gamma_y}{2}(I-W) - \alpha_{y}\sqrt{\delta}I}$, 
where $M_y = 1-\frac{\sqrt{\delta}\alpha_y}{1-\frac{\gamma_y}{2}\lambda_{\max}(I-W)}$ . To this end, consider
\begin{align}
& \sqrt{\delta}\alpha_y I - \frac{\sqrt{\delta}\alpha_y}{1-\frac{\gamma_y}{2}\lambda_{\max}(I-W)}\left( I - \frac{\gamma_y}{2}(I-W) \right) \nonumber\\
& = \frac{\sqrt{\delta}\alpha_y \left( 1-\frac{\gamma_y}{2}\lambda_{\max}(I-W) \right) I - \sqrt{\delta}\alpha_y\left( I - \frac{\gamma_y}{2}(I-W) \right)}{1-\frac{\gamma_y}{2}\lambda_{\max}(I-W)} \nonumber\\
& = \frac{1}{1-\frac{\gamma_y}{2}\lambda_{\max}(I-W)} \left( -\frac{\sqrt{\delta}\alpha_y \gamma_y\lambda_{\max}(I-W)}{2} I + \frac{\sqrt{\delta}\alpha_y \gamma_y}{2} (I-W) \right) .
\end{align}
The largest eigenvalue of $\sqrt{\delta}\alpha_y I - \frac{\sqrt{\delta}\alpha_y}{1-\frac{\gamma_y}{2}\lambda_{\max}(I-W)}\left( I - \frac{\gamma_y}{2}(I-W) \right)$ is 
\begin{align}
& \frac{1}{1-\frac{\gamma_y}{2}\lambda_{\max}(I-W)} \left( -\frac{\sqrt{\delta}\alpha_y \gamma_y\lambda_{\max}(I-W)}{2} I + \frac{\sqrt{\delta}\alpha_y \gamma_y}{2} \lambda_{\max}(I-W) \right)\nonumber \\
& = 0 .
\end{align}
Therefore,
$\sqrt{\delta}\alpha_y I - \frac{\sqrt{\delta}\alpha_y}{1-\frac{\gamma_y}{2}\lambda_{\max}(I-W)}\left( I - \frac{\gamma_y}{2}(I-W) \right)$ is negative semidefinite, and 

\begin{align}
x^\top \left( I - \frac{\gamma_y}{2}(I-W) \right)x & = M_y^{-1} x^\top M_y \left( I - \frac{\gamma_y}{2}(I-W) \right)x \nonumber \\
& = M_y^{-1} x^\top \left( 1-\frac{\sqrt{\delta}\alpha_y}{1-\frac{\gamma_y}{2}\lambda_{\max}(I-W)} \right) \left( I - \frac{\gamma_y}{2}(I-W) \right)x \nonumber\\
& = M_y^{-1} x^\top \left( I - \frac{\gamma_y}{2}(I-W) - \frac{\sqrt{\delta}\alpha_y}{1-\frac{\gamma_y}{2}\lambda_{\max}(I-W)} \left( I - \frac{\gamma_y}{2}(I-W) \right)  \right)x\nonumber \\
& = M_y^{-1} x^\top \left( I - \frac{\gamma_y}{2}(I-W) - \sqrt{\delta}\alpha_y I  \right)x\nonumber \\ & \ \ + M_y^{-1} x^\top \left( \sqrt{\delta}\alpha_y I - \frac{\sqrt{\delta}\alpha_y}{1-\frac{\gamma_y}{2}\lambda_{\max}(I-W)} \left( I - \frac{\gamma_y}{2}(I-W) \right)  \right)x \nonumber\\
& \leq M_y^{-1} x^\top \left( I - \frac{\gamma_y}{2}(I-W) - \sqrt{\delta}\alpha_y I  \right)x .
\end{align}

Substituting $x = \nu^\by_{t+1} - H^\star_\by $ into the above inequality and using the definition of $\left\Vert x  \right\Vert^2_A$, we obtain 
\begin{align}
\left\Vert \nu^\by_{t+1} - H^\star_\by  \right\Vert^2_{I - \frac{\gamma_y}{2}(I-W)} & \leq M^{-1}_y \left\Vert \nu^\by_{t+1} - H^\star_\by  \right\Vert^2_{I - \frac{\gamma_y}{2}(I-W) - \alpha_{y}\sqrt{\delta}I} . \label{eq:nuyt+1_Hstar_My}
\end{align}

From \eqref{eq:ykt+1_ystar_compact} and \eqref{eq:nuyt+1_Hstar_My}, we see that
\begin{equation*}
E\left\Vert \by_{t+1} - \mathbf{1}y^\star  \right\Vert^2   \leq M^{-1}_y \left\Vert \nu^\by_{t+1} - H^\star_\by  \right\Vert^2_{I - \frac{\gamma_y}{2}(I-W) - \alpha_{y}\sqrt{\delta}I} + \frac{\gamma^2_y}{4} E \left\Vert \hat{\nu}^\by_{t+1} - \nu^\by_{t+1} \right\Vert^2_{(I-W)^2} . 
\end{equation*}
Multiplying throughout by $M_y$, the proof of~\eqref{eq:y_t+1_My} is complete.

\subsection{Proof of (\ref{eq:Dy_Hy_sum})}\label{sec:Dy_Hy_sum}

\subsubsection*{Step 1: Computing $E\left\Vert D^\by_{t+1} - D^\star_\by \right\Vert^2_{(I - W)^\dagger}$}
Observe that 
\begin{align}
 D^\by_{t+1} - D^\star_\by  & =  D^\by_{t} + \frac{\gamma_y}{2s}(I-W)\hat{\nu}^\by_{t+1} - D^\star_\by\nonumber  \\
& =  D^\by_{t} + \frac{\gamma_y}{2s}(I-W)\hat{\nu}^\by_{t+1} - D^\star_\by - \frac{\gamma_y}{2s}(I-W)H^\star_\by \nonumber \\
& =  D^\by_{t} - D^\star_\by + \frac{\gamma_y}{2s}(I-W)(\hat{\nu}^\by_{t+1}  - H^\star_\by) \nonumber \\
& = D^\by_{t} - D^\star_\by + \frac{\gamma_y}{2s}(I-W)(\hat{\nu}^\by_{t+1}  - \nu^\by_{t+1}) + \frac{\gamma_y}{2s}(I-W)(\nu^\by_{t+1}  - H^\star_\by)
\end{align}
On pre-multiplying both sides by $\sqrt{(I-W)^\dagger}$ and taking square norm on the resulting equality, we obtain
\begin{align}
& \left\Vert \sqrt{(I-W)^\dagger}\left( D^\by_{t+1} - D^\star_\by  \right) \right\Vert^2 \nonumber\\
& = \left\Vert \sqrt{(I-W)^\dagger}(D^\by_{t} - D^\star_\by) + \frac{\gamma_y}{2s}\sqrt{(I-W)^\dagger}(I-W)(\nu^\by_{t+1}  - H^\star_\by)  \right\Vert^2 \nonumber \\ & \ \ + \frac{\gamma^2_y}{4s^2}\left\Vert \sqrt{(I-W)^\dagger}(I-W)(\hat{\nu}^\by_{t+1}  - \nu^\by_{t+1}) \right\Vert^2 \nonumber \\ &  + 2 \left\langle \sqrt{(I-W)^\dagger}(D^\by_{t} - D^\star_\by) + \frac{\gamma_y}{2s}\sqrt{(I-W)^\dagger}(I-W)(\nu^\by_{t+1}  - H^\star_\by), \frac{\gamma_y}{2s} \sqrt{(I-W)^\dagger}(I-W)(\hat{\nu}^\by_{t+1}  - \nu^\by_{t+1}) \right\rangle .
\end{align}
By taking conditional expectation over compression at $t$-th iterate and using the result $E\left( \hat{\nu}^\by_{t+1} - \nu^\by_{t+1} \right) = 0$, we obtain
\begin{align}
E \left\Vert \sqrt{(I-W)^\dagger}\left( D^\by_{t+1} - D^\star_\by  \right) \right\Vert^2 
& = \left\Vert \sqrt{(I-W)^\dagger}(D^\by_{t} - D^\star_\by) + \frac{\gamma_y}{2s}\sqrt{(I-W)^\dagger}(I-W)(\nu^\by_{t+1}  - H^\star_\by)  \right\Vert^2 \nonumber\\ & \ \ \ + \frac{\gamma^2_y}{4s^2}E \left\Vert \sqrt{(I-W)^\dagger}(I-W)(\hat{\nu}^\by_{t+1}  - \nu^\by_{t+1}) \right\Vert^2 \nonumber \\
& = \left\Vert \sqrt{(I-W)^\dagger}(D^\by_{t} - D^\star_\by) \right\Vert^2 + \frac{\gamma^2_y}{4s^2} \left\Vert \sqrt{(I-W)^\dagger}(I-W)(\nu^\by_{t+1}  - H^\star_\by)  \right\Vert^2 \nonumber\\ & \ \ \ + 2 \left\langle \sqrt{(I-W)^\dagger}(D^\by_{t} - D^\star_\by), \frac{\gamma_y}{2s} \sqrt{(I-W)^\dagger}(I-W)(\nu^\by_{t+1}  - H^\star_\by) \right\rangle \nonumber\\ & \ \ \ + \frac{\gamma^2_y}{4s^2}E \left\Vert \sqrt{(I-W)^\dagger}(I-W)(\hat{\nu}^\by_{t+1}  - \nu^\by_{t+1}) \right\Vert^2 . \label{eq:Dk_t+1_Dstar_}
\end{align}
We have
\begin{align}
\left( \sqrt{(I-W)^\dagger}(I-W) \right)^\top \left(\sqrt{(I-W)^\dagger}(I-W) \right) & = (I-W)\sqrt{(I-W)^\dagger}\sqrt{(I-W)^\dagger}(I-W) \nonumber\\
& = (I-W)(I-W)^{\dagger}(I-W) \nonumber\\
& = I-W ,
\end{align}
where the last equality follows from the definition of pseudoinverse. Consider
\begin{align}
& \left\Vert \sqrt{(I-W)^\dagger}(I-W)(\nu^\by_{t+1}  - H^\star_\by)  \right\Vert^2 \nonumber \\
& = (\nu^\by_{t+1}  - H^\star_\by)^\top\left( \sqrt{(I-W)^\dagger}(I-W) \right)^\top \left(\sqrt{(I-W)^\dagger}(I-W) \right)(\nu^\by_{t+1}  - H^\star_\by)\nonumber \\
& = (\nu^\by_{t+1}  - H^\star_\by)^\top(I-W)(\nu^\by_{t+1}  - H^\star_\by) \nonumber\\
& = \left\Vert (\nu^\by_{t+1}  - H^\star_\by) \right\Vert^2_{I-W} .
\end{align}
Similarly, we see that $\left\Vert \sqrt{(I-W)^\dagger}(I-W)(\hat{\nu}^\by_{t+1}  - \nu^\by_{t+1}) \right\Vert^2 = \left\Vert \hat{\nu}^\by_{t+1}  - \nu^\by_{t+1} \right\Vert^2_{I-W}$
Substituting in \eqref{eq:Dk_t+1_Dstar_}, we obtain 
\begin{align}
E \left\Vert  D^\by_{t+1} - D^\star_\by \right\Vert^2_{(I-W)^\dagger}
& = \left\Vert D^\by_{t} - D^\star_\by \right\Vert^2_{(I-W)^\dagger} + \frac{\gamma^2_y}{4s^2} \left\Vert \nu^\by_{t+1}  - H^\star_\by \right\Vert^2_{I-W} + \frac{\gamma^2_y}{4s^2}  E \left\Vert \hat{\nu}^\by_{t+1}  - \nu^\by_{t+1} \right\Vert^2_{I-W} \nonumber\\ & \ \  + \frac{\gamma_y}{s} \left\langle \sqrt{(I-W)^\dagger}(D^\by_{t} - D^\star_\by),  \sqrt{(I-W)^\dagger}(I-W)(\nu^\by_{t+1}  - H^\star_\by) \right\rangle . \label{eq:Dyk_t+1_inner_product}
\end{align}
Now, we will simplify the last term of \eqref{eq:Dyk_t+1_inner_product}. From the property of adjoints,
\begin{align}
&  \left\langle \sqrt{(I-W)^\dagger}(D^\by_{t} - D^\star_\by),  \sqrt{(I-W)^\dagger}(I-W)(\nu^\by_{t+1}  - H^\star_\by) \right\rangle \nonumber \\
& = \left\langle (I-W)\sqrt{(I-W)^\dagger}\sqrt{(I-W)^\dagger}(D^\by_{t} - D^\star_\by),  \nu^\by_{t+1}  - H^\star_\by \right\rangle\nonumber \\
& = \left\langle (I-W)(I-W)^\dagger (D^\by_{t} - D^\star_\by),  \nu^\by_{t+1}  - H^\star_\by \right\rangle .
\end{align}
Note that $D^\star_\by \in \text{Range}(I-W)$ using Proposition~\ref{rem:rangeI-W}. Further note that $D^\by_{t} \in \text{Range}(I-W)$ because of update process (Step 10) in Algorithm \ref{alg:generic_procedure_sgda}. Therefore, there exists $\tilde{D}^\by_{t}$ and $\tilde{D}_y$ such that $D^\by_{t} = (I-W)\tilde{D}^\by_{t}$ and $D^\star_\by = (I-W)\tilde{D}_y$. 
\begin{align}
(I-W)(I-W)^\dagger (D^\by_{t} - D^\star_\by) & = (I-W)(I-W)^\dagger \left( (I-W)\tilde{D}^\by_{t} - (I-W)\tilde{D}_y) \right) \nonumber\\
& = (I-W)(I-W)^\dagger (I-W) \left( \tilde{D}^\by_{t} - \tilde{D}_y \right) \nonumber\\
& = (I-W) \left( \tilde{D}^\by_{t} - \tilde{D}_y \right) \nonumber\\
& = (I-W)\tilde{D}^\by_{t} - (I-W)\tilde{D}_y\nonumber \\
& = D^\by_{t} - D^\star_\by .
\end{align}
This gives
\begin{align}
\left\langle \sqrt{(I-W)^\dagger}(D^\by_{t} - D^\star_\by),  \sqrt{(I-W)^\dagger}(I-W)(\nu^\by_{t+1}  - H^\star_\by) \right\rangle & = \left\langle D^\by_{t} - D^\star_\by,  \nu^\by_{t+1}  - H^\star_\by \right\rangle .
\end{align}
On substituting above equality in \eqref{eq:Dyk_t+1_inner_product}, we obtain
\begin{align}
E \left\Vert  D^\by_{t+1} - D^\star_\by  \right\Vert^2_{(I-W)^\dagger} & = \left\Vert D^\by_{t} - D^\star_\by \right\Vert^2_{(I-W)^\dagger} + \frac{\gamma^2_y}{4s^2}  \left\Vert \nu^\by_{t+1}  - H^\star_\by \right\Vert^2_{I-W} + \frac{\gamma^2_y}{4s^2}  E \left\Vert \hat{\nu}^\by_{t+1}  - \nu^\by_{t+1} \right\Vert^2_{I-W} \nonumber\\ & \ \ + \frac{\gamma_y}{s} \left\langle D^\by_{t} - D^\star_\by,  \nu^\by_{t+1}  - H^\star_\by \right\rangle . \label{eq:Dy_k_t+1_Dstar_2}
\end{align}
Note that $\frac{1}{m}\mathbf{1}\nabla_y f(z^\star) = \frac{1}{m} \mathbf{1}\sum_{i = 1}^m \nabla_y f_i(z^\star) = J\nabla_y F(\mathbf{1}z^\star)$. Then we can write $H^\star_\by = \mathbf{1}y^\star + sJ\nabla_y F(\mathbf{1}z^\star)$. We have
\begin{align}
	\nu^\by_{t+1} - H^\star_\by & =  \nu^\by_{t+1} - \left( \mathbf{1}y^\star + sJ\nabla_y F(\mathbf{1}z^\star) \right) \nonumber \\
	& = \nu^\by_{t+1} - \left(-sD^\star_\by + \mathbf{1}y^\star + s\nabla_y F(\mathbf{1}z^\star) \right) \nonumber\\
	& = \by_{t} + s \mathcal{G}^\by_{t} - sD^\by_{t} + sD^\star - \mathbf{1}y^\star - s\nabla_y F(\mathbf{1}z^\star) \nonumber\\
	& = \by_{t} - \mathbf{1}y^\star + s\left( \mathcal{G}^\by_{t} - \nabla_y F(\mathbf{1}z^\star) \right) -s \left( D^\by_{t} - D^\star_\by \right) . \label{eq:nu_y_t+1_Hstar}
\end{align}
In deriving equation \eqref{eq:nu_y_t+1_Hstar}, the second equality follows from the definition of $D^\star_\by = (I-J)\nabla_y F(\mathbf{1}z^\star)$, and the third equality follows from the update step of $\nu^{y}_{t+1}$ (Step 9 in Algorithm \ref{alg:generic_procedure_sgda}). 

Substituting~\eqref{eq:nu_y_t+1_Hstar} in \eqref{eq:Dy_k_t+1_Dstar_2},  
\begin{align}
E \left\Vert  D^\by_{t+1} - D^\star_\by  \right\Vert^2_{(I-W)^\dagger} & = \left\Vert D^\by_{t} - D^\star_\by \right\Vert^2_{(I-W)^\dagger} + \frac{\gamma^2_y}{4s^2}  \left\Vert \nu^\by_{t+1}  - H^\star_\by \right\Vert^2_{I-W} + \frac{\gamma^2_y}{4s^2}  E \left\Vert \hat{\nu}^\by_{t+1}  - \nu^\by_{t+1} \right\Vert^2_{I-W} \nonumber\\ & \ \ + \frac{\gamma_y}{s} \left\langle D^\by_{t} - D^\star_\by,   \by_{t} - \mathbf{1}y^\star + s\left( \mathcal{G}^\by_{t} - \nabla_y F(\mathbf{1}z^\star) \right) \right\rangle - \gamma_y \left\Vert  D^\by_{t} - D^\star_\by  \right\Vert^2 . \label{eq:Dy_k_t+1_Dstar_compact}
\end{align}

\textbf{Step 2: Computing $\frac{2s^2}{\gamma_y} E \left\Vert  D^\by_{t+1} - D^\star_\by  \right\Vert^2_{(I-W)^\dagger} + \left\Vert \nu^\by_{t+1} - H^\star_\by \right\Vert^2_{I - \frac{\gamma_y}{2}(I-W)}$}

Taking square norm on both sides of \eqref{eq:nu_y_t+1_Hstar}, we obtain .
\begin{align}
\left\Vert \nu^\by_{t+1} - H^\star_\by \right\Vert^2 & = \left\Vert \by_{t} - \mathbf{1}y^\star + s\left( \mathcal{G}^\by_{t} - \nabla_y F(\mathbf{1}z^\star) \right)  \right\Vert^2 + s^2 \left\Vert D^\by_{t} - D^\star_\by \right\Vert^2 \nonumber\\ & \ \ - 2s \left\langle D^\by_{t} - D^\star_\by, \by_{t} - \mathbf{1}y^\star + s \mathcal{G}^\by_{t} - s\nabla_y F(\mathbf{1}z^\star) \right\rangle . \label{eq:nu_y_t+1_Hstar_norm}
\end{align}

Multiply \eqref{eq:Dy_k_t+1_Dstar_compact} by $\frac{2s^2}{\gamma_y}$ and adding the resulting inequality with \eqref{eq:nu_y_t+1_Hstar_norm}, 
\begin{align}
& \frac{2s^2}{\gamma_y} E \left\Vert  D^\by_{t+1} - D^\star_\by  \right\Vert^2_{(I-W)^\dagger} + \left\Vert \nu^\by_{t+1} - H^\star_\by \right\Vert^2 \nonumber\\
& = \frac{2s^2}{\gamma_y} \left\Vert D^\by_{t} - D^\star_\by \right\Vert^2_{(I-W)^\dagger} + \frac{\gamma_y}{2}  \left\Vert \nu^\by_{t+1}  - H^\star_\by \right\Vert^2_{I-W} + \frac{\gamma_y}{2}  E \left\Vert \hat{\nu}^\by_{t+1}  - \nu^\by_{t+1} \right\Vert^2_{I-W}  - 2s^2 \left\Vert  D^\by_{t} - D^\star_\by  \right\Vert^2 \nonumber\\ & \ \ + s^2 \left\Vert D^\by_{t} - D^\star_\by \right\Vert^2 + \left\Vert \by_{t} - \mathbf{1}y^\star + s\mathcal{G}^\by_{t} - s\nabla_y F(\mathbf{1}z^\star)  \right\Vert^2 \nonumber \\
& = \frac{2s^2}{\gamma_y} \left\Vert D^\by_{t} - D^\star_\by \right\Vert^2_{(I-W)^\dagger} + \frac{\gamma_y}{2} \left\Vert \nu^\by_{t+1}  - H^\star_\by \right\Vert^2_{I-W} + \frac{\gamma_y}{2}   E\left\Vert \hat{\nu}^\by_{t+1}  - \nu^\by_{t+1} \right\Vert^2_{I-W}  - s^2 \left\Vert  D^\by_{t} - D^\star_\by  \right\Vert^2 \nonumber\\ & \ \ + \left\Vert \by_{t} - \mathbf{1}y^\star + s\mathcal{G}^\by_{t} - s\nabla_y F(\mathbf{1}z^\star)  \right\Vert^2 . \label{eq:Dy_t+1_nuy_t+1}
\end{align}

We have
\begin{align}
	& \left\Vert \nu^\by_{t+1} - H^\star_\by \right\Vert^2 - \frac{\gamma_y}{2}\left\Vert \nu^\by_{t+1}  - H^\star_\by \right\Vert^2_{I-W} \nonumber \\
	& = (\nu^\by_{t+1} - H^\star_\by)^\top I (\nu^\by_{t+1} - H^\star_\by) -(\nu^\by_{t+1} - H^\star_\by)^\top  \frac{\gamma_y}{2}(I-W) (\nu^\by_{t+1} - H^\star_\by) \nonumber \\
	& = (\nu^\by_{t+1} - H^\star_\by)^\top \left(I - \frac{\gamma_y}{2}(I-W) \right)(\nu^\by_{t+1} - H^\star_\by) \nonumber\\
	& = \left\Vert \nu^\by_{t+1} - H^\star_\by \right\Vert^2_{I - \frac{\gamma_y}{2}(I-W)} \nonumber.
\end{align}
Therefore,
\begin{align}
	& \frac{\gamma_y}{2}\left\Vert \nu^\by_{t+1}  - H^\star_\by \right\Vert^2_{I-W} = \left\Vert \nu^\by_{t+1} - H^\star_\by \right\Vert^2 - \left\Vert \nu^\by_{t+1} - H^\star_\by \right\Vert^2_{I - \frac{\gamma_y}{2}(I-W)} .
\end{align}
By substituting above equality in \eqref{eq:Dy_t+1_nuy_t+1}, we obtain
\begin{align}
	\frac{2s^2}{\gamma_y} E \left\Vert  D^\by_{t+1} - D^\star_\by  \right\Vert^2_{(I-W)^\dagger} + \left\Vert \nu^\by_{t+1} - H^\star_\by \right\Vert^2 & = \frac{2s^2}{\gamma_y} \left\Vert D^\by_{t} - D^\star_\by \right\Vert^2_{(I-W)^\dagger} + \left\Vert \nu^\by_{t+1} - H^\star_\by \right\Vert^2 \nonumber \\ & \ \ - \left\Vert \nu^\by_{t+1} - H^\star_\by \right\Vert^2_{I - \frac{\gamma_y}{2}(I-W)} + \frac{\gamma_y}{2}  E \left\Vert \hat{\nu}^\by_{t+1}  - \nu^\by_{t+1} \right\Vert^2_{I-W} \nonumber \\ & \ \  - s^2 \left\Vert  D^\by_{t} - D^\star_\by  \right\Vert^2  + \left\Vert \by_{t} - \mathbf{1}y^\star + s\mathcal{G}^\by_{t} - s\nabla_y F(\mathbf{1}z^\star)  \right\Vert^2 \nonumber 
\end{align}
Hence we get 
\begin{align}
& \frac{2s^2}{\gamma_y} E \left\Vert  D^\by_{t+1} - D^\star_\by  \right\Vert^2_{(I-W)^\dagger}  + \left\Vert \nu^\by_{t+1} - H^\star_\by \right\Vert^2_{I - \frac{\gamma_y}{2}(I-W)}  \nonumber \\ 
& = \frac{2s^2}{\gamma_y} \left\Vert D^\by_{t} - D^\star_\by \right\Vert^2_{(I-W)^\dagger} + \frac{\gamma_y}{2}  E \left\Vert \hat{\nu}^\by_{t+1}  - \nu^\by_{t+1} \right\Vert^2_{I-W}  + \left\Vert \by_{t} - \mathbf{1}y^\star + s\mathcal{G}^\by_{t} - s\nabla_y F(\mathbf{1}z^\star)  \right\Vert^2   - s^2 \left\Vert  D^\by_{t} - D^\star_\by  \right\Vert^2 . \label{eq:Dy_t+1nu_y_mid}
\end{align}
%
Now we write $\frac{2s^2}{\gamma_y} \left\Vert D^\by_{t} - D^\star_\by \right\Vert^2_{(I-W)^\dagger} -  s^2 \left\Vert  D^\by_{t} - D^\star_\by  \right\Vert^2$ in terms of  $\frac{s^2}{\gamma_y} \left\Vert  D^\by_{t} - D^\star_\by  \right\Vert^2_{2(I-W)^\dagger -\gamma_y I}$ .
\begin{align}
& \frac{2s^2}{\gamma_y} \left\Vert D^\by_{t} - D^\star_\by \right\Vert^2_{(I-W)^\dagger} -  s^2 \left\Vert  D^\by_{t} - D^\star_\by  \right\Vert^2 \nonumber\\
& = \frac{2s^2}{\gamma_y}(D^\by_{t} - D^\star_\by)^\top (I-W)^\dagger (D^\by_{t} - D^\star_\by) - s^2(D^\by_{t} - D^\star_\by)^\top (D^\by_{t} - D^\star_\by) \nonumber\\
& = \frac{s^2}{\gamma_y} \left((D^\by_{t} - D^\star_\by)^\top (2(I-W)^\dagger -\gamma_y I)(D^\by_{t} - D^\star_\by) \right) \nonumber\\
& = \frac{s^2}{\gamma_y} \left\Vert  D^\by_{t} - D^\star_\by  \right\Vert^2_{2(I-W)^\dagger -\gamma_y I} .
\end{align}
By substituting this equality in \eqref{eq:Dy_t+1nu_y_mid}, we obtain
\begin{align}
& \frac{2s^2}{\gamma_y} E \left\Vert  D^\by_{t+1} - D^\star_\by  \right\Vert^2_{(I-W)^\dagger} + \left\Vert \nu^\by_{t+1} - H^\star_\by \right\Vert^2_{I - \frac{\gamma_y}{2}(I-W)} \nonumber\\ 
& = \frac{s^2}{\gamma_y} \left\Vert D^\by_{t} - D^\star_\by \right\Vert^2_{{2(I-W)^\dagger -\gamma_y I}} + \frac{\gamma_y}{2}  E \left\Vert \hat{\nu}^\by_{t+1}  - \nu^\by_{t+1} \right\Vert^2_{I-W}   + \left\Vert \by_{t} - \mathbf{1}y^\star + s\mathcal{G}^\by_{t} - s\nabla_y F(\mathbf{1}z^\star)  \right\Vert^2 . \label{eq:Dyt+1_Dstar_final}
\end{align}

\textbf{Step 3: Computing   $\sqrt{\delta}E\left\Vert H^\by_{t+1} - H^\star_\by \right\Vert^2$ and finishing the proof}

Observe from Step 4 in Algorithm \ref{comm} that $H^\by_{t+1}  = (1-\alpha_{y})H^\by_{t} + \alpha_{y}\hat{\nu}^\by_{t+1}$, and as a result, 
\begin{align}
H^\by_{t+1} - H^\star_\by & = (1-\alpha_{y})(H^\by_{t} - H^\star_\by ) + \alpha_{y}(\hat{\nu}^\by_{t+1} - \nu^\by_{t+1} ) + \alpha_{y}(\nu^\by_{t+1} - H^\star_\by ), \text{ and} \nonumber\\
\left\Vert H^\by_{t+1} - H^\star_\by \right\Vert^2 & = \left\Vert (1-\alpha_{y})(H^\by_{t} - H^\star_\by ) + \alpha_{y}(\nu^\by_{t+1} - H^\star_\by ) \right\Vert^2 + \alpha_{y}^2 \left\Vert \hat{\nu}^\by_{t+1} - \nu^\by_{t+1} \right\Vert^2 \nonumber\\ & \ \ + 2 \left\langle (1-\alpha_{y})(H^\by_{t} - H^\star_\by ) + \alpha_{y}(\nu^\by_{t+1} - H^\star_\by ) , \alpha_{y}(\hat{\nu}^\by_{t+1} - \nu^\by_{t+1}) \right\rangle .
\end{align}

Taking conditional expectation over compression at $t$-th iterate on both sides and substituting $E\left(\hat{\nu}^\by_{t+1} - \nu^\by_{t+1} \right) = 0$, we see that
\begin{align}
E\left\Vert H^\by_{t+1} - H^\star_\by \right\Vert^2 & = \left\Vert (1-\alpha_{y})(H^\by_{t} - H^\star_\by ) + \alpha_{y}(\nu^\by_{t+1} - H^\star_\by ) \right\Vert^2 + \alpha_{y}^2 E \left\Vert \hat{\nu}^\by_{t+1} - \nu^\by_{t+1} \right\Vert^2 \nonumber\\
& = (1-\alpha_{y})\left\Vert H^\by_{t} - H^\star_\by  \right\Vert^2 + \alpha_{y} \left\Vert \nu^\by_{t+1} - H^\star_\by  \right\Vert^2 - \alpha_{y}(1-\alpha_{y,k}) \left\Vert \nu^\by_{t+1} - H^\by_{t}  \right\Vert^2 \nonumber\\ & \ \ + \alpha_{y}^2 E \left\Vert \hat{\nu}^\by_{t+1} - \nu^\by_{t+1} \right\Vert^2 . \label{eq:Hykt_Hstar}
\end{align}
The last equality follows from the identity $\left\Vert (1-\alpha)x +\alpha y \right\Vert^2 = (1-\alpha)\left\Vert x \right\Vert^2 + \alpha \left\Vert y \right\Vert^2 -\alpha(1-\alpha)\left\Vert x- y \right\Vert^2$ .
On multiplying both sides of \eqref{eq:Hykt_Hstar} by $\sqrt{\delta}$, we obtain
\begin{align}
\sqrt{\delta}E\left\Vert H^\by_{t+1} - H^\star_\by \right\Vert^2 & = \sqrt{\delta}(1-\alpha_{y})\left\Vert H^\by_{t} - H^\star_\by  \right\Vert^2 + \sqrt{\delta} \alpha_{y} \left\Vert \nu^\by_{t+1} - H^\star_\by  \right\Vert^2 + \sqrt{\delta}\alpha_{y}^2 E \left\Vert \hat{\nu}^\by_{t+1} - \nu^\by_{t+1} \right\Vert^2 \nonumber\\ & \ \ - \sqrt{\delta}\alpha_{y}(1-\alpha_{y}) \left\Vert \nu^\by_{t+1} - H^\by_{t}  \right\Vert^2 . \label{eq:Hyt+1Hstar_norm}
\end{align}

We know that 
\begin{align}
\left\Vert \nu^\by_{t+1} - H^\star_\by  \right\Vert^2_{I - \frac{\gamma_y}{2}(I-W) - \alpha_{y}\sqrt{\delta}I} & = 	\left\Vert \nu^\by_{t+1} - H^\star_\by  \right\Vert^2_{I - \frac{\gamma_y}{2}(I-W)} - \left\Vert \nu^\by_{t+1} - H^\star_\by \right\Vert^2_{\alpha_{y}\sqrt{\delta}I} .
\end{align}
Therefore,
\begin{align}
& \frac{2s^2}{\gamma_y} E \left\Vert  D^\by_{t+1} - D^\star_\by  \right\Vert^2_{(I-W)^\dagger} + \left\Vert \nu^\by_{t+1} - H^\star_\by  \right\Vert^2_{I - \frac{\gamma_y}{2}(I-W) - \alpha_{y}\sqrt{\delta}I} \nonumber\\
& = \frac{2s^2}{\gamma_y} E \left\Vert  D^\by_{t+1} - D^\star_\by  \right\Vert^2_{(I-W)^\dagger} + \left\Vert \nu^\by_{t+1} - H^\star_\by  \right\Vert^2_{I - \frac{\gamma_y}{2}(I-W)} - \left\Vert \nu^\by_{t+1} - H^\star_\by \right\Vert^2_{\alpha_{y}\sqrt{\delta}I}\nonumber \\
& = \frac{2s^2}{\gamma_y} E \left\Vert  D^\by_{t+1} - D^\star_\by  \right\Vert^2_{(I-W)^\dagger} + \left\Vert \nu^\by_{t+1} - H^\star_\by  \right\Vert^2_{I - \frac{\gamma_y}{2}(I-W)} - \alpha_{y}\sqrt{\delta} \left\Vert \nu^\by_{t+1} - H^\star_\by \right\Vert^2\nonumber \\
& = \frac{s^2}{\gamma_y} \left\Vert D^\by_{t} - D^\star_\by \right\Vert^2_{{2(I-W)^\dagger -\gamma_y I}} + \frac{\gamma_y}{2}  E \left\Vert \hat{\nu}^\by_{t+1}  - \nu^\by_{t+1} \right\Vert^2_{I-W}   + \left\Vert \by_{t} - \mathbf{1}y^\star + s\mathcal{G}^\by_{t} - s\nabla_y F(\mathbf{1}z^\star)  \right\Vert^2 \nonumber\\ & \ \ \ - \alpha_{y}\sqrt{\delta} \left\Vert \nu^\by_{t+1} - H^\star_\by \right\Vert^2 ,
\end{align}
where the last equality follows from \eqref{eq:Dyt+1_Dstar_final} . Now we add above equality with \eqref{eq:Hyt+1Hstar_norm} and obtain the following expression:
\begin{align}
& \frac{2s^2}{\gamma_y} E \left\Vert  D^\by_{t+1} - D^\star_\by  \right\Vert^2_{(I-W)^\dagger} + \left\Vert \nu^\by_{t+1} - H^\star_\by  \right\Vert^2_{I - \frac{\gamma_y}{2}(I-W) - \alpha_{y}\sqrt{\delta}I} + \sqrt{\delta}E\left\Vert H^\by_{t+1} - H^\star_\by \right\Vert^2 \nonumber\\
& = \frac{s^2}{\gamma_y} \left\Vert D^\by_{t} - D^\star_\by \right\Vert^2_{{2(I-W)^\dagger -\gamma_y I}} + \frac{\gamma_y}{2}  E \left\Vert \hat{\nu}^\by_{t+1}  - \nu^\by_{t+1} \right\Vert^2_{I-W}   + \left\Vert \by_{t} - \mathbf{1}y^\star + s\mathcal{G}^\by_{t} - s\nabla_y F(\mathbf{1}z^\star)  \right\Vert^2 \nonumber \\ & \ + \sqrt{\delta}(1-\alpha_{y})\left\Vert H^\by_{t} - H^\star_\by  \right\Vert^2 + \sqrt{\delta}\alpha_{y}^2 E \left\Vert \hat{\nu}^\by_{t+1} - \nu^\by_{t+1} \right\Vert^2 - \sqrt{\delta}\alpha_{y}(1-\alpha_{y}) \left\Vert \nu^\by_{t+1} - H^\by_{t}  \right\Vert^2 .
\end{align}
Rearranging the r.h.s. of the above equation, we obtain~\eqref{eq:Dy_Hy_sum}. 

We have a similar recursion result in terms of $\bx$. 

\begin{lemma} \label{lem:recursion_x} 
	Let $\bx_{t+1}$, $\by_{t+1}$, $D^\bx_{t+1}$, $D^\by_{t+1}$, $H^\bx_{t+1}$, $H^\by_{t+1}$, $H^{\ssw,\bx}_{t+1}$, $H^{\ssw,\by}_{t+1}$ be obtained from Algorithm \ref{alg:generic_procedure_sgda} using \text{IPDHG}($\bx_{t}$, $\by_{t}$, $D^{\bx}_{t}$, $D^{\by}_{t}$, $H^{\bx}_{t}$, $H^{\by}_{t}$, $H^{\ssw,\bx}_{t}$, $H^{\ssw,\by}_{t}$, $s$, $\gamma_{x}$, $\gamma_{y}$, $\alpha_{x}$, $\alpha_{y}$, $\mathcal{G}$). 
	Let Assumptions \ref{compression_operator}- \ref{weight_matrix_assumption} hold. Suppose $\alpha_x \in \left(0,(1+\delta)^{-1} \right)$ and 
	\begin{align}
		\gamma_x & \in \left( 0, \min \left\lbrace \frac{2-2\sqrt{\delta}\alpha_x}{\lambda_{\max}(I-W)}, \frac{\alpha_x - (1+\delta)\alpha_x^2}{\sqrt{\delta}\lambda_{\max}(I-W)} \right\rbrace  \right) .
	\end{align}
	Then the following holds for all $t \geq 0$:
	\begin{align}
		& M_x E\left\Vert \bx_{t+1} - \mathbf{1}x^\star  \right\Vert^2 + \frac{2s^2}{\gamma_\bx} E \left\Vert  D^\bx_{t+1} - D^\star_\bx  \right\Vert^2_{(I-W)^\dagger} + \sqrt{\delta}E\left\Vert H^\bx_{t+1} - H^\star_\bx \right\Vert^2 \nonumber\\
		& \leq \left\Vert \bx_{t} - \mathbf{1}x^\star - s\mathcal{G}^\bx_{t} + s\nabla_x F(\mathbf{1}x^\star,\mathbf{1}y^\star)  \right\Vert^2 + \frac{2s^2}{\gamma_\bx}\left( 1- \frac{\gamma_\bx}{2}\lambda_{m-1}(I-W) \right)\left\Vert D^\bx_{t} - D^\star_\by \right\Vert^2_{(I-W)^\dagger} \nonumber\\ & \ \ + \sqrt{\delta}(1-\alpha_{x})\left\Vert H^\bx_{t} - H^\star_\bx  \right\Vert^2  , \label{eq:one_step_progress_x}
	\end{align}
	where $E$ denotes the conditional expectation on stochastic compression at $t$-th update step and $M_{x} = 1-\frac{\sqrt{\delta }\alpha_{x}}{1-\frac{\gamma_{x}}{2}\lambda_{\max}(I-W)}$.
\end{lemma}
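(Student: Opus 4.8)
The plan is to mirror the proof of Lemma~\ref{lem:recursion_y} essentially line by line, substituting $\by\mapsto\bx$, $y^\star\mapsto x^\star$, $D^\star_\by\mapsto D^\star_\bx$, $H^\star_\by\mapsto H^\star_\bx$, $\gamma_y\mapsto\gamma_x$, $\alpha_y\mapsto\alpha_x$, $R\mapsto G$, $M_y\mapsto M_x$, $\mathcal{G}^\by\mapsto\mathcal{G}^\bx$, while carefully tracking the two structural sign differences between the $\bx$- and $\by$-blocks of Algorithm~\ref{alg:generic_procedure_sgda}: the primal update uses $\nu^\bx = \bx - s\mathcal{G}^\bx - sD^\bx$ (minus in front of $s\mathcal{G}^\bx$, not plus), and the primal limit point is $D^\star_\bx = -(I-J)\nabla_x F(\mathbf{1}z^\star)$ (leading minus), so that $H^\star_\bx = \mathbf{1}x^\star - sJ\nabla_x F(\mathbf{1}z^\star) = -sD^\star_\bx + \mathbf{1}x^\star - s\nabla_x F(\mathbf{1}z^\star)$. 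These two sign flips cancel, so that the recursion produces the term $\bx_{t} - \mathbf{1}x^\star - s\mathcal{G}^\bx_{t} + s\nabla_x F(\mathbf{1}z^\star)$ in~\eqref{eq:one_step_progress_x} exactly in parallel with how the proof of Lemma~\ref{lem:recursion_y} produced $\by_{t} - \mathbf{1}y^\star + s\mathcal{G}^\by_{t} - s\nabla_y F(\mathbf{1}z^\star)$.

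Concretely, I would first establish the $\bx$-analogue of~\eqref{eq:y_t+1_My}. Starting from $\bx_{t+1}=\prox_{sG}(\hat\bx_{t+1})$ and the optimality condition $x^\star=\prox_{sg}(x^\star-\tfrac{s}{m}\nabla_x f(z^\star))$ derived in Appendix~\ref{appendix_A}, apply non-expansivity of the prox to get $\left\Vert\bx_{t+1}-\mathbf{1}x^\star\right\Vert^2\le\left\Vert\hat\bx_{t+1}-H^\star_\bx\right\Vert^2$, then substitute $\hat\bx_{t+1}=\nu^\bx_{t+1}-\tfrac{\gamma_x}{2}(I-W)\hat\nu^\bx_{t+1}$, split off $\hat\nu^\bx_{t+1}-\nu^\bx_{t+1}$, take conditional expectation (the cross term vanishes since $E[\hat\nu^\bx_{t+1}-\nu^\bx_{t+1}]=0$ by Assumption~\ref{compression_operator}), convert the squared vector norms to matrix norms via the identity used in~\eqref{square_norm_matrix_terms}, and absorb the $M_x^{-1}$ factor by the same negative-semidefiniteness argument applied to $M_y$ (this needs $\tfrac{\gamma_x}{2}\lambda_{\max}(I-W)<1$, which follows from the stated range of $\gamma_x$, and $M_x<1$, established in Appendix~\ref{appendix_parameters_feasibility}). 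Next I would prove the $\bx$-analogue of~\eqref{eq:Dy_Hy_sum} in three steps: (i) expand $D^\bx_{t+1}-D^\star_\bx = D^\bx_t - D^\star_\bx + \tfrac{\gamma_x}{2s}(I-W)(\hat\nu^\bx_{t+1}-\nu^\bx_{t+1}) + \tfrac{\gamma_x}{2s}(I-W)(\nu^\bx_{t+1}-H^\star_\bx)$, pre-multiply by $\sqrt{(I-W)^\dagger}$, take expectations, and use $(I-W)(I-W)^\dagger(D^\bx_t-D^\star_\bx)=D^\bx_t-D^\star_\bx$, which holds because $D^\bx_t\in\text{Range}(I-W)$ (the $D^\bx$ update step of Algorithm~\ref{alg:generic_procedure_sgda} together with $D^\bx_0=0$) and $D^\star_\bx\in\text{Range}(I-W)$ (Proposition~\ref{rem:rangeI-W}); (ii) combine with $\left\Vert\nu^\bx_{t+1}-H^\star_\bx\right\Vert^2$ using the key identity $\nu^\bx_{t+1}-H^\star_\bx = \bx_t-\mathbf{1}x^\star - s(\mathcal{G}^\bx_t-\nabla_x F(\mathbf{1}z^\star)) - s(D^\bx_t-D^\star_\bx)$; (iii) expand $H^\bx_{t+1}-H^\star_\bx=(1-\alpha_x)(H^\bx_t-H^\star_\bx)+\alpha_x(\hat\nu^\bx_{t+1}-\nu^\bx_{t+1})+\alpha_x(\nu^\bx_{t+1}-H^\star_\bx)$ and apply $\left\Vert(1-\alpha)u+\alpha v\right\Vert^2=(1-\alpha)\left\Vert u\right\Vert^2+\alpha\left\Vert v\right\Vert^2-\alpha(1-\alpha)\left\Vert u-v\right\Vert^2$.

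Adding the two resulting displays, I would then bound $E\left\Vert\hat\nu^\bx_{t+1}-\nu^\bx_{t+1}\right\Vert^2\le\delta\left\Vert\nu^\bx_{t+1}-H^\bx_t\right\Vert^2$ (Assumption~\ref{compression_operator}), show that under the stated range of $\gamma_x$ the coefficient of $\left\Vert\nu^\bx_{t+1}-H^\bx_t\right\Vert^2$ is strictly negative by the computation verbatim that of~\eqref{coeff_bound} with $y\mapsto x$, drop that term, and finally invoke
\[
\frac{s^2}{\gamma_x}\left\Vert D^\bx_t-D^\star_\bx\right\Vert^2_{2(I-W)^\dagger-\gamma_x I}\le\frac{2s^2}{\gamma_x}\Bigl(1-\frac{\gamma_x}{2}\lambda_{m-1}(I-W)\Bigr)\left\Vert D^\bx_t-D^\star_\bx\right\Vert^2_{(I-W)^\dagger},
\]
which follows from $\lambda_{m-1}(I-W)(I-W)^\dagger\preceq I$ restricted to $\text{Range}(I-W)$, exactly as in~\eqref{eq:Dyt+1_simplify}. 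This produces~\eqref{eq:one_step_progress_x}. The only point requiring genuine care, rather than mechanical translation, is keeping the two sign flips (the $-s\mathcal{G}^\bx$ in $\nu^\bx$ and the leading minus in $D^\star_\bx$) consistent throughout, since a slip there changes the inner-product cross terms in Steps~1--2; everything else is symmetric to the already-completed proof of Lemma~\ref{lem:recursion_y}, so I anticipate no substantive obstacle.
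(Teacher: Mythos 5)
Your proposal is correct and matches the paper's intent exactly: the paper omits the proof of Lemma \ref{lem:recursion_x} precisely because it is the mirror image of the proof of Lemma \ref{lem:recursion_y}, which is the route you take, and your tracking of the two sign flips (the $-s\mathcal{G}^\bx$ in $\nu^\bx_{t+1}$ and the leading minus in $D^\star_\bx$, so that $\nu^\bx_{t+1}-H^\star_\bx=\bx_t-\mathbf{1}x^\star-s(\mathcal{G}^\bx_t-\nabla_x F(\mathbf{1}z^\star))-s(D^\bx_t-D^\star_\bx)$) is exactly the point where care is needed and is handled correctly. No gap.
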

We omit the proof of Lemma \ref{lem:recursion_x} as it is similar to the proof of Lemma \ref{lem:recursion_y}

\section{Convergence Behavior of Algorithm \ref{alg:IPDHG_with_sgd_svrg_oracle} with GSGO}
\label{appendix_C}
We define a quantity $\Phi_{t}$ consisting of primal and dual updates which is instrumental in deriving the convergence behavior of Algorithm \ref{alg:IPDHG_with_sgd_svrg_oracle}.
\begin{align}
	& \Phi_{t} =  M_{x,0} \left\Vert \bx_{t} - \mathbf{1}x^\star  \right\Vert^2 + \frac{2s_0^2}{\gamma_{x,0}} \left\Vert  D^\bx_{t} - D^\star_\bx  \right\Vert^2_{(I-W)^\dagger} + \sqrt{\delta}\left\Vert H^\bx_{t} - H^\star_{\bx,0} \right\Vert^2 \nonumber\\ & \hspace*{1cm} + M_{y,0} \left\Vert \by_{t} - \mathbf{1}y^\star  \right\Vert^2 + \frac{2s_0^2}{\gamma_{y,0}} \left\Vert  D^\by_{t} - D^\star_\by  \right\Vert^2_{(I-W)^\dagger} + \sqrt{\delta}\left\Vert H^\by_{t} - H^\star_{\by,0} \right\Vert^2, \label{phi_kt_defn}  \\
	& H^\star_{\bx,0} = \mathbf{1}x^\star  - \frac{s_0}{m}\mathbf{1} \nabla_x f(x^\star,y^\star), \ \ H^\star_{\by,0} = \mathbf{1}y^\star  + \frac{s_0}{m}\mathbf{1} \nabla_y f(x^\star,y^\star). \nonumber 
\end{align}

\begin{lemma} \label{lem:exp_ykt_ystar_xkt_xstar} Suppose $\{\bx_{t}\}_t$ and $\{\by_{t} \}_t$ are the sequences generated by Algorithm \ref{alg:IPDHG_with_sgd_svrg_oracle}. Then, under Assumptions \ref{s_convexity_assumption}- \ref{s_concavity_assumption}, Assumptions \ref{smoothness_x_svrg}-\ref{lipschitz_yx_svrg}, the followings hold for all $ 0 \leq t \leq T_0$:
	\begin{align}
		& E \left\Vert \bx_{t} - \mathbf{1}x^\star - s_0\mathcal{G}^\bx_{t} + s_0\nabla_x F(\mathbf{1}x^\star,\mathbf{1}y^\star)  \right\Vert^2 \nonumber \\
		& \leq  (1-\mu_x s_0) \left\Vert \bx_{t} - \mathbf{1}x^\star  \right\Vert^2 + \frac{4s_0^2L^2_{xy}}{np_{\min}} \left\Vert \by_{t} - \mathbf{1}y^\star \right\Vert^2 - (2s_0 - \frac{8s_0^2L_{xx}}{np_{\min}})\sum_{i = 1}^m V_{f_i,y^i_{t}}(x^\star,x^i_{t}) \nonumber \\ & \ + 2s_0\left( F(\bx_{t},\mathbf{1}y^\star) - F(\mathbf{1}x^\star,\mathbf{1}y^\star) + F(\mathbf{1}x^\star,\by_{t}) - F(z_{t}) \right) + \frac{2s_0^2}{n^2p_{\min}} \sum_{i = 1}^m \sum_{l = 1}^n \left\Vert \nabla_x f_{il}(z^\star) \right\Vert^2, \label{eq:xkt_xstar_grad_final}
	\end{align}
	\begin{align}
		& E \left\Vert \by_{t} - \mathbf{1}y^\star + s_0\mathcal{G}^\by_{t} - s_0\nabla_y F(\mathbf{1}x^\star,\mathbf{1}y^\star)  \right\Vert^2 \nonumber \\
		& \leq (1-\mu_y s_0) \left\Vert \by_{t} - \mathbf{1}y^\star  \right\Vert^2 + \frac{4s_0^2L^2_{yx}}{np_{\min}}  \left\Vert \bx_{t} - \mathbf{1}x^\star \right\Vert^2 - (2s_0 - \frac{8s_0^2L_{yy}}{np_{\min}})\sum_{i = 1}^m V_{-f_i,x^i_{t}}(y^\star,y^i_{t}) \nonumber \\ & \ + 2s_0\left( -F(\bx_{t},\mathbf{1}y^\star) + F(\mathbf{1}x^\star,\mathbf{1}y^\star) - F(\mathbf{1}x^\star,\by_{t}) + F(z_{t}) \right) + \frac{2s_0^2}{n^2p_{\min}}\sum_{i = 1}^m \sum_{l = 1}^n  \left\Vert \nabla_y f_{il}(z^\star) \right\Vert^2, \label{eq:ykt_ystar_grad_final}
	\end{align}
	where $E$ denotes the conditional expectation on stochastic gradient at $t$-th update step.
\end{lemma}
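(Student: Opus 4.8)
The plan is to establish the two displays \eqref{eq:xkt_xstar_grad_final} and \eqref{eq:ykt_ystar_grad_final} symmetrically; I describe the argument for the $\bx$-estimate, the $\by$-estimate being the mirror image with $f_i(\cdot,y)$ replaced by $-f_i(x,\cdot)$, strong convexity in $x$ replaced by strong concavity in $y$ (Assumption \ref{s_concavity_assumption}), and $(L_{xx},L_{xy})$ by $(L_{yy},L_{yx})$. First I would expand
\[
\left\Vert \bx_{t} - \mathbf{1}x^\star - s_0\big(\mathcal{G}^\bx_{t} - \nabla_x F(\mathbf{1}x^\star,\mathbf{1}y^\star)\big)\right\Vert^2 = \left\Vert \bx_{t} - \mathbf{1}x^\star\right\Vert^2 - 2s_0\big\langle \bx_{t}-\mathbf{1}x^\star,\ \mathcal{G}^\bx_{t} - \nabla_x F(\mathbf{1}x^\star,\mathbf{1}y^\star)\big\rangle + s_0^2\left\Vert \mathcal{G}^\bx_{t} - \nabla_x F(\mathbf{1}x^\star,\mathbf{1}y^\star)\right\Vert^2,
\]
and take the conditional expectation over the indices sampled at iteration $t$. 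Because each node samples $l\sim\mathcal{P}_i$ independently, GSGO is unbiased node-wise, $E[\mathcal{G}^{i,x}_t]=\frac1n\sum_l\nabla_x f_{il}(z^i_t)=\nabla_x f_i(z^i_t)$, so the cross term becomes $-2s_0\langle \bx_t-\mathbf{1}x^\star,\ \nabla_x F(\bx_t,\by_t)-\nabla_x F(\mathbf{1}x^\star,\mathbf{1}y^\star)\rangle$ and the second-moment term decomposes as $\sum_i E\Vert \mathcal{G}^{i,x}_t-\nabla_x f_i(x^\star,y^\star)\Vert^2$.

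For the cross term I would split $\nabla_x F(\bx_t,\by_t)-\nabla_x F(\mathbf{1}x^\star,\mathbf{1}y^\star)$ and handle the two resulting inner products separately. In $-2s_0\langle \bx_t-\mathbf{1}x^\star,\nabla_x F(\bx_t,\by_t)\rangle$ I use node-wise the exact identity obtained by rearranging \eqref{bregman_dist_Vy}, namely $\langle \nabla_x f_i(x^i_t,y^i_t),x^i_t-x^\star\rangle = f_i(x^i_t,y^i_t)-f_i(x^\star,y^i_t)+V_{f_i,y^i_t}(x^\star,x^i_t)$, which produces $-2s_0\big(F(z_t)-F(\mathbf{1}x^\star,\by_t)\big)-2s_0\sum_i V_{f_i,y^i_t}(x^\star,x^i_t)$. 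In $+2s_0\langle \bx_t-\mathbf{1}x^\star,\nabla_x F(\mathbf{1}x^\star,\mathbf{1}y^\star)\rangle$ I invoke $\mu_x$-strong convexity of $f_i(\cdot,y^\star)$ at $x^\star$ (Assumption \ref{s_convexity_assumption}): $\langle \nabla_x f_i(x^\star,y^\star),x^i_t-x^\star\rangle \le f_i(x^i_t,y^\star)-f_i(x^\star,y^\star)-\tfrac{\mu_x}{2}\Vert x^i_t-x^\star\Vert^2$, giving $2s_0\big(F(\bx_t,\mathbf{1}y^\star)-F(\mathbf{1}x^\star,\mathbf{1}y^\star)\big)-\mu_x s_0\Vert \bx_t-\mathbf{1}x^\star\Vert^2$. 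Adding these to the $\Vert\bx_t-\mathbf{1}x^\star\Vert^2$ from the expansion yields the $(1-\mu_x s_0)$-coefficient, the $-2s_0\sum_i V_{f_i,y^i_t}(x^\star,x^i_t)$ contribution, and exactly the residual combination $2s_0\big(F(\bx_t,\mathbf{1}y^\star)-F(\mathbf{1}x^\star,\mathbf{1}y^\star)+F(\mathbf{1}x^\star,\by_t)-F(z_t)\big)$ of \eqref{eq:xkt_xstar_grad_final} (these residuals are designed to cancel against their mirror images in \eqref{eq:ykt_ystar_grad_final} when the two estimates are later combined).

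For the second-moment term I would insert the reference stochastic gradient $\frac{1}{np_{il}}\nabla_x f_{il}(x^\star,y^\star)$ and use $\Vert a+b\Vert^2\le 2\Vert a\Vert^2+2\Vert b\Vert^2$ to split $E\Vert \mathcal{G}^{i,x}_t-\nabla_x f_i(x^\star,y^\star)\Vert^2$ into $2E\Vert \tfrac{1}{np_{il}}(\nabla_x f_{il}(z^i_t)-\nabla_x f_{il}(x^\star,y^\star))\Vert^2 + 2E\Vert \tfrac{1}{np_{il}}\nabla_x f_{il}(x^\star,y^\star)-\nabla_x f_i(x^\star,y^\star)\Vert^2$. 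Evaluating the expectations over $l\sim\mathcal{P}_i$ and using $1/p_{il}\le 1/p_{\min}$ turns the first piece into $\tfrac{2}{n^2p_{\min}}\sum_l\Vert \nabla_x f_{il}(z^i_t)-\nabla_x f_{il}(x^\star,y^\star)\Vert^2$ and, since the variance is at most the second moment, the second into $\tfrac{2}{n^2p_{\min}}\sum_l\Vert \nabla_x f_{il}(x^\star,y^\star)\Vert^2$, which is precisely the $C_x$-type constant term. For the first piece I would route through $\nabla_x f_{il}(x^\star,y^i_t)$: the $y$-difference is bounded by $L_{xy}\Vert y^i_t-y^\star\Vert$ (Assumption \ref{lipschitz_xy_svrg}) and the $x$-difference by the co-coercivity estimate $\Vert \nabla_x f_{il}(x^i_t,y^i_t)-\nabla_x f_{il}(x^\star,y^i_t)\Vert^2 \le 2L_{xx}V_{f_{il},y^i_t}(x^\star,x^i_t)$ (Proposition \ref{prop:smoothness} applied to the components $f_{il}$, smooth by Assumptions \ref{smoothness_x_svrg}--\ref{smoothness_y_svrg}); summing over $l$ with $\sum_l V_{f_{il},y^i_t}(x^\star,x^i_t)=nV_{f_i,y^i_t}(x^\star,x^i_t)$ gives exactly the $\tfrac{4s_0^2L_{xy}^2}{np_{\min}}\Vert\by_t-\mathbf{1}y^\star\Vert^2$ term and the $+\tfrac{8s_0^2L_{xx}}{np_{\min}}\sum_i V_{f_i,y^i_t}(x^\star,x^i_t)$ correction, which combined with the $-2s_0\sum_i V_{f_i,y^i_t}$ from the cross term produces the stated coefficient $-(2s_0-\tfrac{8s_0^2L_{xx}}{np_{\min}})$. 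Collecting all terms (and carrying the scalar $s_0^2$ through the second-moment bound) yields \eqref{eq:xkt_xstar_grad_final}, and repeating with $-f_i$ in the dual variable gives \eqref{eq:ykt_ystar_grad_final}.

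The main obstacle I expect is not any single estimate but the simultaneous bookkeeping of signs and constants: the cross-term decomposition must be arranged so that precisely the telescoping combination of $F$-evaluations appearing in \eqref{eq:xkt_xstar_grad_final} survives with no extra pieces, while the variance split must produce the exact coefficients $\tfrac{4s_0^2L_{xy}^2}{np_{\min}}$ and $\tfrac{8s_0^2L_{xx}}{np_{\min}}$ (not merely quantities of the same order), so that the downstream recursions that combine \eqref{eq:xkt_xstar_grad_final}, \eqref{eq:ykt_ystar_grad_final} with Lemmas \ref{lem:recursion_y}--\ref{lem:recursion_x} collapse cleanly. A secondary point requiring care is the component-level co-coercivity bound, which relies on convexity and smoothness of each $f_{il}$ in $x$ rather than only on the strong convexity of the averages $f_i$.
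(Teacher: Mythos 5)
Your proposal is correct and follows essentially the same route as the paper's proof: expand the square, use unbiasedness of GSGO for the cross term, handle that cross term via the Bregman-distance identity plus $\mu_x$-strong convexity at $(x^\star,y^\star)$, and split the second moment by inserting $\tfrac{1}{np_{il}}\nabla_x f_{il}(z^\star)$, then route through $\nabla_x f_{il}(x^\star,y^i_t)$ using Proposition \ref{prop:smoothness} and Assumption \ref{lipschitz_xy_svrg}, with $1/p_{il}\le 1/p_{\min}$ and the variance-bounded-by-second-moment step giving the $C_x$-type term. Your closing remark about needing convexity and smoothness of each component $f_{il}$ for the co-coercivity bound matches the (implicit) component-level use of Proposition \ref{prop:smoothness} in the paper's own argument.
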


\begin{proof}
We will derive inequality \eqref{eq:xkt_xstar_grad_final} here. The proof of inequality \eqref{eq:ykt_ystar_grad_final} is similar and is omitted.

For $t \leq T_0$, we have $\mathcal{G}^{i,x}_t = \frac{1}{np_{il}}\nabla_x f_{il}(z^i_t)$ and $\mathcal{G}^{i,y}_t = \frac{1}{np_{il}}\nabla_y f_{il}(z^{i}_t)$ and step size is $s_0$ . We have
\begin{align}
	& E \left\Vert \bx_{t} - \mathbf{1}x^\star - s\mathcal{G}^\bx_{t} + s\nabla_x F(\mathbf{1}x^\star,\mathbf{1}y^\star)  \right\Vert^2 \nonumber\\
	& = \sum_{i = 1}^m E \left\Vert x^i_{t} - x^\star - s\mathcal{G}^{i,x}_{t} + s\nabla_x f_i(x^\star,y^\star)  \right\Vert^2 \nonumber\\
	& = \sum_{i = 1}^m \left\Vert x^i_{t} - x^\star  \right\Vert^2 + s^2 \sum_{i = 1}^mE \left\Vert \mathcal{G}^{i,x}_{t} - \nabla_x f_i(x^\star,y^\star) \right\Vert^2 \nonumber\\ & \ \ - 2s \sum_{i = 1}^m E \left\langle x^i_{t} - x^\star, \mathcal{G}^{i,x}_{t} - \nabla_x f_i(x^\star,y^\star) \right\rangle\nonumber \\
	& = \sum_{i = 1}^m \left\Vert x^i_{t} - x^\star  \right\Vert^2 + s^2 \sum_{i = 1}^mE \left\Vert \frac{1}{np_{il}}  \nabla_x f_{il}(z^i_{t}) - \nabla_x f_i(x^\star,y^\star) \right\Vert^2 \nonumber\\ & \ \ - 2s \sum_{i = 1}^m  \left\langle x^i_{t} - x^\star,\nabla_x f_{i}(z^i_{t}) - \nabla_x f_i(x^\star,y^\star) \right\rangle \label{eq:xt_xstar_sgd_oracle}
\end{align}
We first simplify second term in the r.h.s of above expression.
\begin{align}
	& \sum_{i = 1}^mE \left\Vert \frac{1}{np_{il}}  \nabla_x f_{il}(z^i_{t}) - \nabla_x f_i(x^\star,y^\star) \right\Vert^2 \nonumber \\ 
	& = \sum_{i = 1}^mE \left\Vert \frac{1}{np_{il}}  \nabla_x f_{il}(z^i_{t}) - \frac{1}{np_{il}}  \nabla_x f_{il}(z^\star) + \frac{1}{np_{il}}  \nabla_x f_{il}(z^\star)  - \nabla_x f_i(x^\star,y^\star) \right\Vert^2 \nonumber \\
	& \leq 2 \sum_{i = 1}^mE \left\Vert \frac{1}{np_{il}}  \nabla_x f_{il}(z^i_{t}) - \frac{1}{np_{il}}  \nabla_x f_{il}(z^\star) \right\Vert^2 + \sum_{i = 1}^m 2E\left\Vert \frac{1}{np_{il}}  \nabla_x f_{il}(z^\star)  - \nabla_x f_i(x^\star,y^\star) \right\Vert^2 \nonumber \\
	& = 2\sum_{i = 1}^m \sum_{l = 1}^n p_{il} \left\Vert \frac{1}{np_{il}}  \nabla_x f_{il}(z^i_{t}) - \frac{1}{np_{il}}  \nabla_x f_{il}(z^\star) \right\Vert^2 + 2\sum_{i = 1}^m E\left\Vert \frac{1}{np_{il}}  \nabla_x f_{il}(z^\star)  - \nabla_x f_i(x^\star,y^\star) \right\Vert^2 \nonumber \\
	& \leq \frac{2}{n^2p_{\min}}\sum_{i = 1}^m \sum_{l = 1}^n \left\Vert  \nabla_x f_{il}(z^i_{t}) - \nabla_x f_{il}(z^\star) \right\Vert^2 + 2\sum_{i = 1}^m E\left\Vert \frac{1}{np_{il}}  \nabla_x f_{il}(z^\star)  - \nabla_x f_i(x^\star,y^\star) \right\Vert^2 \nonumber \\
	& = \frac{2}{n^2p_{\min}}\sum_{i = 1}^m \sum_{l = 1}^n \left\Vert  \nabla_x f_{il}(z^i_{t}) - \nabla_x f_{il}(z^\star) \right\Vert^2 + 2\sum_{i = 1}^m E\left\Vert \frac{1}{np_{il}}  \nabla_x f_{il}(z^\star)  - E \left( \frac{1}{np_{il}}  \nabla_x f_{il}(z^\star) \right) \right\Vert^2 \nonumber \\
	& \leq \frac{2}{n^2p_{\min}}\sum_{i = 1}^m \sum_{l = 1}^n \left\Vert  \nabla_x f_{il}(z^i_{t}) - \nabla_x f_{il}(z^\star) \right\Vert^2 + 2\sum_{i = 1}^m E\left\Vert \frac{1}{np_{il}}  \nabla_x f_{il}(z^\star) \right\Vert^2 . \nonumber
\end{align}
The last inequality follows from $E\left\Vert u_i - Eu_i \right\Vert^2 \leq E\left\Vert u_i \right\Vert^2$. Therefore,
\begin{align}
	& \sum_{i = 1}^mE \left\Vert \frac{1}{np_{il}}  \nabla_x f_{il}(z^i_{t}) - \nabla_x f_i(x^\star,y^\star) \right\Vert^2 \nonumber \\ 
	& \leq \frac{2}{n^2p_{\min}}\sum_{i = 1}^m \sum_{l = 1}^n \left\Vert  \nabla_x f_{il}(z^i_{t}) - \nabla_x f_{il}(z^\star) \right\Vert^2 + 2\sum_{i = 1}^m E\left\Vert \frac{1}{np_{il}}  \nabla_x f_{il}(z^\star) \right\Vert^2 \nonumber \\
	& = \frac{2}{n^2p_{\min}}\sum_{i = 1}^m \sum_{l = 1}^n \left\Vert  \nabla_x f_{il}(z^i_{t}) - \nabla_x f_{il}(z^\star) \right\Vert^2 + 2\sum_{i = 1}^m \sum_{l = 1}^n p_{il}\left\Vert \frac{1}{np_{il}}  \nabla_x f_{il}(z^\star) \right\Vert^2 \nonumber \\
	& \leq \frac{2}{n^2p_{\min}}\sum_{i = 1}^m \sum_{l = 1}^n \left\Vert  \nabla_x f_{il}(z^i_{t}) - \nabla_x f_{il}(z^\star) \right\Vert^2 + \frac{2}{n^2p_{\min}} \sum_{i = 1}^m \sum_{l = 1}^n \left\Vert \nabla_x f_{il}(z^\star) \right\Vert^2 \nonumber \\
	& \leq  \frac{4}{n^2p_{\min}} \sum_{i = 1}^m \sum_{l = 1}^n  \left\Vert -\nabla_x f_{il}(z^i_{t}) + \nabla_x f_{il}(x^\star,y^i_{t}) \right\Vert^2 + \frac{4}{n^2p_{\min}} \sum_{i = 1}^m \sum_{l = 1}^n  \left\Vert \nabla_x f_{il}(x^\star,y^i_{t}) -  \nabla_x f_{il}(z^\star) \right\Vert^2 \nonumber \\ \ \ \ & + \frac{2}{n^2p_{\min}} \sum_{i = 1}^m \sum_{l = 1}^n \left\Vert \nabla_x f_{il}(z^\star) \right\Vert^2 \nonumber \\
	& \leq \frac{8L_{xx}}{n^2p_{\min}} \sum_{i = 1}^m \sum_{l = 1}^n V_{f_{il},y^i_{t}}(x^\star,x^i_{t}) + \frac{4L^2_{xy}}{np_{\min}} \sum_{i = 1}^m  \left\Vert y^i_{t} - y^\star  \right\Vert^2 + + \frac{2}{n^2p_{\min}} \sum_{i = 1}^m \sum_{l = 1}^n \left\Vert \nabla_x f_{il}(z^\star) \right\Vert^2 ,\nonumber 
\end{align}
where the last inequality follows from Proposition \ref{prop:smoothness} and Assumptions \ref{lipschitz_xy_svrg}-\ref{lipschitz_yx_svrg}. By substituting above inequality in \eqref{eq:xt_xstar_sgd_oracle}
\begin{align}
	& E \left\Vert \bx_{t} - \mathbf{1}x^\star - s\mathcal{G}^\bx_{t} + s\nabla_x F(\mathbf{1}x^\star,\mathbf{1}y^\star)  \right\Vert^2 \nonumber\\ 
	& \leq \left\Vert \bx_{t} - \mathbf{1}x^\star  \right\Vert^2 + \frac{8s_0^2L_{xx}}{n^2p_{\min}} \sum_{i = 1}^m \sum_{l = 1}^n V_{f_{il},y^i_{t}}(x^\star,x^i_{t}) + \frac{4s_0^2L^2_{xy}}{np_{\min}} \sum_{i = 1}^m  \left\Vert y^i_{t} - y^\star  \right\Vert^2 +  \frac{2s_0^2}{n^2p_{\min}} \sum_{i = 1}^m \sum_{l = 1}^n \left\Vert \nabla_x f_{il}(z^\star) \right\Vert^2  \nonumber\\ & \ \ - 2s \sum_{i = 1}^m  \left\langle x^i_{t} - x^\star,\nabla_x f_{i}(z^i_{t}) - \nabla_x f_i(x^\star,y^\star) \right\rangle  \label{eq:xkt_xstar_grad_ip}
\end{align}
We now simplify the inner product $\left\langle \bx_{t} - \mathbf{1}x^\star, \nabla_x F(\mathbf{1}x^\star,\mathbf{1}y^\star) - \nabla_x F(z_{t}) \right\rangle$  term present in the r.h.s of \eqref{eq:xkt_xstar_grad_ip}. Recall the definition of Bregman distance $V_{f_i,y}(x_1,x_2)$:
\begin{align}
	V_{f_i,y^i_{t}}(x^\star,x^i_{t}) & = f_i(x^\star,y^i_{t}) - f_i(x^i_{t},y^i_{t}) - \left\langle \nabla_x f_i(x^i_{t},y^i_{t}), x^\star - x^i_{t} \right\rangle \\
	\left\langle \nabla_x f_i(z^i_{t}), -x^\star + x^i_{t} \right\rangle & = -f_i(x^\star,y^i_{t}) + f_i(z^i_{t}) + V_{f_i,y^i_{t}}(x^\star,x^i_{t}) . \label{eq:grad_zkt_inn_Vfi}
\end{align}
Using $\mu_x$ strong convexity of $f_i(\cdot,y)$, we have
\begin{align}
	f_i(x^i_{t},y^\star) & \geq f_i(z^\star) + \left\langle \nabla_x f_i(z^\star), x^i_{t} -x^\star \right\rangle + \frac{\mu_x}{2} \left\Vert x^i_{t} - x^\star \right\Vert^2 \\
	\left\langle \nabla_x f_i(z^\star), x^i_{t} -x^\star \right\rangle & \leq f_i(x^i_{t},y^\star) - f_i(z^\star) - \frac{\mu_x}{2} \left\Vert x^i_{t} - x^\star \right\Vert^2 . \label{eq:grad_zstar_inn_mu_x}
\end{align}

We now compute
\begin{align}
	& \left\langle \bx_{t} - \mathbf{1}x^\star, \nabla_x F(\mathbf{1}x^\star,\mathbf{1}y^\star) - \nabla_x F(z_{t}) \right\rangle  = \sum_{i = 1}^m \left\langle x^i_{t} - x^\star, \nabla_x f_i(z^\star) - \nabla_x f_i(z^i_{t}) \right\rangle \nonumber \\
	& \leq \sum_{i = 1}^m \left( f_i(x^i_{t},y^\star) - f_i(z^\star) - \frac{\mu_x}{2} \left\Vert x^i_{t} - x^\star \right\Vert^2 + f_i(x^\star,y^i_{t}) - f_i(z^i_{t}) - V_{f_i,y^i_{t}}(x^\star,x^i_{t}) \right)\nonumber \\
	& = F(\bx_{t},\mathbf{1}y^\star) - F(\mathbf{1}x^\star,\mathbf{1}y^\star) + F(\mathbf{1}x^\star,\by_{t}) - F(z_{t}) - \frac{\mu_x}{2}  \left\Vert \bx_{t} - \mathbf{1}x^\star \right\Vert^2 \nonumber \\ & \ \ \ \ - \sum_{i = 1}^m V_{f_i,y^i_{t}}(x^\star,x^i_{t}) , \label{eq:inner_prod_grad_zkt_zstar}
\end{align}
where the second last step follows from \eqref{eq:grad_zkt_inn_Vfi} and \eqref{eq:grad_zstar_inn_mu_x}. On substituting \eqref{eq:inner_prod_grad_zkt_zstar} in \eqref{eq:xkt_xstar_grad_ip}, we obtain
\begin{align}
	& E \left\Vert \bx_{t} - \mathbf{1}x^\star - s_0\mathcal{G}^\bx_{t} + s_0\nabla_x F(\mathbf{1}x^\star,\mathbf{1}y^\star)  \right\Vert^2 \nonumber \\
	& \leq \left\Vert \bx_{t} - \mathbf{1}x^\star  \right\Vert^2 + \frac{8s_0^2L_{xx}}{n^2p_{\min}} \sum_{i = 1}^m \sum_{l = 1}^n V_{f_{il},y^i_{t}}(x^\star,x^i_{t}) + \frac{4s_0^2L^2_{xy}}{np_{\min}} \left\Vert \by_{t} - \mathbf{1}y^\star \right\Vert^2 +  \frac{2s_0^2}{n^2p_{\min}} \sum_{i = 1}^m \sum_{l = 1}^n \left\Vert \nabla_x f_{il}(z^\star) \right\Vert^2  \nonumber\\ & \ \ + 2s_0(F(\bx_{t},\mathbf{1}y^\star) - F(\mathbf{1}x^\star,\mathbf{1}y^\star) + F(\mathbf{1}x^\star,\by_{t}) - F(z_{t})) - \mu_xs_0  \left\Vert \bx_{t} - \mathbf{1}x^\star \right\Vert^2 - 2s_0\sum_{i = 1}^m V_{f_i,y^i_{t}}(x^\star,x^i_{t})  \nonumber \\
	& = (1-\mu_x s_0) \left\Vert \bx_{t} - \mathbf{1}x^\star  \right\Vert^2 + \frac{4s_0^2L^2_{xy}}{np_{\min}} \left\Vert \by_{t} - \mathbf{1}y^\star \right\Vert^2 - (2s_0 - \frac{8s_0^2L_{xx}}{np_{\min}})\sum_{i = 1}^m V_{f_i,y^i_{t}}(x^\star,x^i_{t}) \nonumber \\ & \ + 2s_0\left( F(\bx_{t},\mathbf{1}y^\star) - F(\mathbf{1}x^\star,\mathbf{1}y^\star) + F(\mathbf{1}x^\star,\by_{t}) - F(z_{t}) \right) + \frac{2s_0^2}{n^2p_{\min}} \sum_{i = 1}^m \sum_{l = 1}^n \left\Vert \nabla_x f_{il}(z^\star) \right\Vert^2, \tag{\ref{eq:xkt_xstar_grad_final}}
\end{align}
completing the proof. 
\end{proof}
We now have the following corollary obtained by setting step size $s_0$ in Lemma \ref{lem:exp_ykt_ystar_xkt_xstar}.		
\begin{corollary} \label{cor:exp_xkt_ykt_xstar_ystar_compact} Let $s_0 = \frac{np_{\min}}{4\sqrt{2}\kappa_fL}$. Then, under the setting of Lemma \ref{lem:exp_ykt_ystar_xkt_xstar} ,
	\begin{align}
		& E \left\Vert \bx_{t} - \mathbf{1}x^\star - s_0\mathcal{G}^\bx_{t} + s_0\nabla_x F(\mathbf{1}x^\star,\mathbf{1}y^\star)  \right\Vert^2 +  E \left\Vert \by_{t} - \mathbf{1}y^\star + s_0\mathcal{G}^\by_{t} - s_0\nabla_y F(\mathbf{1}x^\star,\mathbf{1}y^\star)  \right\Vert^2 \nonumber  \\
		& \leq (1-b_{x,0})\left\Vert \bx_{t} - \mathbf{1}x^\star  \right\Vert^2 + (1- b_{y,0}) \left\Vert \by_{t} - \mathbf{1}y^\star  \right\Vert^2 + \frac{2s_0^2(C_x + C_y)}{n^2p_{\min}} \nonumber 
	\end{align}
	for all $ t \geq 0$, where $b_{x,0} = \mu_x s_0 - \frac{4s_0^2L^2_{yx}}{np_{\min}}$, $b_{y,0} = \mu_y s_0 - \frac{4s_0^2L^2_{xy}}{np_{\min}}$.
\end{corollary}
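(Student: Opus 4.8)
The plan is to obtain the claimed one-step inequality simply by \emph{adding} the two bounds~\eqref{eq:xkt_xstar_grad_final} and~\eqref{eq:ykt_ystar_grad_final} of Lemma~\ref{lem:exp_ykt_ystar_xkt_xstar} and exploiting the cancellations that occur when the primal and dual progress terms are combined. First I would note that the coupling terms built from $F$ appear with exactly opposite signs: the quantity $2s_0\big(F(\bx_{t},\mathbf{1}y^\star) - F(\mathbf{1}x^\star,\mathbf{1}y^\star) + F(\mathbf{1}x^\star,\by_{t}) - F(z_{t})\big)$ in~\eqref{eq:xkt_xstar_grad_final} is precisely the negative of the corresponding term in~\eqref{eq:ykt_ystar_grad_final}, so these vanish identically upon summation. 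Likewise, the two variance terms $\tfrac{2s_0^2 C_x}{n^2 p_{\min}}$ and $\tfrac{2s_0^2 C_y}{n^2 p_{\min}}$ merge into $\tfrac{2s_0^2(C_x+C_y)}{n^2 p_{\min}}$, leaving only quadratic terms and the two Bregman-distance sums to dispose of.

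Next I would argue that the Bregman-distance sums can be discarded. By Assumption~\ref{s_convexity_assumption} each $f_i(\cdot,y)$ is convex, hence $V_{f_i,y^i_{t}}(x^\star,x^i_{t})\ge 0$; by Assumption~\ref{s_concavity_assumption} each $-f_i(x,\cdot)$ is convex, hence $V_{-f_i,x^i_{t}}(y^\star,y^i_{t})\ge 0$. It then suffices to check that the coefficients $2s_0-\tfrac{8s_0^2 L_{xx}}{np_{\min}}$ and $2s_0-\tfrac{8s_0^2 L_{yy}}{np_{\min}}$ are nonnegative, i.e.\ that $s_0\le \tfrac{np_{\min}}{4L_{xx}}$ and $s_0\le \tfrac{np_{\min}}{4L_{yy}}$. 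With the choice $s_0=\tfrac{np_{\min}}{4\sqrt2\,\kappa_f L}$, where $L=\max\{L_{xx},L_{yy},L_{xy},L_{yx}\}$ and $\kappa_f=L/\mu\ge 1$, one has $s_0\le \tfrac{np_{\min}}{4L}\le \tfrac{np_{\min}}{4L_{xx}}$ and similarly $s_0\le\tfrac{np_{\min}}{4L_{yy}}$, so both Bregman sums enter with nonpositive weight and may be dropped.

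Finally I would collect the surviving quadratic terms. After the cancellations above, the sum of the right-hand sides of~\eqref{eq:xkt_xstar_grad_final} and~\eqref{eq:ykt_ystar_grad_final} reads $(1-\mu_x s_0)\Vert\bx_{t}-\mathbf{1}x^\star\Vert^2 + \tfrac{4s_0^2 L^2_{yx}}{np_{\min}}\Vert\bx_{t}-\mathbf{1}x^\star\Vert^2 + (1-\mu_y s_0)\Vert\by_{t}-\mathbf{1}y^\star\Vert^2 + \tfrac{4s_0^2 L^2_{xy}}{np_{\min}}\Vert\by_{t}-\mathbf{1}y^\star\Vert^2 + \tfrac{2s_0^2(C_x+C_y)}{n^2p_{\min}}$. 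Grouping the coefficients of $\Vert\bx_{t}-\mathbf{1}x^\star\Vert^2$ and of $\Vert\by_{t}-\mathbf{1}y^\star\Vert^2$ gives the factors $1-b_{x,0}$ and $1-b_{y,0}$ with $b_{x,0}=\mu_x s_0-\tfrac{4s_0^2 L^2_{yx}}{np_{\min}}$ and $b_{y,0}=\mu_y s_0-\tfrac{4s_0^2 L^2_{xy}}{np_{\min}}$, which is exactly the asserted bound. The only step requiring any care is the verification that the step-size choice renders the Bregman coefficients nonnegative so the nonnegative Bregman terms can be safely removed; the remainder is a direct addition and regrouping of terms, so I do not anticipate a genuine obstacle.
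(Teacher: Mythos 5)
Your proposal is correct and follows essentially the same route as the paper's own proof: add \eqref{eq:xkt_xstar_grad_final} and \eqref{eq:ykt_ystar_grad_final}, observe the exact cancellation of the $F$-terms, check that $s_0=\frac{np_{\min}}{4\sqrt{2}\kappa_f L}\le \frac{np_{\min}}{4L}$ makes the coefficients $2s_0-\frac{8s_0^2L_{xx}}{np_{\min}}$ and $2s_0-\frac{8s_0^2L_{yy}}{np_{\min}}$ nonnegative so the nonnegative Bregman sums can be dropped, and regroup the quadratic terms into $1-b_{x,0}$ and $1-b_{y,0}$. The paper additionally verifies $b_{x,0},b_{y,0}\in(0,1)$ inside this proof for later use, but that is not needed for the inequality you were asked to establish.
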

\begin{proof}
As the step size $s_0 = \frac{np_{\min}}{4\sqrt{2}L\kappa_f}$, we have $s_0 \leq \frac{1}{4L}$. We now show that the terms $(2s_0 - \frac{8s_0^2L_{xx}}{np_{\min}})$ and $(2s_0 - \frac{8s_0^2L_{yy}}{np_{\min}})$ appearing in~\eqref{eq:xkt_xstar_grad_final} and~\eqref{eq:ykt_ystar_grad_final} are non-negative:

\begin{align}
	2s_0 - \frac{8s_0^2L_{xx}}{np_{\min}} & = \frac{2np_{\min}}{4\sqrt{2}L\kappa_f } - \frac{8L_{xx}}{np_{\min}} \frac{n^2p^2_{\min}}{32L^2\kappa^2_f} \nonumber \\
	& = \frac{np_{\min}}{2\sqrt{2}L\kappa_f } - \frac{L_{xx}np_{\min}}{4L^2\kappa^2_f} \nonumber \\
	& \geq \frac{np_{\min}}{2\sqrt{2}L\kappa_f } - \frac{Lnp_{\min}}{4L^2\kappa^2_f} = \frac{np_{\min}}{2\sqrt{2}L\kappa_f } - \frac{np_{\min}}{4L\kappa^2_f } \nonumber \\
	& = \frac{np_{\min}}{2\sqrt{2}L\kappa_f }\left(1 - \frac{1}{\sqrt{2}\kappa_f} \right) \geq 0 .
\end{align}
Similarly, we get $2s_0 - 8s^2_0L_{yy} \geq 0$. Recall 
\begin{align}
	b_{x,0} = \mu_x s_0 - \frac{4s_0^2L^2_{xy}}{np_{\min}} & = \frac{\mu_x np_{\min}}{4\sqrt{2}L\kappa_f} - \frac{4L^2_{yx}n^2p^2_{\min}}{32L^2\kappa^2_f np_{\min}} \nonumber \\
	& \geq \frac{\mu  np_{\min}}{4\sqrt{2}L\kappa_f } - \frac{4L^2np_{\min}}{32L^2\kappa^2_f } \nonumber \\
	& = \frac{np_{\min}}{4\sqrt{2}\kappa^2_f } - \frac{np_{\min}}{8\kappa^2_f} \nonumber \\
	& = \left(1-\frac{1}{\sqrt{2}} \right)\frac{np_{\min}}{4\sqrt{2}\kappa^2_f } . \label{eq:axk_lb}
\end{align}
We now show that $b_{x,0} <  1$. 
\begin{align}
	b_{x,0} < \mu_x s_0 = \frac{\mu_xnp_{\min}}{4\sqrt{2}L\kappa_f } \leq \frac{\mu_x np_{\min}}{4\sqrt{2}L_{xx}\kappa_f} = \frac{np_{\min}}{4\sqrt{2}\kappa_x \kappa_f } < 1 . \label{eq:axk_ub}
\end{align}
Therefore, $b_{x,0} \in (0,1)$. In a similar fashion, we obtain $b_{y,0} \in (0,1)$. On adding \eqref{eq:xkt_xstar_grad_final} and \eqref{eq:ykt_ystar_grad_final}, we obtain
\begin{align}
	& E \left\Vert \bx_{t} - \mathbf{1}x^\star - s_0\mathcal{G}^\bx_{t} + s_0\nabla_x F(\mathbf{1}z^\star)  \right\Vert^2 +  E \left\Vert \by_{t} - \mathbf{1}y^\star + s_0\mathcal{G}^\by_{t} - s_0\nabla_y F(\mathbf{1}z^\star)  \right\Vert^2 \nonumber\\
	& \leq (1-\mu_x s_0 + \frac{4s_0^2L^2_{yx}}{np_{\min}}) \left\Vert \bx_{t} - \mathbf{1}x^\star  \right\Vert^2 + (1-\mu_x s_0 + \frac{4s_0^2L^2_{xy}}{np_{\min}}) \left\Vert \by_{t} - \mathbf{1}y^\star  \right\Vert^2 \nonumber\\ & \ - (2s_0 - \frac{8s_0^2L_{xx}}{np_{\min}})\sum_{i = 1}^m V_{f_i,y^i_{t}}(x^\star,x^i_{t}) -(2s_0 - \frac{8s_0^2L_{yy}}{np_{\min}})\sum_{i = 1}^m V_{-f_i,x^i_{t}}(y^\star,y^i_{t}) + \frac{2s_0^2}{n^2p_{\min}}(C_x + C_y) \nonumber\\
	& \leq (1-b_{x,0})\left\Vert \bx_{t} - \mathbf{1}x^\star  \right\Vert^2 + (1- b_{y,0}) \left\Vert \by_{t} - \mathbf{1}y^\star  \right\Vert^2 + \frac{2s_0^2}{n^2p_{\min}}(C_x + C_y). \label{bx_by_bound}
\end{align}
The last inequality follows from non-negativity of $V_{f_i,y^i_{t}}(x^\star,x^i_{t}), V_{-f_i,x^i_{t}}(y^\star,y^i_{t}), 2s_0 - \frac{8s_0^2L_{xx}}{np_{\min}}$ and $2s_0 - \frac{8s_0^2L_{yy}}{np_{\min}}$.	
\end{proof}

\subsection{Parameters Setting and their Feasibility}  
\label{appendix_parameters_feasibility}

\textbf{Parameters setting:} From Corollary \ref{cor:exp_xkt_ykt_xstar_ystar_compact}, the step size used in Algorithm \ref{alg:IPDHG_with_sgd_svrg_oracle} is $s_0 = \frac{np_{\min}}{4\sqrt{2}\kappa_f L }$. We choose the parameters involved in \textbf{COMM} procedure and other parameters $\gamma_{x,0}, \gamma_{y,0}$ as follows:
\begin{align}
	& \alpha_{x,0} = \frac{b_{x,0}}{1+\delta} , \ \alpha_{y,0} = \frac{b_{y,0}}{1+\delta} \\
	&  \gamma_{x,0} = \min \left\lbrace \frac{b_{x,0}}{4\sqrt{\delta}(1+\delta)\lambda_{\max}(I-W)} , \frac{1}{4(1+\delta)\lambda_{\max}(I-W)} \right\rbrace   \label{eq:param1}  \\
	&  \gamma_{y,0} := \min \left\lbrace \frac{b_{y,0}}{4\sqrt{\delta}(1+\delta)\lambda_{\max}(I-W)} , \frac{1}{4(1+\delta)\lambda_{\max}(I-W)} \right\rbrace \label{eq:gaamy_sgd}  \\
	& M_{x,0} = 1-\frac{\sqrt{\delta }\alpha_{x,0}}{1-\frac{\gamma_{x,0}}{2}\lambda_{\max}(I-W)} , \ M_{y,0} = 1-\frac{\sqrt{\delta}\alpha_{y,0}}{1-\frac{\gamma_{y,0}}{2}\lambda_{\max}(I-W)} \\
	&	\rho_0 = \max \left\lbrace 1-\frac{3b_{x,0}}{7} , 1-\frac{3b_{y,0}}{7} ,  1- \frac{\gamma_{x,0}}{2} \lambda_{m-1}(I-W) , 1- \frac{\gamma_{y,0}}{2} \lambda_{m-1}(I-W) , 1-\alpha_{x,0}, 1-\alpha_{y,0}  \right\rbrace \label{rho_sgd} \\
	& \tilde{\rho}_0  = \min \Big \{ \left(1 - \frac{1}{\sqrt{2}} \right)\frac{3np_{\min}}{28\sqrt{2}\kappa_f^2}  , \frac{1}{8(1+\delta)\kappa_g}, \left(1 - \frac{1}{\sqrt{2}} \right) \frac{np_{\min}}{32\sqrt{2\delta}(1+\delta)} \frac{1}{\kappa_f^2 \kappa_g} \nonumber \\ & \hspace*{2cm}, \frac{1}{1+\delta} \left(1 - \frac{1}{\sqrt{2}} \right) \frac{np_{\min}}{4\sqrt{2}\kappa_f^2} \Big \} \label{tilde_rho_0} 
\end{align}

\textbf{Parameters Feasibility:} Above choice of parameters should satisfy the following conditions:
\begin{align}
	& \alpha^x_{0} < \min \left\lbrace \frac{b_{x,0}}{\sqrt{\delta}} , \frac{1}{1+\delta} \right\rbrace , \  \alpha_{y,0} < \min \left\lbrace \frac{b_{y,0}}{\sqrt{\delta}} , \frac{1}{1+\delta} \right\rbrace  \label{eq:param2_start} \\
	& \gamma_{x,0} \in \left( 0, \min \left\lbrace \frac{2-2\sqrt{\delta}\alpha_{x,0}}{\lambda_{\max}(I-W)}, \frac{\alpha_{x,0} - (1+\delta)\alpha_{x,0}^2}{\sqrt{\delta}\lambda_{\max}(I-W)} \right\rbrace  \right) , \\
	& \gamma_{y,0} \in \left( 0, \min \left\lbrace \frac{2-2\sqrt{\delta}\alpha_{y,0}}{\lambda_{\max}(I-W)}, \frac{\alpha_{y,0} - (1+\delta)\alpha_{y,0}^2}{\sqrt{\delta}\lambda_{\max}(I-W)} \right\rbrace  \right) , \\
	& \frac{\gamma_{x,0}}{2}\lambda_{m-1}(I-W) \ \in \ (0,1) , \ \frac{\gamma_{y,0}}{2}\lambda_{m-1}(I-W) \ \in \  (0,1) , \\
	& M_{x,0} \in (0,1), \ M_{y,0} \in (0,1) , \\
	& \frac{1-b_{x,0}}{M_{x,0}} \in (0,1), \ \ \frac{1-b_{y,0}}{M_{y,0}} \in (0,1) .\label{eq:param2_end}
\end{align}

In this section, we show that all parameters specified in~\eqref{eq:param1} satisfy all requirements of~\eqref{eq:param2_start}-\eqref{eq:param2_end}.

\paragraph{Feasibility of ${\alpha_{x,0}}$ {and} $\alpha_{y,0}$.}

From~\eqref{eq:axk_lb} and \eqref{eq:axk_ub}, we have $0 < b_{x,0} < 1$. Therefore, $\alpha_{x,0} < \frac{1}{1+\delta}$. Moreover, $\frac{\sqrt{\delta}}{1+\delta} \leq 1/2$ as $\delta \in [0,1]$. Therefore, $\alpha_{x,0} \leq \frac{b_{x,0}}{2\sqrt{\delta}} < b_{x,0}/\sqrt{\delta}$. Hence, $\alpha_{x,0} < \min \left\lbrace \frac{b_{x,0}}{\sqrt{\delta}} , \frac{1}{1+\delta} \right\rbrace$ . Similarly, $\alpha_{y,0} < \min \left\lbrace \frac{b_{y,0}}{\sqrt{\delta}} , \frac{1}{1+\delta} \right\rbrace$ because $b_{y,0} \in (0,1)$.

\paragraph{Feasibility of $\gamma_{x,0}$ and $\gamma_{y,0}$.} If $\delta = 0$, $\gamma_{x,0} = \gamma_{y,0} = \frac{1}{4\lambda_{\max}(I-W)} < \frac{2}{\lambda_{\max}(I-W)}$. Therefore, without loss of generality we assume that $\delta > 0$. We consider two cases to verify the feasibility of $\gamma_{x,0}$ and $\gamma_{y,0}$. 

\textbf{Case I:} $b_{x,0} \leq \sqrt{\delta}$.

This gives $\gamma_{x,0} = \frac{b_{x,0}}{4\sqrt{\delta}(1+\delta)\lambda_{\max}(I-W)}$. Consider
\begin{align}
	\frac{\alpha_{x,0} - (1+\delta)\alpha_{x,0}^2}{\sqrt{\delta}\lambda_{\max}(I-W)} & = \frac{b_{x,0} - b_{x,0}^2}{\sqrt{\delta}(1+\delta)\lambda_{\max}(I-W)} .
\end{align}
Using \eqref{eq:axk_ub}, we have $b_{x,0}  \leq \frac{1}{4\kappa_x \kappa_f 2^{k/2}} < 0.25$. This allows us to use the inequality $2x-2x^2 \geq x/2$ for all $0 \leq x \leq 0.75$. Therefore,
\begin{align}
	\frac{\alpha_{x,0} - (1+\delta)\alpha_{x,0}^2}{\sqrt{\delta}\lambda_{\max}(I-W)} & > \frac{b_{x,0}}{4\sqrt{\delta}(1+\delta) \lambda_{\max}(I-W)} \nonumber\\
	& = \gamma_{x,0} .
\end{align}
We also have 
\begin{align}
	\frac{2-2\sqrt{\delta}\alpha_{x,0}}{\lambda_{\max}(I-W)} & = \left( 2 - \frac{2\sqrt{\delta}b_{x,0}}{1+\delta} \right) \frac{1}{\lambda_{\max}(I-W)} \geq \left( 2 - \frac{2\sqrt{\delta}}{1+\delta} \right) \frac{1}{\lambda_{\max}(I-W)} \nonumber\\
	& \geq \frac{1}{\lambda_{\max}(I-W)} > \frac{1}{4(1+\delta) \lambda_{\max}(I-W)} \nonumber\\
	& > \frac{b_{x,0}}{4\sqrt{\delta}(1+\delta) \lambda_{\max}(I-W)} \nonumber\\
	& = \gamma_{x,0} ,
\end{align}
where the second inequality uses the relation $\frac{\sqrt{\delta}}{1+\delta} \leq \frac{1}{2}$ and the last inequality uses $b_x \leq \sqrt{\delta}$. We know that $b_{y,0} \in (0,1)$. Therefore, by following similar steps, the chosen $\gamma_{y,0}$ is also feasible. 

\textbf{Case II:} $ b_{x,0} > \sqrt{\delta}$

This give $\gamma_{x,0} = \frac{1}{4(1+\delta)\lambda_{\max}(I-W)}$. 

\begin{align}
	\frac{\alpha_{x,0} - (1+\delta)\alpha_{x,0}^2}{\sqrt{\delta}\lambda_{\max}(I-W)} & = \frac{b_{x,0} - b_{x,0}^2}{\sqrt{\delta}(1+\delta)\lambda_{\max}(I-W)} \nonumber\\
	& \geq \frac{b_{x,0}}{4\sqrt{\delta}(1+\delta) \lambda_{\max}(I-W)} \nonumber\\
	& > \frac{1}{4(1+\delta) \lambda_{\max}(I-W)} \nonumber\\
	& = \gamma_{x,0} .
\end{align}
Consider
\begin{align}
	\frac{2-2\sqrt{\delta}\alpha_{x,0}}{\lambda_{\max}(I-W)} & = \left( 2 - \frac{2\sqrt{\delta}b_{x,0}}{1+\delta} \right) \frac{1}{\lambda_{\max}(I-W)} \nonumber\\
	& \geq \left( 2 - \frac{2\sqrt{\delta}}{1+\delta} \right) \frac{1}{\lambda_{\max}(I-W)} \nonumber\\
	& \geq \frac{1}{\lambda_{\max}(I-W)} \nonumber\\
	& > \frac{1}{4(1+\delta) \lambda_{\max}(I-W)} \nonumber\\
	& = \gamma_{x,0} .
\end{align}
Therefore, $\gamma_{x,0} < \min \left\lbrace \frac{\alpha_{x,0} - (1+\delta)\alpha_{x,0}^2}{\sqrt{\delta}\lambda_{\max}(I-W)} , \frac{2-2\sqrt{\delta}\alpha_{x,0}}{\lambda_{\max}(I-W)} \right\rbrace $.

As $\gamma_{x,0} < \frac{2-2\sqrt{\delta}\alpha_{x,0}}{\lambda_{\max}(I-W)} < \frac{2}{\lambda_{\max}(I-W)}$. Notice that $\lambda_{m-1}(I-W) < \lambda_{\max}(I-W)$ Therefore,
\begin{align}
	\frac{\gamma_{x,0}}{2}\lambda_{m-1}(I-W) < \frac{\gamma_{x,0}}{2}\lambda_{\max}(I-W) < 1 .
\end{align}
Similarly, $\frac{\gamma_{y,0}}{2} \lambda_{m-1}(I-W) < 1$.

\paragraph{Feasibility of $M_{x,0}$ and $M_{y,0}$.}

Recall $M_{x,0} = 1-\frac{\sqrt{\delta }\alpha_{x,0}}{1-\frac{\gamma_{x,0}}{2}\lambda_{\max}(I-W)}$ and $M_{y,0} = 1-\frac{\sqrt{\delta }\alpha_{y,0}}{1-\frac{\gamma_{y,0}}{2}\lambda_{\max}(I-W)}$.
We have 
\begin{align}
	\gamma_{x,0} &< \frac{2-2\sqrt{\delta}\alpha_{x,0}}{\lambda_{\max}(I-W)} \nonumber\\
	\Rightarrow \frac{\gamma_{x,0}\lambda_{\max}(I-W)}{2} &< 1-\sqrt{\delta}\alpha_{x,0} \nonumber\\
	\Rightarrow 1 - \frac{\gamma_{x,0}\lambda_{\max}(I-W)}{2} &> \sqrt{\delta}\alpha_{x,0} \nonumber\\
	\Rightarrow \frac{\sqrt{\delta}\alpha_{x,0}}{1 - \frac{\gamma_{x,0}\lambda_{\max}(I-W)}{2}} &< 1 .
\end{align}
Moreover, $\frac{\sqrt{\delta}\alpha_{x,0}}{1 - \frac{\gamma_{x,0}\lambda_{\max}(I-W)}{2}} > 0$. Therefore, $M_{x,0} \in (0,1) $. The feasibility of $M_{y,0}$ can be proved similarly.

\paragraph{Feasibility of $\frac{1-b_{x,0}}{M_{x,0}}$ and $\frac{1-b_{y,0}}{M_{y,0}}$.}
We derive upper bounds on $\frac{1-b_{x,0}}{M_{x,0}}$ and $\frac{1-b_{y,0}}{M_{y,0}} $ to verify the feasibility.
We divide the derivation into two cases.

\textbf{Case I:} $b_{x,0} \leq \sqrt{\delta}$ 

This implies that
\begin{align}
	\gamma_{x,0} = \frac{b_{x,0}}{4\sqrt{\delta}(1+\delta)\lambda_{\max}(I-W)} \\
	\frac{\gamma_{x,0}}{2} \lambda_{\max}(I-W) = \frac{b_{x}}{8\sqrt{\delta}(1+\delta)} .
\end{align}
Recall $M_{x,0}$:
\begin{align}
	M_{x,0} & = 1 - \frac{\sqrt{\delta }\alpha_{x,0}}{1-\frac{\gamma_{x,0}}{2}\lambda_{\max}(I-W)} \nonumber\\
	& = 1 - \frac{\frac{\sqrt{\delta }b_{x,0}}{1+\delta}}{1-\frac{b_{x,0}}{8\sqrt{\delta}(1+\delta)}} \nonumber\\
	& = 1 - \frac{\sqrt{\delta}b_{x,0} \times 8 \sqrt{\delta}(1+\delta)}{(1+\delta)\left(8\sqrt{\delta}(1+\delta) - b_{x,0} \right)} \nonumber\\
	& = 1 - \frac{8\delta b_{x,0}}{\left(8\sqrt{\delta}(1+\delta) - b_{x,0} \right)} \nonumber\\
	& = 1 - \frac{8\delta }{\frac{8\sqrt{\delta}(1+\delta)}{b_{x,0}} - 1} . \label{eq:Mx_sgd_lb_mid}
\end{align}
We know that $\frac{\sqrt{\delta}}{b_{x,0}} \geq 1$. Therefore, $\frac{\sqrt{\delta}(1+\delta)}{b_{x,0}} > 1$ which in turn implies that 
\begin{align}
	\frac{8\sqrt{\delta}(1+\delta)}{b_{x,0}} - 1 & > \frac{8\sqrt{\delta}(1+\delta)}{b_{x,0}} -  \frac{\sqrt{\delta}(1+\delta)}{b_{x,0}} \nonumber\\
	& = \frac{7\sqrt{\delta}(1+\delta)}{b_{x,0}} \nonumber\\
	\frac{1}{\frac{8\sqrt{\delta}(1+\delta)}{b_{x,0}} - 1} & < \frac{b_{x,0}}{7\sqrt{\delta}(1+\delta)} .
\end{align}
By using above relation in \eqref{eq:Mx_sgd_lb_mid}, we obtain
\begin{align}
	M_x & \geq 1 - \frac{8\delta b_{x,0} }{7\sqrt{\delta}(1+\delta)} = 1 - \frac{8b_{x,0}\sqrt{\delta  }}{7(1+\delta)} \label{eq:Mx_lb_delta_sgd} \\
	& \geq 1 - \frac{8b_{x,0}}{7} \frac{1}{2}  = 1 - \frac{4b_{x,0}}{7} , \label{eq:Mx_lb_sgd} 
\end{align}
where the last inequality uses $\frac{\sqrt{\delta}}{1+\delta} \leq \frac{1}{2}$.
\begin{align}
	\frac{1-b_{x,0}}{M_{x,0}} & = 1 + \frac{1-b_{x,0}}{M_{x,0}} - 1  \leq 1 + \frac{1-b_{x,0}}{1 - \frac{8b_{x,0}\sqrt{\delta  }}{7(1+\delta)}} - 1 \nonumber\\
	& =  1 + \frac{1-b_{x,0} - 1+ \frac{8b_{x,0}\sqrt{\delta  }}{7(1+\delta)} }{1 - \frac{8b_{x,0}\sqrt{\delta  }}{7(1+\delta)}}  = 1 - \frac{b_{x,0} - \frac{8b_{x,0}\sqrt{\delta  }}{7(1+\delta)} }{1 - \frac{8b_{x,0}\sqrt{\delta  }}{7(1+\delta)}} \nonumber\\
	& = 1 - \frac{7b_{x,0}(1+\delta) - 8b_{x,0}\sqrt{\delta} }{7(1+\delta) - 8b_{x,0}\sqrt{\delta} } = 1 - \frac{7(1+\delta) - 8\sqrt{\delta} }{\frac{7(1+\delta)}{b_{x,0}} - 8\sqrt{\delta} } \nonumber\\
	& \leq 1 - \frac{7(1+\delta) - \frac{8(1+\delta)}{2} }{\frac{7(1+\delta)}{b_{x,0}} - 8\sqrt{\delta} } = 1 - \frac{3(1+\delta)}{\frac{7(1+\delta)}{b_{x,0}} - 8\sqrt{\delta} } \nonumber\\
	& < 1 - \frac{3(1+\delta)}{\frac{7(1+\delta)}{b_{x,0}}} \nonumber\\
	& = 1 - \frac{3b_{x,0}}{7} \label{eq:one_bx_Mx_ub_sgd} .
\end{align}
Similarly, we obtain 
\begin{align}
	& M_y \geq 1 - \frac{8b_{y,0}\sqrt{\delta  }}{7(1+\delta)} \geq  1 - \frac{4b_{y,0}}{7} \ \text{and, } \label{eq:My_lb_sgd} \\
	& \frac{1-b_{y,0}}{M_{y,0}} < 1 - \frac{3b_{y,0}}{7} . \label{eq:one_by_My_ub_sgd}
\end{align}
\textbf{Case II:}  $b_{x,0} > \sqrt{\delta}$ .
\begin{align}
	\gamma_{x,0} = \frac{1}{4(1+\delta)\lambda_{\max}(I-W)} .
\end{align}
We have
\begin{align}
	M_{x,0} & = 1 - \frac{\sqrt{\delta }\alpha_{x,0}}{1-\frac{\gamma_{x,0}}{2}\lambda_{\max}(I-W)} \nonumber\\
	& = 1 - \frac{\sqrt{\delta }\alpha_{x,0}}{1-\frac{1}{8(1+\delta)}} \nonumber\\
	& =  1 - \frac{\frac{\sqrt{\delta }b_{x,0}}{1+\delta}}{1-\frac{1}{8(1+\delta)}} \nonumber\\
	& = 1 - \frac{\sqrt{\delta }b_{x,0} \times 8(1+\delta) }{(1+\delta)(8(1+\delta) - 1)} \nonumber\\
	& = 1 - \frac{8\sqrt{\delta}b_{x,0}}{8(1+\delta) - 1} .
\end{align}
As $8(1+\delta) - 1 > 8(1+\delta) - 1  -\delta = 7(1+\delta)$. Therefore,
\begin{align}
	M_{x,0} & \geq 1 - \frac{8\sqrt{\delta}b_{x,0}}{7(1+\delta)} .
\end{align}
Notice that above lower bound matches with lower bound in \eqref{eq:Mx_lb_delta_sgd}. Therefore, by following steps similar to Case I, we obtain
\begin{align}
	\frac{1-b_{x,0}}{M_{x,0}} & < 1 - \frac{3b_{x,0}}{7}  \leq 1- \left(1 - \frac{1}{\sqrt{2}} \right)\frac{3np_{\min}}{28\sqrt{2}\kappa_f^2} , \ \text{and} \label{eq:bxk_Mxk_ub} \\
	\frac{1-b_{y,0}}{M_{y,0}} & < 1 - \frac{3b_{y,0}}{7} \leq 1- \left(1 - \frac{1}{\sqrt{2}} \right)\frac{3np_{\min}}{28\sqrt{2}\kappa_f^2} \label{eq:byk_Myk_ub} .
\end{align}

We now establish a recursion for $E_0\left[\Phi_{t}\right]$.
\begin{lemma} \label{lem:Phi_kt_recursion} Suppose $\{\bx_{t}\}_t$ and $\{\by_{t} \}_t$ are the sequences generated by Algorithm \ref{alg:IPDHG_with_sgd_svrg_oracle}. Suppose Assumptions \ref{s_convexity_assumption}-\ref{weight_matrix_assumption} and Assumptions \ref{smoothness_x_svrg}-\ref{lipschitz_yx_svrg} hold. Let step size $s_0$ is chosen according to Corollary \ref{cor:exp_xkt_ykt_xstar_ystar_compact}. Then, for every $0 \leq t \leq T_0-1$, the following holds:
	\begin{align}
		E_0\left[ \Phi_{t+1} \right] & \leq \rho_0 E_0\left[ \Phi_{t} \right] + \frac{2s_0^2(C_x + C_y)}{n^2p_{\min}},
	\end{align}
	where $E_0$ is the expectation over randomness in Algorithm \ref{alg:IPDHG_with_sgd_svrg_oracle} for $t \leq T_0-1$,  $\rho_0$ is defined in equation \eqref{rho_sgd}, $C_x$ $=$ $\sum_{i = 1}^m \sum_{l = 1}^n \left\Vert \nabla_x f_{il}(z^\star) \right\Vert^2$, $C_y$ $=$ $\sum_{i = 1}^m \sum_{l = 1}^n \left\Vert \nabla_y f_{il}(z^\star) \right\Vert^2$.
\end{lemma}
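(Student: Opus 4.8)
The plan is to assemble three ingredients already available in the excerpt: the one‑step recursions for the $\bx$‑ and $\by$‑blocks (Lemmas \ref{lem:recursion_x} and \ref{lem:recursion_y}), the gradient‑noise descent estimate (Corollary \ref{cor:exp_xkt_ykt_xstar_ystar_compact}), and the parameter bounds verified in Section \ref{appendix_parameters_feasibility}. Fix $t$ with $0\le t\le T_0-1$. During this phase Algorithm \ref{alg:IPDHG_with_sgd_svrg_oracle} performs an \text{IPDHG} step with step size $s_0$ and parameters $\gamma_{x,0},\gamma_{y,0},\alpha_{x,0},\alpha_{y,0}$, and the feasibility analysis of Section \ref{appendix_parameters_feasibility} confirms that under Assumptions \ref{s_convexity_assumption}--\ref{weight_matrix_assumption} and \ref{smoothness_x_svrg}--\ref{lipschitz_yx_svrg} these choices satisfy the hypotheses of Lemmas \ref{lem:recursion_x}, \ref{lem:recursion_y} (in particular $\alpha_{x,0},\alpha_{y,0}\in(0,(1+\delta)^{-1})$ and the $\gamma$‑feasibility intervals).

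First I would condition on all randomness prior to iteration $t$ and on the minibatch index sampled at iteration $t$, and take the conditional expectation $E$ over the stochastic compression within this step. Adding the conclusions of Lemmas \ref{lem:recursion_x} and \ref{lem:recursion_y} (with $s=s_0$, so that the limit points $H^\star_\bx=H^\star_{\bx,0}$ and $H^\star_\by=H^\star_{\by,0}$), and recognizing the left‑hand side as $E[\Phi_{t+1}]$ via \eqref{phi_kt_defn}, I obtain
\begin{align}
E[\Phi_{t+1}] & \le \big\|\bx_t-\mathbf{1}x^\star-s_0\mathcal{G}^\bx_t+s_0\nabla_x F(\mathbf{1}x^\star,\mathbf{1}y^\star)\big\|^2 + \big\|\by_t-\mathbf{1}y^\star+s_0\mathcal{G}^\by_t-s_0\nabla_y F(\mathbf{1}x^\star,\mathbf{1}y^\star)\big\|^2 \nonumber \\
& \quad + \frac{2s_0^2}{\gamma_{x,0}}\Big(1-\frac{\gamma_{x,0}}{2}\lambda_{m-1}(I-W)\Big)\big\|D^\bx_t-D^\star_\bx\big\|^2_{(I-W)^\dagger} + \frac{2s_0^2}{\gamma_{y,0}}\Big(1-\frac{\gamma_{y,0}}{2}\lambda_{m-1}(I-W)\Big)\big\|D^\by_t-D^\star_\by\big\|^2_{(I-W)^\dagger} \nonumber \\
& \quad + \sqrt{\delta}(1-\alpha_{x,0})\big\|H^\bx_t-H^\star_{\bx,0}\big\|^2 + \sqrt{\delta}(1-\alpha_{y,0})\big\|H^\by_t-H^\star_{\by,0}\big\|^2 .
\end{align}
Next I would take the conditional expectation over the stochastic gradient at iteration $t$: the $D$‑ and $H$‑terms are measurable with respect to the past and are unaffected, while the two gradient‑mismatch squared norms are bounded by Corollary \ref{cor:exp_xkt_ykt_xstar_ystar_compact}, which yields $(1-b_{x,0})\|\bx_t-\mathbf{1}x^\star\|^2+(1-b_{y,0})\|\by_t-\mathbf{1}y^\star\|^2+\frac{2s_0^2(C_x+C_y)}{n^2p_{\min}}$. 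Composing the two conditional expectations by the tower rule, it remains only to match coefficients against those appearing in $\Phi_t$.

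For the primal iterate term, \eqref{eq:one_bx_Mx_ub_sgd}--\eqref{eq:bxk_Mxk_ub} together with the definition \eqref{rho_sgd} of $\rho_0$ give $\frac{1-b_{x,0}}{M_{x,0}}\le\rho_0$, hence $(1-b_{x,0})\|\bx_t-\mathbf{1}x^\star\|^2\le\rho_0 M_{x,0}\|\bx_t-\mathbf{1}x^\star\|^2$; the dual iterate term is handled identically via \eqref{eq:one_by_My_ub_sgd}--\eqref{eq:byk_Myk_ub}. For the $D$‑terms, $1-\frac{\gamma_{x,0}}{2}\lambda_{m-1}(I-W)\le\rho_0$ and $1-\frac{\gamma_{y,0}}{2}\lambda_{m-1}(I-W)\le\rho_0$ directly from \eqref{rho_sgd}, and likewise $1-\alpha_{x,0}\le\rho_0$, $1-\alpha_{y,0}\le\rho_0$ for the $H$‑terms. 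Summing the six matched terms produces exactly $\rho_0\Phi_t$, so $E[\Phi_{t+1}\mid\text{past}]\le\rho_0\Phi_t+\frac{2s_0^2(C_x+C_y)}{n^2p_{\min}}$; taking the full expectation $E_0$ over the randomness of the GSGO phase yields the claimed recursion.

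The main obstacle is organizational rather than a single sharp estimate: one must correctly nest the compression‑expectation used in Lemmas \ref{lem:recursion_x}, \ref{lem:recursion_y} inside the gradient‑expectation used in Corollary \ref{cor:exp_xkt_ykt_xstar_ystar_compact} (noting that $\mathcal{G}^\bx_t,\mathcal{G}^\by_t$ are fixed under the former but averaged under the latter), check that the phase‑$s_0$ choice of $\gamma_{x,0},\gamma_{y,0},\alpha_{x,0},\alpha_{y,0}$ meets every hypothesis of those lemmas, and verify each coefficient‑matching inequality — all of which reduce to the feasibility bounds already established in Section \ref{appendix_parameters_feasibility}.
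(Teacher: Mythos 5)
Your proposal is correct and follows essentially the same route as the paper's proof: add the one-step recursions of Lemmas \ref{lem:recursion_x} and \ref{lem:recursion_y} with the phase-$s_0$ parameters, take the gradient expectation and invoke Corollary \ref{cor:exp_xkt_ykt_xstar_ystar_compact} (nesting the two conditional expectations by the tower rule), then match each coefficient against $\rho_0$ using the feasibility bounds \eqref{eq:one_bx_Mx_ub_sgd}, \eqref{eq:one_by_My_ub_sgd} and the definition \eqref{rho_sgd}. The only difference is presentational — you make the conditioning order and the identification $H^\star_\bx=H^\star_{\bx,0}$ explicit, which the paper leaves implicit — but the argument is the same.
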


\begin{proof}  Iterates $\bx_{t+1},\by_{t+1} $ of Algorithm  \ref{alg:IPDHG_with_sgd_svrg_oracle} are obtained by invoking Algorithm \ref{alg:generic_procedure_sgda}. Therefore, Lemma \ref{lem:recursion_y} and Lemma \ref{lem:recursion_x} also holds for Algorithm \ref{alg:IPDHG_with_sgd_svrg_oracle}. Adding inequalities~\eqref{eq:one_step_progress_y} and~\eqref{eq:one_step_progress_x}  (Lemma \ref{lem:recursion_y} and Lemma \ref{lem:recursion_x}), we have
	\begin{align}
		& M_{x,0} E\left\Vert \bx_{t+1} - \mathbf{1}x^\star  \right\Vert^2 + \frac{2s_0^2}{\gamma_{x,0}} E \left\Vert  D^\bx_{t+1} - D^\star_\bx  \right\Vert^2_{(I-W)^\dagger} + \sqrt{\delta}E\left\Vert H^\bx_{t+1} - H^\star_{\bx,0} \right\Vert^2 \nonumber\\ & \ + M_{y,0} E\left\Vert \by_{t+1} - \mathbf{1}y^\star  \right\Vert^2 + \frac{2s_0^2}{\gamma_{y,0}} E \left\Vert  D^\by_{t+1} - D^\star_\by  \right\Vert^2_{(I-W)^\dagger} + \sqrt{\delta}E\left\Vert H^\by_{t+1} - H^\star_{\by,0} \right\Vert^2 \nonumber\\
		& \leq \left\Vert \bx_{t} - \mathbf{1}x^\star - s_0\mathcal{G}^\bx_{t} + s_0\nabla_x F(\mathbf{1}z^\star)  \right\Vert^2 + \frac{2s_0^2}{\gamma_{x,0}}\left( 1- \frac{\gamma_{x,0}}{2}\lambda_{m-1}(I-W) \right)\left\Vert D^\bx_{t} - D^\star_\by \right\Vert^2_{(I-W)^\dagger} \nonumber\\ & \ \ + \sqrt{\delta}(1-\alpha_{x,0})\left\Vert H^\bx_{t} - H^\star_{\bx,0}  \right\Vert^2 + \left\Vert \by_{t} - \mathbf{1}y^\star + s_0\mathcal{G}^\by_{t} - s_0\nabla_y F(\mathbf{1}z^\star)  \right\Vert^2 \nonumber \\ & \ \ + \frac{2s_0^2}{\gamma_{y,0}}\left( 1- \frac{\gamma_{y,0}}{2}\lambda_{m-1}(I-W) \right)\left\Vert D^\by_{t} - D^\star_\by \right\Vert^2_{(I-W)^\dagger}  + \sqrt{\delta}(1-\alpha_{y,0})\left\Vert H^\by_{t} - H^\star_{\by,0}  \right\Vert^2  .
	\end{align}
	
	By taking conditional expectation on stochastic gradient at $t$-th step on both sides of above inequality and applying Tower property, we obtain
	\begin{align}
		& M_{x,0} E\left\Vert \bx_{t+1} - \mathbf{1}x^\star  \right\Vert^2 + \frac{2s_0^2}{\gamma_{x,0}} E \left\Vert  D^\bx_{t+1} - D^\star_\bx  \right\Vert^2_{(I-W)^\dagger} + \sqrt{\delta}E\left\Vert H^\bx_{t+1} - H^\star_{\bx,0} \right\Vert^2 \nonumber\\ & \ + M_{y,0} E\left\Vert \by_{t+1} - \mathbf{1}y^\star  \right\Vert^2 + \frac{2s_0^2}{\gamma_{y,0}} E \left\Vert  D^\by_{t+1} - D^\star_\by  \right\Vert^2_{(I-W)^\dagger} + \sqrt{\delta}E\left\Vert H^by_{t+1} - H^\star_{\by,0} \right\Vert^2 \nonumber\\
		& \leq E\left\Vert \bx_{t} - \mathbf{1}x^\star - s_0\mathcal{G}^\bx_{t} + s_0\nabla_x F(\mathbf{1}x^\star) ,\mathbf{1}y^\star) \right\Vert^2 + \frac{2s_0^2}{\gamma_{x,0}}\left( 1- \frac{\gamma_{x,0}}{2}\lambda_{m-1}(I-W) \right)\left\Vert D^\bx_{t} - D^\star_\bx \right\Vert^2_{(I-W)^\dagger} \nonumber\\ & \ \ + \sqrt{\delta}(1-\alpha_{x,0})\left\Vert H^\bx_{t} - H^\star_{\bx,0}  \right\Vert^2 + E\left\Vert \by_{t} - \mathbf{1}y^\star + s_0\mathcal{G}^\by_{t} - s_0\nabla_y F(\mathbf{1}x^\star,\mathbf{1}y^\star))  \right\Vert^2 \nonumber \\ & \ \ + \frac{2s_0^2}{\gamma_{y,0}}\left( 1- \frac{\gamma_{y,0}}{2}\lambda_{m-1}(I-W) \right)\left\Vert D^\by_{t} - D^\star_\by \right\Vert^2_{(I-W)^\dagger} + \sqrt{\delta}(1-\alpha_{y,0})\left\Vert H^\by_{t} - H^\star_{\by,0}  \right\Vert^2 \nonumber \\
		& \leq (1-b_{x,0})\left\Vert \bx_{t} - \mathbf{1}x^\star  \right\Vert^2 + (1- b_{y,0}) \left\Vert \by_{t} - \mathbf{1}y^\star  \right\Vert^2 \nonumber\\ & + \frac{2s_0^2}{\gamma_{x,0}}\left( 1- \frac{\gamma_{x,0}}{2}\lambda_{m-1}(I-W) \right)\left\Vert D^\bx_{t} - D^\star_\bx \right\Vert^2_{(I-W)^\dagger}  +  \frac{2s_0^2}{\gamma_{y,0}}\left( 1- \frac{\gamma_{y,0}}{2}\lambda_{m-1}(I-W) \right)\left\Vert D^\by_{t} - D^\star_\by \right\Vert^2_{(I-W)^\dagger}  \nonumber\\ & \ + \sqrt{\delta}(1-\alpha_{x,0})\left\Vert H^\bx_{t} - H^\star_{\bx,0}  \right\Vert^2 +  \sqrt{\delta}(1-\alpha_{y,0})\left\Vert H^\by_{t} - H^\star_{\by,0}  \right\Vert^2  +  \frac{2s_0^2(C_x + C_y)}{n^2p_{\min}}
	\end{align}
	where the last inequality follows from inequality \eqref{bx_by_bound}.
	
	By taking total expectation on both sides of above inequality, using tower property and using the definition of $\Phi_{t}$, we obtain
	\begin{align}
		& E_0\left[ \Phi_{t+1} \right] \nonumber\\
		& \leq (1-b_{x,0})E_0\left\Vert \bx_{t} - \mathbf{1}x^\star  \right\Vert^2 + (1- b_{y,0}) E_0\left\Vert \by_{t} - \mathbf{1}y^\star  \right\Vert^2 \nonumber\\ & + \frac{2s_0^2}{\gamma_{x,0}}\left( 1- \frac{\gamma_{x,0}}{2}\lambda_{m-1}(I-W) \right)E_0\left\Vert D^\bx_{t} - D^\star_\bx \right\Vert^2_{(I-W)^\dagger}  +  \frac{2s_0^2}{\gamma_{y,0}}\left( 1- \frac{\gamma_{y,0}}{2}\lambda_{m-1}(I-W) \right)E_0\left\Vert D^\by_{t} - D^\star_\by \right\Vert^2_{(I-W)^\dagger} \nonumber \\ & \ + \sqrt{\delta}(1-\alpha_{x,0})E_0\left\Vert H^\bx_{t} - H^\star_{\bx,0}  \right\Vert^2 +  \sqrt{\delta}(1-\alpha_{y,0})E_0\left\Vert H^\by_{t} - H^\star_{\by,0}  \right\Vert^2  +  \frac{2s_0^2(C_x + C_y)}{n^2p_{\min}} \nonumber\\
		& = \frac{(1-b_{x,0})}{M_{x,0}}M_{x,0}E_0\left\Vert \bx_{t} - \mathbf{1}x^\star  \right\Vert^2 + \frac{(1- b_{y,0})}{M_{y,0}}M_{y,0} E_0\left\Vert \by_{t} - \mathbf{1}y^\star  \right\Vert^2 \nonumber\\ & + \frac{2s_0^2}{\gamma_{x,0}}\left( 1- \frac{\gamma_{x,0}}{2}\lambda_{m-1}(I-W) \right)E_0\left\Vert D^\bx_{t} - D^\star_\bx \right\Vert^2_{(I-W)^\dagger}  +  \frac{2s_0^2}{\gamma_{y,0}}\left( 1- \frac{\gamma_{y,0}}{2}\lambda_{m-1}(I-W) \right)E_0\left\Vert D^\by_{t} - D^\star_\by \right\Vert^2_{(I-W)^\dagger}  \nonumber \\ & \ + \sqrt{\delta}(1-\alpha_{x,0})E_0\left\Vert H^\bx_{t} - H^\star_{\bx,0}  \right\Vert^2 +  \sqrt{\delta}(1-\alpha_{y,0})E_0\left\Vert H^\by_{t} - H^\star_{\by,0}  \right\Vert^2  +  \frac{2s_0^2(C_x + C_y)}{n^2p_{\min}} \nonumber\\
		& \leq \max \left\lbrace 1-\frac{3b_{x,0}}{7} , 1-\frac{3b_{y,0}}{7} ,  1- \frac{\gamma_{x,0}}{2} \lambda_{m-1}(I-W) , 1- \frac{\gamma_{y,0}}{2} \lambda_{m-1}(I-W) , 1-\alpha_{x,0}, 1-\alpha_{y,0}  \right\rbrace  \times E_0\left[ \Phi_{t} \right] \nonumber\\ & \ \ + \frac{2s_0^2(C_x + C_y)}{n^2p_{\min}} \nonumber \\
		& = \rho_0 E_0\left[ \Phi_{t} \right] + \frac{2s_0^2(C_x + C_y)}{n^2p_{\min}} ,
	\end{align}
	where second last step uses \eqref{eq:one_bx_Mx_ub_sgd} and \eqref{eq:one_by_My_ub_sgd}. The last equality follows from $\rho_0$ defined in \eqref{rho_sgd}.
\end{proof}

\subsection{Proof of Lemma \ref{lem:exp_phit+1_phi_0}} \label{appendix:exp_phit+1_phi_0}
Using Lemma \ref{lem:Phi_kt_recursion}, we have $	E_0\left[ \Phi_{t+1} \right]  \leq \rho_0 E_0\left[ \Phi_{t} \right] + \frac{2s_0^2(C_x + C_y)}{n^2p_{\min}}$
By letting  $A_1 := \frac{2(C_x + C_y)}{n^2p_{\min}}$, we unroll the recursion to obtain
\begin{align}
	E_0\left[ \Phi_{t+1} \right] & \leq \rho^{t+1}_0  \Phi_{0}  + \sum_{l = 0}^t \rho_0^{t-l} A_1 s^2_0 = \rho^{t+1}_0  \Phi_{0} + A_1 s^2_0 \rho_0^{t} \sum_{l = 0}^t \rho_0^{-l} \nonumber\\
	& =  \rho^{t+1}_0  \Phi_{0}  + A_1 s^2_0 \rho_0^{t} \frac{\rho_0^{-(t+1)} - 1}{\rho_0^{-1} - 1} \nonumber\\
	& \leq  \rho^{t+1}_0  \Phi_{0}  + A_1 s^2_0 \rho_0^{t} \frac{\rho_0^{-(t+1)}}{\rho_0^{-1} - 1} = \rho^{t+1}_0  \Phi_{0}  + A_1 s^2_0 \rho_0^{-1} \frac{\rho_0}{1 - \rho_0} \nonumber\\
	& = \rho^{t+1}_0 \Phi_{0} + A_1 s^2_0 \frac{1}{1 - \rho_0} = \left( \rho_0 \right)^{t+1} \Phi_{0} + \frac{A_1 s^2_0}{1-\rho_0}. \nonumber
\end{align}
Substituting $A_1 = \frac{2(C_x + C_y)}{n^2p_{\min}}$, we get
\begin{align}
	E_0\left[ \Phi_{t+1} \right] & \leq \left( \rho_0 \right)^{t+1} \Phi_{0} + \frac{2s_0^2(C_x + C_y)}{(1-\rho_0)n^2p_{\min}},
\end{align}
which completes the proof of Lemma \ref{lem:exp_phit+1_phi_0}.

\section{Convergence Behavior of Algorithm \ref{alg:IPDHG_with_sgd_svrg_oracle} with SVRGO}
\label{appendix_finite_sum}

We first prove all results related to the convergence behavior of IPDHG with SVRGO.
We begin with few intermediate results which will help us in getting the final convergence result of Algorithm \ref{alg:IPDHG_with_sgd_svrg_oracle}. 

\begin{lemma} \label{lem:exp_xkt_ykt_svrg} Let $\{\bx_{t} \}_t, \{\by_{t}\}_t$ be the sequences generated by Algorithm \ref{alg:generic_procedure_sgda} with $\mathcal{G}^\bx_{t}$ and $ \mathcal{G}^\by_{t}$ obtained from SVRGO. Then, under Assumptions \ref{s_convexity_assumption}-\ref{s_concavity_assumption} and Assumptions \ref{smoothness_x_svrg}-\ref{lipschitz_yx_svrg}, the following holds for all $t \geq 1$:
\begin{align}
& E \left\Vert \bx_{t} - \mathbf{1}x^\star - s\mathcal{G}^\bx_{t} + s\nabla_x F(\mathbf{1}x^\star,\mathbf{1}y^\star)  \right\Vert^2 + E \left\Vert \by_{t} - \mathbf{1}y^\star + s\mathcal{G}^\by_{t} - s\nabla_y F(\mathbf{1}x^\star,\mathbf{1}y^\star)  \right\Vert^2 \nonumber \\
& \leq  \left( 1-\mu_x s + \frac{4s^2L^2_{yx}}{np_{\min}} \right)\left\Vert \bx_{t} - \mathbf{1}x^\star \right\Vert^2 + \left(1-s\mu_y + \frac{4s^2L^2_{xy}}{np_{\min}} \right) \left\Vert \by_{t} - \mathbf{1}y^\star  \right\Vert^2 \nonumber \\ & \ - \left( 2s - \frac{8s^2L_{xx}}{np_{\min}} \right) \sum_{i = 1}^m  V_{f_i,y^i_{t}}(x^{\star},x^i_{t}) - \left( 2s - \frac{8s^2L_{yy}}{np_{\min}} \right) \sum_{i = 1}^m V_{-f_i,x^i_{t}}(y^\star,y^i_{t}) \nonumber \\ & \ + \frac{4s^2(L^2_{xx} + L^2_{yx})}{np_{\min}} \left\Vert \tilde{\bx}_{t} - \mathbf{1}x^\star \right\Vert^2 + \frac{4s^2(L^2_{yy} + L^2_{xy})}{np_{\min}} \left\Vert \tilde{\by}_{t} - \mathbf{1}y^\star \right\Vert^2 , \label{eq:exp_xkt_ykt_vfi_svrg}
\end{align}
where $p_{\min} := \min_{i,j} \{p_{ij} \}$.
\end{lemma}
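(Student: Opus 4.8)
## Proof Plan for Lemma \ref{lem:exp_xkt_ykt_svrg}

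The plan is to mirror the structure of the proof of Lemma \ref{lem:exp_ykt_ystar_xkt_xstar}, replacing the GSGO variance analysis by the SVRGO variance analysis. First I would expand the squared norm $E\|\bx_t - \mathbf{1}x^\star - s\mathcal{G}^\bx_t + s\nabla_x F(\mathbf{1}z^\star)\|^2$ node-wise, using the fact that for each $i$, $E[\mathcal{G}^{i,x}_t] = \nabla_x f_i(z^i_t)$ (the SVRGO estimator is unbiased by construction). This gives the decomposition
\begin{align}
E\|x^i_t - x^\star - s\mathcal{G}^{i,x}_t + s\nabla_x f_i(z^\star)\|^2
&= \|x^i_t - x^\star\|^2 + s^2 E\|\mathcal{G}^{i,x}_t - \nabla_x f_i(z^\star)\|^2 \nonumber \\
&\quad - 2s\langle x^i_t - x^\star, \nabla_x f_i(z^i_t) - \nabla_x f_i(z^\star)\rangle. \nonumber
\end{align}
The inner-product term is handled exactly as in the GSGO proof, via $\mu_x$-strong convexity of $f_i(\cdot,y)$ and the Bregman distance identity \eqref{eq:grad_zkt_inn_Vfi}–\eqref{eq:grad_zstar_inn_mu_x}, producing the $-\frac{\mu_x}{2}\|\bx_t - \mathbf{1}x^\star\|^2$, the Bregman term $-\sum_i V_{f_i, y^i_t}(x^\star, x^i_t)$, and the function-value cross terms $F(\bx_t, \mathbf{1}y^\star) - F(\mathbf{1}z^\star) + F(\mathbf{1}x^\star, \by_t) - F(z_t)$, which will cancel upon adding the analogous $y$-inequality.

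The key new ingredient is bounding the second-moment term $s^2 \sum_i E\|\mathcal{G}^{i,x}_t - \nabla_x f_i(z^\star)\|^2$ for the SVRGO estimator $\mathcal{G}^{i,x}_t = \frac{1}{np_{il}}(\nabla_x f_{il}(z^i_t) - \nabla_x f_{il}(\tilde z^i_t)) + \nabla_x f_i(\tilde z^i_t)$. I would write $\mathcal{G}^{i,x}_t - \nabla_x f_i(z^\star) = \big[\frac{1}{np_{il}}(\nabla_x f_{il}(z^i_t) - \nabla_x f_{il}(\tilde z^i_t)) + \nabla_x f_i(\tilde z^i_t) - \nabla_x f_i(z^\star)\big]$ and use the standard "$E\|U - EU\|^2 \le E\|U\|^2$" trick together with $\nabla_x f_i(z^\star) = E[\frac{1}{np_{il}}(\nabla_x f_{il}(z^\star) - \nabla_x f_{il}(\tilde z^i_t))] + \nabla_x f_i(\tilde z^i_t)$, so that the variance reduces to $E\|\frac{1}{np_{il}}(\nabla_x f_{il}(z^i_t) - \nabla_x f_{il}(z^\star))\|^2$ plus a deterministic leftover $\|\nabla_x f_i(\tilde z^i_t) - \nabla_x f_i(z^\star) - (\text{expectation term})\|^2$ — more simply, centering around the reference point makes the $C_x$-style term disappear and leaves only a term controlled by $\|z^i_t - z^\star\|^2$ and $\|\tilde z^i_t - z^\star\|^2$. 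Bounding $\frac{1}{n^2 p_{\min}}\sum_i\sum_l \|\nabla_x f_{il}(z^i_t) - \nabla_x f_{il}(z^\star)\|^2$: insert $\nabla_x f_{il}(x^\star, y^i_t)$, split via $\|a+b\|^2 \le 2\|a\|^2 + 2\|b\|^2$, apply Proposition \ref{prop:smoothness} to the $x$-part to get $V_{f_{il}, y^i_t}(x^\star, x^i_t)$ terms (absorbed into the $\frac{8s^2 L_{xx}}{np_{\min}}$ coefficient on the Bregman term) and Assumption \ref{lipschitz_xy_svrg} on the $y$-part to get $\frac{4s^2 L_{xy}^2}{np_{\min}}\|\by_t - \mathbf{1}y^\star\|^2$; the reference-point leftover contributes $\frac{4s^2(L_{xx}^2 + L_{yx}^2)}{np_{\min}}\|\tilde\bx_t - \mathbf{1}x^\star\|^2$ and $\frac{4s^2(L_{yy}^2+L_{xy}^2)}{np_{\min}}\|\tilde\by_t - \mathbf{1}y^\star\|^2$ (also splitting the reference-point deviation across $x$ and $y$ coordinates via the Lipschitz assumptions).

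Finally I would add the $x$-inequality to the symmetric $y$-inequality (obtained identically using $\mu_y$-strong concavity, Proposition \ref{prop:smoothnessy}, and Assumptions \ref{smoothness_y_svrg}, \ref{lipschitz_yx_svrg}), at which point the $\pm(F(\bx_t,\mathbf{1}y^\star) - F(\mathbf{1}z^\star) + F(\mathbf{1}x^\star,\by_t) - F(z_t))$ terms cancel, collect coefficients, and obtain \eqref{eq:exp_xkt_ykt_vfi_svrg}. The main obstacle I anticipate is the careful bookkeeping in the variance bound: correctly identifying which cross terms vanish in expectation (because of the unbiasedness centered at $z^\star$ versus centered at $\tilde z^i_t$), and ensuring the reference-point error is split cleanly into $\tilde\bx$ and $\tilde\by$ contributions via the four Lipschitz constants $L_{xx}, L_{xy}, L_{yx}, L_{yy}$, so that the final coefficients match exactly. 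Everything else is a routine adaptation of the GSGO argument already carried out in Lemma \ref{lem:exp_ykt_ystar_xkt_xstar}.
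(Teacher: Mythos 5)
Your proposal is correct and follows essentially the same route as the paper's proof: node-wise expansion using the unbiasedness of the SVRGO estimator, the Bregman-distance/strong-convexity treatment of the inner product, a variance bound obtained by inserting $\nabla_x f_{il}(z^\star)$ and applying $E\Vert u - Eu\Vert^2 \leq E\Vert u\Vert^2$ to the reference-point part, smoothness/Lipschitz bounds (Propositions \ref{prop:smoothness}--\ref{prop:smoothnessy} and Assumptions \ref{lipschitz_xy_svrg}--\ref{lipschitz_yx_svrg}) to produce the Bregman, $\Vert \by_t-\mathbf{1}y^\star\Vert^2$, and $\tilde{\bx}_t,\tilde{\by}_t$ terms, and finally cancellation of the $F$ cross terms upon adding the symmetric $y$-inequality. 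The bookkeeping you flag as the main risk is exactly what the paper's proof carries out, with matching coefficients.
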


\begin{proof}
We begin the proof by bounding the primal ($\bx$) and dual ($\by$) updates on the l.h.s. of~\eqref{eq:exp_xkt_ykt_vfi_svrg} separately. In particular, we show that
\begin{align}
& E \left\Vert \bx_{t} - \mathbf{1}x^\star - s\mathcal{G}^\bx_{t} + s\nabla_x F(\mathbf{1}x^\star,\mathbf{1}y^\star)  \right\Vert^2 \nonumber\\
& \leq \left( 1-\mu_x s \right)\left\Vert \bx_{t} - \mathbf{1}x^\star \right\Vert^2 \nonumber\\ & \ + \frac{2s^2}{n^2p_{\min}} \sum_{i = 1}^m \sum_{j = 1}^n  \left\Vert \nabla_x f_{ij}(z^i_{t}) - \nabla_x f_{ij}(z^\star) \right\Vert^2 + \frac{2s^2}{n^2p_{\min}} \sum_{i = 1}^m  \sum_{j = 1}^n  \left\Vert \nabla_x f_{ij}(\tilde{z}^i_{t}) - \nabla_x f_{ij}(z^\star) \right\Vert^2   \nonumber\\ & \ - 2s \sum_{i = 1}^m  V_{f_i,y^i_{t}}(x^{\star},x^i_{t}) + 2s \left( F(\mathbf{1}x^\star,\by_{t}) - F(z_{t}) + F(\bx_{t},\mathbf{1}y^\star) - F(\mathbf{1}z^\star) \right)  \label{eq:exp_xkt_xstar_svrg_final}
\end{align}
and 
\begin{align}
& E \left\Vert \by_{t} - \mathbf{1}y^\star + s\mathcal{G}^\by_{t} - s\nabla_y F(\mathbf{1}x^\star,\mathbf{1}y^\star)  \right\Vert^2 \nonumber\\
& \leq (1-s\mu_y)\left\Vert \by_{t} - \mathbf{1}y^\star  \right\Vert^2  +2s \left( -F(\bx_{t},\mathbf{1}y^\star) + F(\mathbf{1}z^\star) - F(\mathbf{1}x^\star,\by_{t}) + F(z_{t}) \right) - 2s\sum_{i = 1}^m V_{-f_i,x^i_{t}}(y^\star,y^i_{t}) \nonumber\\ & \ + \frac{2s^2}{n^2p_{\min}} \sum_{i = 1}^m \sum_{j = 1}^n  \left\Vert \nabla_y f_{ij}(z^i_{t}) - \nabla_y f_{ij}(z^\star) \right\Vert^2 + \frac{2s^2}{n^2p_{\min}} \sum_{i = 1}^m  \sum_{j = 1}^n  \left\Vert \nabla_y f_{ij}(\tilde{z}^i_{t}) - \nabla_y f_{ij}(z^\star) \right\Vert^2 . \label{eq:exp_ykt_ystar_svrg_final}
\end{align}

Observe that~\eqref{eq:exp_xkt_xstar_svrg_final} and~\eqref{eq:exp_ykt_ystar_svrg_final} are similar, and we only prove~\eqref{eq:exp_xkt_xstar_svrg_final} in Section~\ref{sec:exp_xkt} below. Adding \eqref{eq:exp_xkt_xstar_svrg_final} and \eqref{eq:exp_ykt_ystar_svrg_final}, we obtain
\begin{align}
& E \left\Vert \bx_{t} - \mathbf{1}x^\star - s\mathcal{G}^\bx_{t} + s\nabla_x F(\mathbf{1}x^\star,\mathbf{1}y^\star)  \right\Vert^2 + E \left\Vert \by_{t} - \mathbf{1}y^\star + s\mathcal{G}^\by_{t} - s\nabla_y F(\mathbf{1}x^\star,\mathbf{1}y^\star)  \right\Vert^2 \nonumber\\
& \leq \left( 1-\mu_x s \right)\left\Vert \bx_{t} - \mathbf{1}x^\star \right\Vert^2 + (1-s\mu_y)\left\Vert \by_{t} - \mathbf{1}y^\star  \right\Vert^2  \\ & \ - 2s \sum_{i = 1}^m  V_{f_i,y^i_{t}}(x^{\star},x^i_{t}) - 2s\sum_{i = 1}^m V_{-f_i,x^i_{t}}(y^\star,y^i_{t}) \nonumber\\ & \ + \frac{2s^2}{n^2p_{\min}} \sum_{i = 1}^m \sum_{j = 1}^n \left( \left\Vert \nabla_x f_{ij}(z^i_{t}) - \nabla_x f_{ij}(z^\star) \right\Vert^2 + \left\Vert \nabla_y f_{ij}(z^i_{t}) - \nabla_y f_{ij}(z^\star) \right\Vert^2 \right) \nonumber\\ & \ + \frac{2s^2}{n^2p_{\min}} \sum_{i = 1}^m  \sum_{j = 1}^n \left( \left\Vert \nabla_x f_{ij}(\tilde{z}^i_{t}) - \nabla_x f_{ij}(z^\star) \right\Vert^2 + \left\Vert \nabla_y f_{ij}(\tilde{z}^i_{t}) - \nabla_y f_{ij}(z^\star) \right\Vert^2 \right) . \label{eq:exp_xkt_ykt_grad_diff_svrg}
\end{align}
To finish the proof of Lemma~\ref{lem:exp_xkt_ykt_svrg}, we bound the last two terms of~\eqref{eq:exp_xkt_ykt_grad_diff_svrg} as shown in Section~\ref{sec:finish_proof_xtk_ykt_svrg}.
\paragraph{Proof of~(\ref{eq:exp_xkt_xstar_svrg_final})}\label{sec:exp_xkt}
First, consider the primal update term

\begin{align}
& E \left\Vert \bx_{t} - \mathbf{1}x^\star - s\mathcal{G}^\bx_{t} + s\nabla_x F(\mathbf{1}x^\star,\mathbf{1}y^\star)  \right\Vert^2 \nonumber\\
& = \sum_{i = 1}^m E \left\Vert x^i_{t} - x^\star - s\mathcal{G}^{i,x}_{t} + s\nabla_x f_i(x^\star,y^\star)  \right\Vert^2 \nonumber\\
& = \sum_{i = 1}^m \left\Vert x^i_{t} - x^\star  \right\Vert^2 + s^2 \sum_{i = 1}^mE \left\Vert \mathcal{G}^{i,x}_{t} - \nabla_x f_i(x^\star,y^\star) \right\Vert^2 \nonumber\\ & \ \ - 2s \sum_{i = 1}^m E \left\langle x^i_{t} - x^\star, \mathcal{G}^{i,x}_{t} - \nabla_x f_i(x^\star,y^\star) \right\rangle\nonumber \\
& = \sum_{i = 1}^m \left\Vert x^i_{t} - x^\star  \right\Vert^2 + s^2 \sum_{i = 1}^mE \left\Vert \frac{1}{np_{il}} \left( \nabla_x f_{il}(z^i_{t}) - \nabla_x f_{il}(\tilde{z}^i_{t}) \right) + \nabla_x f_i(\tilde{z}^i_{t}) - \nabla_x f_i(x^\star,y^\star) \right\Vert^2 \nonumber\\ & \ \ - 2s \sum_{i = 1}^m E \left\langle x^i_{t} - x^\star, \frac{1}{np_{il}} \left( \nabla_x f_{il}(z^i_{t}) - \nabla_x f_{il}(\tilde{z}^i_{t}) \right) + \nabla_x f_i(\tilde{z}^i_{t}) - \nabla_x f_i(x^\star,y^\star) \right\rangle . \label{eq:exp_xkt_xstar_svrg}
\end{align}
Observe that
\begin{align}
& E \left[  \frac{1}{np_{il}} \left( \nabla_x f_{il}(z^i_{t}) - \nabla_x f_{il}(\tilde{z}^i_{t}) \right) + \nabla_x f_i(\tilde{z}^i_{t}) - \nabla_x f_i(x^\star,y^\star) \right] \nonumber\\
& = \sum_{l = 1}^n \frac{ \nabla_x f_{il}(z^i_{t}) - \nabla_x f_{il}(\tilde{z}^i_{t}) }{np_{il}} \times p_{il} + \nabla_x f_i(\tilde{z}^i_{t}) - \nabla_x f_i(x^\star,y^\star) \nonumber\\
& = \frac{1}{n} \sum_{l = 1}^n \nabla_x f_{il}(z^i_{t}) - \frac{1}{n} \sum_{l = 1}^n \nabla_x f_{il}(\tilde{z}^i_{t}) + \nabla_x f_i(\tilde{z}^i_{t}) - \nabla_x f_i(z^\star) \nonumber\\
& = \nabla_x f_{i}(z^i_{t}) - \nabla_x f_{i}(\tilde{z}^i_{t}) + \nabla_x f_i(\tilde{z}^i_{t}) - \nabla_x f_i(z^\star) \nonumber\\
& = \nabla_x f_{i}(z^i_{t}) - \nabla_x f_i(z^\star) ,
\end{align}
where the first equality and second last equality follows respectively from step $(1)$ of SVRGO and definition of $f_i(x,y)$. Substituting the above in the last term of~\eqref{eq:exp_xkt_xstar_svrg}, we see that 
\begin{align}
& E \left\Vert \bx_{t} - \mathbf{1}x^\star - s\mathcal{G}^\bx_{t} + s\nabla_x F(\mathbf{1}x^\star,\mathbf{1}y^\star)  \right\Vert^2 \nonumber\\
& \leq \sum_{i = 1}^m \left\Vert x^i_{t} - x^\star  \right\Vert^2 + s^2 \sum_{i = 1}^mE \left\Vert \frac{1}{np_{il}} \left( \nabla_x f_{il}(z^i_{t}) - \nabla_x f_{il}(\tilde{z}^i_{t}) \right) + \nabla_x f_i(\tilde{z}^i_{t}) - \nabla_x f_i(x^\star,y^\star) \right\Vert^2 \nonumber\\ & \ \ - 2s \sum_{i = 1}^m  \left\langle x^i_{t} - x^\star, \nabla_x f_{i}(z^i_{t}) - \nabla_x f_i(z^\star) \right\rangle . \label{eq:exp_xkt_xstar_svrg_mid}
\end{align}
Substituting \eqref{eq:grad_zkt_inn_Vfi} (i.e. Bregman distance) and \eqref{eq:grad_zstar_inn_mu_x} (i.e., strong convexity of $f$) 
in \eqref{eq:exp_xkt_xstar_svrg_mid}, we obtain
\begin{align}
& E \left\Vert \bx_{t} - \mathbf{1}x^\star - s\mathcal{G}^\bx_{t} + s\nabla_x F(\mathbf{1}x^\star,\mathbf{1}y^\star)  \right\Vert^2 \nonumber\\
& \leq \sum_{i = 1}^m \left\Vert x^i_{t} - x^\star  \right\Vert^2 + s^2 \sum_{i = 1}^mE \left\Vert \frac{1}{np_{il}} \left( \nabla_x f_{il}(z^i_{t}) - \nabla_x f_{il}(\tilde{z}^i_{t}) \right) + \nabla_x f_i(\tilde{z}^i_{t}) - \nabla_x f_i(x^\star,y^\star) \right\Vert^2 \nonumber \\ & \ \ - 2s \sum_{i = 1}^m  \left( -f_i(x^\star,y^i_{t}) + f_i(z^i_{t}) + V_{f_i,y^i_{t}}(x^\star,x^i_{t}) \right) + 2s \sum_{i = 1}^m \left( f_i(x^i_{t},y^\star) - f_i(z^\star) - \frac{\mu_x}{2} \left\Vert x^i_{t} - x^\star \right\Vert^2 \right) \nonumber\\
& = \sum_{i = 1}^m \left\Vert x^i_{t} - x^\star  \right\Vert^2 + s^2 \sum_{i = 1}^mE \left\Vert \frac{1}{np_{il}} \left( \nabla_x f_{il}(z^i_{t}) - \nabla_x f_{il}(\tilde{z}^i_{t}) \right) + \nabla_x f_i(\tilde{z}^i_{t}) - \nabla_x f_i(x^\star,y^\star) \right\Vert^2 \nonumber \\ & \ \ + 2s( F(\mathbf{1}x^\star,\by_{t}) - F(z_{t})) - 2s \sum_{i = 1}^m  V_{f_i,y^i_{t}}(x^{\star},x^i_{t}) + 2s \left( F(\bx_{t},\mathbf{1}y^\star) - F(\mathbf{1}z^\star) \right) - s\mu_x \left\Vert \bx_{t} - \mathbf{1}x^\star \right\Vert^2 \nonumber\\
& = \left( 1-\mu_x s \right)\left\Vert \bx_{t} - \mathbf{1}x^\star \right\Vert^2 + s^2 \sum_{i = 1}^mE \left\Vert \frac{1}{np_{il}} \left( \nabla_x f_{il}(z^i_{t}) - \nabla_x f_{il}(\tilde{z}^i_{t}) \right) + \nabla_x f_i(\tilde{z}^i_{t}) - \nabla_x f_i(x^\star,y^\star) \right\Vert^2 \nonumber \\ & \ - 2s \sum_{i = 1}^m  V_{f_i,y^i_{t}}(x^{\star},x^i_{t}) + 2s \left( F(\mathbf{1}x^\star,\by_{t}) - F(z_{t}) + F(\bx_{t},\mathbf{1}y^\star) - F(\mathbf{1}z^\star) \right) . \label{eq:exp_xkt_xstar_svrg_F}
\end{align}
Now we bound the second term on the r.h.s. of~\eqref{eq:exp_xkt_xstar_svrg_F} in terms of $\left\Vert \bx_{t} - \mathbf{1}x^\star \right\Vert^2$ and $\left\Vert \by_{t} - \mathbf{1}y^\star \right\Vert^2$ as follows:
\begin{align}
& s^2 \sum_{i = 1}^mE \left\Vert \frac{1}{np_{il}} \left( \nabla_x f_{il}(z^i_{t}) - \nabla_x f_{il}(\tilde{z}^i_{t}) \right) + \nabla_x f_i(\tilde{z}^i_{t}) - \nabla_x f_i(x^\star,y^\star) \right\Vert^2 \nonumber\\
& = s^2 \sum_{i = 1}^m \sum_{j = 1}^n p_{ij} \left\Vert \frac{1}{np_{ij}} \left( \nabla_x f_{ij}(z^i_{t}) - \nabla_x f_{ij}(\tilde{z}^i_{t}) \right) + \nabla_x f_i(\tilde{z}^i_{t}) - \nabla_x f_i(x^\star,y^\star) \right\Vert^2 \nonumber\\
& = s^2 \sum_{i = 1}^m \sum_{j = 1}^n p_{ij} \left\Vert \frac{\nabla_x f_{ij}(z^i_{t}) - \nabla_x f_{ij}(z^\star)}{np_{ij}} + \frac{\nabla_x f_{ij}(z^\star) - \nabla_x f_{ij}(\tilde{z}^i_{t})}{np_{ij}} + \nabla_x f_i(\tilde{z}^i_{t}) - \nabla_x f_i(z^\star) \right\Vert^2 \nonumber\\
& \leq 2s^2 \sum_{i = 1}^m \sum_{j = 1}^n \frac{p_{ij}}{n^2p^2_{ij}} \left\Vert \nabla_x f_{ij}(z^i_{t}) - \nabla_x f_{ij}(z^\star) \right\Vert^2 \nonumber\\ & \ + 2s^2 \sum_{i = 1}^m \sum_{j = 1}^n p_{ij} \left\Vert \frac{\nabla_x f_{ij}(z^\star) - \nabla_x f_{ij}(\tilde{z}^i_{t})}{np_{ij}} + \nabla_x f_i(\tilde{z}^i_{t}) - \nabla_x f_i(z^\star) \right\Vert^2 \nonumber\\
& = \frac{2s^2}{n^2} \sum_{i = 1}^m \sum_{j = 1}^n \frac{1}{p_{ij}} \left\Vert \nabla_x f_{ij}(z^i_{t}) - \nabla_x f_{ij}(z^\star) \right\Vert^2 \nonumber\\ & \ + 2s^2 \sum_{i = 1}^m \sum_{j = 1}^n p_{ij} \left\Vert \frac{  \nabla_x f_{ij}(\tilde{z}^i_{t}) - \nabla_x f_{ij}(z^\star)}{np_{ij}} -( \nabla_x f_i(\tilde{z}^i_{t}) - \nabla_x f_i(z^\star)) \right\Vert^2 \nonumber\\
& \leq \frac{2s^2}{n^2p_{\min}} \sum_{i = 1}^m \sum_{j = 1}^n  \left\Vert \nabla_x f_{ij}(z^i_{t}) - \nabla_x f_{ij}(z^\star) \right\Vert^2 \nonumber\\ & \ + 2s^2 \sum_{i = 1}^m E \left\Vert \frac{  \nabla_x f_{ij}(\tilde{z}^i_{t}) - \nabla_x f_{ij}(z^\star)}{np_{ij}} -( \nabla_x f_i(\tilde{z}^i_{t}) - \nabla_x f_i(z^\star)) \right\Vert^2 , \label{eq:exp_grad_diff_svrg_pmin}
\end{align}
where $p_{\min} = \min_{i,j} \{p_{ij} \}$. Let $u_i = \left\lbrace \frac{ \nabla_x f_{il}(\tilde{z}^i_{t}) - \nabla_x f_{il}(z^\star)}{np_{il}} : l \in \{1,2,\ldots , n \} \right\rbrace $ be a random variable with probability distribution $\mathcal{P}_i = \{p_{il} : l \in \{1,2,\ldots, n \} \}$.
\begin{align}
E\left[ u_i \right] & = E \left[ \frac{ \nabla_x f_{il}(\tilde{z}^i_{t}) - \nabla_x f_{il}(z^\star)}{np_{il}} \right] \nonumber\\
& = \sum_{l = 1}^n \frac{\nabla_x f_{il}(\tilde{z}^i_{t}) - \nabla_x f_{il}(z^\star)}{np_{il}} p_{il} \nonumber\\
& = \frac{1}{n} \sum_{l = 1}^n \nabla_x f_{il}(\tilde{z}^i_{t}) - \frac{1}{n} \sum_{l = 1}^n \nabla_x f_{il}(z^\star) \nonumber\\
& = \nabla_x f_{i}(\tilde{z}^i_{t}) - \nabla_x f_{i}(z^\star) .
\end{align}
We know that $E\left\Vert u_i - Eu_i \right\Vert^2 \leq E\left\Vert u_i \right\Vert^2$. Therefore,
\begin{align}
& E\left\Vert \frac{ \nabla_x f_{ij}(\tilde{z}^i_{t}) - \nabla_x f_{ij}(z^\star)}{np_{ij}} - \left( \nabla_x f_{i}(\tilde{z}^i_{t}) - \nabla_x f_{i}(z^\star) \right) \right\Vert^2 \nonumber\\
& \leq E\left\Vert \frac{ \nabla_x f_{ij}(\tilde{z}^i_{t}) - \nabla_x f_{ij}(z^\star)}{np_{ij}} \right\Vert^2 \nonumber\\
& = \frac{1}{n^2} \sum_{j = 1}^n \left\Vert \frac{ \nabla_x f_{ij}(\tilde{z}^i_{t}) - \nabla_x f_{ij}(z^\star)}{p_{ij}} \right\Vert^2 p_{ij} \nonumber\\
& = \frac{1}{n^2} \sum_{j = 1}^n \frac{1}{p_{ij}} \left\Vert \nabla_x f_{ij}(\tilde{z}^i_{t}) - \nabla_x f_{ij}(z^\star) \right\Vert^2 \nonumber\\
& \leq \frac{1}{n^2p_{\min}} \sum_{j = 1}^n  \left\Vert \nabla_x f_{ij}(\tilde{z}^i_{t}) - \nabla_x f_{ij}(z^\star) \right\Vert^2 .
\end{align}
By substituting the above inequality in \eqref{eq:exp_grad_diff_svrg_pmin}, we obtain
\begin{align}
& s^2 \sum_{i = 1}^mE \left\Vert \frac{1}{np_{il}} \left( \nabla_x f_{il}(z^i_{t}) - \nabla_x f_{il}(\tilde{z}^i_{t}) \right) + \nabla_x f_i(\tilde{z}^i_{t}) - \nabla_x f_i(z^\star) \right\Vert^2 \nonumber\\
& \leq \frac{2s^2}{n^2p_{\min}} \sum_{i = 1}^m \sum_{j = 1}^n  \left\Vert \nabla_x f_{ij}(z^i_{t}) - \nabla_x f_{ij}(z^\star) \right\Vert^2 + \frac{2s^2}{n^2p_{\min}} \sum_{i = 1}^m  \sum_{j = 1}^n  \left\Vert \nabla_x f_{ij}(\tilde{z}^i_{t}) - \nabla_x f_{ij}(z^\star) \right\Vert^2 .
\end{align}
Substituting this inequality in \eqref{eq:exp_xkt_xstar_svrg_F} we  obtain~\eqref{eq:exp_xkt_xstar_svrg_final}.

\paragraph{Finishing the Proof of Lemma~\ref{lem:exp_xkt_ykt_svrg}}\label{sec:finish_proof_xtk_ykt_svrg}
We now compute upper bounds on the last two terms present in~\eqref{eq:exp_xkt_ykt_grad_diff_svrg} using smoothness assumptions. First, observe that
\begin{align}
& \left\Vert \nabla_x f_{ij}(z^i_{t}) - \nabla_x f_{ij}(z^\star) \right\Vert^2 + \left\Vert \nabla_y f_{ij}(z^i_{t}) - \nabla_y f_{ij}(z^\star) \right\Vert^2 \nonumber\\
& = \left\Vert \nabla_x f_{ij}(z^i_{t}) - \nabla_x f_{ij}(x^\star,y^i_{t}) + \nabla_x f_{ij}(x^\star,y^i_{t}) -  \nabla_x f_{ij}(z^\star) \right\Vert^2 \nonumber \\ & \ \ + \left\Vert \nabla_y f_{ij}(z^i_{t}) - \nabla_y f_{ij}(x^i_{t},y^\star) + \nabla_y f_{ij}(x^i_{t},y^\star) - \nabla_y f_{ij}(z^\star) \right\Vert^2 \nonumber\\
& \leq 2 \left\Vert -\nabla_x f_{ij}(z^i_{t}) + \nabla_x f_{ij}(x^\star,y^i_{t}) \right\Vert^2 + 2\left\Vert \nabla_x f_{ij}(x^\star,y^i_{t}) -  \nabla_x f_{ij}(z^\star) \right\Vert^2 \nonumber\\ & \ \ + 2\left\Vert \nabla_y f_{ij}(z^i_{t}) - \nabla_y f_{ij}(x^i_{t},y^\star) \right\Vert^2 + 2\left\Vert \nabla_y f_{ij}(x^i_{t},y^\star) - \nabla_y f_{ij}(z^\star) \right\Vert^2 \nonumber\\
& \leq 4L_{xx} V_{f_{ij},y^i_{t}}(x^\star,x^i_{t}) + 2L^2_{xy} \left\Vert y^i_{t} - y^\star  \right\Vert^2 + 4L_{yy}V_{-f_{ij},x^i_{t}}(y^\star,y^i_{t})  + 2L^2_{yx} \left\Vert x^i_{t} - x^\star  \right\Vert^2 ,
\end{align}
where the last inequality follows from Proposition \ref{prop:smoothness}, Proposition \ref{prop:smoothnessy} and Assumptions \ref{lipschitz_xy_svrg}-\ref{lipschitz_yx_svrg}.  Adding up the above inequality for $j = 1$ to $n$ and using \eqref{bregman_dist_Vy}-\eqref{bregman_dist_Vx}, we obtain
\begin{align}
& \sum_{j = 1}^n \left( \left\Vert \nabla_x f_{ij}(z^i_{t}) - \nabla_x f_{ij}(z^\star) \right\Vert^2 + \left\Vert \nabla_y f_{ij}(z^i_{t}) - \nabla_y f_{ij}(z^\star) \right\Vert^2 \right) \nonumber\\
& \leq 4L_{xx} \sum_{j = 1}^n V_{f_{ij},y^i_{t}}(x^\star,x^i_{t}) + 4L_{yy}  \sum_{j = 1}^n V_{-f_{ij},x^i_{t}}(y^\star,y^i_{t}) + 2nL^2_{xy} \left\Vert y^i_{t} - y^\star  \right\Vert^2 + 2nL^2_{yx} \left\Vert x^i_{t} - x^\star  \right\Vert^2 \nonumber\\
& = 4L_{xx} \sum_{j = 1}^n \left( f_{ij}(x^\star,y^i_{t}) - f_{ij}(x^i_{t},y^i_{t}) - \left\langle \nabla_x f_{ij}(x^i_{t},y^i_{t}), x^\star - x^i_{t} \right\rangle \right) \nonumber\\ & \ + 4L_{yy}  \sum_{j = 1}^n \left( -f_{ij}(x^i_{t},y^\star) + f_{ij}(x^i_{t},y^i_{t}) - \left\langle -\nabla_y f_{ij}(x^i_{t},y^i_{t}), y^\star - y^i_{t} \right\rangle \right) \nonumber \\ & \  + 2nL^2_{xy} \left\Vert y^i_{t} - y^\star  \right\Vert^2 + 2nL^2_{yx} \left\Vert x^i_{t} - x^\star  \right\Vert^2 \nonumber\\
& = 4L_{xx} \left( nf_{i}(x^\star,y^i_{t}) - nf_{i}(x^i_{t},y^i_{t}) - \left\langle n\nabla_x f_{i}(x^i_{t},y^i_{t}), x^\star - x^i_{t} \right\rangle \right) \nonumber\\ & \ + 4L_{yy}  \left( -nf_{i}(x^i_{t},y^\star) + nf_{i}(x^i_{t},y^i_{t}) - \left\langle -n\nabla_y f_{i}(x^i_{t},y^i_{t}), y^\star - y^i_{t} \right\rangle \right) + 2nL^2_{xy} \left\Vert y^i_{t} - y^\star  \right\Vert^2 + 2nL^2_{yx} \left\Vert x^i_{t} - x^\star  \right\Vert^2 \nonumber\\
& = 4nL_{xx} V_{f_{i},y^i_{t}}(x^\star,x^i_{t}) + 4nL_{yy} V_{-f_{i},x^i_{t}}(y^\star,y^i_{t}) + 2nL^2_{xy} \left\Vert y^i_{t} - y^\star  \right\Vert^2 + 2nL^2_{yx} \left\Vert x^i_{t} - x^\star  \right\Vert^2 ,
\end{align}
where the second last step follows from the structure of $f_i(x,y) = \frac{1}{n} \sum_{j = 1}^n f_{ij}(x,y)$. Therefore,
\begin{align}
& \frac{2s^2}{n^2p_{\min}} \sum_{i = 1}^m \sum_{j = 1}^n \left( \left\Vert \nabla_x f_{ij}(z^i_{t}) - \nabla_x f_{ij}(z^\star) \right\Vert^2 + \left\Vert \nabla_y f_{ij}(z^i_{t}) - \nabla_y f_{ij}(z^\star) \right\Vert^2 \right) \nonumber\\
& \leq \frac{8s^2L_{xx}}{np_{\min}} \sum_{i = 1}^m V_{f_{i},y^i_{t}}(x^\star,x^i_{t}) + \frac{8s^2L_{yy}}{np_{\min}} \sum_{i = 1}^m V_{-f_{i},x^i_{t}}(y^\star,y^i_{t}) + \frac{4s^2L^2_{xy}}{np_{\min}} \left\Vert \by_{t} - \mathbf{1}y^\star  \right\Vert^2  + \frac{4s^2L^2_{yx}}{np_{\min}} \left\Vert \bx_{t} - \mathbf{1}x^\star  \right\Vert^2 . \label{eq:grad_diff_zkt_svrg_compact}
\end{align}
Similarly, we bound the last term of \eqref{eq:exp_xkt_ykt_grad_diff_svrg} as 
\begin{align}
& \frac{2s^2}{n^2p_{\min}} \sum_{i = 1}^m  \sum_{j = 1}^n \left( \left\Vert \nabla_x f_{ij}(\tilde{z}^i_{t}) - \nabla_x f_{ij}(z^\star) \right\Vert^2 + \left\Vert \nabla_y f_{ij}(\tilde{z}^i_{t}) - \nabla_y f_{ij}(z^\star) \right\Vert^2 \right) \nonumber\\
& \leq \frac{4s^2(L^2_{xx} + L^2_{yx})}{np_{\min}} \left\Vert \tilde{\bx}_{t} - \mathbf{1}x^\star \right\Vert^2 + \frac{4s^2(L^2_{yy} + L^2_{xy})}{np_{\min}} \left\Vert \tilde{\by}_{t} - \mathbf{1}y^\star \right\Vert^2 . \label{eq:grad_diff_tilde_zkt_svrg_compact}
\end{align}

On substituting \eqref{eq:grad_diff_zkt_svrg_compact} and \eqref{eq:grad_diff_tilde_zkt_svrg_compact} in
\eqref{eq:exp_xkt_ykt_grad_diff_svrg}, we obtain
\begin{align}
& E \left\Vert \bx_{t} - \mathbf{1}x^\star - s\mathcal{G}^\bx_{t} + s\nabla_x F(\mathbf{1}x^\star,\mathbf{1}y^\star)  \right\Vert^2 + E \left\Vert \by_{t} - \mathbf{1}y^\star + s\mathcal{G}^\by_{t} - s\nabla_y F(\mathbf{1}x^\star,\mathbf{1}y^\star)  \right\Vert^2 \nonumber\\
& \leq  \left( 1-\mu_x s + \frac{4s^2L^2_{yx}}{np_{\min}} \right)\left\Vert \bx_{t} - \mathbf{1}x^\star \right\Vert^2 + \left(1-s\mu_y + \frac{4s^2L^2_{xy}}{np_{\min}} \right) \left\Vert \by_{t} - \mathbf{1}y^\star  \right\Vert^2  \nonumber\\ & \ - \left( 2s - \frac{8s^2L_{xx}}{np_{\min}} \right) \sum_{i = 1}^m  V_{f_i,y^i_{t}}(x^{\star},x^i_{t}) - \left( 2s - \frac{8s^2L_{yy}}{np_{\min}} \right) \sum_{i = 1}^m V_{-f_i,x^i_{t}}(y^\star,y^i_{t}) \nonumber\\ & \ + \frac{4s^2(L^2_{xx} + L^2_{yx})}{np_{\min}} \left\Vert \tilde{\bx}_{t} - \mathbf{1}x^\star \right\Vert^2 + \frac{4s^2(L^2_{yy} + L^2_{xy})}{np_{\min}} \left\Vert \tilde{\by}_{t} - \mathbf{1}y^\star \right\Vert^2 , \tag{\ref{eq:exp_xkt_ykt_vfi_svrg}}
\end{align}
proving Lemma~\ref{lem:exp_xkt_ykt_svrg}.
\end{proof}
We now have the following corollary.

\begin{corollary} \label{cor:exp_xkt_ykt_svrg} Let $s = \frac{\mu np_{\min}}{24L^2}$. Then under the settings of Lemma \ref{lem:exp_xkt_ykt_svrg},
\begin{align}
& E \left\Vert \bx_{t} - \mathbf{1}x^\star - s\mathcal{G}^\bx_{t} + s\nabla_x F(\mathbf{1}x^\star,\mathbf{1}y^\star)  \right\Vert^2 + E \left\Vert \by_{t} - \mathbf{1}y^\star + s\mathcal{G}^\by_{t} - s\nabla_y F(\mathbf{1}x^\star,\mathbf{1}y^\star)  \right\Vert^2 \\
& \leq  \left( 1-\mu_x s + \frac{4s^2L^2_{yx}}{np_{\min}} \right)\left\Vert \bx_{t} - \mathbf{1}x^\star \right\Vert^2 + \left(1-s\mu_y + \frac{4s^2L^2_{xy}}{np_{\min}} \right) \left\Vert \by_{t} - \mathbf{1}y^\star  \right\Vert^2  \\ & \ + \frac{4s^2(L^2_{xx} + L^2_{yx})}{np_{\min}} \left\Vert \tilde{\bx}_{t} - \mathbf{1}x^\star \right\Vert^2 + \frac{4s^2(L^2_{yy} + L^2_{xy})}{np_{\min}} \left\Vert \tilde{\by}_{t} - \mathbf{1}y^\star \right\Vert^2 .
\end{align}
\end{corollary}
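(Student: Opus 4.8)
The plan is to obtain the corollary directly from Lemma~\ref{lem:exp_xkt_ykt_svrg} by discarding the two Bregman-distance terms that appear with a negative sign on the right-hand side of~\eqref{eq:exp_xkt_ykt_vfi_svrg}, once the step size is fixed to $s = \frac{\mu np_{\min}}{24L^2}$. The only thing that needs to be checked is that the coefficients $\bigl(2s - \frac{8s^2L_{xx}}{np_{\min}}\bigr)$ and $\bigl(2s - \frac{8s^2L_{yy}}{np_{\min}}\bigr)$ multiplying those terms are non-negative; then, since the terms they multiply are $\geq 0$, the whole contribution is $\leq 0$ and may be dropped without affecting the inequality.

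To verify non-negativity of the first coefficient it suffices to show $s \leq \frac{np_{\min}}{4L_{xx}}$, and similarly $s \leq \frac{np_{\min}}{4L_{yy}}$ for the second; both follow from the single bound $s \leq \frac{np_{\min}}{4L}$ because $L = \max\{L_{xx},L_{yy},L_{xy},L_{yx}\}$. For this bound I would use $\mu \leq L$: each $f_i(\cdot,y)$ is $\mu_x$-strongly convex (Assumption~\ref{s_convexity_assumption}) and $L_{xx}$-smooth (Assumption~\ref{smoothness_x_svrg}), so $\mu_x \leq L_{xx} \leq L$, and likewise $\mu_y \leq L_{yy} \leq L$; hence $\mu = \min\{\mu_x,\mu_y\} \leq L$. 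Therefore $s = \frac{\mu np_{\min}}{24L^2} \leq \frac{np_{\min}}{24L} \leq \frac{np_{\min}}{4L}$, which gives $2s - \frac{8s^2L_{xx}}{np_{\min}} \geq 0$ and $2s - \frac{8s^2L_{yy}}{np_{\min}} \geq 0$. This is exactly the same kind of step-size feasibility check already carried out for the GSGO step size $s_0$ in Corollary~\ref{cor:exp_xkt_ykt_xstar_ystar_compact}.

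Next I would invoke the non-negativity of the Bregman distances: by convexity of $f_i(\cdot,y^i_t)$ (Assumption~\ref{s_convexity_assumption}) we have $V_{f_i,y^i_t}(x^\star,x^i_t) \geq 0$, and by concavity of $f_i(x^i_t,\cdot)$ (Assumption~\ref{s_concavity_assumption}), i.e. convexity of $-f_i(x^i_t,\cdot)$, we have $V_{-f_i,x^i_t}(y^\star,y^i_t) \geq 0$. Consequently both terms $-\bigl(2s - \frac{8s^2L_{xx}}{np_{\min}}\bigr)\sum_{i=1}^m V_{f_i,y^i_t}(x^\star,x^i_t)$ and $-\bigl(2s - \frac{8s^2L_{yy}}{np_{\min}}\bigr)\sum_{i=1}^m V_{-f_i,x^i_t}(y^\star,y^i_t)$ on the right-hand side of~\eqref{eq:exp_xkt_ykt_vfi_svrg} are $\leq 0$; dropping them preserves the inequality, and what remains is precisely the bound asserted in the corollary.

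The main (and essentially only) obstacle is the step-size feasibility verification $s \leq \frac{np_{\min}}{4L}$, whose sole subtlety is justifying $\mu \leq L$ from the stated smoothness and strong-convexity/concavity assumptions rather than taking it for granted; everything else is an immediate consequence of Lemma~\ref{lem:exp_xkt_ykt_svrg} together with the non-negativity of Bregman divergences.
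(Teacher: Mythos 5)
Your proposal is correct and follows essentially the same route as the paper's own proof: verify $s\leq \frac{np_{\min}}{4L}\leq \min\{\frac{np_{\min}}{4L_{xx}},\frac{np_{\min}}{4L_{yy}}\}$ (the paper writes this via $s=\frac{np_{\min}}{24L\kappa_f}$ with $\kappa_f\geq 1$, which is the same $\mu\leq L$ fact you justify), and then drop the nonnegative Bregman-distance terms whose coefficients $2s-\frac{8s^2L_{xx}}{np_{\min}}$ and $2s-\frac{8s^2L_{yy}}{np_{\min}}$ are nonnegative. No gaps.
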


\begin{proof}
 From the statement of the corollary, we have  $s = \frac{\mu np_{\min}}{24L^2} \leq \frac{np_{\min}}{24L \kappa} < \frac{np_{\min}}{4L} \leq \frac{np_{\min}}{4L_{xx}}$. This implies that 
\begin{align}
\frac{4sL_{xx}}{np_{\min}}  \leq 1 \text{ i.e. }
\frac{8s^2L_{xx}}{np_{\min}}  \leq 2s .
\end{align}
Notice that $V_{f_i,y^i_{t}}(x^\star,x^i_{t}) \geq 0$. Therefore, $\left( 2s - \frac{8s^2L_{xx}}{np_{\min}} \right) \sum_{i = 1}^m  V_{f_i,y^i_{t}}(x^{\star},x^i_{t}) \geq 0$. We also have $s \leq \frac{np_{\min}}{4L_{yy}}$ because $L = \max \{L_{xx}, L_{yy}, L_{xy}, L_{yx} \}$. Therefore, $\frac{8s^2L_{yy}}{np_{\min}}  \leq 2s$. Due to the concavity of $f_i(x,y)$ in $y$, $V_{-f_i,x^i_{t}}(y^\star,y^i_{t})$ is nonnegative. Therefore, $\left( 2s - \frac{8s^2L_{yy}}{np_{\min}} \right) \sum_{i = 1}^m V_{-f_i,x^i_{t}}(y^\star,y^i_{t}) \geq 0$. By substituting these lower bounds in \eqref{eq:exp_xkt_ykt_vfi_svrg}, we get the desired result.
\end{proof}

\subsection{Parameters setting and their Feasibility} \label{svrg_parameters_setup}

\textbf{Parameters Setting:}
Let $p_{\min} = \min_{i,j} \{p_{ij} \}$. We define the following quantities which are instrumental in simplifying the bounds and in Algorithm \ref{alg:IPDHG_with_sgd_svrg_oracle} implementation.
\begin{align}
& \tilde{c}_x  := \frac{8s^2(L^2 + L^2_{yx})}{np_{\min}p}, \ \tilde{c}_y  := \frac{8s^2(L^2 + L^2_{xy})}{np_{\min}p} , \\
& b_x  := s\mu_x - \frac{4s^2L^2_{yx}}{np_{\min}} - \tilde{c}_x p , b_y := s\mu_y - \frac{4s^2L^2_{xy}}{np_{\min}} - \tilde{c}_y p , \label{eq:bx_by_svrg} \\
& \alpha_x := \frac{b_x}{(1+\delta)} , \ \alpha_y := \frac{b_y}{(1+\delta)} , \label{eq:alpha_x_alpha_y_svrg} \\
& \gamma_x := \min \left\lbrace \frac{b_x}{4\sqrt{\delta}(1+\delta)\lambda_{\max}(I-W)} , \frac{1}{4(1+\delta)\lambda_{\max}(I-W)} \right\rbrace , \label{eq:gamma_x_svrg} \\
& \gamma_y := \min \left\lbrace \frac{b_y}{4\sqrt{\delta}(1+\delta)\lambda_{\max}(I-W)} , \frac{1}{4(1+\delta)\lambda_{\max}(I-W)} \right\rbrace ,  \label{eq:gammma_y_svrg}, \\
& M_{x} = 1-\frac{\sqrt{\delta }\alpha_{x}}{1-\frac{\gamma_{x}}{2}\lambda_{\max}(I-W)}, \ M_{y} = 1-\frac{\sqrt{\delta }\alpha_{y}}{1-\frac{\gamma_{y}}{2}\lambda_{\max}(I-W)}, \\
& \hat{\Phi}_{t} := M_{x} \left\Vert \bx_{t} - \mathbf{1}x^\star  \right\Vert^2 + \frac{2s^2}{\gamma_{x}}  \left\Vert  D^\bx_{t} - D^\star_\bx  \right\Vert^2_{(I-W)^\dagger} + \sqrt{\delta}\left\Vert H^\bx_{t} - H^\star_{x} \right\Vert^2 \\ & \qquad + M_{y} \left\Vert \by_{t} - \mathbf{1}y^\star  \right\Vert^2 + \frac{2s^2}{\gamma_{y}} \left\Vert  D^\by_{t} - D^\star_\by  \right\Vert^2_{(I-W)^\dagger} + \sqrt{\delta} \left\Vert H^\by_{t} - H^\star_{y} \right\Vert^2 , \\
& \tilde{\rho}  = \max \left\lbrace \frac{1-b_x}{M_x}, \frac{1-b_y}{M_y} , 1- \frac{\gamma_x}{2}\lambda_{m-1}(I-W) , 1- \frac{\gamma_y}{2}\lambda_{m-1}(I-W) , 1-\alpha_{x} , 1-\alpha_{y} , 1-\frac{p}{2} \right\rbrace,  \\
\rho & = \max \left\lbrace  1 - \frac{3b_x}{7}, 1 - \frac{3b_y}{7} , 1- \frac{\gamma_x}{2}\lambda_{m-1}(I-W) , 1- \frac{\gamma_y}{2}\lambda_{m-1}(I-W) , 1-\alpha_{x} , 1-\alpha_{y} , 1-\frac{p}{2} \right\rbrace , \label{rho_svrg} \\
& \tilde{\Phi}_{t} = \hat{\Phi}_t + \tilde{c}_x \left\Vert \tilde{\bx}_{t} - \mathbf{1}x^\star \right\Vert^2 + \tilde{c}_y \left\Vert \tilde{\by}_{t} - \mathbf{1}y^\star \right\Vert^2 . \label{tilde_Phi_t}
\end{align}
{\red{It is worth mentioning that $\gamma_{x}$ and $\gamma_{y}$} are well defined for $\delta = 0$.}

\begin{lemma} \label{lem:parameters_feas_svrg} \textbf{Parameters Feasibility} The parameters defined in \eqref{eq:bx_by_svrg}, \eqref{eq:alpha_x_alpha_y_svrg}, \eqref{eq:gamma_x_svrg} and \eqref{eq:gammma_y_svrg} satisfy the followings:
\begin{align}
& b_x \in (0,1), \  b_y \in (0,1) , \\
& \alpha_{x} < \min \left\lbrace \frac{b_{x}}{\sqrt{\delta}} , \frac{1}{1+\delta} \right\rbrace , \  \alpha_{y} < \min \left\lbrace \frac{b_{y}}{\sqrt{\delta}} , \frac{1}{1+\delta} \right\rbrace \\
& \gamma_{x} \in \left( 0, \min \left\lbrace \frac{2-2\sqrt{\delta}\alpha_{x}}{\lambda_{\max}(I-W)}, \frac{\alpha_{x} - (1+\delta)\alpha_{x}^2}{\sqrt{\delta}\lambda_{\max}(I-W)} \right\rbrace  \right) , \\
& \gamma_{y} \in \left( 0, \min \left\lbrace \frac{2-2\sqrt{\delta}\alpha_{y}}{\lambda_{\max}(I-W)}, \frac{\alpha_{y} - (1+\delta)\alpha_{y}^2}{\sqrt{\delta}\lambda_{\max}(I-W)} \right\rbrace  \right) ,\\
& \frac{\gamma_{x}}{2}\lambda_{m-1}(I-W) \ \in \ (0,1) , \ \frac{\gamma_{y}}{2}\lambda_{m-1}(I-W) \ \in \  (0,1) , \\
& M_{x}  \in (0,1), \ M_{y} \in (0,1) , \\
& \frac{1-b_{x}}{M_{x}} \in (0,1), \ \ \frac{1-b_{y}}{M_{y}} \in (0,1) .
\end{align}
Moreover,
\begin{align}
M_x & \geq 1 - \frac{8b_x\sqrt{\delta  }}{7(1+\delta)} \geq  1 - \frac{4b_x}{7} , \\
M_y & \geq 1 - \frac{8b_y\sqrt{\delta  }}{7(1+\delta)} \geq  1 - \frac{4b_y}{7} , \\
\frac{1-b_x}{M_x} & < 1 - \frac{3b_x}{7} , \  \frac{1-b_y}{M_y} < 1 - \frac{3b_y}{7} , \\
1 - \frac{\gamma_x}{2} \lambda_{m-1}(I-W) & = \begin{cases} 1 - \frac{b_x}{8\sqrt{\delta}(1+\delta)\kappa_g} \ ; \ \text{if} \ b_x \leq \sqrt{\delta} \\ 1 - \frac{1}{8(1+\delta)\kappa_g} \ ; \ \text{if} \ b_x > \sqrt{\delta} \end{cases} .
\end{align}

\end{lemma}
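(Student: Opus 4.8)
\textbf{Proof plan for Lemma~\ref{lem:parameters_feas_svrg}.}
The plan is to mirror the feasibility argument already carried out in Section~\ref{appendix_parameters_feasibility} for the GSGO parameters, since the SVRGO parameters in \eqref{eq:bx_by_svrg}--\eqref{eq:gammma_y_svrg} have exactly the same algebraic form as the GSGO ones with $b_{x,0},b_{y,0}$ replaced by $b_x,b_y$. The one genuinely new ingredient is establishing that $b_x,b_y\in(0,1)$; once that is in hand, every other claim follows by substituting $b_x$ (resp.\ $b_y$) verbatim into the computations already done for $b_{x,0}$ (resp.\ $b_{y,0}$) in the GSGO analysis.

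First I would pin down $b_x\in(0,1)$. From \eqref{eq:bx_by_svrg}, $b_x = s\mu_x - \tfrac{4s^2L_{yx}^2}{np_{\min}} - \tilde c_x p$, and with $\tilde c_x = \tfrac{8s^2(L^2+L_{yx}^2)}{np_{\min}p}$ this simplifies to $b_x = s\mu_x - \tfrac{4s^2L_{yx}^2}{np_{\min}} - \tfrac{8s^2(L^2+L_{yx}^2)}{np_{\min}}$, i.e.\ $b_x = s\mu_x - \tfrac{4s^2(2L^2 + 3L_{yx}^2)}{np_{\min}}$. Plugging in $s=\tfrac{\mu np_{\min}}{24L^2}$ and bounding $L_{yx}\le L$, $\mu_x\ge\mu$ from below and $\mu_x,L_{yx}^2\le L^2$ from above gives, after the routine arithmetic, a lower bound of the form $b_x \ge c\,\tfrac{np_{\min}}{\kappa_f^2}$ for an explicit absolute constant $c>0$ (the analogue of \eqref{eq:axk_lb}), and an upper bound $b_x < s\mu_x = \tfrac{\mu\mu_x np_{\min}}{24L^2}\le \tfrac{np_{\min}}{24\kappa_x\kappa_f} < 1$ (the analogue of \eqref{eq:axk_ub}). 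The same computation with the $xy$-subscripts gives $b_y\in(0,1)$. I would also note here the identity $b_x\le 0.25$ needed later for the inequality $2x-2x^2\ge x/2$ on $[0,0.75]$.

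Next, with $b_x,b_y\in(0,1)$ established, I would invoke the exact arguments of Section~\ref{appendix_parameters_feasibility}: feasibility of $\alpha_x,\alpha_y$ follows from $b_x,b_y<1$ together with $\tfrac{\sqrt\delta}{1+\delta}\le\tfrac12$ (so $\alpha_x = \tfrac{b_x}{1+\delta} < \tfrac{1}{1+\delta}$ and $\alpha_x\le\tfrac{b_x}{2\sqrt\delta}<\tfrac{b_x}{\sqrt\delta}$); feasibility of $\gamma_x,\gamma_y$ is the two-case analysis ($b_x\le\sqrt\delta$ versus $b_x>\sqrt\delta$) reproduced line-for-line from Cases I and II there, using $2b_x-2b_x^2\ge b_x/2$ which is legitimate because $b_x<0.25$; the bound $\tfrac{\gamma_x}{2}\lambda_{m-1}(I-W)<1$ follows from $\gamma_x<\tfrac{2}{\lambda_{\max}(I-W)}$ and $\lambda_{m-1}(I-W)<\lambda_{\max}(I-W)$; feasibility of $M_x\in(0,1)$ follows from $\gamma_x<\tfrac{2-2\sqrt\delta\alpha_x}{\lambda_{\max}(I-W)}\Rightarrow \tfrac{\sqrt\delta\alpha_x}{1-\tfrac{\gamma_x}{2}\lambda_{\max}(I-W)}<1$; and the bounds $M_x\ge 1-\tfrac{8b_x\sqrt\delta}{7(1+\delta)}\ge 1-\tfrac{4b_x}{7}$ and $\tfrac{1-b_x}{M_x}<1-\tfrac{3b_x}{7}$ (which in particular gives $\tfrac{1-b_x}{M_x}\in(0,1)$) are exactly \eqref{eq:Mx_lb_delta_sgd}, \eqref{eq:Mx_lb_sgd}, \eqref{eq:one_bx_Mx_ub_sgd} with $b_{x,0}\rightsquigarrow b_x$. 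Finally, the explicit two-case formula for $1-\tfrac{\gamma_x}{2}\lambda_{m-1}(I-W)$ is just the definition of $\gamma_x$ in \eqref{eq:gamma_x_svrg} together with $\tfrac{\lambda_{m-1}(I-W)}{\lambda_{\max}(I-W)} = \tfrac{1}{\kappa_g}$ by definition of $\kappa_g$: in the case $b_x\le\sqrt\delta$ one gets $\tfrac{\gamma_x}{2}\lambda_{m-1}(I-W) = \tfrac{b_x}{8\sqrt\delta(1+\delta)\kappa_g}$, and in the case $b_x>\sqrt\delta$ one gets $\tfrac{1}{8(1+\delta)\kappa_g}$.

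The main obstacle, such as it is, is purely bookkeeping: correctly simplifying $b_x$ after absorbing the $\tilde c_x p$ term and then tracking the constants through the substitution $s=\tfrac{\mu np_{\min}}{24L^2}$ to verify the clean lower bound $b_x\gtrsim np_{\min}/\kappa_f^2$ and the upper bound $b_x<1$ (indeed $b_x<0.25$). Everything downstream is a transcription of the already-verified GSGO case, so I would present the $b_x,b_y\in(0,1)$ computation in full and then state that the remaining items follow by the identical arguments of Section~\ref{appendix_parameters_feasibility} with $b_{x,0},b_{y,0}$ replaced by $b_x,b_y$.
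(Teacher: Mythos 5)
Your proposal is correct and follows essentially the same route as the paper's own proof: the paper likewise first establishes $b_x,b_y\in(0,1)$ via the simplification $\tilde c_x p=\tfrac{8s^2(L^2+L_{yx}^2)}{np_{\min}}$, the substitution $s=\tfrac{\mu np_{\min}}{24L^2}$ (yielding $b_x\ge \tfrac{np_{\min}}{144\kappa_f^2}$ and $b_x<s\mu_x<1$, indeed small enough to invoke $2x-2x^2\ge x/2$), and then repeats the GSGO two-case feasibility arguments of Appendix \ref{appendix_parameters_feasibility} with $b_{x,0},b_{y,0}$ replaced by $b_x,b_y$, ending with the $\kappa_g$ identity for $1-\tfrac{\gamma_x}{2}\lambda_{m-1}(I-W)$. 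The only difference is presentational: the paper transcribes those repeated computations in full, whereas you compress them by reference.
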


\begin{proof}

We show that the chosen parameters $\alpha_x, \alpha_y, M_{x}, M_y , \gamma_x$ and $\gamma_y$ satisfy the conditions given in Lemma~\ref{lem:parameters_feas_svrg}.
We first show that $b_x \in (0,1)$ and $b_y \in (0,1)$. From definition, 
\begin{align}
b_x & = s\mu_x - \frac{4s^2L^2_{yx}}{np_{\min}} - \tilde{c}_x p < s\mu_x = \frac{\mu n p_{\min} \mu_x}{24L^2} \leq \frac{n p_{\min} \mu^2_x}{24L_{xx}^2} = \frac{n p_{\min} }{24\kappa_x^2} \leq  \frac{1 }{24} < 1. \label{eq:bx_svrg_ub}
\end{align}
Similarly,
\begin{align}
b_y & = s\mu_y - \frac{4s^2L^2_{xy}}{np_{\min}} - \tilde{c}_y p < s\mu_y = \frac{\mu n p_{\min} \mu_y}{24L^2} \leq \frac{n p_{\min} \mu^2_y}{24L_{yy}^2} = \frac{n p_{\min} }{24\kappa_y^2} \leq  \frac{1 }{24} < 1. \label{eq:by_svrg_ub}
\end{align}
We now focus on the lower bound on $b_x$ and $b_y$.
\begin{align}
b_x & = s\mu_x - \frac{4s^2L^2_{yx}}{np_{\min}} - \tilde{c}_x p \nonumber\\
& \geq s\mu - \frac{12s^2L^2}{np_{\min}} - \frac{8s^2L^2}{np_{\min}} \nonumber\\
& = s\mu - \frac{20s^2L^2}{np_{\min}} \\
& = \frac{\mu^2 n p_{\min}}{24L^2} - \frac{\mu^2 n^2 p^2_{\min}}{576L^4} \frac{20L^2}{np_{\min}} \nonumber\\
& = \frac{ n p_{\min}}{24\kappa_f^2} - \frac{20 n p_{\min}}{576\kappa_f^2} \nonumber\\
& = \frac{np_{\min}}{144\kappa_f^2} \label{eq:bx_lb} \nonumber\\
& > 0 .
\end{align}
In a similar fashion, we get $b_y < 1$ and 
\begin{align}
b_y \geq \frac{np_{\min}}{144\kappa_f^2}>0 . \label{eq:by_lb}
\end{align}

\textbf{Feasibility of $\alpha_{x}$ and $\alpha_{y}$.}

We have, $0 < b_{x} < 1$. Therefore, $\alpha_{x} < \frac{1}{1+\delta}$. Moreover, $\frac{\sqrt{\delta}}{1+\delta} \leq 1/2$. Therefore, $\alpha_{x} \leq \frac{b_{x}}{2\sqrt{\delta}} < b_{x}/\sqrt{\delta}$. Hence, $\alpha_{x} < \min \left\lbrace \frac{b_{x}}{\sqrt{\delta}} , \frac{1}{1+\delta} \right\rbrace$ . Similarly, $\alpha_{y} < \min \left\lbrace \frac{b_{y}}{\sqrt{\delta}} , \frac{1}{1+\delta} \right\rbrace$ because $b_{y} \in (0,1)$.

\textbf{Feasibility of $\gamma_{x}$ and $\gamma_{y}$.} We consider two cases to verify the feasibility of $\gamma_x$ and $\gamma_y$.

\textbf{Case I:} $b_x \leq \sqrt{\delta}$.

This gives $\gamma_x = \frac{b_x}{4\sqrt{\delta}(1+\delta)\lambda_{\max}(I-W)}$. Consider
\begin{align}
\frac{\alpha_{x} - (1+\delta)\alpha_{x}^2}{\sqrt{\delta}\lambda_{\max}(I-W)} & = \frac{b_{x} - b_{x}^2}{\sqrt{\delta}(1+\delta)\lambda_{\max}(I-W)} .
\end{align}
 Using \eqref{eq:bx_svrg_ub}, we have $b_{x}  \leq \frac{1}{24\kappa^2_x } < 0.75$. This allows us to use the inequality $2x-2x^2 \geq x/2$ for all $0 \leq x \leq 0.75$. Therefore,
\begin{align}
\frac{\alpha_{x} - (1+\delta)\alpha_{x}^2}{\sqrt{\delta}\lambda_{\max}(I-W)} & > \frac{b_{x}}{4\sqrt{\delta}(1+\delta) \lambda_{\max}(I-W)} \nonumber\\
& = \gamma_{x,k} .
\end{align}
We also have 
\begin{align}
\frac{2-2\sqrt{\delta}\alpha_{x}}{\lambda_{\max}(I-W)} & = \left( 2 - \frac{2\sqrt{\delta}b_{x}}{1+\delta} \right) \frac{1}{\lambda_{\max}(I-W)} \geq \left( 2 - \frac{2\sqrt{\delta}}{1+\delta} \right) \frac{1}{\lambda_{\max}(I-W)} \nonumber\\
& \geq \frac{1}{\lambda_{\max}(I-W)} > \frac{1}{4(1+\delta) \lambda_{\max}(I-W)} \nonumber\\
& > \frac{b_x}{4\sqrt{\delta}(1+\delta) \lambda_{\max}(I-W)} \nonumber\\
& = \gamma_{x,k} ,
\end{align}
where the second last inequality uses $b_x \leq \sqrt{\delta}$. We know that $b_{y} \in (0,1)$. Therefore, by following similar steps, the chosen $\gamma_{y}$ is also feasible. 

\textbf{Case II:} $ b_x > \sqrt{\delta}$

This give $\gamma_x = \frac{1}{4(1+\delta)\lambda_{\max}(I-W)}$. 

\begin{align}
\frac{\alpha_{x} - (1+\delta)\alpha_{x}^2}{\sqrt{\delta}\lambda_{\max}(I-W)} & = \frac{b_{x} - b_{x}^2}{\sqrt{\delta}(1+\delta)\lambda_{\max}(I-W)} \nonumber\\
& \geq \frac{b_{x}}{4\sqrt{\delta}(1+\delta) \lambda_{\max}(I-W)} \nonumber\\
& > \frac{1}{4(1+\delta) \lambda_{\max}(I-W)} \nonumber\\
& = \gamma_x .
\end{align}
Consider
\begin{align}
\frac{2-2\sqrt{\delta}\alpha_{x}}{\lambda_{\max}(I-W)} & = \left( 2 - \frac{2\sqrt{\delta}b_{x}}{1+\delta} \right) \frac{1}{\lambda_{\max}(I-W)} \nonumber\\
& \geq \left( 2 - \frac{2\sqrt{\delta}}{1+\delta} \right) \frac{1}{\lambda_{\max}(I-W)} \nonumber\\
& \geq \frac{1}{\lambda_{\max}(I-W)} \nonumber\\
& > \frac{1}{4(1+\delta) \lambda_{\max}(I-W)} \nonumber\\
& = \gamma_{x} .
\end{align}
Therefore, $\gamma_x < \min \left\lbrace \frac{\alpha_{x} - (1+\delta)\alpha_{x}^2}{\sqrt{\delta}\lambda_{\max}(I-W)} , \frac{2-2\sqrt{\delta}\alpha_{x}}{\lambda_{\max}(I-W)} \right\rbrace $.

As $\gamma_{x} < \frac{2-2\sqrt{\delta}\alpha_{x}}{\lambda_{\max}(I-W)} < \frac{2}{\lambda_{\max}(I-W)}$. Notice that $\lambda_{m-1}(I-W) < \lambda_{\max}(I-W)$ Therefore,
\begin{align}
\frac{\gamma_{x}}{2}\lambda_{m-1}(I-W) < \frac{\gamma_{x}}{2}\lambda_{\max}(I-W) < 1 .
\end{align}
Similarly, $\frac{\gamma_{y}}{2}\lambda_{m-1}(I-W) < 1$.

\textbf{Feasibility of $M_{x}$ and $M_{y}$.}

Recall $M_{x} = 1-\frac{\sqrt{\delta }\alpha_{x}}{1-\frac{\gamma_{x}}{2}\lambda_{\max}(I-W)}$ and $M_{y} = 1-\frac{\sqrt{\delta }\alpha_{y}}{1-\frac{\gamma_{y}}{2}\lambda_{\max}(I-W)}$.
We have 
\begin{align}
\gamma_{x} < \frac{2-2\sqrt{\delta}\alpha_{x}}{\lambda_{\max}(I-W)} \nonumber\\
\frac{\gamma_{x}\lambda_{\max}(I-W)}{2} < 1-\sqrt{\delta}\alpha_{x} \nonumber\\
1 - \frac{\gamma_{x}\lambda_{\max}(I-W)}{2} > \sqrt{\delta}\alpha_{x} \nonumber\\
\frac{\sqrt{\delta}\alpha_{x}}{1 - \frac{\gamma_{x}\lambda_{\max}(I-W)}{2}} < 1 .
\end{align}
Moreover, $\frac{\sqrt{\delta}\alpha_{x}}{1 - \frac{\gamma_{x}\lambda_{\max}(I-W)}{2}} > 0$. Therefore, $M_{x} \in (0,1) $. Similar steps follow to prove the feasibility of $M_{y}$.

\textbf{Feasibility of $\frac{1-b_{x}}{M_{x}}$ and $\frac{1-b_{y}}{M_{y}}$.}

We derive upper bounds on $\frac{1-b_{x}}{M_{x}}$ and $\frac{1-b_{y}}{M_{y}} $ to verify the feasibility.
We divide the derivation into two cases.

\textbf{Case I:} $b_x \leq \sqrt{\delta}$ 

This implies that
\begin{align}
\gamma_x = \frac{b_x}{4\sqrt{\delta}(1+\delta)\lambda_{\max}(I-W)} \\
\frac{\gamma_x}{2} \lambda_{\max}(I-W) = \frac{b_x}{8\sqrt{\delta}(1+\delta)} .
\end{align}
Recall $M_x$:
\begin{align}
M_x & = 1 - \frac{\sqrt{\delta }\alpha_{x}}{1-\frac{\gamma_{x}}{2}\lambda_{\max}(I-W)} \nonumber\\
& = 1 - \frac{\frac{\sqrt{\delta }b_x}{1+\delta}}{1-\frac{b_x}{8\sqrt{\delta}(1+\delta)}} \nonumber\\
& = 1 - \frac{\sqrt{\delta}b_x \times 8 \sqrt{\delta}(1+\delta)}{(1+\delta)\left(8\sqrt{\delta}(1+\delta) - b_x \right)} \nonumber\\
& = 1 - \frac{8\delta b_x}{\left(8\sqrt{\delta}(1+\delta) - b_x \right)} \nonumber\\
& = 1 - \frac{8\delta }{\frac{8\sqrt{\delta}(1+\delta)}{b_x} - 1} . \label{eq:Mx_svrg_lb_mid}
\end{align}
We know that $\frac{\sqrt{\delta}}{b_x} \geq 1$. Therefore, $\frac{\sqrt{\delta}(1+\delta)}{b_x} > 1$ which in turn implies that 
\begin{align}
\frac{8\sqrt{\delta}(1+\delta)}{b_x} - 1 & > \frac{8\sqrt{\delta}(1+\delta)}{b_x} -  \frac{\sqrt{\delta}(1+\delta)}{b_x} \nonumber\\
& = \frac{7\sqrt{\delta}(1+\delta)}{b_x} \nonumber\\
\frac{1}{\frac{8\sqrt{\delta}(1+\delta)}{b_x} - 1} & < \frac{b_x}{7\sqrt{\delta}(1+\delta)} .
\end{align}
By using above relation in \eqref{eq:Mx_svrg_lb_mid}, we obtain
\begin{align}
M_x & \geq 1 - \frac{8\delta b_x }{7\sqrt{\delta}(1+\delta)} = 1 - \frac{8b_x\sqrt{\delta  }}{7(1+\delta)} \label{eq:Mx_lb_delta_svrg} \\
& \geq 1 - \frac{8b_x}{7} \frac{1}{2}  = 1 - \frac{4b_x}{7} , \label{eq:Mx_lb_svrg} 
\end{align}
where the second last inequality uses $\frac{\sqrt{\delta}}{1+\delta} \leq \frac{1}{2}$.
\begin{align}
\frac{1-b_x}{M_x} & = 1 + \frac{1-b_x}{M_x} - 1  \leq 1 + \frac{1-b_x}{1 - \frac{8b_x\sqrt{\delta  }}{7(1+\delta)}} - 1 \nonumber\\
& =  1 + \frac{1-b_x - 1+ \frac{8b_x\sqrt{\delta  }}{7(1+\delta)} }{1 - \frac{8b_x\sqrt{\delta  }}{7(1+\delta)}}  = 1 - \frac{b_x - \frac{8b_x\sqrt{\delta  }}{7(1+\delta)} }{1 - \frac{8b_x\sqrt{\delta  }}{7(1+\delta)}} \nonumber\\
& = 1 - \frac{7b_x(1+\delta) - 8b_x\sqrt{\delta} }{7(1+\delta) - 8b_x\sqrt{\delta} } = 1 - \frac{7(1+\delta) - 8\sqrt{\delta} }{\frac{7(1+\delta)}{b_x} - 8\sqrt{\delta} } \nonumber\\
& \leq 1 - \frac{7(1+\delta) - \frac{8(1+\delta)}{2} }{\frac{7(1+\delta)}{b_x} - 8\sqrt{\delta} } = 1 - \frac{3(1+\delta)}{\frac{7(1+\delta)}{b_x} - 8\sqrt{\delta} } \nonumber\\
& < 1 - \frac{3(1+\delta)}{\frac{7(1+\delta)}{b_x}} \nonumber\\
& = 1 - \frac{3b_x}{7} \label{eq:one_bx_Mx_ub_svrg} \nonumber\\
& < 1 .
\end{align}
Similarly, we obtain 
\begin{align}
& M_y \geq 1 - \frac{8b_y\sqrt{\delta  }}{7(1+\delta)} \geq  1 - \frac{4b_y}{7} \ \text{and, } \label{eq:My_lb_svrg} \\
& \frac{1-b_y}{M_y} < 1 - \frac{3b_y}{7} . \label{eq:one_by_My_ub_svrg}
\end{align}

\textbf{Case II:}  $b_x > \sqrt{\delta}$ .
\begin{align}
\gamma_x = \frac{1}{4(1+\delta)\lambda_{\max}(I-W)} .
\end{align}
We have
\begin{align}
M_x & = 1 - \frac{\sqrt{\delta }\alpha_{x}}{1-\frac{\gamma_{x}}{2}\lambda_{\max}(I-W)} \nonumber\\
& = 1 - \frac{\sqrt{\delta }\alpha_{x}}{1-\frac{1}{8(1+\delta)}} \nonumber\\
& =  1 - \frac{\frac{\sqrt{\delta }b_{x}}{1+\delta}}{1-\frac{1}{8(1+\delta)}} \nonumber\\
& = 1 - \frac{\sqrt{\delta }b_x \times 8(1+\delta) }{(1+\delta)(8(1+\delta) - 1)} \nonumber\\
& = 1 - \frac{8\sqrt{\delta}b_x}{8(1+\delta) - 1} .
\end{align}
As $8(1+\delta) - 1 > 8(1+\delta) - 1  -\delta = 7(1+\delta)$. Therefore,
\begin{align}
M_x & \geq 1 - \frac{8\sqrt{\delta}b_x}{7(1+\delta)} .
\end{align}
Notice that above lower bound matches with lower bound in \eqref{eq:Mx_lb_delta_svrg}. Therefore, by following steps similar to Case I, we obtain
\begin{align}
\frac{1-b_x}{M_x} & < 1 - \frac{3b_x}{7} , \ \text{and} \\
\frac{1-b_y}{M_y} & < 1 - \frac{3b_y}{7} .
\end{align}

\textbf{Feasibility of $1 - \frac{\gamma_{x}}{2}\lambda_{m-1}(I-W)$ and $1 - \frac{\gamma_{y}}{2}\lambda_{m-1}(I-W)$.}

If $b_x \leq \sqrt{\delta}$, then $\gamma_x = \frac{b_x}{4\sqrt{\delta}(1+\delta)\lambda_{\max}(I-W)}$. 
\begin{align}
1 - \frac{\gamma_x}{2} \lambda_{m-1}(I-W) & = 1 -  \frac{b_x}{8\sqrt{\delta}(1+\delta)\lambda_{\max}(I-W)} \lambda_{m-1}(I-W) \nonumber\\
& = 1 - \frac{b_x}{8\sqrt{\delta}(1+\delta)\kappa_g}.
\end{align}

If $b_x > \sqrt{\delta}$, then  $\gamma_x = \frac{1}{4(1+\delta)\lambda_{\max}(I-W)}$. 
\begin{align}
1 - \frac{\gamma_x}{2} \lambda_{m-1}(I-W) & = 1 -  \frac{1}{8(1+\delta)\lambda_{\max}(I-W)} \lambda_{m-1}(I-W) \\
& = 1 - \frac{1}{8(1+\delta)\kappa_g}.
\end{align}
\end{proof}

\begin{lemma} \label{lem:exp_phi_tilde_rec} Given the initial iterates $\bx_{T_0}, \by_{T_0}, D^\bx_{T_0}, D^\by_{T_0}, H^\bx_{T_0}, H^\by_{T_0}, H^{\ssw,\bx}_{T_0}$ and $ H^{\ssw,\by}_{T_0}$, let $\{\bx_{t} \}_t, \{\by_{t}\}_t$ be the sequences generated by Algorithm \ref{alg:generic_procedure_sgda} using SVRGO. Suppose Assumptions \ref{s_convexity_assumption}-\ref{weight_matrix_assumption} and Assumptions \ref{smoothness_x_svrg}-\ref{lipschitz_yx_svrg} hold. Then for every $t \geq T_0 $, $E[ \tilde{\Phi}_{t+1}] \leq \rho E[\tilde{\Phi}_{t }]$,
where ${\tilde{\Phi}_t}$ denotes the distance of the iterates $\bx_t, \tilde{\bx}_t, \by_t,  \tilde{\by}_t, D^\bx_t, D^\by_t, H^\bx_t, H^\by_t$ from their respective limit points (described in eq. \eqref{tilde_Phi_t} and $ \rho$ is a problem dependent parameter defined in eq. \eqref{rho_svrg}.
\end{lemma}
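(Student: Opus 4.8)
The plan is to chain three ingredients: the one-step recursions of Lemmas~\ref{lem:recursion_x} and~\ref{lem:recursion_y} (instantiated with the SVRGO step size $s=\frac{\mu np_{\min}}{24L^2}$ and the parameters $\gamma_x,\gamma_y,\alpha_x,\alpha_y$ of Section~\ref{svrg_parameters_setup}), the stochastic-gradient bound of Corollary~\ref{cor:exp_xkt_ykt_svrg}, and a short auxiliary recursion for the reference-point terms coming from Step~(3) of SVRGO; then I would show that every block of $\tilde{\Phi}_{t+1}$ (in expectation) contracts by at least the factor $\rho$ of~\eqref{rho_svrg}.

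First I would add inequalities~\eqref{eq:one_step_progress_x} and~\eqref{eq:one_step_progress_y}. Their left-hand side is exactly $\hat{\Phi}_{t+1}$ (in conditional expectation over compression), while the right-hand side contains the two residual norms $\|\bx_{t}-\mathbf{1}x^\star-s\mathcal{G}^\bx_{t}+s\nabla_xF(\mathbf{1}z^\star)\|^2$ and $\|\by_{t}-\mathbf{1}y^\star+s\mathcal{G}^\by_{t}-s\nabla_yF(\mathbf{1}z^\star)\|^2$, together with the $D$-terms carrying the factors $\bigl(1-\tfrac{\gamma_x}{2}\lambda_{m-1}(I-W)\bigr)$, $\bigl(1-\tfrac{\gamma_y}{2}\lambda_{m-1}(I-W)\bigr)$ and the $H$-terms carrying $(1-\alpha_x)$, $(1-\alpha_y)$. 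Taking conditional expectation over the SVRGO index sampling and applying Corollary~\ref{cor:exp_xkt_ykt_svrg}, the two residual norms are replaced by $\bigl(1-\mu_xs+\tfrac{4s^2L_{yx}^2}{np_{\min}}\bigr)\|\bx_{t}-\mathbf{1}x^\star\|^2+\bigl(1-\mu_ys+\tfrac{4s^2L_{xy}^2}{np_{\min}}\bigr)\|\by_{t}-\mathbf{1}y^\star\|^2$ plus the reference-point leakage $\tfrac{4s^2(L_{xx}^2+L_{yx}^2)}{np_{\min}}\|\tilde{\bx}_{t}-\mathbf{1}x^\star\|^2+\tfrac{4s^2(L_{yy}^2+L_{xy}^2)}{np_{\min}}\|\tilde{\by}_{t}-\mathbf{1}y^\star\|^2$.

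Next, since $\tilde{z}^i_{t+1}=\omega z^i_t+(1-\omega)\tilde{z}^i_t$ with $\omega\sim\mathrm{Bernoulli}(p)$ independent of compression and index sampling, one has $E\|\tilde{\bx}_{t+1}-\mathbf{1}x^\star\|^2=p\|\bx_{t}-\mathbf{1}x^\star\|^2+(1-p)\|\tilde{\bx}_{t}-\mathbf{1}x^\star\|^2$, and symmetrically in $\by$. Multiplying these by $\tilde{c}_x$ and $\tilde{c}_y$ and adding to the inequality above, the coefficient of $\|\bx_{t}-\mathbf{1}x^\star\|^2$ becomes $1-\mu_xs+\tfrac{4s^2L_{yx}^2}{np_{\min}}+\tilde{c}_xp=1-b_x$ by the definition~\eqref{eq:bx_by_svrg}, and the coefficient of $\|\tilde{\bx}_{t}-\mathbf{1}x^\star\|^2$ becomes $\tilde{c}_x(1-p)+\tfrac{4s^2(L_{xx}^2+L_{yx}^2)}{np_{\min}}\le\tilde{c}_x(1-p)+\tfrac{\tilde{c}_xp}{2}=\tilde{c}_x(1-\tfrac{p}{2})$, because $L\ge L_{xx}$ gives $\tfrac{4s^2(L_{xx}^2+L_{yx}^2)}{np_{\min}}\le\tfrac{\tilde{c}_xp}{2}$; the $\by$-coefficients are handled the same way. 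I would then bound each surviving coefficient by $\rho$: write $1-b_x=\tfrac{1-b_x}{M_x}M_x$ and use $\tfrac{1-b_x}{M_x}<1-\tfrac{3b_x}{7}\le\rho$ from Lemma~\ref{lem:parameters_feas_svrg}, and use $1-\tfrac{\gamma_x}{2}\lambda_{m-1}(I-W)\le\rho$, $1-\alpha_x\le\rho$, $1-\tfrac{p}{2}\le\rho$ directly from~\eqref{rho_svrg} (and symmetrically in $\by$). Recognizing the left-hand side as $E[\hat{\Phi}_{t+1}]+\tilde{c}_xE\|\tilde{\bx}_{t+1}-\mathbf{1}x^\star\|^2+\tilde{c}_yE\|\tilde{\by}_{t+1}-\mathbf{1}y^\star\|^2=E[\tilde{\Phi}_{t+1}]$ (see~\eqref{tilde_Phi_t}) and the right-hand side as $\rho$ times the corresponding terms of $\tilde{\Phi}_{t}$, one application of the tower property to pass from conditional to total expectation yields $E[\tilde{\Phi}_{t+1}]\le\rho E[\tilde{\Phi}_{t}]$.

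I expect the main obstacle to be the bookkeeping that links the auxiliary constants $\tilde{c}_x,\tilde{c}_y,b_x,b_y$ to the exact leakage produced by Corollary~\ref{cor:exp_xkt_ykt_svrg} and the Bernoulli reference-point step — in particular verifying $\tfrac{4s^2(L_{xx}^2+L_{yx}^2)}{np_{\min}}\le\tfrac{\tilde{c}_xp}{2}$ and its $\by$-analogue — and ensuring the nested conditional expectations (compression, then index sampling, then the independent Bernoulli draw) are taken in a consistent order so that Lemmas~\ref{lem:recursion_x}–\ref{lem:recursion_y} and Corollary~\ref{cor:exp_xkt_ykt_svrg} can be composed; the feasibility inequalities needed for the coefficient-by-coefficient comparison with $\rho$ are already assembled in Lemma~\ref{lem:parameters_feas_svrg}.
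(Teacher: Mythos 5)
Your proposal is correct and follows essentially the same route as the paper's proof: summing Lemmas~\ref{lem:recursion_x} and~\ref{lem:recursion_y}, invoking Corollary~\ref{cor:exp_xkt_ykt_svrg}, folding in the Bernoulli reference-point recursion with weights $\tilde{c}_x,\tilde{c}_y$ so the coefficients become $1-b_x$, $1-b_y$ and $\tilde{c}_x(1-\tfrac{p}{2})$, $\tilde{c}_y(1-\tfrac{p}{2})$, and then comparing each coefficient to $\rho$ via Lemma~\ref{lem:parameters_feas_svrg} (writing $1-b_x=\tfrac{1-b_x}{M_x}M_x$) before applying the tower property. The only cosmetic difference is that you bound the reference-point leakage with an inequality $L_{xx}\le L$ where the paper absorbs it as an exact equality through the definition of $\tilde{c}_x$; the argument is the same.
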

\begin{proof}
Adding inequalities~\eqref{eq:one_step_progress_y} and~\eqref{eq:one_step_progress_x}  (Lemma \ref{lem:recursion_y} and Lemma \ref{lem:recursion_x}), we have
\begin{align}
& M_x E\left\Vert \bx_{t+1} - \mathbf{1}x^\star  \right\Vert^2 + \frac{2s^2}{\gamma_x} E \left\Vert  D^\bx_{t+1} - D^\star_\bx  \right\Vert^2_{(I-W)^\dagger} + \sqrt{\delta}E\left\Vert H^\bx_{t+1} - H^\star_\bx \right\Vert^2 + \nonumber\\ & \ + M_y E\left\Vert \by_{t+1} - \mathbf{1}y^\star  \right\Vert^2 + \frac{2s^2}{\gamma_y} E \left\Vert  D^\by_{t+1} - D^\star_\by  \right\Vert^2_{(I-W)^\dagger} + \sqrt{\delta}E\left\Vert H^\by_{t+1} - H^\star_\by \right\Vert^2 \nonumber\\
& \leq \left\Vert \bx_{t} - \mathbf{1}x^\star - s\mathcal{G}^\bx_{t} + s\nabla_x F(\mathbf{1}z^\star)  \right\Vert^2 + \frac{2s^2}{\gamma_x}\left( 1- \frac{\gamma_y}{2}\lambda_{m-1}(I-W) \right)\left\Vert D^\bx_{t} - D^\star_\bx \right\Vert^2_{(I-W)^\dagger} \nonumber\\ & \ \ + \sqrt{\delta}(1-\alpha_{x})\left\Vert H^\bx_{t} - H^\star_\bx  \right\Vert^2 \nonumber\\ & + \ \left\Vert \by_{t} - \mathbf{1}y^\star + s\mathcal{G}^\by_{t} - s\nabla_y F(\mathbf{1}z^\star)  \right\Vert^2 + \frac{2s^2}{\gamma_y}\left( 1- \frac{\gamma_y}{2}\lambda_{m-1}(I-W) \right)\left\Vert D^\by_{t} - D^\star_\by \right\Vert^2_{(I-W)^\dagger} \nonumber\\ & \ \ + \sqrt{\delta}(1-\alpha_{y})\left\Vert H^\by_{t} - H^\star_\by  \right\Vert^2  .
\end{align}
By the definition of $\Phi_{t}$, the above inequality can be rewritten as
\begin{align}
& E\left[ \Phi_{t+1} \right] \nonumber\\
& \leq \left\Vert \bx_{t} - \mathbf{1}x^\star - s\mathcal{G}^\bx_{t} + s\nabla_x F(\mathbf{1}z^\star)  \right\Vert^2 + \frac{2s^2}{\gamma_x}\left( 1- \frac{\gamma_y}{2}\lambda_{m-1}(I-W) \right)\left\Vert D^\bx_{t} - D^\star_\bx \right\Vert^2_{(I-W)^\dagger} \nonumber\\ & \ \ + \sqrt{\delta}(1-\alpha_{x})\left\Vert H^\bx_{t} - H^\star_\bx  \right\Vert^2 \nonumber\\ & + \ \left\Vert \by_{t} - \mathbf{1}y^\star + s\mathcal{G}^\by_{t} - s\nabla_y F(\mathbf{1}z^\star)  \right\Vert^2 + \frac{2s^2}{\gamma_y}\left( 1- \frac{\gamma_y}{2}\lambda_{m-1}(I-W) \right)\left\Vert D^\by_{t} - D^\star_\by \right\Vert^2_{(I-W)^\dagger} \nonumber\\ & \ \ + \sqrt{\delta}(1-\alpha_{y})\left\Vert H^\by_{t} - H^\star_\by  \right\Vert^2 .
\end{align}
Taking conditional expectation on stochastic gradient at $t$-th step on both sides of above inequality and applying tower property, we obtain
\begin{align}
& E\left[ \Phi_{t+1} \right] \nonumber\\
& \leq E\left\Vert \bx_{t} - \mathbf{1}x^\star - s\mathcal{G}^\bx_{t} + s\nabla_x F(\mathbf{1}z^\star)  \right\Vert^2 + \frac{2s^2}{\gamma_x}\left( 1- \frac{\gamma_x}{2}\lambda_{m-1}(I-W) \right)\left\Vert D^\bx_{t} - D^\star_\bx \right\Vert^2_{(I-W)^\dagger} \nonumber\\ & \ \ + \sqrt{\delta}(1-\alpha_{x})\left\Vert H^\bx_{t} - H^\star_\bx  \right\Vert^2\nonumber \\ & + \ E\left\Vert \by_{t} - \mathbf{1}y^\star + s\mathcal{G}^\by_{t} - s\nabla_y F(\mathbf{1}z^\star)  \right\Vert^2 + \frac{2s^2}{\gamma_y}\left( 1- \frac{\gamma_y}{2}\lambda_{m-1}(I-W) \right)\left\Vert D^\by_{t} - D^\star_\by \right\Vert^2_{(I-W)^\dagger} \nonumber \\ & \ \ + \sqrt{\delta}(1-\alpha_{y})\left\Vert H^\by_{t} - H^\star_\by  \right\Vert^2 \nonumber\\
& \leq \left( 1-\mu_x s + \frac{4s^2L^2_{yx}}{np_{\min}} \right)\left\Vert \bx_{t} - \mathbf{1}x^\star \right\Vert^2 + \left(1-s\mu_y + \frac{4s^2L^2_{xy}}{np_{\min}} \right) \left\Vert \by_{t} - \mathbf{1}y^\star  \right\Vert^2  \nonumber\\ & \ + \frac{4s^2(L^2 + L^2_{yx})}{np_{\min}} \left\Vert \tilde{\bx}_{t} - \mathbf{1}x^\star \right\Vert^2 + \frac{4s^2(L^2 + L^2_{xy})}{np_{\min}} \left\Vert \tilde{\by}_{t} - \mathbf{1}y^\star \right\Vert^2 \nonumber\\ & + \frac{2s^2}{\gamma_x}\left( 1- \frac{\gamma_x}{2}\lambda_{m-1}(I-W) \right)\left\Vert D^\bx_{t} - D^\star_\bx \right\Vert^2_{(I-W)^\dagger} + \frac{2s^2}{\gamma_y}\left( 1- \frac{\gamma_y}{2}\lambda_{m-1}(I-W) \right)\left\Vert D^\by_{t} - D^\star_\by \right\Vert^2_{(I-W)^\dagger} \nonumber\\ & \ + \sqrt{\delta}(1-\alpha_{x})\left\Vert H^\bx_{t} - H^\star_\bx  \right\Vert^2 + \sqrt{\delta}(1-\alpha_{y})\left\Vert H^\by_{t} - H^\star_\by  \right\Vert^2 . \label{Ephi_t+1}
\end{align}
The last step holds due to Corollary \ref{cor:exp_xkt_ykt_svrg}. From SVRG oracle, we have
\begin{align}
\left\Vert \tilde{x}^i_{t+1} - x^\star \right\Vert^2  & = \begin{cases} \left\Vert x^i_{t} - x^\star \right\Vert^2 \ \text{with probability} \ p \nonumber\\ \left\Vert \tilde{x}^i_{t} - x^\star \right\Vert^2 \ \text{with probability} \ 1-p \end{cases} , \text{ and}\nonumber\\
\left\Vert \tilde{y}^i_{t+1} - y^\star \right\Vert^2  & = \begin{cases} \left\Vert y^i_{t} - y^\star \right\Vert^2 \ \text{with probability} \ p \nonumber\\ \left\Vert \tilde{y}^i_{t} - y^\star \right\Vert^2 \ \text{with probability} \ 1-p \end{cases} .
\end{align}
Therefore,
\begin{align}
& E \left\Vert \tilde{x}_{t+1} - \mathbf{1}x^\star \right\Vert^2 + E\left\Vert \tilde{y}_{t+1} - \mathbf{1}y^\star \right\Vert^2 \nonumber\\
& = \sum_{i = 1}^m E\left\Vert \tilde{x}^i_{t+1} - x^\star \right\Vert^2 + \sum_{i = 1}^m E\left\Vert \tilde{y}^i_{t+1} - y^\star \right\Vert^2 \nonumber\\
& = \sum_{i = 1}^m \left( p\left\Vert x^i_{t} - x^\star \right\Vert^2 + (1-p)\left\Vert \tilde{x}^i_{t} - x^\star \right\Vert^2  \right) + \sum_{i = 1}^m \left( p\left\Vert y^i_{t} - y^\star \right\Vert^2 + (1-p)\left\Vert \tilde{y}^i_{t} - y^\star \right\Vert^2  \right) \nonumber\\
& = p\left\Vert \bx_{t} - \mathbf{1}x^\star \right\Vert^2 + (1-p)\left\Vert \tilde{\bx}_{t} - \mathbf{1}x^\star \right\Vert^2 +  p\left\Vert \by_{t} - \mathbf{1}y^\star \right\Vert^2 + (1-p)\left\Vert \tilde{\by}_{t} - \mathbf{1}y^\star \right\Vert^2 .
\end{align}
Using above equality and \eqref{Ephi_t+1}, we obtain
\begin{align}
& E\left[ \Phi_{t+1} \right] + \tilde{c}_x E \left\Vert \tilde{x}_{t+1} - \mathbf{1}x^\star \right\Vert^2 + \tilde{c}_y E \left\Vert \tilde{y}_{t+1} - \mathbf{1}y^\star \right\Vert^2 \nonumber\\
& \leq \left( 1-\mu_x s + \frac{4s^2L^2_{yx}}{np_{\min}} + \tilde{c}_x p \right)\left\Vert \bx_{t} - \mathbf{1}x^\star \right\Vert^2 + \left(1-s\mu_y + \frac{4s^2L^2_{xy}}{np_{\min}} + \tilde{c}_yp \right) \left\Vert \by_{t} - \mathbf{1}y^\star  \right\Vert^2  \nonumber\\ & \ + \left( \tilde{c}_x(1-p) + \frac{4s^2(L^2 + L^2_{yx})}{np_{\min}} \right) \left\Vert \tilde{\bx}_{t} - \mathbf{1}x^\star \right\Vert^2 + \left( \tilde{c}_y(1-p) + \frac{4s^2(L^2 + L^2_{xy})}{np_{\min}} \right) \left\Vert \tilde{\by}_{t} - \mathbf{1}y^\star \right\Vert^2 \nonumber\\ & + \frac{2s^2}{\gamma_x}\left( 1- \frac{\gamma_x}{2}\lambda_{m-1}(I-W) \right)\left\Vert D^\bx_{t} - D^\star_\by \right\Vert^2_{(I-W)^\dagger} + \frac{2s^2}{\gamma_y}\left( 1- \frac{\gamma_y}{2}\lambda_{m-1}(I-W) \right) \left\Vert D^\by_{t} - D^\star_\by \right\Vert^2_{(I-W)^\dagger} \nonumber\\ & \ + \sqrt{\delta}(1-\alpha_{x})\left\Vert H^\bx_{t} - H^\star_\bx  \right\Vert^2 + \sqrt{\delta}(1-\alpha_{y})\left\Vert H^\by_{t} - H^\star_\by  \right\Vert^2 . \label{eq:exp_phikt_ctilde_svrg}
\end{align}
We have $\tilde{c}_x = \frac{8s^2(L^2 + L^2_{yx})}{np_{\min}p}$. The coefficient of $\left\Vert \tilde{\bx}_{t} - \mathbf{1}x^\star \right\Vert^2$ in~\eqref{eq:exp_phikt_ctilde_svrg} is
\begin{align}
\tilde{c}_x(1-p) + \frac{4s^2(L^2 + L^2_{yx})}{np_{\min}} & = \tilde{c}_x \left( 1-p + \frac{4s^2(L^2 + L^2_{yx})}{np_{\min}\tilde{c}_x}  \right) \nonumber\\
& = \tilde{c}_x \left( 1-p + \frac{4s^2(L^2 + L^2_{yx})}{np_{\min}} \frac{np_{\min}p}{8s^2(L^2 + L^2_{yx})} \right) \nonumber\\
& = \tilde{c}_x \left( 1-p + \frac{p}{2} \right) \nonumber\\
& = \tilde{c}_x \left(1-\frac{p}{2} \right) ,
\end{align}
and the coefficient of $\left\Vert \tilde{\by}_{t} - \mathbf{1}y^\star \right\Vert^2$ in~\eqref{eq:exp_phikt_ctilde_svrg} is
\begin{align}
\tilde{c}_y(1-p) + \frac{4s^2(L^2 + L^2_{xy})}{np_{\min}} & = \tilde{c}_y \left( 1-p + \frac{4s^2(L^2 + L^2_{xy})}{np_{\min}\tilde{c}_y}  \right) \nonumber\\
& = \tilde{c}_y \left( 1-p + \frac{4s^2(L^2 + L^2_{xy})}{np_{\min}} \frac{np_{\min}p}{8s^2(L^2 + L^2_{xy})} \right) \nonumber\\
& = \tilde{c}_y \left( 1-p + \frac{p}{2} \right)\nonumber \\
& = \tilde{c}_y \left(1-\frac{p}{2} \right) .
\end{align}
Substituting the above simplified coefficients into \eqref{eq:exp_phikt_ctilde_svrg}, we see that
\begin{align}
& E\left[ \Phi_{t+1} \right] + \tilde{c}_x E \left\Vert \tilde{x}_{t+1} - \mathbf{1}x^\star \right\Vert^2 + \tilde{c}_y E \left\Vert \tilde{y}_{t+1} - \mathbf{1}y^\star \right\Vert^2 \nonumber\\
& \leq \left( 1-\mu_x s + \frac{4s^2L^2_{yx}}{np_{\min}} + \tilde{c}_x p \right)\left\Vert \bx_{t} - \mathbf{1}x^\star \right\Vert^2 + \left(1-s\mu_y + \frac{4s^2L^2_{xy}}{np_{\min}} + \tilde{c}_yp \right) \left\Vert \by_{t} - \mathbf{1}y^\star  \right\Vert^2  \nonumber\\ & \ + \tilde{c}_x \left(1-\frac{p}{2} \right) \left\Vert \tilde{\bx}_{t} - \mathbf{1}x^\star \right\Vert^2 + \tilde{c}_y \left(1-\frac{p}{2} \right) \left\Vert \tilde{\by}_{t} - \mathbf{1}y^\star \right\Vert^2 \nonumber\\ & + \frac{2s^2}{\gamma_x}\left( 1- \frac{\gamma_x}{2}\lambda_{m-1}(I-W) \right)\left\Vert D^\bx_{t} - D^\star_\bx \right\Vert^2_{(I-W)^\dagger} + \frac{2s^2}{\gamma_y}\left( 1- \frac{\gamma_y}{2}\lambda_{m-1}(I-W) \right) \left\Vert D^\by_{t} - D^\star_\by \right\Vert^2_{(I-W)^\dagger} \nonumber\\ & \ + \sqrt{\delta}(1-\alpha_{x})\left\Vert H^\bx_{t} - H^\star_\bx  \right\Vert^2 + \sqrt{\delta}(1-\alpha_{y})\left\Vert H^\by_{t} - H^\star_\by  \right\Vert^2 .
\end{align}
By taking total expectation on both sides, using tower property and using the definition of $b_x$ and $b_y$, we obtain
\begin{align}
& E\left[ \Phi_{t+1} \right] + \tilde{c}_x E \left\Vert \tilde{x}_{t+1} - \mathbf{1}x^\star \right\Vert^2 + \tilde{c}_y E \left\Vert \tilde{y}_{t+1} - \mathbf{1}y^\star \right\Vert^2 \nonumber\\
& \leq \left( 1-b_x \right)E\left\Vert \bx_{t} - \mathbf{1}x^\star \right\Vert^2 + \left(1- b_y \right) E\left\Vert \by_{t} - \mathbf{1}y^\star  \right\Vert^2 \nonumber\\ & \  + \tilde{c}_x \left(1-\frac{p}{2} \right)E\left\Vert \tilde{\bx}_{t} - \mathbf{1}x^\star \right\Vert^2 + \tilde{c}_y \left(1-\frac{p}{2} \right) E\left\Vert \tilde{\by}_{t} - \mathbf{1}y^\star \right\Vert^2 \nonumber\\ & + \frac{2s^2}{\gamma_x}\left( 1- \frac{\gamma_x}{2}\lambda_{m-1}(I-W) \right)E\left\Vert D^\bx_{t} - D^\star_\bx \right\Vert^2_{(I-W)^\dagger} + \frac{2s^2}{\gamma_y}\left( 1- \frac{\gamma_y}{2}\lambda_{m-1}(I-W) \right) E\left\Vert D^\by_{t} - D^\star_\by \right\Vert^2_{(I-W)^\dagger} \nonumber\\ & \ + \sqrt{\delta}(1-\alpha_{x})E\left\Vert H^\bx_{t} - H^\star_\bx  \right\Vert^2 + \sqrt{\delta}(1-\alpha_{y})E\left\Vert H^\by_{t} - H^\star_\by  \right\Vert^2 \nonumber\\
& = \left( \frac{1-b_x}{M_x} \right)M_xE\left\Vert \bx_{t} - \mathbf{1}x^\star \right\Vert^2 + \left(\frac{1- b_y}{M_y} \right) M_yE\left\Vert \by_{t} - \mathbf{1}y^\star  \right\Vert^2 \nonumber\\ & \  + \tilde{c}_x \left(1-\frac{p}{2} \right)E\left\Vert \tilde{\bx}_{t} - \mathbf{1}x^\star \right\Vert^2 + \tilde{c}_y \left(1-\frac{p}{2} \right) E\left\Vert \tilde{\by}_{t} - \mathbf{1}y^\star \right\Vert^2 \nonumber\\ & + \frac{2s^2}{\gamma_x}\left( 1- \frac{\gamma_x}{2}\lambda_{m-1}(I-W) \right)E\left\Vert D^\bx_{t} - D^\star_\bx \right\Vert^2_{(I-W)^\dagger} + \frac{2s^2}{\gamma_y}\left( 1- \frac{\gamma_y}{2}\lambda_{m-1}(I-W) \right) E\left\Vert D^\by_{t} - D^\star_\by \right\Vert^2_{(I-W)^\dagger} \nonumber\\ & \ + \sqrt{\delta}(1-\alpha_{x})E\left\Vert H^\bx_{t} - H^\star_\bx  \right\Vert^2 + \sqrt{\delta}(1-\alpha_{y})E\left\Vert H^\by_{t} - H^\star_\by  \right\Vert^2 \\
& \leq \max \left\lbrace 1 - \frac{3b_x}{7}, 1 - \frac{3b_y}{7} , 1- \frac{\gamma_x}{2}\lambda_{m-1}(I-W) , 1- \frac{\gamma_y}{2}\lambda_{m-1}(I-W) , 1-\alpha_{x} , 1-\alpha_{y} , 1-\frac{p}{2} \right\rbrace \nonumber \\ & \ \ \ \times \left( E\left[ \Phi_{t} \right] + \tilde{c}_x E \left\Vert \tilde{\bx}_{t} - \mathbf{1}x^\star \right\Vert^2 + \tilde{c}_y E \left\Vert \tilde{\by}_{t} - \mathbf{1}y^\star \right\Vert^2 \right) \nonumber\\
& = \rho \left( E\left[ \Phi_{t} \right] + \tilde{c}_x E \left\Vert \tilde{\bx}_{t} - \mathbf{1}x^\star \right\Vert^2 + \tilde{c}_y E \left\Vert \tilde{\by}_{t} - \mathbf{1}y^\star \right\Vert^2 \right) , \label{eq:exp_phikt_ztilde_rho_svrg}
\end{align}
where
\begin{align}
\rho & = \max \left\lbrace  1 - \frac{3b_x}{7}, 1 - \frac{3b_y}{7} , 1- \frac{\gamma_x}{2}\lambda_{m-1}(I-W) , 1- \frac{\gamma_y}{2}\lambda_{m-1}(I-W) , 1-\alpha_{x} , 1-\alpha_{y} , 1-\frac{p}{2} \right\rbrace . 
\end{align}
By the definition of $\tilde{\Phi}_{t} = \hat{\Phi}_{t} + \tilde{c}_x \left\Vert \tilde{\bx}_{t} - \mathbf{1}x^\star \right\Vert^2 + \tilde{c}_y \left\Vert \tilde{\by}_{t} - \mathbf{1}y^\star \right\Vert^2$, \eqref{eq:exp_phikt_ztilde_rho_svrg} reduces to
\begin{align}
E\left[ \tilde{\Phi}_{t+1} \right] & \leq \rho E\left[ \tilde{\Phi}_{t} \right] .
\end{align}
\end{proof}

\section{Proofs for the convergence analysis of Algorithm \ref{alg:IPDHG_with_sgd_svrg_oracle}} \label{switching_point_proofs}


\subsection{Proof of Lemma \ref{lem:tilde_phi_t_ub_T0_t} } \label{app_sec:lemma2_proof}
At $t=T_0$ in Algorithm \ref{alg:IPDHG_with_sgd_svrg_oracle}, GSGO switches to SVRGO and reference points $\tilde{\bx}_{T_0}, \tilde{\by}_{T_0}$ are initialized to $\bx_{T_0}, \by_{T_0}$. Further the iterates are initialized to $\bx_{T_0}, \by_{T_0}, D^\bx_{T_0}, D^\by_{T_0}, H^\bx_{T_0}, H^\by_{T_0}, H^{\ssw,\bx}_{T_0}, H^{\ssw,\by}_{T_0}$. Lemma \ref{lem:exp_phi_tilde_rec} describes the behavior of IPDHG with SVRG oracle. Therefore, using Lemma \ref{lem:exp_phi_tilde_rec}, we have 
\begin{align}
E[ \tilde{\Phi}_{t+1} ] \leq \rho^{t+1-T_0}\tilde{\Phi}_{T_0 }. \label{eq:phi_tilde_phiT0_ub}
\end{align}
Next step in the proof is to derive an upper bound on $\tilde{\Phi}_{T_0 }$ in terms of $\Phi_{T_0 }$. To obtain this, we first write $\left\Vert H^\bx_{T_0} - H^\star_\bx \right\Vert^2 + \left\Vert H^\by_{T_0} - H^\star_\by \right\Vert^2$ in terms of $\left\Vert H^\bx_{T_0} - H^\star_{\bx,0} \right\Vert^2 + \left\Vert H^\by_{T_0} - H^\star_{\by,0} \right\Vert^2$. In this direction, consider
\begin{align}
& \left\Vert H^\bx_{T_0} - H^\star_\bx \right\Vert^2 + \left\Vert H^\by_{T_0} - H^\star_\by \right\Vert^2 \nonumber \\
& = \left\Vert H^\bx_{T_0} - \mathbf{1}(x^\star - \frac{s_0}{m} \nabla_x f(z^\star)) + \frac{s}{m} \mathbf{1} \nabla_x f(z^\star) - \frac{s_0}{m} \mathbf{1} \nabla_x f(z^\star) \right\Vert^2  + \left\Vert H^\by_{T_0} -  \mathbf{1}(y^\star + \frac{s_0}{m} \nabla_y f(z^\star)) - \frac{s}{m} \mathbf{1} \nabla_y f(z^\star) + \frac{s_0}{m} \mathbf{1} \nabla_y f(z^\star) \right\Vert^2  \nonumber \\
& \leq 2\Vert H^\bx_{T_0} - \mathbf{1}(x^\star - \frac{s_0}{m} \nabla_x f(z^\star)) \Vert^2 + 2(s-s_0)^2\Vert \frac{\mathbf{1}}{m}  \nabla_x f(z^\star) \Vert^2  + 2\Vert H^\by_{T_0} -  \mathbf{1}(y^\star + \frac{s_0}{m} \nabla_y f(z^\star)) \Vert^2 + 2(s-s_0)^2\left\Vert \frac{\mathbf{1}}{m}  \nabla_y f(z^\star) \right\Vert^2 \nonumber \\
& =  2\left\Vert H^\bx_{T_0} - H^\star_{\bx,0} \right\Vert^2 + 2m(s-s_0)^2 \left\Vert \frac{1}{m} \sum_{i = 1}^m  \nabla_x f_i(z^\star) \right\Vert^2  + 2\left\Vert H^\by_{T_0} -  H^\star_{\by,0} \right\Vert^2 + 2m(s-s_0)^2 \left\Vert \frac{1}{m} \sum_{i = 1}^m  \nabla_y f_i(z^\star) \right\Vert^2. \label{eq:Hstar_x_Hxstar_0}
\end{align}
Using \eqref{eq:Hstar_x_Hxstar_0}, we now find an appropriate upper bound on $\tilde{\Phi}_{T_0 }$ in terms of $\Phi_{T_0}$. 
\begin{align}
	\tilde{\Phi}_{T_0 }
	& =  M_x \left\Vert \bx_{T_0} - \mathbf{1}x^\star  \right\Vert^2 + \frac{2s^2}{\gamma_x}  \left\Vert  D^\bx_{T_0} - D^\star_\bx  \right\Vert^2_{(I-W)^\dagger} + \sqrt{\delta}\left\Vert H^\bx_{T_0} - H^\star_\bx \right\Vert^2   + M_y \left\Vert \by_{T_0} - \mathbf{1}y^\star  \right\Vert^2 + \frac{2s^2}{\gamma_y}  \left\Vert  D^\by_{T_0} - D^\star_\by  \right\Vert^2_{(I-W)^\dagger} \nonumber\\ & \ + \sqrt{\delta}\left\Vert H^\by_{T_0} - H^\star_\by \right\Vert^2 \nonumber + \tilde{c}_x \left\Vert \bx_{T_0} - \mathbf{1}x^\star  \right\Vert^2 + \tilde{c}_y \left\Vert \by_{T_0} - \mathbf{1}y^\star  \right\Vert^2  \nonumber \\
	& \leq  M_x \left\Vert \bx_{T_0} - \mathbf{1}x^\star  \right\Vert^2 + \frac{2s^2}{\gamma_x}  \left\Vert  D^\bx_{T_0} - D^\star_\bx  \right\Vert^2_{(I-W)^\dagger} + 2\sqrt{\delta}\left\Vert H^\bx_{T_0} - H^\star_{\bx,0} \right\Vert^2  + M_y \left\Vert \by_{T_0} - \mathbf{1}y^\star  \right\Vert^2 + \frac{2s^2}{\gamma_y}  \left\Vert  D^\by_{T_0} - D^\star_\by  \right\Vert^2_{(I-W)^\dagger} \nonumber\\ & \ + 2\sqrt{\delta}\left\Vert H^\by_{T_0} -  H^\star_{\by,0} \right\Vert^2  + \tilde{c}_x \left\Vert \bx_{T_0} - \mathbf{1}x^\star  \right\Vert^2 + \tilde{c}_y \left\Vert \by_{T_0} - \mathbf{1}y^\star  \right\Vert^2 + 2m\sqrt{\delta}(s-s_0)^2 \Big( \Big \Vert \frac{1}{m} \sum_{i = 1}^m  \nabla_x f_i(z^\star) \Big \Vert^2 + \Big \Vert \frac{1}{m} \sum_{i = 1}^m  \nabla_y f_i(z^\star) \Big \Vert^2 \Big). \nonumber \\
	& \leq \max \left\lbrace \frac{M_x + \tilde{c}_x}{M_{x,0}}, \frac{M_y + \tilde{c}_y}{M_{y,0}}, \frac{s^2\gamma_{x,0}}{s_0^2\gamma_x}, \frac{s^2\gamma_{y,0}}{s_0^2\gamma_y}, 1 \right\rbrace  \times \Big ( M_{x,0} \left\Vert \bx_{T_0} - \mathbf{1}x^\star  \right\Vert^2 + \frac{2s_0^2}{\gamma_{x,0}}  \left\Vert  D^\bx_{T_0} - D^\star_\bx  \right\Vert^2_{(I-W)^\dagger} + \sqrt{\delta}\left\Vert H^\bx_{T_0} - H^\star_{\bx,0} \right\Vert^2  \nonumber\\ & \ \ + M_{y,0} \left\Vert \by_{T_0} - \mathbf{1}y^\star  \right\Vert^2 + \frac{2s_0^2}{\gamma_{y,0}}  \left\Vert  D^\by_{T_0} - D^\star_\by  \right\Vert^2_{(I-W)^\dagger} + \sqrt{\delta}\left\Vert H^\by_{T_0} - H^\star_{\by,0} \right\Vert^2 \Big ) \nonumber \\ & \ \ \ \ +  2m\sqrt{\delta}(s-s_0)^2 \Big( \Big \Vert \frac{1}{m} \sum_{i = 1}^m  \nabla_x f_i(z^\star) \Big \Vert^2 + \Big \Vert \frac{1}{m} \sum_{i = 1}^m  \nabla_y f_i(z^\star) \Big \Vert^2 \Big) \nonumber \\
	& = \max \left\lbrace \frac{M_x + \tilde{c}_x}{M_{x,0}}, \frac{M_y + \tilde{c}_y}{M_{y,0}}, \frac{s^2\gamma_{x,0}}{s_0^2\gamma_x}, \frac{s^2\gamma_{y,0}}{s_0^2\gamma_y}, 2 \right\rbrace  \Phi_{T_0} + 2m\sqrt{\delta}(s-s_0)^2 \left( \left\Vert \frac{1}{m} \sum_{i = 1}^m  \nabla_x f_i(z^\star) \right\Vert^2 + \left\Vert \frac{1}{m} \sum_{i = 1}^m  \nabla_y f_i(z^\star) \right\Vert^2 \right) \nonumber \\
	& = C_{\max} \Phi_{T_0} + C_1 ,\label{eq:phi_tilde_CmaxC1}
\end{align}
where 
\begin{align}
& C_{\max} := \max \left\lbrace \frac{M_x + \tilde{c}_x}{M_{x,0}}, \frac{M_y + \tilde{c}_y}{M_{y,0}}, \frac{s^2\gamma_{x,0}}{s_0^2\gamma_x}, \frac{s^2\gamma_{y,0}}{s_0^2\gamma_y}, 2 \right\rbrace \label{cons_Cmax} \\
& C_1 := 2m\sqrt{\delta}(s-s_0)^2 \left( \left\Vert \frac{1}{m} \sum_{i = 1}^m  \nabla_x f_i(z^\star) \right\Vert^2 + \left\Vert \frac{1}{m} \sum_{i = 1}^m  \nabla_y f_i(z^\star) \right\Vert^2 \right). \label{cons_C1}
\end{align}
By substituting \eqref{eq:phi_tilde_CmaxC1} in \eqref{eq:phi_tilde_phiT0_ub}, we obtain
\begin{align}
& E[ \tilde{\Phi}_{t+1} ] \leq \rho^{t+1-T_0} (C_{\max} \Phi_{T_0} + C_1). \label{eq:exp_phi_tilde_phiT0_ub}
\end{align} 
Note that the expectation in the l.h.s of \eqref{eq:exp_phi_tilde_phiT0_ub} is conditioned on initial iterates at $T_0$-th iteration (when GSGO switches to SVRGO). By taking expectation w.r.t randomness in the initial iterates  on both sides and using tower property, we obtain
\begin{align} 
 E[ \tilde{\Phi}_{t+1} ] & \leq \rho^{t+1-T_0} (C_{\max} E_0\left[ \Phi_{T_0} \right]  + C_1). \label{eq:exp_phitilde_exp_phiT0}
 \end{align}
Note that $E$ in l.h.s of \eqref{eq:exp_phitilde_exp_phiT0} now denotes the total expectation. Next, using Lemma \ref{lem:exp_phit+1_phi_0}, we get
\begin{align}
 E[ \tilde{\Phi}_{t+1} ] & \leq \rho^{t+1-T_0} C_{\max} \left(( \rho_0)^{T_0}  \Phi_{0}  + \frac{2s_0^2(C_x + C_y)}{(1-\rho_0)n^2p_{\min}}\right)  + C_1\rho^{t+1-T_0} \nonumber \\
& = \rho^{t+1-T_0} C_{\max} \left(( \rho_0)^{T_0}  \Phi_{0}  + V_e \right)  + C_1\rho^{t+1-T_0} \nonumber \\
& =  C_{\max} \left( \left(\frac{\rho_0}{\rho} \right)^{T_0} \rho^{t+1} \Phi_{0}  + \frac{V_e \rho^{t+1}}{\rho^{T_0}} \right)  + \frac{C_1\rho^{t+1}}{\rho^{T_0}} \nonumber \\
& \leq C_{\max} \left( \frac{\rho_0^{T_0}}{\rho^{T_0}} \rho^{t+1} \Phi_{0}  + \frac{V_e \rho^{t+1}}{\rho_0^{T_0}} \right)  + \frac{C_1\rho^{t+1}}{\rho_0^{T_0}} \label{eq:phi_tilde_ub_rho_T0},
\end{align} 
where the last inequality uses the fact that $\rho_0 \leq \rho$ ( because $b_{x,0} \geq b_x$, $b_{y,0} \geq b_y$). By substituting $\rho_0^{T_0} = \epsilon_0$ and total number of iterations $t = T-1$, \eqref{eq:phi_tilde_ub_rho_T0} reduces to 
\begin{align}
E[ \tilde{\Phi}_{T} ] &  \leq C_{\max} \left( \frac{\epsilon_0\Phi_{0}}{\rho^{T_0}} \rho^{T}  + \frac{V_e \rho^{T}}{\epsilon_0} \right)  + \frac{C_1\rho^{T}}{\epsilon_0} .\label{eq:phi_tilde_epsilon0}
\end{align}
\subsection{Deciding Switching Point}
Since $T \geq T_0+1$ and $ \rho \in (0,1)$, the upper bound in inequality \eqref{eq:phi_tilde_epsilon0} reduces to
\begin{align}
E[ \tilde{\Phi}_{T} ] & \leq C_{\max} \left( \epsilon_0 \Phi_{0}  + \frac{V_e \rho^{T}}{\epsilon_0} \right)  + \frac{C_1\rho^{T}}{\epsilon_0}. \label{eq:expt_phitilde_epsilon0}
\end{align}
Let $h(\epsilon_0) = C_{\max} \left( \epsilon_0 \Phi_{0}  + \frac{V_e \rho^{T}}{\epsilon_0} \right)  + \frac{C_1\rho^{T}}{\epsilon_0}$. Differentiating $h(\epsilon_0)$ w.r.t $\epsilon_0$, we get 
\begin{align}
h'(\epsilon_0) & = C_{\max} \left( \Phi_{0}  - \frac{V_e \rho^{T}}{\epsilon^2_0} \right)  - \frac{C_1\rho^{T}}{\epsilon^2_0} \nonumber \\
h''(\epsilon_0) & = \frac{2(C_{\max}V_e \rho^{T} + C_1\rho^{T})}{\epsilon^3_0} . \nonumber
\end{align}
Without loss of generality, we can assume that $V_e > 0$. Therefore, $h''(\epsilon_0) > 0$. By solving $h'(\epsilon^\star_0) = 0$, we get 
$\epsilon^\star_0 = \sqrt{\frac{(C_{\max} V_e + C_1)\rho^T}{C_{\max}\Phi_0}}$. The minimum value of $h(\epsilon_0)$ is given by
\begin{align}
h(\epsilon^\star_0) & = C_{\max} \Phi_{0} \sqrt{\frac{(C_{\max V_e + C_1})\rho^T}{C_{\max}\Phi_0}}  +  (C_{\max} V_e + C_1)\rho^T \sqrt{\frac{C_{\max}\Phi_0}{(C_{\max} V_e + C_1)\rho^T}} \nonumber \\
& = 2\sqrt{C_{\max}\Phi_0(C_{\max} V_e + C_1)\rho^T} . \label{eq:min_val_h_epsilon0}
\end{align}
Therefore, at $\epsilon_0 = \epsilon_0^\star$, we get
\begin{align}
E\left[ \tilde{\Phi}_{T} \right] & \leq  2\sqrt{C_{\max}\Phi_0(C_{\max} V_e + C_1)\rho^T} . \label{eq:tilde_phit_app}
\end{align}
Therefore, Algorithm \ref{alg:IPDHG_with_sgd_svrg_oracle} needs $ T(\epsilon) = \frac{2}{-\log \rho}\log \left(\frac{2\sqrt{C_{\max}\Phi_0(C_{\max} V_e + C_1)}}{\epsilon} \right)$ iterations to achieve $\epsilon$-accurate saddle point. By plugging $T = T(\epsilon)$ into $\epsilon^\star_0$, we get $\epsilon^\star_0 = \frac{\epsilon}{2C_{\max}\Phi_0}$. Hence,  switching point $T_0$ is given by $\frac{1}{\log \rho_0}\log(\frac{\epsilon}{2C_{\max}\Phi_0})$.

\subsection{Proof of Theorem \ref{thm:switching_scheme_complexity}}
Using \eqref{eq:tilde_phit_app}, we have $E\left[ \tilde{\Phi}_{T} \right]$  $\leq$  $2\sqrt{C_{\max}\Phi_0(C_{\max} V_e + C_1)\rho^T}$.

By choosing $T$ $=$ $T(\epsilon)$$=$ $\frac{2}{-\log \rho}\log \left(\frac{2\sqrt{C_{\max}\Phi_0(C_{\max} V_e + C_1)}}{\epsilon} \right)$, we get $E\left[ \tilde{\Phi}_{T(\epsilon)} \right] \leq \epsilon$. We now write the iteration complexity in terms of condition numbers $\kappa_f, \kappa_g$ and compression factor $\delta$.
\begin{align}
T(\epsilon) & = \frac{2}{-\log \rho}\log \left(\frac{2\sqrt{C_{\max}\Phi_0(C_{\max} V_e + C_1)}}{\epsilon} \right) \nonumber \\
& \leq 10\max \left\lbrace \frac{336\kappa_f^2}{np_{\min}} , \frac{1152\sqrt{\delta}(1+\delta)\kappa_g \kappa_f^2}{np_{\min}} ,8(1+\delta)\kappa_g, \frac{144(1+\delta)\kappa_f^2}{np_{\min}} , \frac{2}{p}   \right\rbrace log \left(\frac{2\sqrt{C_{\max}\Phi_0(C_{\max} V_e + C_1)}}{\epsilon} \right),
\end{align} 
where the inequality follows from \eqref{eq:log_rho_rec_ub}. It completes the proof of Theorem \ref{thm:switching_scheme_complexity}. The total number of gradients evaluated in Algorithm \ref{alg:IPDHG_with_sgd_svrg_oracle} to achieve target accuracy $\epsilon$ are $BT_0 + (2B+pN_\ell)(T(\epsilon) - T_0)  = (2B+pN_\ell)T(\epsilon) - (B + pN_\ell)
T_0$.

\section{Convergence of IPDHG with SVRGO (C-DPSVRG)}
\label{appendix_svrg_main_result}
In this section, we present the convergence behavior of C-DPSVRG.
\begin{theorem} \label{thm:svrg_complexity} Let $\{\bx_{t} \}_t, \{\by_{t}\}_t$  be the sequences generated by C-DPSVRG. Suppose Assumptions \ref{s_convexity_assumption}-\ref{weight_matrix_assumption} and Assumptions \ref{smoothness_x_svrg}-\ref{lipschitz_yx_svrg} hold. Then  iteration complexity $T(\epsilon)$ of C-DPSVRG for achieving $\epsilon$-accurate saddle point solution in expectation is
	{\footnotesize
		\begin{align} 
			\mathcal{O} (\max \lbrace \frac{\sqrt{\delta}(1+\delta)\kappa_g \kappa_f^2}{np_{\min}} ,(1+\delta)\kappa_g, \frac{(1+\delta)\kappa_f^2}{np_{\min}}   ,\frac{2}{p} \rbrace \log ( \frac{\tilde{\Phi}_0}{\epsilon} ) ) . \nonumber 
		\end{align}
	}
	where ${\tilde{\Phi}_0}$ denotes the distance of the initial values $x_0, y_0, D^x_0, D^y_0, H^x_0, H^y_0$ from their respective limit points (described in equation \eqref{tilde_Phi_t} in \cite{cdctechnicalreport}).
\end{theorem}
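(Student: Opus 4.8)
\textbf{Proof plan for Theorem \ref{thm:svrg_complexity}.}

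The plan is to mirror the structure already built for the switching algorithm (Theorem \ref{thm:switching_scheme_complexity}), but without the GSGO warm-up phase, so that the whole run is governed by the SVRGO recursion of Lemma \ref{lem:exp_phi_tilde_rec}. First I would instantiate Lemma \ref{lem:exp_phi_tilde_rec} with $T_0 = 0$: since C-DPSVRG runs IPDHG with SVRGO from the very first iteration, the hypotheses of Lemma \ref{lem:exp_phi_tilde_rec} hold for every $t \geq 0$, and the lemma gives the contraction $E[\tilde{\Phi}_{t+1}] \leq \rho\, E[\tilde{\Phi}_{t}]$ with $\rho \in (0,1)$ defined in \eqref{rho_svrg}. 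Unrolling this recursion yields $E[\tilde{\Phi}_{t}] \leq \rho^{t}\tilde{\Phi}_{0}$, where $\tilde{\Phi}_0$ is the initial potential defined through \eqref{tilde_Phi_t} (with reference points $\tilde{\bx}_0 = \bx_0$, $\tilde{\by}_0 = \by_0$, and step size $s = \frac{\mu n p_{\min}}{24L^2}$, together with the SVRGO parameter choices in \eqref{eq:bx_by_svrg}--\eqref{tilde_Phi_t}). Hence to guarantee $E[\tilde{\Phi}_{T(\epsilon)}] \leq \epsilon$ it suffices to take $T(\epsilon) = \lceil \frac{1}{-\log\rho}\log(\tilde{\Phi}_0/\epsilon)\rceil$.

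The remaining work is to convert the abstract rate $\frac{1}{-\log\rho}$ into the explicit dependence on $\kappa_f$, $\kappa_g$, $\delta$, $p$ and $np_{\min}$ claimed in the statement. Using $-\log\rho \geq 1-\rho$, I would bound $\frac{1}{-\log\rho} \leq \frac{1}{1-\rho}$ and then lower-bound $1-\rho$ by taking the minimum over the seven terms in the max defining $\rho$ in \eqref{rho_svrg}. For the terms $1-\frac{3b_x}{7}$ and $1-\frac{3b_y}{7}$, Lemma \ref{lem:parameters_feas_svrg} (via \eqref{eq:bx_lb}, \eqref{eq:by_lb}) gives $b_x, b_y \geq \frac{np_{\min}}{144\kappa_f^2}$, so $1-\rho \gtrsim \frac{np_{\min}}{\kappa_f^2}$ from these. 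For $1-\frac{\gamma_x}{2}\lambda_{m-1}(I-W)$ and its $y$-counterpart, the last display of Lemma \ref{lem:parameters_feas_svrg} shows these equal either $1-\frac{b_x}{8\sqrt{\delta}(1+\delta)\kappa_g}$ or $1-\frac{1}{8(1+\delta)\kappa_g}$; combining with the upper bound $b_x \leq 1/24$ gives a contribution of order $\min\{\frac{1}{(1+\delta)\kappa_g}, \frac{np_{\min}}{\sqrt{\delta}(1+\delta)\kappa_f^2\kappa_g}\}$. For $1-\alpha_x = 1-\frac{b_x}{1+\delta}$ and $1-\alpha_y$, the bound $b_x,b_y \geq \frac{np_{\min}}{144\kappa_f^2}$ gives a contribution of order $\frac{np_{\min}}{(1+\delta)\kappa_f^2}$. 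The term $1-\frac{p}{2}$ contributes $p/2$. Taking the reciprocal of the minimum of all these and absorbing constants, I obtain
\begin{align}
T(\epsilon) = \mathcal{O}\!\left( \max\Big\{ \tfrac{\sqrt{\delta}(1+\delta)\kappa_g\kappa_f^2}{np_{\min}},\, (1+\delta)\kappa_g,\, \tfrac{(1+\delta)\kappa_f^2}{np_{\min}},\, \tfrac{2}{p} \Big\}\log\big(\tfrac{\tilde{\Phi}_0}{\epsilon}\big) \right), \nonumber
\end{align}
where the $\frac{336\kappa_f^2}{np_{\min}}$-type term from $1-3b_x/7$ is dominated by the $(1+\delta)\kappa_f^2/(np_{\min})$ term and so is absorbed into the max.

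The main obstacle is purely bookkeeping rather than conceptual: carefully verifying that every one of the seven terms in $\rho$ is bounded below by one of the four quantities appearing in the final max (so nothing is lost or spuriously introduced), and checking that the constant factors from Lemma \ref{lem:parameters_feas_svrg} combine correctly—in particular handling the two-case split ($b_x \leq \sqrt{\delta}$ versus $b_x > \sqrt{\delta}$) in the $\gamma_x$ bound and confirming that the $\delta = 0$ case is covered by the remark after \eqref{tilde_Phi_t}. I would also note explicitly that, unlike Theorem \ref{thm:switching_scheme_complexity}, there is no $V_e$ term here because SVRGO has vanishing variance, so $\tilde{\Phi}_0$ (a local, computable-at-initialization quantity) replaces the $\sqrt{C_{\max}\Phi_0(C_{\max}V_e + C_1)}$ factor. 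The argument for the gradient-computation count $(2B + pN_\ell)T(\epsilon)$ and the matching communication complexity then follows exactly as in the switching case but with $T_0 = 0$.
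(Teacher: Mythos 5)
Your plan is correct and follows essentially the same route as the paper's proof: apply Lemma \ref{lem:exp_phi_tilde_rec} from $t=0$, unroll the geometric recursion $E[\tilde{\Phi}_{t}]\leq \rho^{t}\tilde{\Phi}_0$, and translate $\frac{1}{-\log\rho}$ into the explicit complexity via the bounds of Lemma \ref{lem:parameters_feas_svrg} (the paper uses $\frac{1}{-\log(1-x)}\leq \frac{5}{x}$ where you use $\frac{1}{1-\rho}$, and it retains the dominated $\frac{\kappa_f^2}{np_{\min}}$ term that you absorb, which is immaterial). The only bookkeeping difference is that the paper measures $\epsilon$-accuracy on $E\Vert \bx_T-\mathbf{1}x^\star\Vert^2 + E\Vert \by_T-\mathbf{1}y^\star\Vert^2$ and so divides by $M=\min\{M_x,M_y\}>3/7$, which only changes the constant inside the logarithm.
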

\begin{proof}
	This proof is based on several intermediate results proved in Appendices \ref{appendix_A}-\ref{appendix_C}. Hence it would be useful to refer to those results in order to appreciate the proof of  Theorem \ref{thm:svrg_complexity}. 
	
	Observe that 
	\begin{align}
		E \left\Vert \bx_{t+1} - \mathbf{1}x^\star \right\Vert^2 + E\left\Vert \by_{t+1} - \mathbf{1}y^\star \right\Vert^2 
		& \leq  \frac{1}{ \min \{M_{x},M_{y}\}}  \left( M_{x}E\left\Vert \bx_{t+1} - \mathbf{1}x^\star \right\Vert^2 + M_{y} E\left\Vert  \by_{t+1} - \mathbf{1}y^\star \right\Vert^2 \right) \nonumber\\
		& \leq \frac{1}{ \min \{M_{x},M_{y}\}} E\left[ \tilde{\Phi}_{t+1} \right] \\
		& \leq \frac{1}{ M} \rho^{t+1} \tilde{\Phi}_{0} ,
	\end{align}
	where $M := \min \{ M_x, M_y \}$ and last inequality follows from Lemma \ref{lem:exp_phi_tilde_rec} with $T_0 = 0$. Hence, 
	\begin{align}
		E\left\Vert \bx_{T(\epsilon)} - \mathbf{1}x^\star  \right\Vert^2 + E\left\Vert \by_{T(\epsilon)} - \mathbf{1}y^\star  \right\Vert^2 & \leq \epsilon ,
	\end{align}
	for $T(\epsilon) = \frac{1}{-\log \rho} \log \left( \frac{\tilde{\Phi}_0}{M\epsilon} \right)$.
	
	\textbf{Gradient Computation Complexity:}
	
	Recall 
	\begin{align}
		\rho & = \max \left\lbrace 1 - \frac{3b_x}{7}, 1 - \frac{3b_y}{7}, 1- \frac{\gamma_x}{2}\lambda_{m-1}(I-W) , 1- \frac{\gamma_y}{2}\lambda_{m-1}(I-W) , 1-\alpha_{x} , 1-\alpha_{y} , 1-\frac{p}{2} \right\rbrace .
	\end{align}
	Using Lemma \ref{lem:parameters_feas_svrg}, $\rho$ can be upper bounded as
	\begin{align}
		\rho  & \leq  \max \Bigg \{ 1 - \frac{3b_x}{7}, 1 - \frac{3b_y}{7} , 1 - \frac{b_x}{8\sqrt{\delta}(1+\delta)\kappa_g} , {\red{1 - \frac{1}{8(1+\delta)\kappa_g}}} ,1 - \frac{b_y}{8\sqrt{\delta}(1+\delta)\kappa_g} ,   \\ & \qquad \qquad {\red{1 - \frac{1}{8(1+\delta)\kappa_g}}} , 1-\frac{b_x}{1+\delta} , 1-\frac{b_y}{1+\delta},  1-\frac{p}{2} \Bigg \} .
	\end{align}
	Using \eqref{eq:bx_lb} and \eqref{eq:by_lb}, we have
	\begin{align}
		1 - \frac{3b_x}{7} \leq 1- \frac{3}{7} \frac{np_{\min}}{144\kappa_f^2} = 1 - \frac{np_{\min}}{336\kappa_f^2} , \
		1 - \frac{3b_y}{7} \leq 1- \frac{3}{7} \frac{np_{\min}}{144\kappa_f^2} = 1 - \frac{np_{\min}}{336\kappa_f^2} \nonumber\\
		1-\frac{b_x}{1+\delta} \leq 1-\frac{np_{\min}}{144(1+\delta)\kappa_f^2} , \ 1-\frac{b_y}{1+\delta} \leq 1-\frac{np_{\min}}{144(1+\delta)\kappa_f^2} \\
		1 - \frac{b_x}{8\sqrt{\delta}(1+\delta)\kappa_g} \leq 1 - \frac{np_{\min}}{1152\sqrt{\delta}(1+\delta)\kappa_g \kappa_f^2} , \ 1 - \frac{b_y}{8\sqrt{\delta}(1+\delta)\kappa_g} \leq 1 - \frac{np_{\min}}{1152\sqrt{\delta}(1+\delta)\kappa_g \kappa_f^2} .
	\end{align}
	Therefore,
	\begin{align}
		& \rho  \leq \max \left\lbrace 1 - \frac{np_{\min}}{336\kappa_f^2} , 1 - \frac{np_{\min}}{1152\sqrt{\delta}(1+\delta)\kappa_g \kappa_f^2} , 1 - \frac{1}{8(1+\delta)\kappa_g} , 1-\frac{np_{\min}}{144(1+\delta)\kappa_f^2} , 1-\frac{p}{2}   \right\rbrace  \\
		& = 1 - \min \left\lbrace \frac{np_{\min}}{336\kappa_f^2} , \frac{np_{\min}}{1152\sqrt{\delta}(1+\delta)\kappa_g \kappa_f^2} , \frac{1}{8(1+\delta)\kappa_g}, \frac{np_{\min}}{144(1+\delta)\kappa_f^2} , \frac{p}{2}   \right\rbrace \\
		& =: 1 - \tilde{C} .
	\end{align}
	By taking log on both sides, we obtain
	\begin{align}
		\log \rho & \leq \log(1 - \tilde{C}) \nonumber\\
		-\log \rho & \geq -\log(1 - \tilde{C}) \nonumber\\
		\frac{1}{-\log \rho} & \leq \frac{1}{-\log(1 - \tilde{C})} \nonumber\\
		&  \leq \frac{5}{\tilde{C}} \nonumber \\
		& = 5 \left( \min \left\lbrace \frac{np_{\min}}{336\kappa_f^2} , \frac{np_{\min}}{1152\sqrt{\delta}(1+\delta)\kappa_g \kappa_f^2} ,\frac{1}{8(1+\delta)\kappa_g}, \frac{np_{\min}}{144(1+\delta)\kappa_f^2} , \frac{p}{2}   \right\rbrace \right)^{-1}  \nonumber\\
		& = 5\max \left\lbrace \frac{336\kappa_f^2}{np_{\min}} , \frac{1152\sqrt{\delta}(1+\delta)\kappa_g \kappa_f^2}{np_{\min}} ,8(1+\delta)\kappa_g, \frac{144(1+\delta)\kappa_f^2}{np_{\min}} , \frac{2}{p}   \right\rbrace , \label{eq:log_rho_rec_ub}
	\end{align}
	where the fourth inequality uses the fact that $(1/-\log(1-x)) \leq 5/x$ for all $0 < x < 1 $. Using Lemma \ref{lem:parameters_feas_svrg}, we have $M_x \geq 1 - \frac{4b_x}{7}$. Therefore, $M_x > 1 - \frac{4}{7} = \frac{3}{7}$ because $0 < b_x < 1$. Moreover, $M_y > \frac{3}{7}$ as $0 < b_y < 1$. Therefore, $\log \left( \frac{\tilde{\Phi}_0}{M\epsilon} \right) \leq \log \left( \frac{7\tilde{\Phi}_0}{3\epsilon} \right)$ .
	Hence,
	\begin{align} 
		T(\epsilon) & = \mathcal{O} \left(\max \Bigg \lbrace \frac{\kappa_f^2}{np_{\min}} , \frac{\sqrt{\delta}(1+\delta)\kappa_g \kappa_f^2}{np_{\min}} ,(1+\delta)\kappa_g, \frac{(1+\delta)\kappa_f^2}{np_{\min}} , \frac{2}{p} \Bigg \rbrace {\red{\log \left( \frac{\tilde{\Phi}_0}{\epsilon} \right)}} \right).
	\end{align}
\end{proof}

\section{Discussion on the analysis techniques}

In this section, we discuss and compare the analysis techniques of our work with those in  existing works. In \cite{li2021decentralized} a convex composite minimization problem is studied and inexact PDHG method is applied to its saddle point formulation. In this work, we study a different problem \eqref{eq:main_opt_consenus_constraint} where a smooth function depends jointly on primal and dual variables. We prove that it is equivalent to study unconstrained saddle point problem \eqref{minmax_lagrange_problem} to get the solution of \eqref{eq:main_opt_consenus_constraint}. However, \cite{li2021decentralized} uses a well known equivalence between a convex minimization problem and its Lagrangian formulation \cite{lan2020communication}. We define additional quantities $D_y^\star, H_y^\star$ and Bregman distance functions $V_{f_i,y}(x_1,x_2), V_{-f_i,x}(y_1,y_2)$ in Appendix \ref{appendix_A} to obtain appropriate bounds. 
%

\textbf{ C-DPSSG analysis:} Using smoothness, strong convexity strong concavity assumptions, and definitions of $V_{f_i,y}(x_1,x_2)$ and $V_{-f_i,x}(y_1,y_2)$, we upper bound $E \left\Vert \bx_{t} - \mathbf{1}x^\star - s\mathcal{G}^\bx_{t} + s\nabla_x F(\mathbf{1}x^\star,\mathbf{1}y^\star)  \right\Vert^2 + E \left\Vert \by_{t} - \mathbf{1}y^\star + s\mathcal{G}^\by_{t} - s\nabla_y F(\mathbf{1}x^\star,\mathbf{1}y^\star)  \right\Vert^2$ in terms of $\bx_t, \by_t, \tilde{\bx}_t, \tilde{\by}_t, V_{f_i,y}(x_1,x_2)$ and $V_{-f_i,x}(y_1,y_2)$ in Lemma \ref{lem:exp_xkt_ykt_svrg}. Note that the upper bound in Lemma \ref{lem:exp_xkt_ykt_svrg} is complicated and different from that of \cite{li2021decentralized} because we have additional terms contributed by dual variable $y$ with different coefficients and terms containing square norms dependent on the reference points. This intermediate result generates different bounds and sets of parameters in the subsequent analysis. We carefully set the step size and choose algorithm parameters with proven feasibility in Lemma \ref{lem:parameters_feas_svrg}. We rigorously compute lower and upper bounds on chosen parameters in terms of $\kappa_f, \kappa_g$ and $\delta$ in Lemma \ref{lem:parameters_feas_svrg} and Appendix \ref{appendix_svrg_main_result}. In our work, these derivations are more involved in comparison to \cite{li2021decentralized}. Similar observations hold also for analysis of Algorithm \ref{alg:IPDHG_with_sgd_svrg_oracle} with GSGO provided in Appeendix \ref{appendix_C}.

Analysis methods of \cite{xianetal2021fasterdecentnoncvxsp} and \cite{liu2020decentralized} are based on averaging quantities; for example average of iterates and gradients. The analysis methods in \cite{xianetal2021fasterdecentnoncvxsp} and \cite{liu2020decentralized} require separate bounds for consensus error and gradient estimation errors and depend in addition on the smoothness of saddle point problem. In contrast to \cite{xianetal2021fasterdecentnoncvxsp} and \cite{liu2020decentralized}, our analysis does not demand any separate bound on consensus error and gradient estimation error and handles non-smooth functions as well. Unlike our compression based communication scheme, the analysis in \cite{beznosikovetal2020distsaddle} bounds errors using an accelerated gossip scheme and approximate solution obtained by solving an inner saddle point problem at every node. 

\section{Numerical Experiments on Robust Logistic Regression}
\label{appendix_experiments}

We evaluate the effectiveness of proposed algorithms on robust logistic regression problem 
\begin{align}
\min_{ x \in \mathcal{X}} \max_{y \in \mathcal{Y}} \Psi(x,y) = \frac{1}{N} \sum_{i = 1}^N \log\left( 1+ exp\left( -b_ix^\top(a_i + y)\right) \right) + \frac{\lambda}{2} \left\Vert x \right\Vert^2_2 -\frac{\beta}{2} \left\Vert y \right\Vert^2_2  \label{appendix_robust_logistic_regression}
\end{align}
over a binary classification data set $\mathcal{D} = \{(a_i, b_i) \}_{i = 1}^N$.  We consider constraint sets $\mathcal{X}$ and $\mathcal{Y}$ as $\ell_2$ ball of radius $100$ and $1$ respectively. We compute smoothness parameters $L_{xx}, L_{yy}, L_{xy}$ and $L_{yx}$ using Hessian information of the objective function \red{(see Appendix \ref{lipschitz_constant_estimation})} and set strong convexity and strong concavity parameters to $\lambda$ and $\beta$ respectively. Unless stated otherwise, we set $\lambda = \beta = 10$, number of nodes to $m = 20$ and number of batches to $n = 20$ in all our experiments. \red{The initial points $x_0, y_0$ are generated randomly and $D^\bx, D^\by$ are set to $\mathbf{0}$. We set up the step size of proposed methods and baseline methods using the theoretical values provided in the respective papers. } We implement all the experiments in Python programming language on a linux machine with 2.10 GHz Intel\textsuperscript{\textregistered} Xeon\textsuperscript{\textregistered} processor and 32 shared CPUs. 

\subsection{Data Sets}
\vspace{-0.01in}
 We rely on four binary classification datasets namely, a4a, phishing and ijcnn1 from \url{https://www.csie.ntu.edu.tw/~cjlin/libsvmtools/datasets/} and sido data from \url{http://www.causality.inf.ethz.ch/data/SIDO.html}. The characteristics of these datasets are reported in Table \ref{table:data_sets}. 
\begin{table}[h]
\caption{Data Sets used for experiments. $N$ and $d$ denote respectively the number of samples and number of features.} \label{table:data_sets}
\begin{center}
\begin{tabular}{lll}
\toprule
\textbf{Data set}  & $N$ & $d$  \\
\midrule 
a4a    & 4781 & 122\\ 
\midrule 
phishing & 11,055 & 68 \\  
\midrule 
ijcnn1 & 49,990 & 22 \\
\midrule 
sido & 2536 & 4932 \\
\bottomrule
\end{tabular}
\end{center}
\end{table}
\subsection{Baseline methods}
\vspace{-0.01in}
We compare the performance of proposed algorithms C-DPSVRG and C-DPSSG with three non-compression based baseline algorithms: (1) Distributed Min-Max Data similarity \cite{beznosikovetal2020distsaddle} (2) Decentralized Parallel Optimistic Stochastic Gradient (DPOSG) \cite{liu2020decentralized}  and, (3) Decentralized Minimax Hybrid Stochastic Gradient Descent (DM-HSGD) \cite{xianetal2021fasterdecentnoncvxsp}. 

\textbf{Distributed Min-Max data similarity:} This algorithm is based on accelerated gossip scheme employed on model updates and gradient vectors  \cite{beznosikovetal2020distsaddle}. This method requires approximate solution of an  inner saddle point problem at every iterate. We run extragradient method \cite{korpelevich1976extragradient} to solve the inner saddle point problem with a desired accuracy provided in \cite{beznosikovetal2020distsaddle}. We compute the number of iterates in accelerated gossip scheme and the step size using theoretical details provided in \cite{beznosikovetal2020distsaddle}. Throughout this section, we use the shorthand notation for Distributed Min-Max data similarity algorithm as Min-Max similarity. \newline
\textbf{Decentralized Parallel Optimistic Stochastic Gradient (DPOSG):} DPOSG \cite{liu2020decentralized} is a two step algorithm with local model averaging designed for solving unconstrained saddle point problems in a decentralized fashion. We include the projection steps to update both sequences of DPOSG as we are solving constrained problem \eqref{appendix_robust_logistic_regression}. The step size and the number of local model averaging steps are tuned according to Theorem 1 in \cite{liu2020decentralized}. 

\textbf{Decentralized Minimax Hybrid Stochastic Gradient Descent (DM-HSGD):} DM-HSGD \cite{xianetal2021fasterdecentnoncvxsp} is a gradient tracking based algorithm designed for solving saddle point problems with a constraint set on dual variable. To take care of the constraints on primal variable, we adapt DM-HSGD by incorporating projection step to the model update of primal variables as well. We use grid search to find the best step sizes for primal and dual variable updates. Other parameters like initial large batch size and parameters involved in gradient tracking update sequence are chosen according to the experimental settings in \cite{xianetal2021fasterdecentnoncvxsp}.
\vspace{-0.04in}
\subsection{Benchmark Quantities}
\vspace{-0.01in}
We run the centralized and uncompressed version of C-DPSVRG for $50,000$ iterations to find saddle point solution $z^\star = (x^\star, y^\star)$. The performance of all the methods is measured using $\frac{1}{m}\sum_{i = 1}^m \left\Vert z^i_t - z^\star \right\Vert^2$.  

\textbf{Number of gradient computations and communications:} We calculate the total number of gradient computations according to the number of samples used in the gradient computation at a given iterate $t$.  The number of communications per iterate are computed as the number of times a node exchanges information with its neighbors. 

\textbf{Number of bits transmitted:} We set number of bits $b = 4$ in compression operator $Q_\infty(x)$ for C-DPSVRG and C-DPSSG. Similar to \cite{koloskova2019decentralized}, we assume that on an average $5$ bits (1 bit for sign and 4 bits for quantization level) are transmitted at every iterate for C-DPSVRG and C-DPSSG. We assume that on an average 32 bits are transmitted per communication for DPOSG, DM-HSGD and Min-Max similarity algorithm. 
\vspace{-0.04in}
\subsection{Observations} 
\vspace{-0.01in}

\textbf{Compression effect:} Plots in Figure \ref{fig:logistic_regression_distance_from_saddle_torus_appendix} depict that C-DPSVRG and C-DPSSG transmits less number of bits than other baseline methods.  DPOSG and Min-Max similarity are performing poorly against bits transmission because both the schemes involve multiple rounds of communications. 

\textbf{Compression error:}
We plot compression error $\left\Vert Q(\nu^x) - \nu^x \right\Vert^2 + \left\Vert Q(\nu^y) - \nu^y \right\Vert^2$ against number of transmitted bits for C-DPSVRG as shown in Figure \ref{fig:compression_error_num_bits}. We observe that C-DPSVRG with $O(\log d)$ bits achieves compression error $10^{-25}$ in less than 20,000 transmitted bits. It shows a clear advantage of using $O(\log d)$ bits in C-DPSVRG while maintaining low compression error. 

\textbf{Number of bits transmitted:}
As demonstrated in Figure \ref{fig:comparison_bits_dsvrg}, C-DPSVRG transmits less number of bits to achieve highly accurate solution when $b = 1+\log_2 \sqrt{d}$. We can observe that the convergence behavior of C-DPSVRG is affected by setting number of bits less than $1+\log_2 \sqrt{d}$. For example, the convergence of C-DPSVRG becomes slow  for sido data with $b = 2, 4 < 1+\log_2 \sqrt{d} \approx 7 $ as shown in Figure \ref{fig:comparison_bits_dsvrg}. It shows that $Q_\infty(x)$ provides better performance for $b  =\mathcal{O}(\log_2 d)$ especially for high-dimensional data points. 

\textbf{Communication efficiency:} The one-time communication at every iterate in C-DPSVRG speeds up communication and makes C-DPSVRG to be faster than Min-Max similarity and DPOSG methods as shown in Figure \ref{fig:logistic_regression_distance_from_saddle_torus_appendix}.

\textbf{Impact of topology:} Figure \ref{fig:logistic_regression_distance_from_saddle_torus_appendix} and Figure \ref{fig:logistic_regression_distance_from_saddle_ring_appendix} respectively demonstrate the performance of C-DPSVRG and C-DPSSG on 2d torus and ring topology. 
We observe that C-DPSSG is faster than C-DPSVRG in terms of gradient computations but is competitive asymptotically against number of communications and bits transmitted for ring topology as depicted in Figure \ref{fig:logistic_regression_distance_from_saddle_ring_appendix}. Similar behavior is observed  in 2D torus in Figure \ref{fig:logistic_regression_distance_from_saddle_torus_appendix}. Hence for sparse topology, C-DPSSG and C-DPSVRG have similar communication cost but the former has low computation cost and hence is appealing for sparse topology as well.

\textbf{Impact of number of nodes:} 
As the number of nodes increases, C-DPSSG requires fewer gradient computations to achieve similar solution accuracy. However, the number of communications increases for large nodes in C-DPSSG because error due to variance becomes high in this situation. C-DPSVRG requires small number of gradient computations for large number of nodes because it assigns smaller batch size to every node. As number of nodes increases, Figure \ref{fig:dsvrg_behavior_nodes} demonstrates that C-DPSVRG performance does not get affected too much in terms of communications and bits transmitted. C-DPSVRG achieves fast convergence eventually in terms of gradient computations with a smaller number of nodes on ring topology, as demonstrated in Figure \ref{fig:dsvrg_behavior_nodes_ring}. The sparsity level of ring topology is higher than that of 2D torus and increases with number of nodes. In contrast to the performance of C-DPSVRG in terms of  communications in 2D torus (Figure \ref{fig:dsvrg_behavior_nodes}), C-DPSVRG requires more communications for large number of nodes in a ring topology, as shown in Figure \ref{fig:dsvrg_behavior_nodes_ring}. Figure \ref{fig:different_nodes} shows that the asymptotic behavior of C-DPSSG is affected by increasing number of nodes whereas C-DPSVRG is robust to the increase in number of nodes. However by choosing an appropriate $\epsilon$ in C-DPSSG, it would be possible to improve its behavior on topologies with large number of nodes. 


\textbf{Different choices of reference probabilities:}
The full batch gradient computations in C-DPSVRG depends on the reference probability parameter $p$. Inspired from \cite{kovalev2020don}, we run C-DPSVRG with five different reference probabilities as $1/n, 1/(\kappa_f \ n^3)^{1/4}, 1/(\kappa_f \ n)^{1/2}, 1/(\kappa_f^3 \ n)^{1/4}$ and $1/\kappa_f$. From Figure \ref{fig:dsvrg_behavior_ref_prob}, we observe that setting $p = 1/n$ requires the least number of gradient computations as it corresponds to the less frequent computation of full batch gradients. 



\vspace{-0.05in}
\subsection{Estimating Lipschitz parameters}
\label{lipschitz_constant_estimation}
\vspace{-0.01in}
In this section, we estimate Lipschitz parameters $L_{xx}, L_{yy}, L_{xy}, L_{yy}$ of robust logistic regression problem \eqref{appendix_robust_logistic_regression}. Assume that each node $i$ has $N_i$ number of local samples such that $\sum_{i = 1}^m N_i = N$. Recall objective function $\Psi(x,y)$ in equation \eqref{appendix_robust_logistic_regression}:
\begin{align*}
	\Psi(x,y) &= \frac{1}{N} \sum_{i = 1}^N \log\left( 1+ exp\left( -b_ix^\top(a_i + y)\right) \right) + \frac{\lambda}{2} \left\Vert x \right\Vert^2_2 -\frac{\beta}{2} \left\Vert y \right\Vert^2_2 \\
		& = \frac{1}{N} \sum_{i = 1}^m \sum_{l = 1}^{N_i}  \log\left( 1+ exp\left( -b_{il}x^\top(a_{il} + y)\right) \right) + \frac{\lambda}{2} \left\Vert x \right\Vert^2_2 -\frac{\beta}{2} \left\Vert y \right\Vert^2_2 \\
\end{align*}
\begin{align*}
	& = \sum_{i = 1}^m \left( \frac{1}{N}  \sum_{l = 1}^{N_i}  \log\left( 1+ exp\left( -b_{il}x^\top(a_{il} + y)\right) \right) + \frac{\lambda}{2m} \left\Vert x \right\Vert^2_2 -\frac{\beta}{2m} \left\Vert y \right\Vert^2_2 \right) \\
	& = \sum_{i = 1}^m f_i(x,y),
\end{align*}
where $f_i(x,y) = \frac{1}{N}  \sum_{l = 1}^{N_i}  \log\left( 1+ exp\left( -b_{il}x^\top(a_{il} + y)\right) \right) + \frac{\lambda}{2m} \left\Vert x \right\Vert^2_2 -\frac{\beta}{2m} \left\Vert y \right\Vert^2_2 $.
Gradients of $f_i(x,y)$ with respect to $x$ and $y$ are given by 
\begin{align*}
	\nabla_x f_i(x,y) & = \frac{1}{N} \sum_{l = 1}^{N_i} \frac{-b_{il}(a_{il} + y)}{1+ exp\left( b_{il}x^\top(a_{il} + y\right)} + \frac{\lambda}{m} x \\
	\nabla_y f_i(x,y) & = \frac{1}{N} \sum_{l = 1}^{N_i} \frac{-b_{il}x}{1+ exp\left( b_{il}x^\top(a_{il} + y\right)} - \frac{\beta}{m} y .
\end{align*}
We create $n$ batches $\{N_{i1}, \ldots, N_{in} \}$ of local samples $N_i$ and write $f_i(x,y)$ in the form of $\frac{1}{n} f_{ij}(x,y)$. 
\begin{align*}
	f_i(x,y) &= \frac{1}{N}  \sum_{l = 1}^{N_i}  \log\left( 1+ exp\left( -b_{il}x^\top(a_{il} + y)\right) \right) + \frac{\lambda}{2m} \left\Vert x \right\Vert^2_2 -\frac{\beta}{2m} \left\Vert y \right\Vert^2_2 \\
	& = \frac{1}{N} \sum_{j = 1}^n \sum_{l = 1}^{N_{ij}}  \log\left( 1+ exp\left( -b^j_{il}x^\top(a^j_{il} + y)\right) \right) + \frac{\lambda}{2m} \left\Vert x \right\Vert^2_2 -\frac{\beta}{2m} \left\Vert y \right\Vert^2_2  \\
	& =  \sum_{j = 1}^n \left( \frac{1}{N} \sum_{l = 1}^{N_{ij}}  \log\left( 1+ exp\left( -b^j_{il}x^\top(a^j_{il} + y)\right) \right) + \frac{\lambda}{2mn} \left\Vert x \right\Vert^2_2 -\frac{\beta}{2mn} \left\Vert y \right\Vert^2_2 \right) \\
	& = \frac{1}{n} \sum_{j = 1}^n \left( \frac{n}{N} \sum_{l = 1}^{N_{ij}}  \log\left( 1+ exp\left( -b^j_{il}x^\top(a^j_{il} + y)\right) \right) + \frac{\lambda}{2m} \left\Vert x \right\Vert^2_2 -\frac{\beta}{2m} \left\Vert y \right\Vert^2_2 \right) \\
	& = \frac{1}{n} \sum_{j = 1}^n f_{ij}(x,y) ,
\end{align*}
where $f_{ij}(x,y) = \frac{n}{N} \sum_{l = 1}^{N_{ij}}  \log\left( 1+ exp\left( -b^j_{il}x^\top(a^j_{il} + y)\right) \right) + \frac{\lambda}{2m} \left\Vert x \right\Vert^2_2 -\frac{\beta}{2m} \left\Vert y \right\Vert^2_2$. We are now ready to find required Lipschitz parameters. \newline
\textbf{Computing} $L^{ij}_{xx}$:
\begin{align*}
	\nabla_{xx}^2 f_{ij}(x,y) & = \frac{n}{N} \sum_{l = 1}^{N_{ij}} \frac{(a^j_{il} + y)(a^j_{il} + y)^\top exp\left( b^j_{il}x^\top(a^j_{il} + y\right)}{(1+ exp( b^j_{il}x^\top(a^j_{il} + y))^2} + \frac{\lambda}{m}I \\
	\implies \ \left\Vert \nabla_{xx}^2 f_{ij}(x,y) \right\Vert_2 & \leq \frac{n}{4N} \sum_{l = 1}^{N_{ij}} (2 \Vert a^j_{il} \Vert_2^2 + 2R_y^2) + \frac{\lambda}{m} \\
	& = \frac{n}{2N} \sum_{l = 1}^{N_{ij}} \Vert a^j_{il} \Vert_2^2 + \frac{nN_{ij}R^2_y}{2N} + \frac{\lambda}{m} =: L^{ij}_{xx} .
\end{align*}
\textbf{Computing} $L^{ij}_{yy}$:
\begin{align*}
	\nabla_{yy}^2 f_{ij}(x,y) & = \frac{n}{N} \sum_{l = 1}^{N_{ij}} \frac{exp\left( b^j_{il}x^\top(a^j_{il} + y\right)(b^j_{il})^2 xx^\top}{\left(1+exp\left( b^j_{il}x^\top(a^j_{il} + y\right) \right)^2} - \frac{\beta}{m}I \\
	\implies \ \left\Vert \nabla_{xx}^2 f_{ij}(x,y) \right\Vert_2 & \leq \frac{n}{N} \sum_{l = 1}^{N_{ij}} \frac{\left\Vert xx^\top \right\Vert_2}{4} + \frac{\beta}{m} \\
	& \leq \frac{nN_{ij}R_x^2}{4N} + \frac{\beta}{m} =: L^{ij}_{yy} .
\end{align*}
\textbf{Computing} $L^{ij}_{xy}$:
\begin{align*}
	\nabla_y (\nabla_x f_{ij}(x,y)) & = \frac{n}{N} \sum_{l = 1}^{N_{ij}} \left( \frac{-b^j_{il}I}{1 + exp\left( b^j_{il}x^\top(a^j_{il} + y)\right)} + (b^j_{il})^2(a^j_{il} + y)x^\top \frac{exp\left( b^j_{il}x^\top(a^j_{il} + y\right)}{(1+ exp( b^j_{il}x^\top(a^j_{il} + y))^2} \right) \\
	\text{Hence we have }& \\  
	\left\Vert \nabla_{xy}^2 f_{ij}(x,y) \right\Vert_2  & \leq \frac{n}{N} \sum_{l = 1}^{N_{ij}} \left( 1 + \frac{1}{4} \left\Vert (a^j_{il} + y)x^\top \right\Vert_2 \right) \\
	& \leq \frac{n}{N} \sum_{l = 1}^{N_{ij}} \left( 1 + \frac{R_x}{4} \Vert (a^j_{il} + y) \Vert_2 \right) \\
	& \leq \frac{n}{N} \sum_{l = 1}^{N_{ij}} \left( 1 + \frac{R_x}{4}(\Vert a^j_{il} \Vert_2 + R_y) \right) \\
	& = \frac{n}{N} \left( \left(1+ \frac{R_xR_y}{4} \right)N_{ij} + \frac{R_x}{4}  \sum_{l = 1}^{N_{ij}} \Vert a^j_{il} \Vert_2 \right) =: L^{ij}_{xy} . 
\end{align*}
We set $L_{xx} = \max_{i,j} \{ L^{ij}_{xx} \}, \ L_{yy} = \max_{i,j} \{L^{ij}_{yy}\}$ and $L_{xy} = L_{yx} = \max_{i,j} \{L^{ij}_{xy} \}$. The strong convexity and strong concavity parameters are respectively set to $\lambda$ and $\beta$.
\begin{figure*}[!htbp]
	\begin{minipage}{.99\textwidth}
		\centering
		\includegraphics[width=1\linewidth]{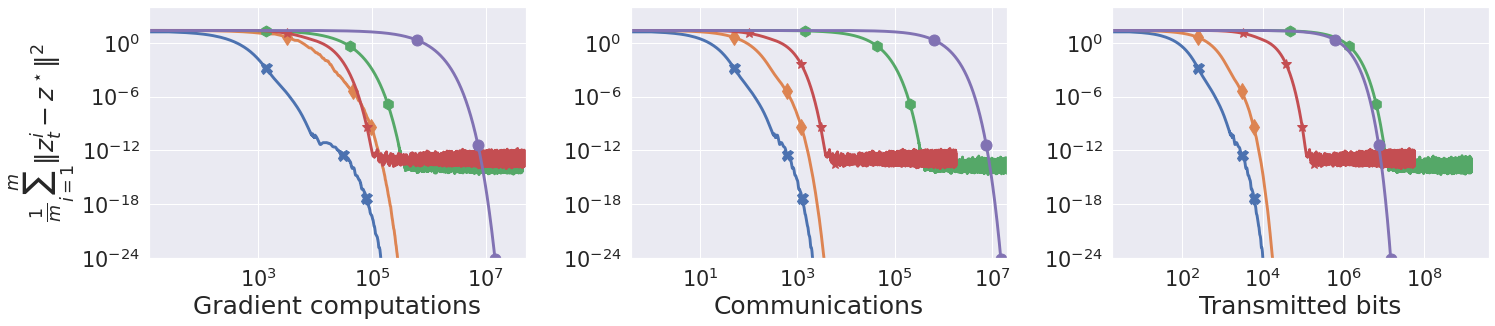}  
	\end{minipage}
	\begin{minipage}{.99\textwidth}
		\centering
		\includegraphics[width=1\linewidth]{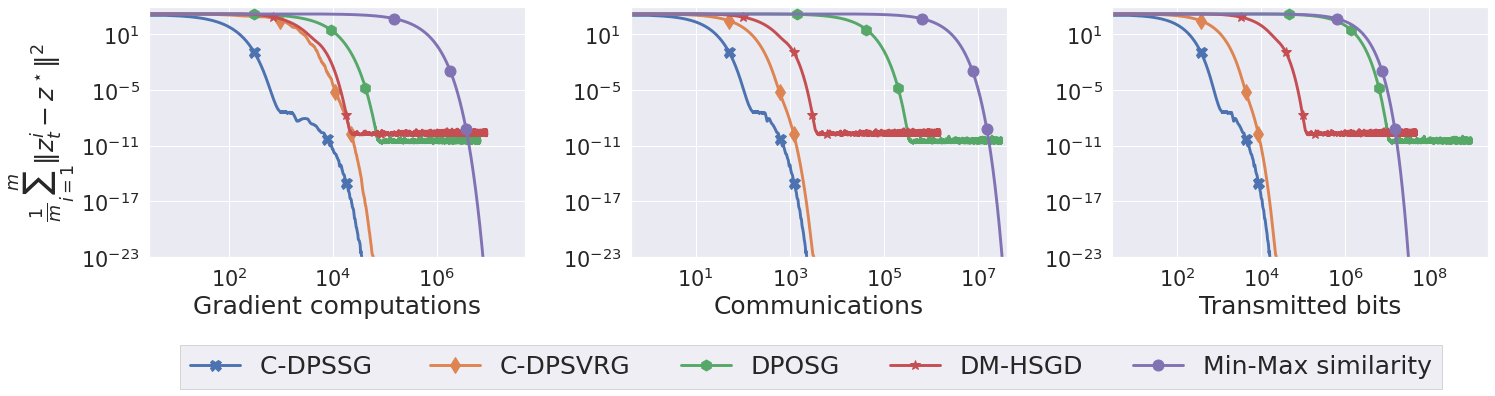}  
	\end{minipage}
	\caption{Convergence behavior of iterates to saddle point vs. Gradient computations (Column 1), Communications (Column 2), Number of bits transmitted (Column 3) for different algorithms in 2d torus topology of 20 nodes. phishing, sido are in Rows 1,2 respectively.} \label{fig:logistic_regression_distance_from_saddle_torus_appendix}
\end{figure*}

\begin{figure*}[!htbp]
	\begin{minipage}{.99\textwidth}
		\centering
		\includegraphics[width=1\linewidth]{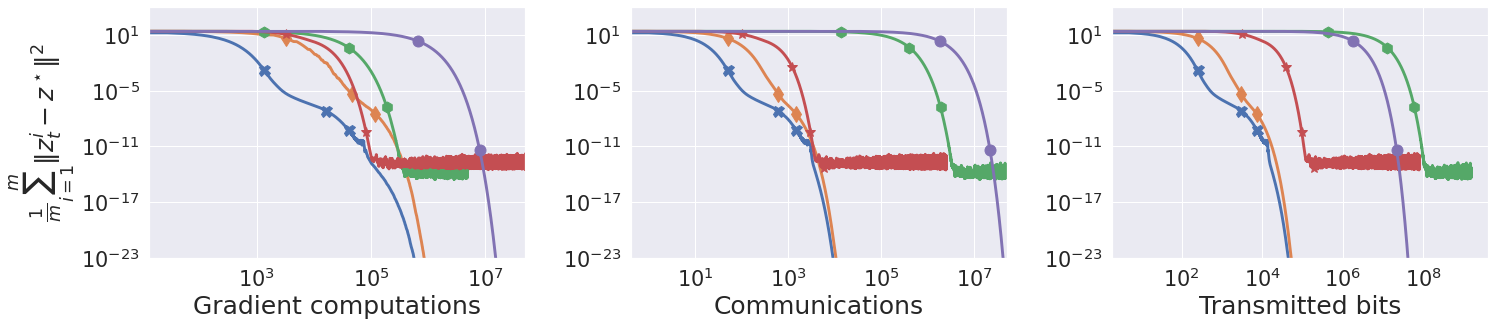}  
	\end{minipage}
	\begin{minipage}{.99\textwidth}
		\centering
		\includegraphics[width=1\linewidth]{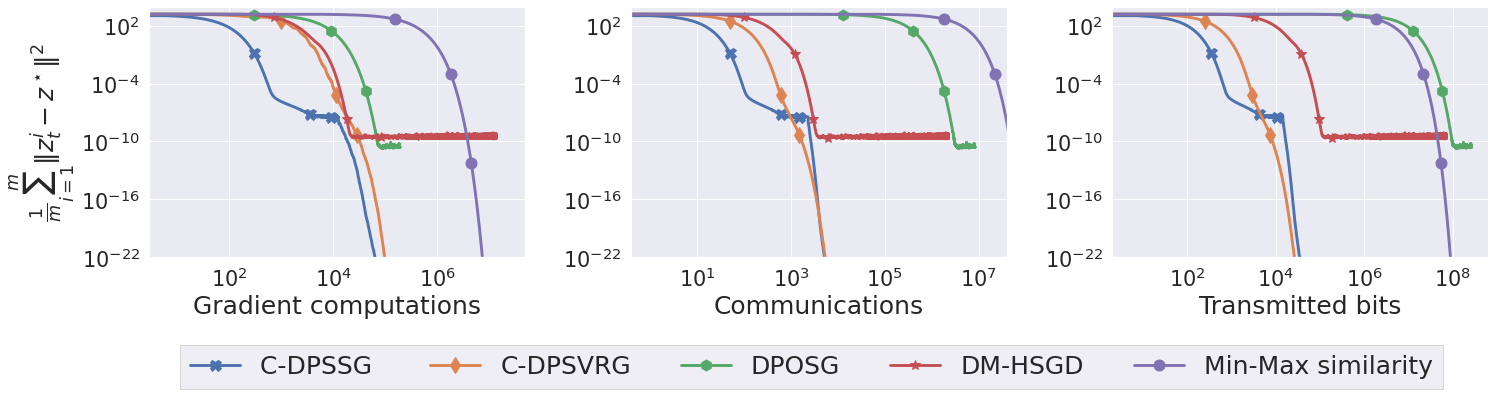}  
	\end{minipage}
	\caption{Convergence behavior of iterates to saddle point vs. Gradient computations (Column 1), Communications (Column 2), Number of bits transmitted (Column 3) for different algorithms in ring topology of 20 nodes. phishing, sido are in Rows 1,2 respectively.} \label{fig:logistic_regression_distance_from_saddle_ring_appendix}
\end{figure*}

\begin{figure}[!h]
\begin{minipage}{.26\textwidth}
  \centering
  \includegraphics[width=1\linewidth]{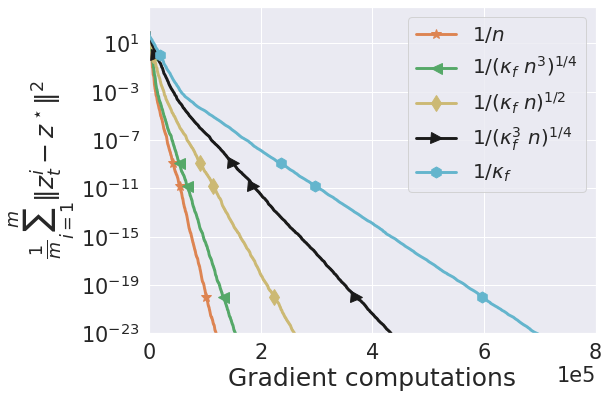}  
\end{minipage}
\begin{minipage}{.24\textwidth}
  \centering
  \includegraphics[width=1\linewidth]{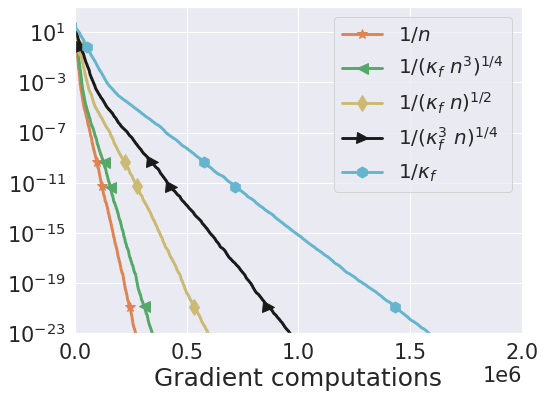}  
\end{minipage}
\begin{minipage}{.24\textwidth}
  \centering
  \includegraphics[width=1\linewidth]{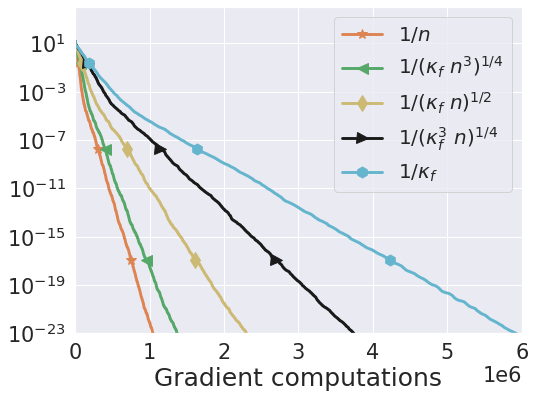}  
\end{minipage}
\begin{minipage}{.24\textwidth}
  \centering
  \includegraphics[width=1\linewidth]{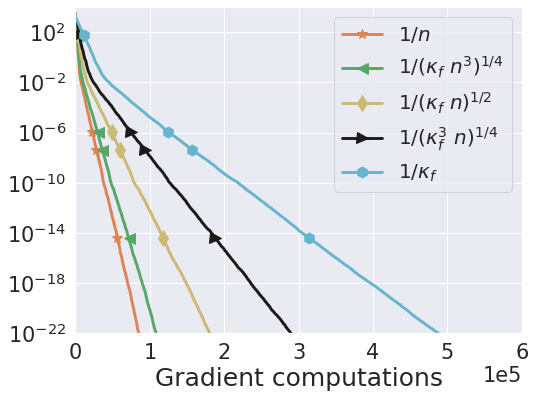}  
\end{minipage}
\caption{Performance of C-DPSVRG with different reference probabilities in 2d torus with 20 nodes. a4a, phishing, ijcnn, sido datasets are in Columns 1,2,3,4 respectively.}
\label{fig:dsvrg_behavior_ref_prob}
\end{figure}
\begin{figure}[!h]
\begin{minipage}{.22\textwidth}
  \centering
  \includegraphics[width=1\linewidth]{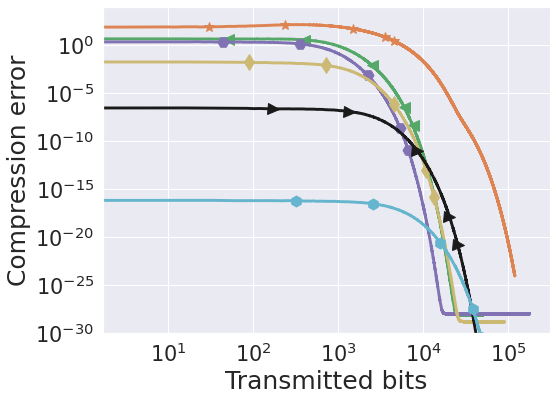}  
\end{minipage}
\begin{minipage}{.21\textwidth}
  \centering
  \includegraphics[width=1\linewidth]{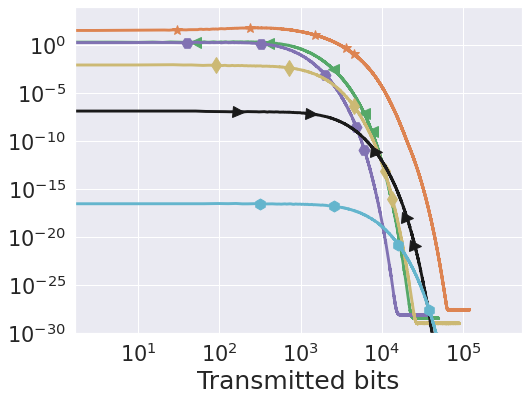}  
\end{minipage}
\begin{minipage}{.21\textwidth}
  \centering
  \includegraphics[width=1\linewidth]{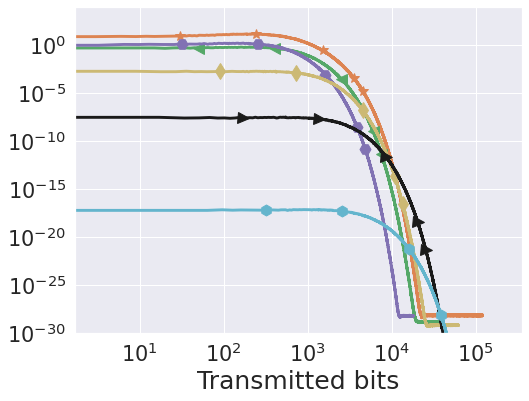}  
\end{minipage}
\begin{minipage}{.32\textwidth}
  \centering
  \includegraphics[width=1\linewidth]{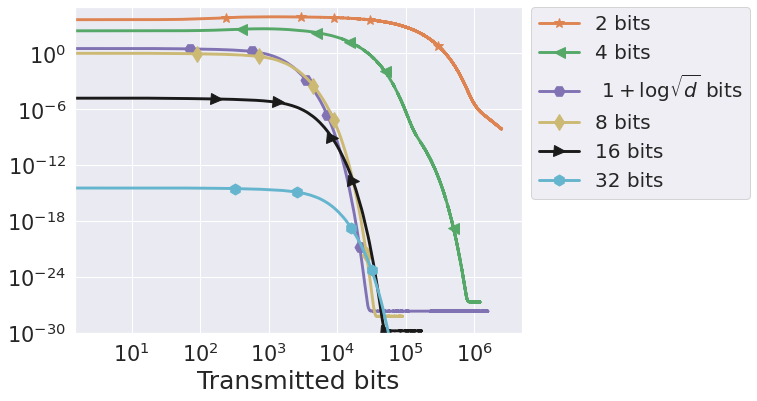}  
\end{minipage}
\caption{Compression error in C-DPSVRG with different number of bits in 2d torus with 20 nodes. a4a, phishing, ijcnn, sido datasets are in Columns 1,2,3,4 respectively.}
\label{fig:compression_error_num_bits}
\end{figure}

\begin{figure}[!hbp]
	\begin{minipage}{.24\textwidth}
		\centering
		\includegraphics[width=1\linewidth]{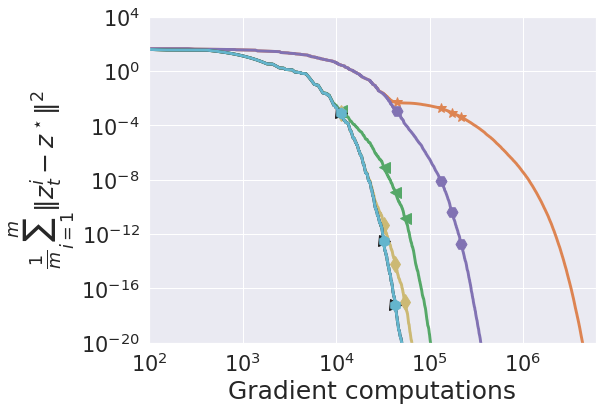}  
	\end{minipage}
	\begin{minipage}{.21\textwidth}
		\centering
		\includegraphics[width=1\linewidth]{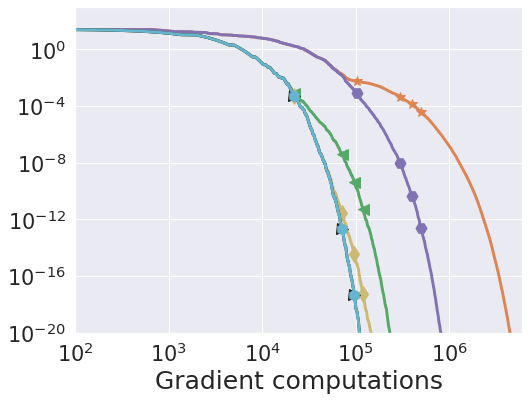}  
	\end{minipage}
	\begin{minipage}{.21\textwidth}
		\centering
		\includegraphics[width=1\linewidth]{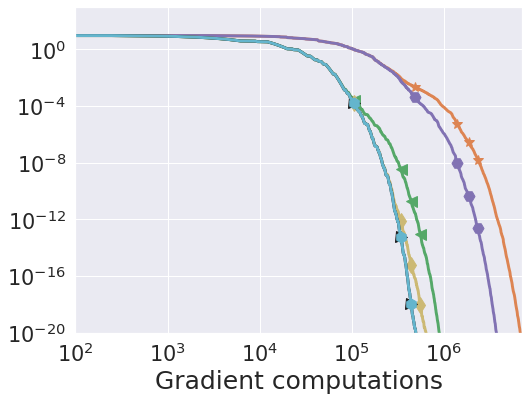}  
	\end{minipage}
	\begin{minipage}{.3\textwidth}
		\centering
		\includegraphics[width=1\linewidth]{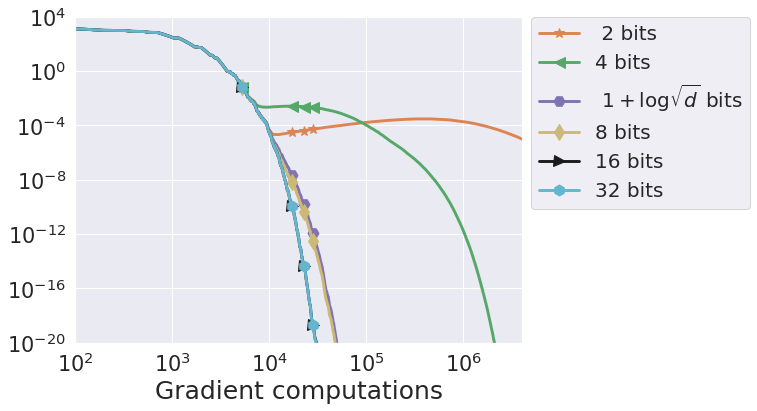}  
	\end{minipage}
	\begin{minipage}{.24\textwidth}
		\centering
		\includegraphics[width=1\linewidth]{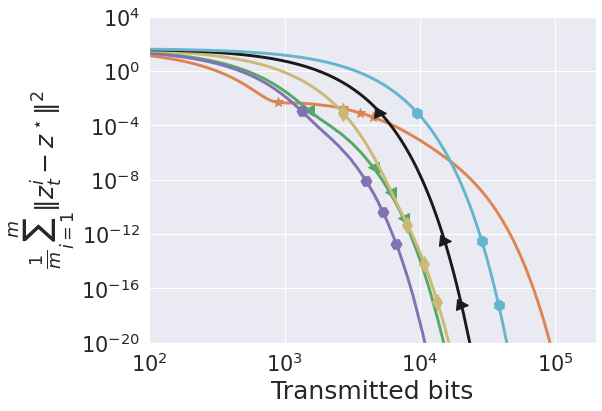}  
	\end{minipage}
	\begin{minipage}{.21\textwidth}
		\centering
		\includegraphics[width=1\linewidth]{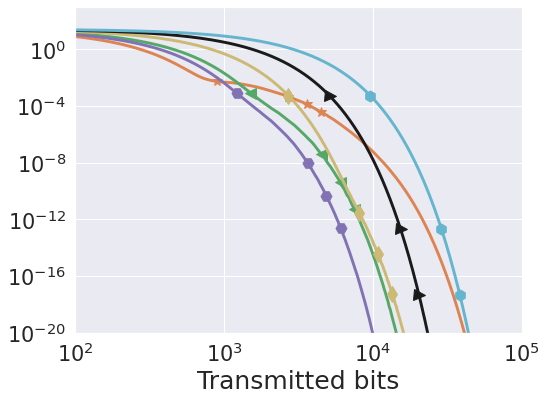}  
	\end{minipage}
	\begin{minipage}{.21\textwidth}
		\centering
		\includegraphics[width=1\linewidth]{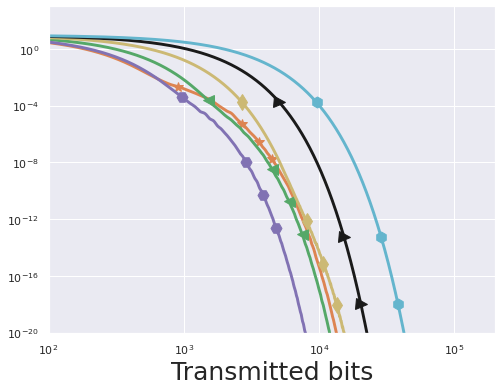}  
	\end{minipage}
	\begin{minipage}{.3\textwidth}
		\centering
		\includegraphics[width=1\linewidth]{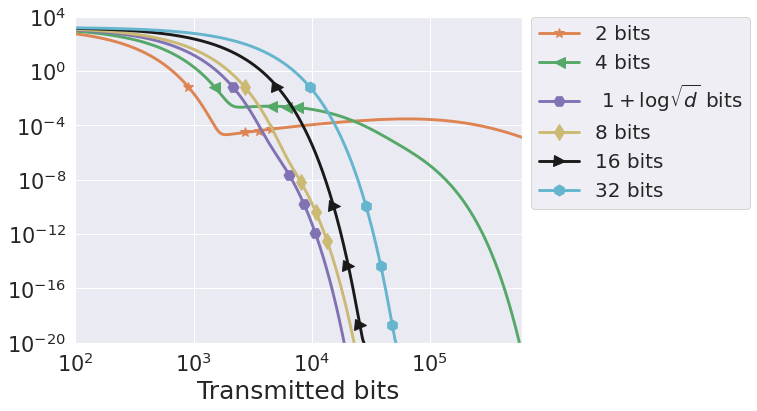}  
	\end{minipage}
	
	\caption{Convergence behavior of iterates to saddle point in C-DPSVRG vs. Gradient computations (Row 1), Number of bits transmitted (Row 2) for C-DPSVRG behavior with \textbf{different number of bits} in 2D torus topology with 20 nodes. a4a, phishing, ijcnn, sido datasets are in Columns 1,2,3,4 respectively. \red{Number of bits $1+\log \sqrt{d}$ for a4a, phishing, ijcnn1 and sido datasets  are $4.465, 4.043, 3.22$ and $7.13$ respectively.}} 
	\label{fig:comparison_bits_dsvrg}
\end{figure}

\begin{figure}[!h]
\begin{minipage}{.99\textwidth}
  \centering
  \includegraphics[width=1\linewidth]{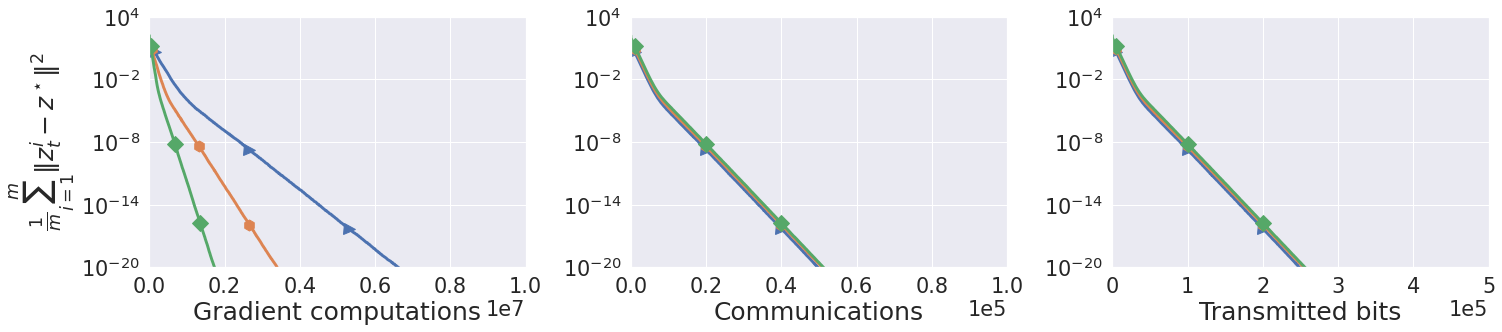}  
\end{minipage}
\begin{minipage}{.99\textwidth}
  \centering
  \includegraphics[width=1\linewidth]{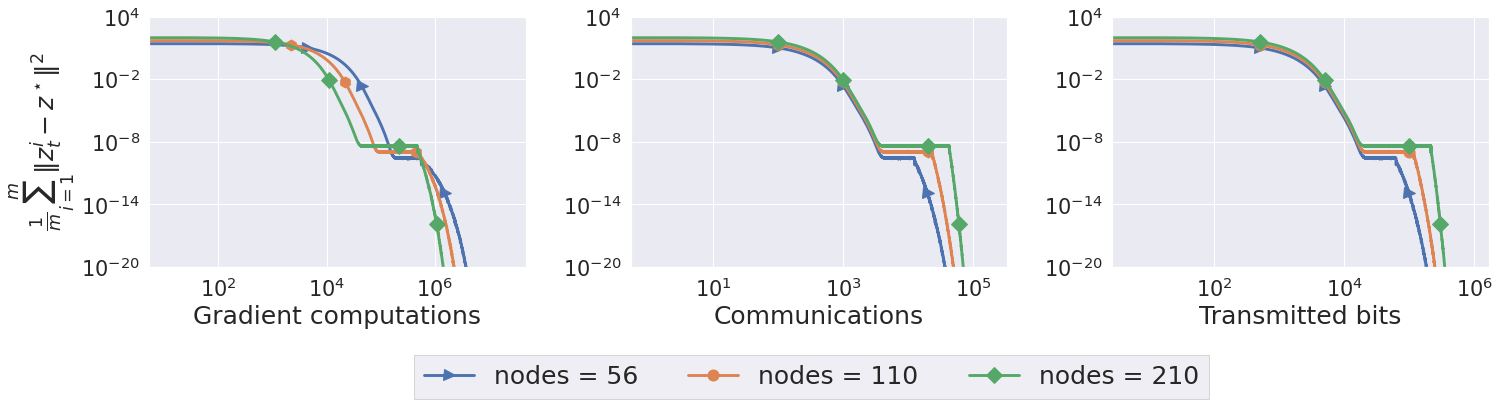}  
\end{minipage}
\caption{Performance of C-DPSVRG and C-DPSSG with different number of nodes on 2d torus topology with ijcnn data. Row 1: C-DPSVRG, Row 2: C-DPSSG}
\label{fig:dsvrg_behavior_nodes}
\end{figure}
%
%

\begin{figure}[!h]
  \centering
  \includegraphics[width=1\linewidth]{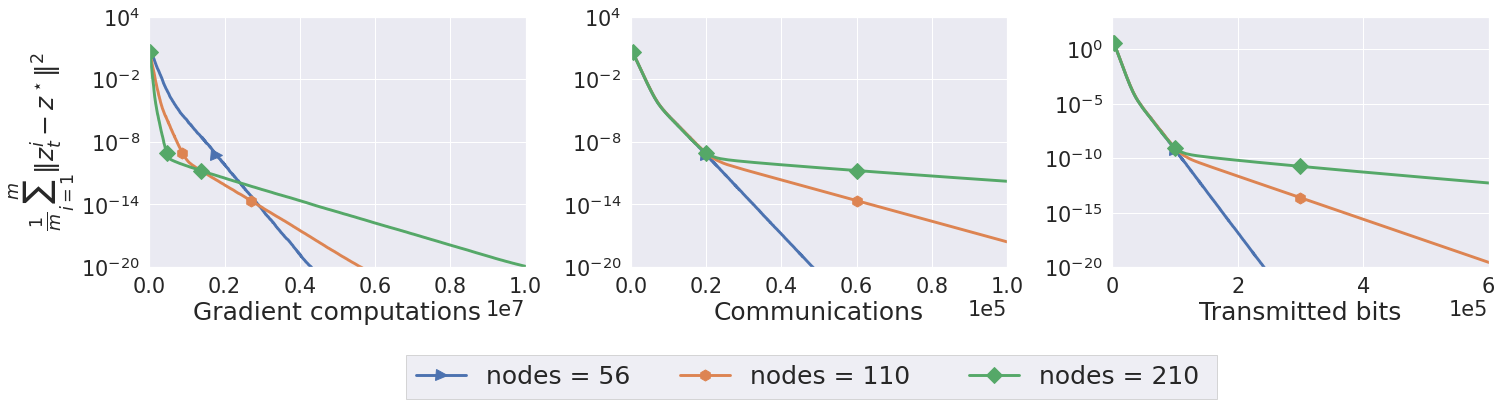}  
\caption{Performance of C-DPSVRG with different number of nodes on ring topology with ijcnn data.}
\label{fig:dsvrg_behavior_nodes_ring}
\end{figure}


\begin{figure}[!h]
	\begin{minipage}{.99\textwidth}
		\centering
		\includegraphics[width=1\linewidth]{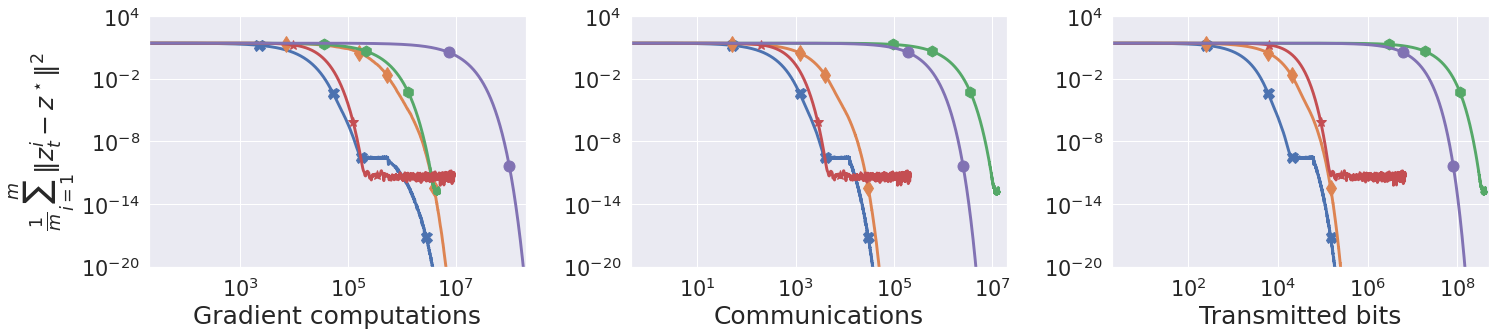}  
	\end{minipage}
	\begin{minipage}{.99\textwidth}
		\centering
		\includegraphics[width=1\linewidth]{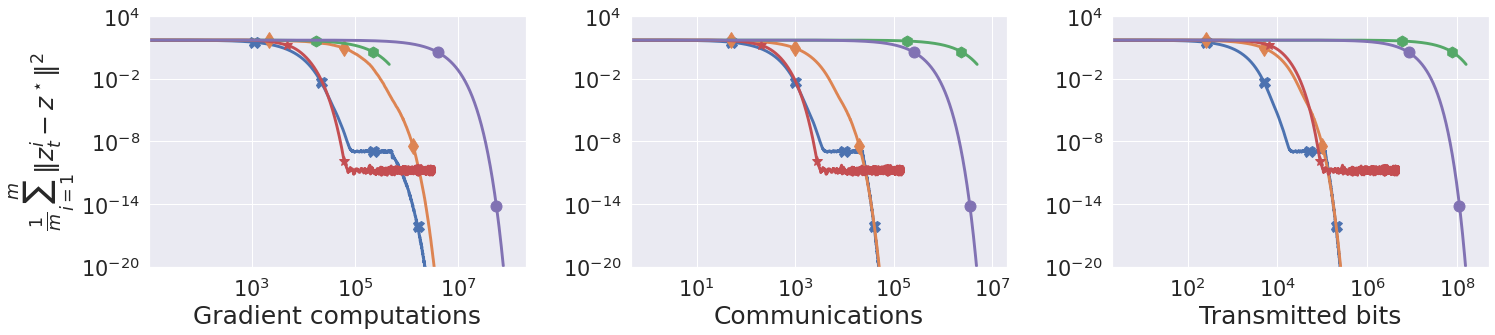}  
	\end{minipage}
	\begin{minipage}{.99\textwidth}
		\centering
		\includegraphics[width=1\linewidth]{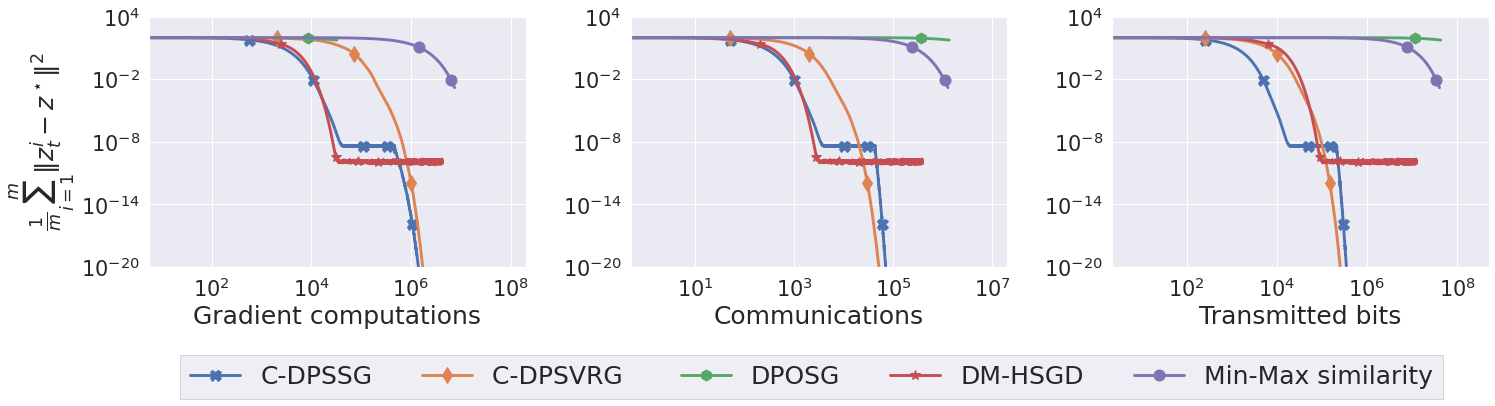}  
	\end{minipage}
\caption{Comparison with baselines with different number of nodes on a 2D torus topology with ijcnn data. Row 1: 56 nodes, Row 2: 110 nodes, Row 3: 210 nodes.}
\label{fig:different_nodes}
\end{figure}

\clearpage
 
\section{Numerical Experiments on AUC maximization} \label{appendix_auc_max}

We evaluate the effectiveness of proposed algorithms on area under receiver operating characteristic curve (AUC) maximization \cite{ying2016stochastic} formulated as:
\begin{align}
	\min_{x,u,v} \max_y \frac{1}{N}\sum_{i = 1}^N F(x,u,v,y;a_i,b_i) + \frac{\lambda}{2} \left\Vert x \right\Vert^2_2 , \label{auc_max_app}
\end{align}
over a binary classification data set $\mathcal{D} = \{(a_i, b_i) \}_{i = 1}^N$ where  $F(x,u,v,y;a_i,b_i) = (1-q)(a_i^\top x - u)^2 \delta_{\left[b_i = 1 \right] } + q(a_i^\top x - v)^2 \delta_{\left[b_i = -1 \right] } - q(1-q)y^2 + 2(1+y)\left( qa_i^\top x \delta_{\left[b_i = -1 \right] } - (1-q)a_i^\top x \delta_{\left[b_i = 1 \right] } \right)$, $q$ denotes the fraction of positive samples, $\delta_{\left[ \cdot \right] }$ is the indicator function. 

\subsection{Parameters Setting} 
We consider a4a and ijcnn1 data sets and set $\lambda = 10^{-5}$ in \eqref{auc_max_app}. We set number of bits $b = 4$ in quantization operator $Q_\infty(x)$. A 2d Torus topology of 20 and 110 nodes is used. For 20 nodes, we consider $\ell_2$ ball of radius 100 and 200 respectively on primal and dual variables. We also incorporate $\ell_2$ ball of large radius $10^8$ and $2 \times 10^8$ respectively for primal and dual variables for large number of nodes (110 nodes). We create 20 and 5 mini-batches respectively for 20 and 110 nodes. Following \cite{xianetal2021fasterdecentnoncvxsp}, step sizes for DM-HSGD computed using theoretical formula are very small in the given parameters settings. We circumvent this issue by finding best step sizes for DM-HSGD using grid search. The step sizes for other methods are set up according to their value proposed in respective papers.

\textbf{Switching Point:} We set threshold value to be $10^{-8}$ and iterations in gossip scheme to be 20 for 2d torus topology of 20 nodes. We observe that C-DPSSG switches to SVRGO after $T_0$ iterations for both a4a and ijcnn1 data sets. For large number of nodes, error term in \eqref{eq:exp_phit_phi_0_ub} is high. Therefore, we use threshold value to be $10^{-6}$ and gossip schme iterations 100 for large number of nodes. We observe that C-DPSSG continues to use GSGO for $T_0$ iterations for a4a data as there is sufficient progress of iterates in the first $T_0^{'}$ iterations. However, for ijcnn1 data, C-DPSSG switches to GSGO after the completion of $T_0^{'}$ iterations.

\subsection{Observations} 
We plot AUC value on training set against number of gradient computations, communications and bits transmitted as depicted in Figure \ref{fig:AUC_value_app}. We observe that there is rapid increase in the AUC value for C-DPSSG which is faster than C-DPSVRG and other existing methods. Figure \ref{fig:auc_distance_from_saddle_app} demonstrates the convergence behavior of iterates $z_t$ towards saddle point solution $z^\star$. In the beginning, iterates of C-DPSSG move faster towards $z^\star$ in comparison to C-DPSVRG and becomes competitive with C-DPSVRG in the long run. These observations suggest that switching scheme is beneficial over purely SVRGO based scheme for obtaining both high AUC value and  better saddle point solution as it saves time and gradient computations in the crucial early stage. We can see that DPOSG is competitive with C-DPSVRG in terms of gradient computations. However, DPOSG needs large number of communications and bits transmission as it involves gossip to reduce the consensus error at every iterate. C-DPSSG and C-DPSVRG converges faster for large number of nodes in comparison to existing methods as demonstrated in last two rows of Figure \ref{fig:AUC_value_app} and Figure \ref{fig:auc_distance_from_saddle_app}. Since DPOSG and Min-Max similarity are based on gossip scheme for every update, the convergence of these algorithms  slows down drastically in terms of communications and bits transmission on a 2d torus topology having 110 nodes as depicted in Figure \ref{fig:auc_distance_from_saddle_app}. 

Table \ref{empirical_epsilon0} reports the true values $\Phi_{0}, \epsilon_0^\star, T_0$ and the approximate values $ \bar{\Phi}^i_0(T'_0), \bar{\epsilon}^i_0, T_0^i$ in AUC maximization. We notice that $\epsilon_0^\star$ and $\bar{\epsilon}^i_0$ are close to each other and hence the difference between $T^i_0$ and $T_0$ is also small. 

\begin{table}[h]
	\begin{center}
		\begin{tabular}{|c|c|c|c|c|c|c|c|}
			\hline
			Data Set & $\Phi_{0}$ & $ \bar{\Phi}^i_0(T'_0)$ & $\epsilon^\star_0 $ & $\bar{\epsilon}^i_0$ & $T_0$ & $T^i_0$ & $T_0^{'}$   \\ \hline
			a4a & $47.4$ & $8.6$ & $5.3 \times 10^{-11}$ & $2.9 \times 10^{-10}$ & $195315$ & $181248$ & $5720$  \\ \hline
			ijcnn1 & $23.8$ & $3.9$ & $ 10^{-10}$ & $6.3 \times 10^{-10}$ & $50947$ & $46960$ & $1536$ \\ \hline
		\end{tabular}
		\caption{Values of $\bar{\Phi}^i_0(T'_0)$, $\bar{\epsilon}^i_0$ $T^i_0$ (observed same for all nodes $i$) and $ \epsilon^\star_0, T_0, T'_0$ obtained from the practical Algorithm \eqref{alg:detecting_switching_point} with 2d torus topology on AUC maximization.} \label{empirical_epsilon0}
	\end{center}
\end{table}

\subsection{Strong Convexity-Strong Concavity and Lipschitz parameters }

We estimate Lipschitz parameters $L_{xx}, L_{yy}, L_{xy}$ and $L_{yy}$, strong convexity concavity parameters $\mu_x$ and $\mu_y$ used in the numerical experiments of AUC maximization problem \eqref{auc_max}. We first write the objective function of \eqref{auc_max_app} in the form of finite sum over number of nodes.
\begin{align}
	\frac{1}{N}\sum_{i = 1}^N F(x,u,v,y;a_i,b_i) + \frac{\lambda}{2} \left\Vert x \right\Vert^2_2 & = \frac{1}{N} \sum_{i = 1}^{m} \sum_{l = 1}^{N_i} F(x,u,v,y;a_{il},b_{il}) + \frac{\lambda}{2} \left\Vert x \right\Vert^2_2 \\
	& = \sum_{i = 1}^{m} \left( \frac{1}{N} \sum_{l = 1}^{N_i} F(x,u,v,y;a_{il},b_{il}) + \frac{\lambda}{2m} \left\Vert x \right\Vert^2_2 \right) \\
	& = \sum_{i = 1}^{m} f_i(x,u,v,y;\{a_{ij},b_{ij}\}_{j= 1}^{N_i}),
\end{align}
where $f_i(x,u,v,y;\{a_{il},b_{il}\}_{l= 1}^{N_i}) = \frac{1}{N} \sum_{l = 1}^{N_i} F(x,u,v,y;a_{il},b_{il}) + \frac{\lambda}{2m} \left\Vert x \right\Vert^2_2$. Next, we create $n$ mini batches of local samples $N_i$ for every node and write $f_i(x,u,v,y;\{a_{il},b_{il}\}_{l = 1}^{N_i}) = \frac{1}{n} \sum_{j = 1}^n f_{ij}(x,u,v,y;\{a^i_{jl},b^i_{jl}\}_{l= 1}^{N_{ij}})$. Now we have
\begin{align*}
	f_i(x,u,v,y;\{a_{il},b_{il}\}_{l= 1}^{N_i}) & = \frac{1}{N} \sum_{l = 1}^{N_i} F(x,u,v,y;a_{il},b_{il}) + \frac{\lambda}{2m} \left\Vert x \right\Vert^2_2 \\
	& = \frac{1}{N} \sum_{j = 1}^{n} \left( \sum_{l = 1}^{N_{ij}} F(x,u,v,y;a^i_{jl},b^i_{jl}) \right)+ \frac{\lambda}{2m} \left\Vert x \right\Vert^2_2 \\
	& = \frac{1}{n} \sum_{j = 1}^{n} \left( \frac{n}{N} \sum_{l = 1}^{N_{ij}} F(x,u,v,y;a^i_{jl},b^i_{jl}) + \frac{\lambda}{2m} \left\Vert x \right\Vert^2_2 \right) \\
	& = \frac{1}{n} \sum_{j = 1}^{n} f_{ij}(x,u,v,y;\{a^i_{jl},b^i_{jl}\}_{l= 1}^{N_{ij}}),
\end{align*}
where $f_{ij}(x,u,v,y;\{a^i_{jl},b^i_{jl}\}_{l= 1}^{N_{ij}}) = \frac{n}{N} \sum_{l = 1}^{N_{ij}} F(x,u,v,y;a^i_{jl},b^i_{jl}) + \frac{\lambda}{2m} \left\Vert x \right\Vert^2_2$. We focus on computing Lipschitz parameters of function $F(x,u,v,y;a^i_{jl},b^i_{jl})$ which are used to set Lipschitz parameters of $f_{ij}$. For simplicity of representation, we denote $F(x,u,v,y;a_i,b_i)$ and its gradient as $F_i$ and $\nabla F_i$ respectively. We now compute gradient and Hessian of function $F_i$.
\begin{align*}
	\nabla_x F_i & = 2(1-q)(a_i^\top x - u) \delta_{\left[b_i = 1 \right] }a_i + 2q(a_i^\top x - v) \delta_{\left[b_i = -1 \right] }a_i  + 2(1+y)\left( qa_i \delta_{\left[b_i = -1 \right] } - (1-q)a_i \delta_{\left[b_i = 1 \right] } \right) + \frac{\lambda}{m}x \\
	& \nabla_u F_i = -2(1-q)(a_i^\top x - u) \delta_{\left[b_i = 1 \right] } \\
	&\nabla_v F_i  = -2q(a_i^\top x - v) \delta_{\left[b_i = -1 \right] } \\
	&\nabla_y F_i  = -2q(1-q)y + 2\left( qa_i^\top x \delta_{\left[b_i = -1 \right] } - (1-q)a_i^\top x \delta_{\left[b_i = 1 \right] } \right) .
\end{align*}
Hessian computations:
\begin{align*}
	\nabla^2_{xx} F_i & = 2(1-q)\delta_{\left[b_i = 1 \right] }a_i a_i^\top + 2q\delta_{\left[b_i = -1 \right] }a_i a_i^\top  + \frac{\lambda}{m} \\
	& \nabla^2_{uu} F_i  = 2(1-q) \delta_{\left[b_i = 1 \right] } \\
	&\nabla^2_{vv} F_i  = 2q \delta_{\left[b_i = -1 \right] }  \\
	& \nabla^2_{yy} F_i  = 2q(1-q) \\
	& \nabla_y(\nabla_x F_i) = 2\left( qa_i \delta_{\left[b_i = -1 \right] } - (1-q)a_i \delta_{\left[b_i = 1 \right] } \right) .
\end{align*}
Let $x^{'} = (x,u,v)$. Then norm of Hessian $\nabla^2_{x^{'}x^{'}}F_i$ is given as:
\begin{align*}
	\left\Vert \nabla^2_{x^{'}x^{'}}F_i \right\Vert & \leq \left\Vert \nabla^2_{xx}F_i \right\Vert + \left\Vert \nabla^2_{uu}F_i \right\Vert + \left\Vert \nabla^2_{vv}F_i \right\Vert \\
	& \leq \left(  2(1-q) \delta_{\left[b_i = 1 \right] } +  2q \delta_{\left[b_i = -1 \right] } \right)\left\Vert a_i a_i^\top \right\Vert + \frac{\lambda}{m} +  2(1-q) \delta_{\left[b_i = 1 \right] } +  2q \delta_{\left[b_i = -1 \right] } =: \tilde{L}^i_{xx}.
\end{align*} 
We have
\begin{align*}
	\left\Vert \nabla_y(\nabla_x F_i)\right\Vert & = \left\Vert 2\left( qa_i \delta_{\left[b_i = -1 \right] } - (1-q)a_i \delta_{\left[b_i = 1 \right] } \right) \right\Vert \\
	& \leq 2\left\vert q \delta_{\left[b_i = -1 \right] } - (1-q) \delta_{\left[b_i = 1 \right] } \right\vert \left\Vert a_i \right\Vert =: \tilde{L}^i_{xy} .
\end{align*}
We also have $\tilde{L}^i_{yy} = 2q(1-q)$. Then Lipschitz parameters of $f_{ij}$ are given by:
\begin{align}
	& L^{ij}_{xx} = \frac{n}{N} \sum_{l = 1}^{N_{ij}} \tilde{L}^l_{xx} + \frac{\lambda}{m} \\
	& L^{ij}_{yy} = \frac{n}{N} \sum_{l = 1}^{N_{ij}} \tilde{L}^l_{yy}, L^{ij}_{xy} = \frac{n}{N} \sum_{l = 1}^{N_{ij}} \tilde{L}^l_{xy} .
\end{align}
Using above parameters, we set $L_{xx} = \max_{i,j} \{ L^{ij}_{xx} \}, \ L_{yy} = \max_{i,j} \{L^{ij}_{yy}\}$ and $L_{xy} = L_{yx} = \max_{i,j} \{L^{ij}_{xy} \}$. Next, we move to estimate strong convexity parameter of $F_i$. For $b_i = 1$, $u^\top\nabla^2_{uu} F_i u \geq 2(1-q)$ and  for $b_i = -1$, $v^\top \nabla^2_{vv} F_i v \geq 2q$. Therefore, $(x^{'})^\top\nabla^2_{x^{'}x^{'}}F_i (x^{'}) \geq  \min\{2q,2(1-q)\}+\frac{\lambda}{m}$ for all $x^{'} \in \mathbb{R}^{d_x+ 2}$. Hence, we set strong convexity and strong concavity parameter of each $f_i$ respectively to $\mu_x = \min\{2q,2(1-q)\}\frac{\min_i(N_i)}{N}+\frac{\lambda}{m}$ and $\mu_y = 2q(1-q)\frac{\min_i(N_i)}{N}$.

\begin{figure*}[!htbp]
	\begin{minipage}{.98\textwidth}
		\centering
		\includegraphics[width=1\linewidth]{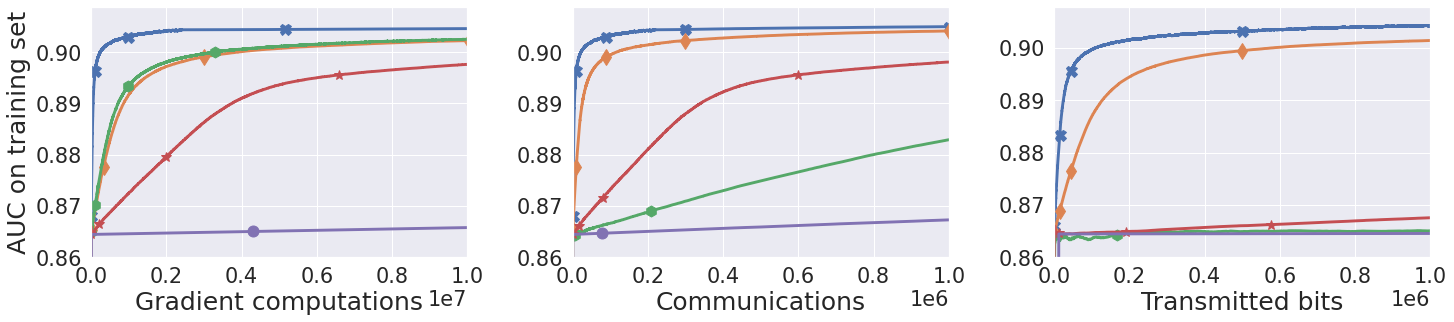}  
	\end{minipage}
	\begin{minipage}{.98\textwidth}
		\centering
		\includegraphics[width=1\linewidth]{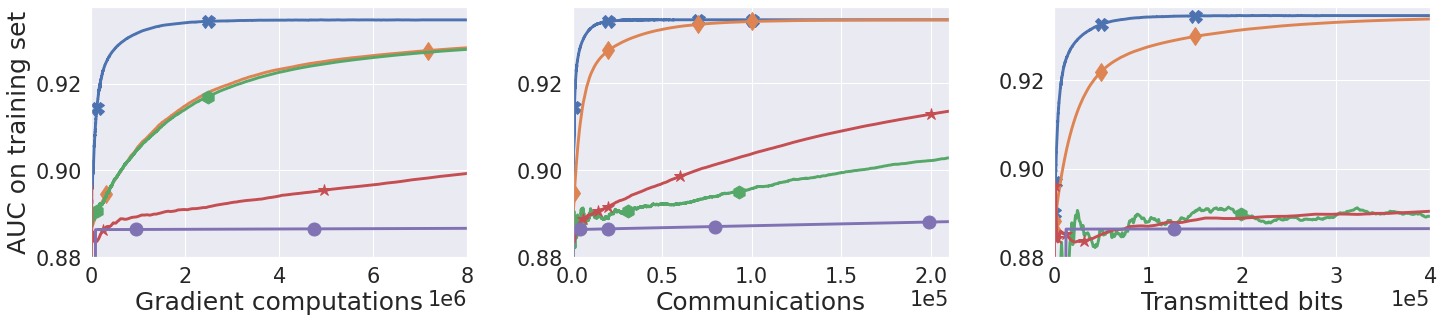}  
	\end{minipage}
	\begin{minipage}{.98\textwidth}
		\centering
		\includegraphics[width=1\linewidth]{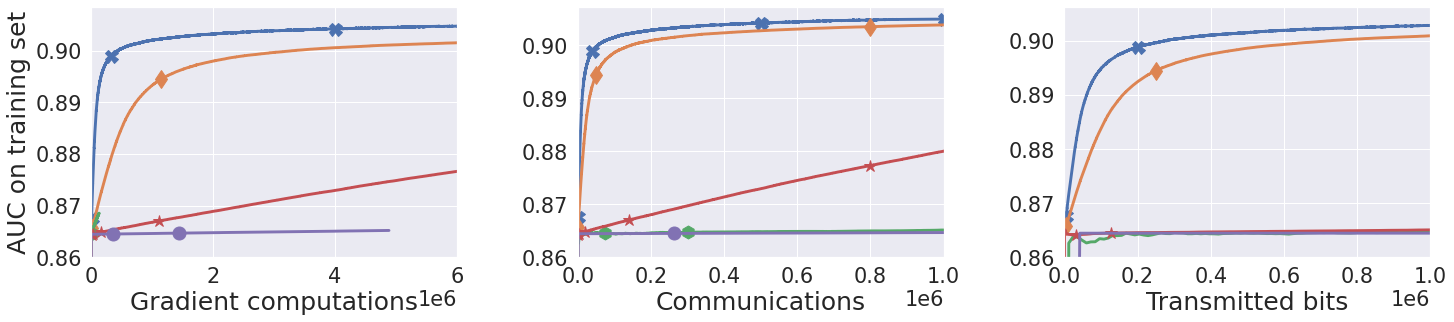}  
	\end{minipage}
	\begin{minipage}{.98\textwidth}
		\centering
		\includegraphics[width=1\linewidth]{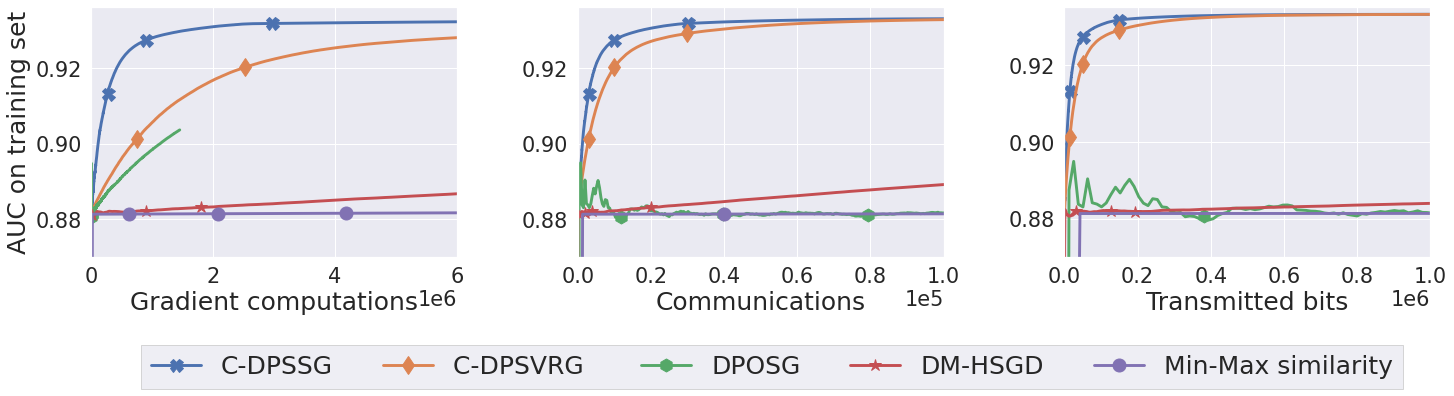}  
	\end{minipage}
	\caption{AUC value on training data vs. Gradient computations (Column 1), Communications
		(Column 2), Number of bits transmitted (Column 3) for different algorithms in 2d torus topology of 20 nodes (Rows 1,2) and 110 nodes(Rows 3,4).
		a4a in Rows 1,3 and ijcnn1 in Rows 2,4 respectively.} \label{fig:AUC_value_app}
\end{figure*}

\begin{figure*}[!htbp]
	\begin{minipage}{.98\textwidth}
		\centering
		\includegraphics[width=1\linewidth]{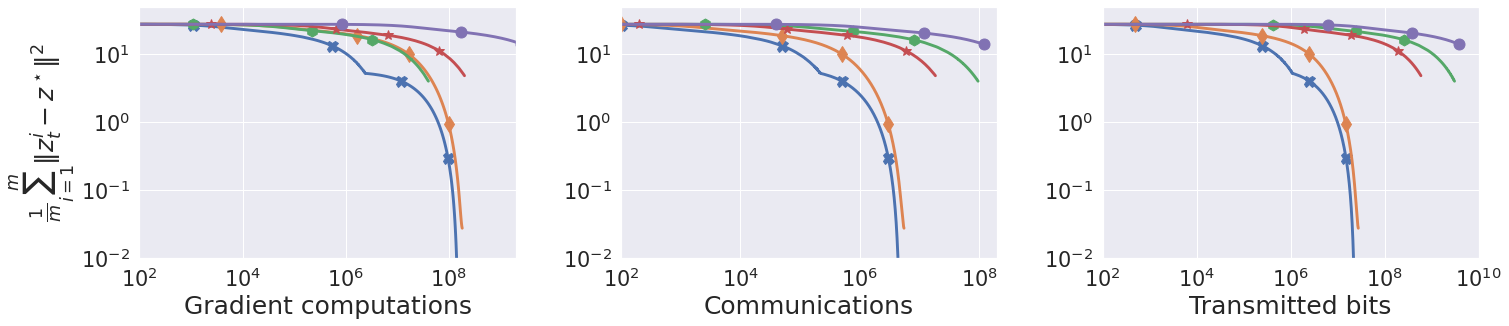}  
	\end{minipage}
	\begin{minipage}{.98\textwidth}
		\centering
		\includegraphics[width=1\linewidth]{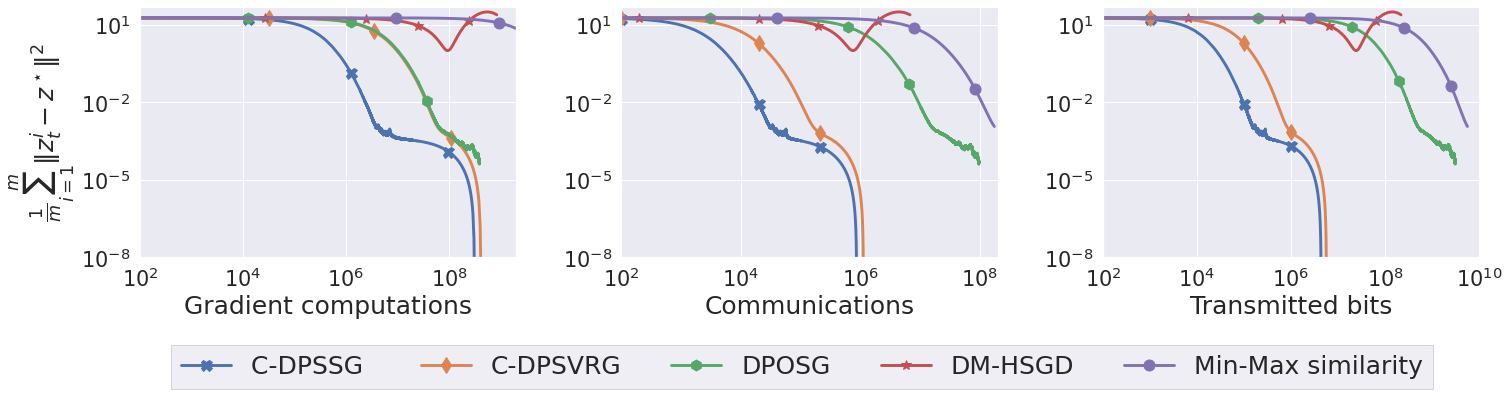}  
	\end{minipage}
	\begin{minipage}{.98\textwidth}
		\centering
		\includegraphics[width=1\linewidth]{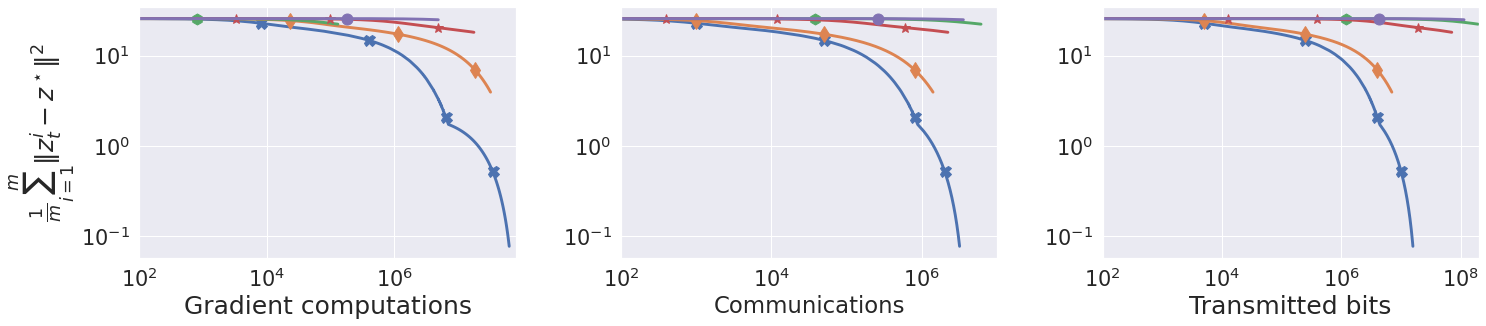}  
	\end{minipage}
	\begin{minipage}{.98\textwidth}
		\centering
		\includegraphics[width=1\linewidth]{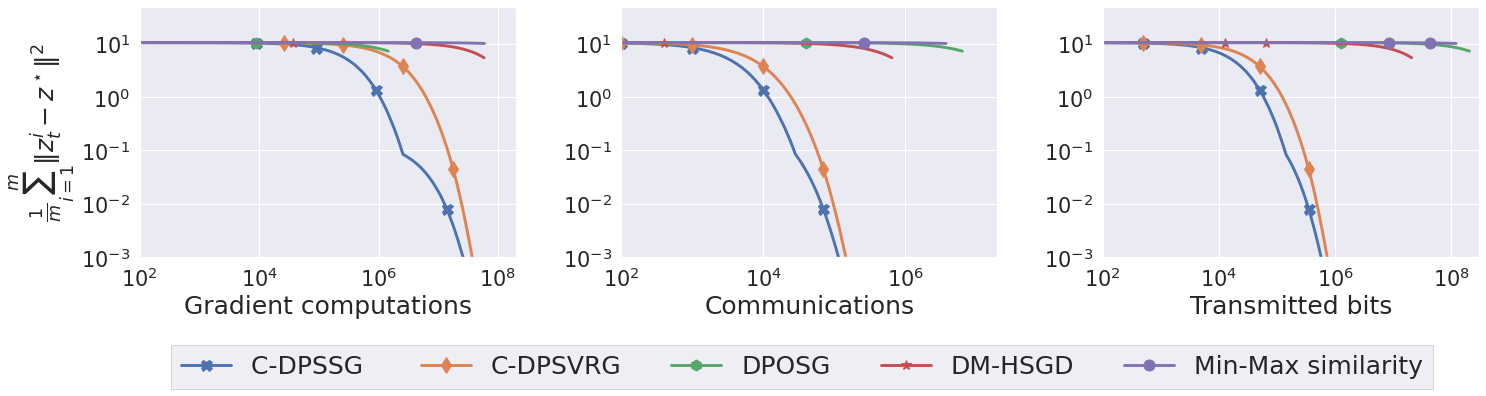}  
	\end{minipage}
	\caption{Convergence behavior of iterates to saddle point vs. Gradient computations (Column 1), Communications (Column 2), Number of bits transmitted (Column 3) for different algorithms in 2d torus topology of 20 nodes (Rows 1,2) and 110 nodes (Rows 3,4). a4a in Rows 1,3 and ijcnn in Rows 2,4.} \label{fig:auc_distance_from_saddle_app}
\end{figure*}
	
\end{document}